\theoremstyle{plain}
\newtheorem{theorem}{Theorem}[section]
\newtheorem{lemma}[theorem]{Lemma}
\newtheorem{corollary}[theorem]{Corollary}
\theoremstyle{definition}
\newtheorem{proposition}[theorem]{Proposition}
\newtheorem{definition}[theorem]{Definition}
\newtheorem{remark}[theorem]{Remark}
\newtheorem{assumption}{Assumption}
\newcommand{\ep}{\varepsilon}
\newcommand{\R}{\mathbb{R}}
\newcommand{\E}{\mathbb{E}}
\newcommand{\Var}{\ensuremath{\mr{Var}}}
\newcommand{\Cov}{\ensuremath{\mr{Cov}}}
\newcommand{\T}{{\top}}
\newcommand{\caA}{\mathcal{A}}
\newcommand{\caE}{\mathcal{E}}
\newcommand{\caG}{\mathcal{G}}
\newcommand{\caH}{\mathcal{H}}
\newcommand{\caI}{\mathcal{I}}
\newcommand{\caL}{\mathcal{L}}
\newcommand{\caN}{\mathcal{N}}
\newcommand{\caR}{\mathcal{R}}
\newcommand{\caX}{\mathcal{X}}
\newcommand{\bbS}{\mathbb{S}}
\newcommand{\bbZ}{\mathbb{Z}}
\newcommand{\bbN}{\mathbb{N}}
\newcommand{\bbP}{\mathbb{P}}
\newcommand{\hf}{\frac{1}{2}}
\DeclareMathOperator*{\argmin}{arg\,min}
\DeclareMathOperator{\spn}{span}
\newcommand{\qimplies}{\ensuremath{\quad\Longrightarrow\quad}}
\newcommand{\mr}{\mathrm}
\providecommand{\ang}[1]{\ensuremath{\left\langle{#1}\right\rangle}}
\newcommand{\xk}[1]{\ensuremath{\left(#1\right)}}
\newcommand{\zk}[1]{\ensuremath{\left[#1\right]}}
\newcommand{\dk}[1]{{\ensuremath{\left\{#1\right\}}}}
\newcommand{\cek}[1]{\ensuremath{\lceil#1\rceil}}
\newcommand{\rz}{{\ensuremath{r_0}}}
\newcommand{\mm}{{\ensuremath{\bm{m}}}} 
\newcommand{\nn}{{\ensuremath{\bm{n}}}}
\newcommand{\rr}{{\ensuremath{\bm{r}}}}
\newcommand{\St}{{\ensuremath{\mathcal{S}}}} 
\newcommand{\proj}{{\ensuremath{\Pi}}} 
\DeclareMathOperator{\Sym}{Sym} 
\DeclareMathOperator{\Diag}{Diag} 
\newcommand{\Rho}{\varPsi}
\newcommand{\polylog}{\mr{polylog}}
\providecommand{\cref}{\prettyref}
\newcommand{\theSupp}{the appendix}
\newcommand{\suppref}[1]{\cref{#1} in \theSupp}
\title{Supporting Evidence for the Adaptive Feature Program across Diverse Models}
\author{
{\bfseries Yicheng Li} \\
{\normalfont Department of Statistics and Data Science} \\
{\normalfont Tsinghua University} \\
{\normalfont Beijing, 100084, China} \\
{\normalfont \texttt{liyc22@mails.tsinghua.edu.cn}}
\and
{\bfseries Qian Lin}\thanks{Corresponding Author.} \\
{\normalfont Department of Statistics and Data Science} \\
{\normalfont Tsinghua University} \\
{\normalfont Beijing, 100084, China} \\
{\normalfont \texttt{qianlin@tsinghua.edu.cn}}
}
\date{}
\begin{document}

\maketitle

\begin{abstract}
Theoretically exploring the advantages of neural networks might be one of the most challenging problems in the AI era.
An adaptive feature program has recently been proposed to analyze feature learning, the characteristic property of neural networks, in a more abstract way.
 Motivated by the celebrated Le Cam equivalence, we advocate the over-parameterized sequence models to further simplify the analysis of the training dynamics of adaptive feature program and present several pieces of supporting evidence for the adaptive feature program.
 More precisely, after having introduced the feature error measure (FEM) to characterize the quality of the learned feature, we show that the FEM is decreasing during the training process of several concrete adaptive feature models including linear regression, single/multiple index models, etc.
 We believe that this hints at the potential successes of the adaptive feature program.


\end{abstract}

\keywords{feature learning, over-parameterization, implicit regularization, generalization, single-index model}


\section{Introduction}

The remarkable empirical success of neural networks has transformed modern data analysis, achieving unprecedented performance across diverse domains such as computer vision, natural language processing, and reinforcement learning.
These models consistently generalize well beyond their training data, even in complex, high-dimensional settings, often surpassing traditional statistical techniques.
Despite this practical success, the theoretical understanding of their generalization capabilities remains elusive, posing a significant challenge for researchers~\citep{zhang2017_UnderstandingDeep}.

A pivotal insight into this success lies in \emph{feature learning}, the process by which neural networks dynamically adapt their internal representations to uncover task-relevant patterns \citep{woodworth2020_KernelRich, ba2022_HighdimensionalAsymptotics, yang2022_FeatureLearning,zhang2024_StatisticalUnderstanding}.
Unlike classical non-parametric regression methods, such as kernel regression or spline smoothing~\citep{steinwart2008_SupportVector, wendland2004_ScatteredData}, which rely on static, predefined feature maps, neural networks exhibit a dynamic adaptability that defies traditional analysis.

However, due to the complex nature of neural networks, the theoretical frameworks for understanding feature learning in neural networks remain fragmented.
One tractable approach is the Neural Tangent Kernel (NTK) theory~\citep{jacot2018_NeuralTangent,arora2019_ExactComputation,lee2019_WideNeurala}, which models wide neural networks in the infinite-width limit, where the feature map remains static, behaving like a kernel method with fixed representations.
Hence, one can explain the generalization ability of neural networks via the corresponding kernel regression theory~\citep{steinwart2008_SupportVector,caponnetto2007_OptimalRates}.
While this framework enables the analysis of neural networks via kernel methods, it fails to capture the dynamic feature learning of realistic neural networks, which operate with finite widths and evolve their feature representations during training~\citep{woodworth2020_KernelRich}.

Another line of research~\citep{ba2022_HighdimensionalAsymptotics,moniri2024_TheoryNonLinear,cui2024_AsymptoticsFeature,bordelon2024_HowFeature,lejeune2023_AdaptiveTangent,dandi2024_RandomMatrix,yang2022_FeatureLearning,damian2022_NeuralNetworks,dandi2023_HowTwoLayer} focuses on understanding the feature learning behavior of neural networks through the lens of random matrix theory.
Viewing shallow neural networks as random feature models, these studies consider training the feature weights using only one-step gradient descent with output weights fixed and analyze the resulting feature matrix and its spectral properties, showing the generalization properties of the resulting feature weights.
These researches demonstrate that the feature matrix is adjusted to align with the target function, leading to spikes in its spectrum~\citep{dandi2024_RandomMatrix}.

The over-parameterization nature of neural networks has also been studied under the perspective of implicit regularization.
A key insight is that over-parameterized models, when optimized via gradient-based methods, exhibit implicit biases toward simpler solutions, thus exhibiting better generalization.
Recent studies include linear models~\citep{hoff2017_LassoFractional}, matrix factorization~\citep{gunasekar2017_ImplicitRegularization,arora2019_ImplicitRegularization,li2021_ResolvingImplicit,razin2021_ImplicitRegularization} and other models~\citep{yun2021_UnifyingView,nacson2022_ImplicitBias,fan2021_UnderstandingImplicit}.

These approaches all face the challenging dichotomy: How can we reconcile the dynamic feature learning of neural networks while retaining the tractability of statistical analysis?


\subsection{Adaptive Feature Program}

While adaptive feature learning has been explored in various contexts~\citep{woodworth2020_KernelRich,gatmiry2021_OptimizationAdaptive,lejeune2023_AdaptiveTangent,li2025_DiagonalOverparameterization},
a unified framework capturing its core principles remains elusive.
Building on our prior survey~\citep{zhang2024_StatisticalUnderstanding}, we propose a general \emph{adaptive feature program} that integrates the dynamic learning capabilities of neural networks into a structured statistical framework for non-parametric regression.

Consider the non-parametric regression problem \( y = f^*(x) + \epsilon \), where \( x \sim \mu \) is drawn from a distribution on the input space \(\mathcal{X}\), \( \epsilon \) is independent noise, and \( f^* : \mathcal{X} \to \mathbb{R} \) is the unknown target function.
Given i.i.d.\ samples \(\{(x_i, y_i)\}_{i=1}^n\) from this model, our goal is to estimate \(f^*\).
Classical non-parametric regression methods rely on a fixed feature map \( \Phi : \mathcal{X} \to H \), transforming inputs \( x \) into a feature representation in a Hilbert space \( H \) (e.g., \( \ell^2(\mathbb{N}) \)).
The predictor is then defined as \( f(x) = \ang{\Phi(x), \bm{\beta}}_H \), where \( \bm{\beta} \in H \) is a trainable coefficient.
However, selecting an effective \( \Phi \) is challenging, often leading to suboptimal alignment with \( f^* \) and poor performance.

Beyond fixed feature maps, we propose a parameterized feature map \( \Phi_{\theta} : \mathcal{X} \to H \), where \( \theta \) is another trainable parameter and \( H \) is a fixed Hilbert space.
The predictor becomes \( f(x) = \langle \Phi_{\theta}(x), \bm{\beta} \rangle_H \).
We employ gradient descent to jointly optimize \( \theta \) and \( \bm{\beta} \).
Defining the empirical loss as $\caL_n = \frac{1}{2n} \sum_{i=1}^n \left( y_i - f(x_i) \right)^2$,
the adaptive feature program trains both \(\theta\) and \(\bm{\beta}\) simultaneously via gradient descent (flow):
\begin{equation}
  \label{eq:AdaptiveFeatureProgram}
  \left\{
    \begin{aligned}
      \dot{\theta}_t &= - \nabla_{\theta} \caL_n, \\
      \dot{\bm{\beta}}_t &= - \nabla_{\bm{\beta}} \caL_n,
    \end{aligned}
  \right.
\end{equation}
where \(\bm{\beta}\) is typically initialized to zero, and \(\theta\)'s initialization depends on the parameterization of $\Phi_{\theta}$.

The adaptive feature model allows the feature map \(\Phi_{\theta}\) to evolve during training, discovering a representation that aligns more closely with \(f^*\).
Moreover, this process is achieved automatically via gradient descent rather than requiring problem-specific estimates.
This dynamics mirrors the behavior of neural networks, where feature representations formed by the network's weights are learned implicitly through training.

The flexibility of the parameterization \(\Phi_{\theta}\) enables the model to integrate a variety of models.
If $\Phi_{\theta}$ is fixed, then it degenerates to the standard kernel gradient descent method~\citep{yao2007_EarlyStopping}.
Other instances of $\Phi_{\theta}$ include over-parameterized linear regression, diagonal adaptive kernel methods and directional adaptive feature methods
that will be introduced later in the paper.
With various parameterization of $\Phi_{\theta}$, adaptive feature models adapt to different types of data structures in a unified manner,
For instance, in over-parameterized linear regression \cref{eq:OpGDLinear}, adapting the feature map \(\Phi_{\theta}\) helps identify sparse signal components.
Similarly, in diagonal adaptive kernel methods \cref{eq:DiagAdaK_Seq}, the model adjusts the kernel's spectral weights, improving the alignment between the feature map and the underlying function \(f^*\).
Under single-index models, where \(f^*\) depends on a subspace projection of the input, adapting \(\Phi_{\theta}\) in \cref{eq:Def_AdaK_SIM} enables the model to learn the relevant projection direction.

While adaptivity offers clear, intuitive benefits, it also introduces new challenges for theoretical analysis.
Unlike fixed-feature methods, where the feature map's static nature often permits closed-form solutions for the optimal coefficient, the adaptive scheme's simultaneous evolution of via gradient descent typically lacks an analytic solution.
Also, this joint optimization leads to non-linear dynamics, as the simultaneous updates couple the feature map and coefficient in complex, data-dependent ways.
Furthermore, the gradient descent can overfit the noisy training data if run indefinitely,
so a refined analysis on the early stopping time is often necessary.

\begin{figure}[ht]
  \centering
  \includegraphics[width=1\textwidth]{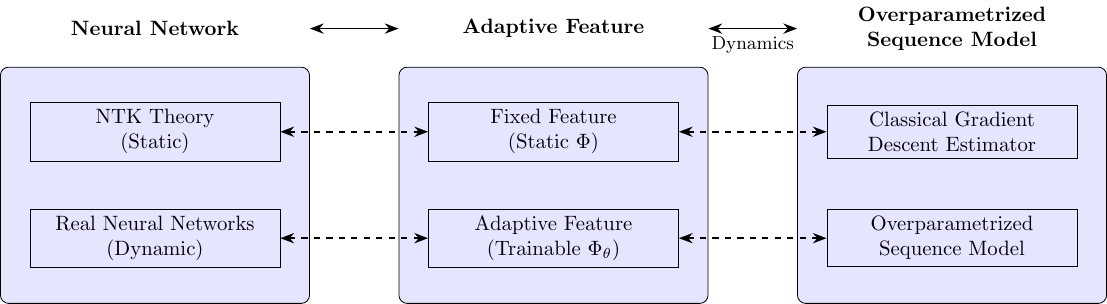}
  \caption{The program of this paper.
  We propose to model complex neural networks with adaptive feature program, capturing its dynamic feature learning.
  Moreover, we propose to analyze the adaptive features under the sequence model observation,
    which allows us to focus on the training dynamics while preserving the essence of non-parametric regression.
  }
  \label{fig:Diagram}
\end{figure}

\subsection{Feature Error Measure}

With a family of feature maps \(\Phi_{\theta}\) at hand, one crucial question arises: How to evaluate the effectiveness of the learned feature map?
To address this, we introduce the \emph{feature error measure}, an oracle metric designed to quantify how well the learned feature map \(\Phi_{\theta}\) aligns with the target function \(f^*\).
Let us consider feature maps of the form
\begin{equation}
  \Phi(x) = \xk{\lambda_j^{\hf} e_j(x)}_{j \in N} \in \ell^2(N),
\end{equation}
where \(N \subseteq \bbN\) is an index set, $\ell^2(N)$ is the space of square-summable sequences, \(\dk{e_j(\cdot)}_{j \in N}\) is an orthonormal system (not necessarily a basis) in \(L^2(\caX, \dd \mu)\), and \(\lambda_j \geq 0\) are weights.
Let \(L^2_{\Phi}\) be the subspace of \(L^2(\caX, \dd \mu)\) spanned by \(\dk{e_j}_{j \in N}\),
and denote the projection of \(f^*\) onto this subspace by \(P_{L^2_{\Phi}} f^* = \sum_{j \in N} f^*_j e_j\),
where \(f^*_j = \ang{f^*, e_j}_{L^2}\) are the coefficients of \(f^*\) in the orthonormal system.
We introduce the following definition of the feature error measure.

\begin{definition}
  The feature error measure, a function of $\delta, \epsilon^2 > 0$, is defined as
  \begin{equation}
    \label{eq:FeaturedError}
    \caE(\delta, \epsilon^2; \Phi, f^*) = \caE_{\mr{Proj}}(\Phi, f^*) + \caE_{\mr{Stat}}(\delta, \epsilon^2; \Phi, f^*),
  \end{equation}
  where the projection error $\caE_{\mr{Proj}}(\Phi, f^*)$ and statistical error $\caE_{\mr{Stat}}(\delta, \epsilon^2; \Phi, f^*)$ are given by
  \begin{align*}
    \caE_{\mr{Proj}}(\Phi, f^*) & \coloneqq \norm{f^* - P_{L^2_{\Phi}} f^*}_{L^2(\caX, \dd \mu)}^2, \\
    \caE_{\mr{Stat}}(\delta, \epsilon^2; \Phi, f^*) &\coloneqq \caE_{\text{V}} + \caE_{\text{B}}
    \coloneqq \abs{\dk{j \in N : \lambda_j \geq \delta}}  \cdot \epsilon^2 + \sum_{j \in N} (f_j^*)^2 \bm{1} \dk{\lambda_j < \delta}.
  \end{align*}
\end{definition}

The feature error measure \(\mathcal{E}(\delta, \epsilon^2; \Phi, f^*)\) quantifies the oracle error of the feature map \(\Phi\) in approximating the target function \(f^*\),
where the parameter \(\epsilon^2 > 0\) defines the effective noise level and \(\delta > 0\) acts as a truncation level.
It is composed of three components:
\begin{itemize}
  \item {Projection error} \(\caE_{\mr{Proj}}(\Phi, f^*)\): Measures the limit of the feature map \(\Phi\) in representing \(f^*\);
  \item {Variance term} \(\caE_{\text{V}}\): Reflects the model complexity via the number of significant components (i.e., \(\lambda_j \geq \delta\));
  \item {Bias term} \(\caE_{\text{B}}\): Captures the approximation error from features with small weights (i.e., \(\lambda_j < \delta\)).
\end{itemize}

By introducing $\epsilon^2$ instead of measuring the error with $n$ samples, this oracle quantity allows us to understand and analyze the feature map's performance in a more general sense.
Also, it enables us to separate the training of the feature map and the evaluation of the feature map, leading to a clearer understanding of the feature map's performance.
For \(n\) samples, $\epsilon^2$ typically scales as \(\epsilon^2 \asymp \frac{1}{n}\).

On the other hand, \(\delta\) sets a threshold: features with weights \(\lambda_j \geq \delta\) contribute to the model’s complexity, while those below are effectively ignored.
In the context of early-stopped gradient descent, we often have \(\delta \asymp t^{-1}\), where \(t\) is the training time—early stopping corresponds to a larger \(\delta\), limiting the number of active features to prevent overfitting.
Moreover, we can optimize \(\delta\) to minimize the error, defining optimally tuned error and optimal truncation level:
\begin{equation}
  \caE^*(\epsilon^2; \Phi, f^*) = \inf_{\delta \geq 0} \caE(\delta, \epsilon^2; \Phi, f^*), \quad
  \delta^*(\epsilon^2; \Phi, f^*) = \argmin_{\delta \geq 0} \caE(\delta, \epsilon^2; \Phi, f^*).
\end{equation}
However, we emphasize here that the optimally tuned error still depends on the feature map \(\Phi\) and it can differ substantially for different feature maps.

\begin{remark}
  We can always reformulate the projection error into the bias term of statistical error by extending the feature map with the orthogonal complement and zero weights.
  However, this approach can impact interpretability of the original form of the feature map,
  so we would like to keep the projection error in the feature error measure.
\end{remark}


The feature error measure quantifies the generalization error associated with a given feature map \(\Phi\).
Notably, considering the sequence model and assuming that $(e_j)_{j \geq 1}$ is a basis,
it captures the generalization error of linear estimators~\citep{johnstone2017_GaussianEstimation} associated with the feature map.
Particularly, the error measure \(\mathcal{E}_{\mr{Stat}}(\delta, \epsilon^2; \Phi, f^*)\) corresponds exactly to the generalization error of the estimator \(\hat{f}_j = \mathbf{1}_{\{\lambda_j \geq \delta\}} z_j\).
Moreover, let us consider the gradient descent estimator $\hat{f}^{\text{Seq}}_t$ in \cref{eq:MainText_Seq_Equiv},
which is also a linear estimator with closed form $\hat{f}_j  = (1 - e^{-\lambda_j t}) z_j$.
Its generalization error is given by
\begin{equation}
  \E \norm{f^* - \hat{f}^{\text{Seq}}_t }_{L^2}^2 = \mathcal{E}_{\text{V}}^{\text{GD}}(t) + \mathcal{E}_{\text{B}}^{\text{GD}}(t),\quad
  \mathcal{E}_{\text{V}}^{\text{GD}}(t) = \frac{1}{n} \sum_{j \geq 1} (1 - e^{-\lambda_j t})^2, \quad \mathcal{E}_{\text{B}}^{\text{GD}}(t) = \sum_{j \geq 1} e^{-2 \lambda_j t} (f_j^*)^2.
\end{equation}
By setting \(t = \delta^{-1}\), the terms \(\mathcal{E}_{\text{V}}^{\text{GD}}(t)\) and \(\mathcal{E}_{\text{B}}^{\text{GD}}(t)\) closely approximate \(\mathcal{E}_{\text{V}}\) and \(\mathcal{E}_{\text{B}}\), respectively.
These connections demonstrate that the feature error measure serves as a representative oracle proxy for measuring the quality of the feature map \(\Phi\) in learning the target function \(f^*\).

\subsection{Overparametrized Sequence Models}\label{subsec:SeqModel}

To focus on the dynamics of the adaptive feature while preserving the essence of non-parametric regression,
we further propose to consider the Gaussian sequence model~\citep{johnstone2017_GaussianEstimation}.
Suppose that we are given a fixed orthonormal basis \( (\phi_\ell)_{\ell \in \caI}\) in \(L^2(\caX, \dd \mu)\), where $\caI$ is an index set.
It has been observed in the literature~\citep{johnstone2017_GaussianEstimation,zhang2024_StatisticalUnderstanding,reiss2008_AsymptoticEquivalence,brown2002_AsymptoticEquivalence} that
observing $n$ samples in the non-parametric regression problem is effectively equivalent to observing the collection $(z_\ell)_{\ell \in \caI}$ in the sequence model
\begin{equation}
  \label{eq:SeqModel}
  z_\ell = f^*_\ell + \ep_{\ell}, \quad f^*_\ell = \ang{f^*, \phi_\ell}_{L^2(\caX, \dd \mu)}, \quad \ep_{\ell} \stackrel{i.i.d.}{\sim} \caN(0,\sigma^2/n),\quad \ell \in \caI.
\end{equation}
Here, $f^*_\ell$ represents the true coefficient of the target function \(f^*\) in the orthonormal basis, \(\ep_{\ell}\) is the noise term, being independent across different indices, and $\sigma^2$ is a variance parameter.
We point out that the variance $\sigma^2/n$ of the noise scales with the number of samples, reflecting the effect of averaging over $n$ samples.

Let $f$ be a candidate function.
Since \( (\phi_\ell)_{\ell \in \caI}\) is an orthonormal basis, the population loss (excess risk) can be written as
$\norm{f - f^*}_2^2 = \sum_{\ell \in \caI} (f_\ell - f^*_\ell)^2$, where $f_{\ell} = \ang{f, \phi_\ell}_{L^2}$.
Therefore, we define similarly the sequence loss in the sequence model as
\begin{equation}
  \label{eq:SeqLoss}
  \bar{\caL}_n(f) = \hf \sum_{\ell \in \caI} \xk{f_\ell - z_\ell}^2, \quad f_\ell = \ang{f, \phi_\ell}_{L^2},
\end{equation}
which corresponds to the empirical loss $\caL_n$ in the non-parametric regression problem.
Then, the adaptive feature program under sequence models is almost the same as it in the sample version,
as we only need to replace $\caL_n$ in \cref{eq:AdaptiveFeatureProgram} with $\bar{\caL}_n$.

The shift from finite samples to the sequence model observation is not only validated by the so-called ``Le Cam equivalence''~\citep{brown2002_AsymptoticEquivalence,reiss2008_AsymptoticEquivalence},
but also justified by recent works~\citep{li2024_GeneralizationError} on the generalization error of gradient descent with fixed feature map.
Let us consider the training process \cref{eq:AdaptiveFeatureProgram} with $\Phi = (\lambda_{\ell}^{\hf} \phi_{\ell})_{\ell \in \caI}$ fixed and denote by
\(\hat{f}^{\text{GD}}_t\) and $\hat{f}^{\text{Seq}}_t$ the resulting predictor at time \(t\) under the loss $\caL_n$ and $\bar{\caL}_n$ respectively.
Then, it has been established in \citet{li2024_GeneralizationError} that, under suitable conditions,
\begin{equation}
  \label{eq:MainText_Seq_Equiv}
  \norm{f^* - \hat{f}^{\text{GD}}_t }_{L^2}^2
  =  (1 + o_{\mathbb{P}}(1)) \E \norm{f^* - \hat{f}^{\text{Seq}}_t }_{L^2}^2,\qq{as} n \to \infty,
\end{equation}
where $o_{\mathbb{P}}(1)$ denotes a term that converges to zero in probability.

Furthermore, for other instances of the adaptive feature models, we can also observe empirically the closeness between the predictor under the empirical loss and sequence loss.
As shown in \cref{fig:SeqDiff}, the gap between the two predictors vanishes as the number of samples increases, as long as the training time is bounded in a certain range.
This similarity of the two dynamics allows us to consider the adaptive feature model under the sequence loss as an effective approximation.
We hypothesize that this strong ``path equivalence'' between the two training processes generally holds true for a broad class of adaptive feature models,
which is beyond the focus of our current work but will be an interesting future direction.

\subsection{Goal of the Paper}

In this paper, we will investigate various instances of the adaptive feature program across various statistical settings,
focusing on the dynamics of the feature map \(\Phi_{\theta}\) and its impact on the feature error measure.
We demonstrate that the adaptive feature models \emph{consistently reduces the feature error measure—sometimes monotonically, sometimes in distinct phases—often achieving near-optimal feature error rates}.
Focusing on the sequence model in \cref{sec:OpSeq}, our instances include high-dimensional linear regression, kernel regression, and single- and multi-index models,
each with its own unique feature map parameterization and training dynamics.
For linear and kernel regression, we explore diagonal adaptive methods with fixed feature bases, showing monotonic error reduction by aligning feature weights with the target function (e.g., \cref{thm:DiagSparseSeq_Monotonic}, \cref{thm:DiagonalKernelDecay}).
For single- and multi-index models, we investigate models that learn directional structures, revealing phased error reduction and near-optimal rates (e.g., \cref{thm:SIM_Sequence}, \cref{thm:Multi_Seq}).
Moreover, returning to the sample version in \cref{sec:RealAdaF}, we also demonstrate similar behavior for the adaptive feature program for diagonal adaptive features.
Numerical experiments also support our theoretical findings on adaptive features.
Our analysis highlights the adaptive feature program's ability to learn effective representations,
showing the potential of this framework in understanding the feature learning dynamics of neural networks and its implications for generalization.

\subsection{Notation}

We will use $C,c,C_1,C_2,\ldots$ to denote positive generic constants that may change from line to line, the dependence of which depends on the context.
We write $a \lesssim b$ if there exists a constant $C>0$ such that $a \leq C b$ and similarly for $\gtrsim$.
We use $a \asymp b$ if $a \lesssim b$ and $b \lesssim a$.
For an integer $n$, we denote by $[n] = \{1,2,\ldots,n\}$.
We denote by $\abs{X}$ the cardinality of a set $X$.
We use $L^2(\caX, \dd \mu)$ or simply $L^2$ for the Hilbert space of square-integrable functions with respect to the measure $\mu$
and $\ang{\cdot,\cdot}_{L^2}, \norm{\cdot}_{L^2}$ for its inner product and norm, respectively.


\section{Overparametrized Sequence Models}\label{sec:OpSeq}

In this section, we will investigate adaptive feature models in the context of overparametrized sequence models.

\subsection{Diagonal Adaptive Feature Models}\label{subsec:DiagAdaKSeq}

Let us consider a special setting of the adaptive feature model where the feature basis is fixed but the feature weights are trainable.
Although this setting seems to be simplistic,
recent studies\citep{vaskevicius2019_ImplicitRegularization,zhao2022_HighdimensionalLinear,li2024_ImprovingAdaptivity} have shown that certain adaptive feature methods can substantially improve the generalization performance compared to the fixed feature method.
In the following, we will further investigate the learning process of the features by means of the feature error measure in \cref{eq:FeaturedError}.

Let us consider a fixed feature basis $\dk{e_j}_{j \in N}$, where $N$ is an index set (e.g., $N = \bbN$).
Since the feature basis is fixed, the projection error $\caE_{\text{Proj}}(\Phi,f^*)$ is a fixed constant,
so we can assume without loss of generality that $f^*$ is contained in the span of $\dk{e_j}_{j \in N}$.
Then, the feature error measure in \cref{eq:FeaturedError} simplifies to
\begin{equation}
  \label{eq:FeaturedErrorFixed}
  \caE(\delta;\epsilon^2;\Phi,f^*) = \caE_{\text{Stat}}(\delta,\epsilon^2;\Phi,f^*)
  = \# \dk{j \in N :  \lambda_j \geq \delta} \epsilon^2 + \sum_{j \in N} (f_j^*)^2 \bm{1} \dk{ \lambda_j < \delta},
\end{equation}
where $f^* = \sum_{j \in N} f_j^* e_j$ is the true function expressed in the basis,
and $\lambda_j$ is the weight sequence associated with the feature map $\Phi$.
Under the fixed feature basis, the feature map effectively designates an indices' order of learning via the weight sequence.
The feature error measure is minimized when the order given by the feature map coincides with that of the truth function's coefficients.
Hence, the feature error measure can be interpreted as a measure of the ``misalignment'' between the truth function and the feature map.

\subsubsection{High Dimensional Sparse Mean}\label{subsubsec:DiagSparseSeq}

To warm up, let us consider the high-dimensional linear regression under the sequence model observations $z_j = w^*_j + \ep_j$ for $j \in [d]$, where $d$ represents the dimension.
Let us assume that $(w^*_j)_{j \in [d]}$ is a sparse vector with $s^*$ non-zero entries.
We consider the asymptotics when the dimension $d \geq n$ goes to infinite with $n$, while the sparsity $s^*$ is fixed.

Recent literature has proposed an over-parameterized gradient descent\citep{vaskevicius2019_ImplicitRegularization,zhao2022_HighdimensionalLinear} under this setting,
which is a special case of the adaptive feature program.
For $\bm\theta \in \R^d$, we take the parameterized feature map $\Phi_{\bm{\theta}}(x) = (\theta_j x_j e_j)_{j \in [d]} : \R^d \to \R^d$,
where $e_j$ is the $j$-th standard unit vector in $\R^d$.
Let $\bm\beta \in \R^d$ be the coefficient vector.
The predictor is defined by $f(x) = \ang{\bm\beta,\Phi_{\bm\theta}(x)}_{\R^d}$.
Recalling the adaptive feature program in \cref{eq:AdaptiveFeatureProgram},
we consider the following gradient descent dynamics:
\begin{equation}
  \label{eq:OpGDLinear}
  \left\{
    \begin{aligned}
      \dot{\bm{\beta}}(t) &= - \nabla_{\bm\beta} \bar\caL_n,\quad \beta_j(0) = 0; \\
      \dot{\bm\theta}(t) &= - \nabla_{\bm\theta} \bar\caL_n, \quad \theta_j(0) = \alpha,
    \end{aligned}
  \right.
\end{equation}
where $\alpha > 0$ is a common initialization that will be chosen later.
Here, we remark that while $\bm{\beta}$ and $\bm\theta$ seems to be symmetric, but their initializations are different.
More importantly, they have different interpretations: $\bm\beta$ is the coefficient of the output function, while $\bm\theta$ is the parameter of the feature map.

While the recent literature~\citep{vaskevicius2019_ImplicitRegularization,zhao2022_HighdimensionalLinear} view the over-parameterized gradient descent as ``implicit regularization'' and establish the generalization performance of the method,
we investigate this method under the adaptive feature perspective and study how the feature evolves during the training process,
which requires a refined analysis.
We have the following theorem, whose proof is contained in \suppref{subsec:OpSparseSeq}.

\begin{theorem}
  \label{thm:DiagSparseSeq_Monotonic}
  Consider the adaptive feature model \cref{eq:OpGDLinear}.
  With $t_* = t_*(n) \asymp \log n$ and $\alpha \asymp d^{-1/2}$,
  it holds with probability at least $1 - C d^{-2}$ that
  \begin{equation*}
    \caE^*(n^{-1};\Phi_{\bm{\theta}(t)},w^*) \quad \text{is monotonically decreasing in $t$ for $t \in [0,t_*]$}.
  \end{equation*}
  Furthermore,
  \begin{equation*}
    \caE^*(n^{-1};\Phi_{\bm{\theta}(0)},w^*) = \min\xk{\frac{d}{n}, \norm{w^*}_2^2} \gg \caE^*(n^{-1};\Phi_{\bm{\theta}(t_*)},w^*) = \frac{s^*}{n}.
  \end{equation*}
\end{theorem}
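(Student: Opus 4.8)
The plan is to analyze the coupled ODE system \cref{eq:OpGDLinear} coordinate by coordinate, exploiting the fact that under the sequence loss $\bar\caL_n$ the dynamics decouple across indices $j\in[d]$: each pair $(\beta_j,\theta_j)$ evolves according to a two-dimensional autonomous system driven only by the scalar observation $z_j = w^*_j + \ep_j$. Writing $u_j = \theta_j\beta_j$ for the effective coefficient, one gets a closed dynamics for $u_j$ (and for $\theta_j^2+\beta_j^2$ or $\theta_j^2-\beta_j^2$, which is conserved or simply evolving), of the Woodworth/HaoChen--Vaskevicius type: $\dot u_j$ is governed by $\theta_j^2(z_j - u_j)$, with $\theta_j^2$ growing roughly like $\alpha^2 e^{2 z_j t}$ until $u_j$ saturates near $z_j$. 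This gives the key qualitative picture: for a coordinate with $|z_j|$ large (the $s^*$ signal coordinates, where $|z_j|\approx |w^*_j|$ is bounded below w.h.p.), the weight $\lambda_j$ — which for this feature map is $\lambda_j \asymp \theta_j^2$ or more precisely the quantity controlling the effective learning in the feature error measure — crosses any fixed threshold $\delta$ quickly (time $\asymp \log(1/\alpha)/|z_j|$), while for the $d-s^*$ noise coordinates (where $|z_j|\lesssim \sqrt{\log d/n}$ w.h.p. by a Gaussian maximal inequality) the weight stays negligible until time $\asymp \log(1/\alpha)/\sqrt{\log d/n}$, which for $\alpha\asymp d^{-1/2}$ and $t_* \asymp \log n$ is past $t_*$.

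Concretely, I would proceed in the following steps. (1) Derive the decoupled scalar dynamics and solve/estimate $\theta_j(t)^2$ and $u_j(t)=\theta_j(t)\beta_j(t)$, obtaining two-sided bounds: an \emph{escape time} estimate showing $\theta_j(t)^2 \geq \delta$ once $t \gtrsim \frac{1}{|z_j|}\log\frac{1}{\alpha\sqrt\delta}$ (for $|z_j|$ bounded away from $0$), and a \emph{confinement} estimate showing $\theta_j(t)^2 \leq \delta$ as long as $t \leq \frac{c}{|z_j|}\log\frac{1}{\alpha\sqrt\delta}$. This is essentially the content already used in \citep{vaskevicius2019_ImplicitRegularization,zhao2022_HighdimensionalLinear}; I would cite and adapt it. (2) Invoke the high-probability event $\mathcal G = \{\max_{j:\,w^*_j=0}|\ep_j| \leq C\sqrt{\log d / n}\} \cap \{\min_{j:\,w^*_j\neq 0}|z_j| \geq c\}$, which holds with probability $1 - Cd^{-2}$ by Gaussian tail bounds and a union bound over $d$ coordinates (adjusting the constant $C$ in the $\sqrt{\log d/n}$ bound to get the $d^{-2}$). (3) On $\mathcal G$, for the optimally-tuned error $\caE^*(n^{-1};\Phi_{\bm\theta(t)},w^*) = \inf_\delta \big[\#\{j:\lambda_j(t)\geq\delta\}\,n^{-1} + \sum_j (w^*_j)^2\bm 1\{\lambda_j(t)<\delta\}\big]$, track how the sorted sequence $(\lambda_j(t))_j$ evolves: show that the $s^*$ signal coordinates have $\lambda_j(t)$ monotonically increasing and becoming the top $s^*$ entries by some time $t_1 \lesssim \log(1/\alpha)/c \asymp \log n$, after which the optimal choice is $\delta$ between the top-$s^*$ noise weight and the bottom signal weight, giving $\caE^* = s^*/n$; and that \emph{before} this, the optimal tradeoff only improves as more signal mass gets captured and as noise weights remain small. (4) Assemble monotonicity: the value $\caE^*(\delta^{\mathrm{opt}}_t, \cdot)$ at the running optimizer can only decrease because (a) each signal coordinate's weight is monotone increasing, moving bias mass $(w^*_j)^2$ from the bias term into (at most) the variance term at a fixed $n^{-1}$ cost, and (b) the noise weights, while also increasing, do not reach the relevant threshold regime within $[0,t_*]$, so they never force an increase in the variance count at the optimal $\delta$. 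A clean way to package (4) is to exhibit, for each $t<t'$ in $[0,t_*]$, a choice of $\delta'$ for time $t'$ whose error is $\leq$ the optimal error at time $t$ — using monotonicity of each $\lambda_j(\cdot)$ this is a direct comparison of the two sums. (5) Endpoint evaluation: at $t=0$, all $\theta_j = \alpha$ so all $\lambda_j$ are equal; then $\caE^*(n^{-1};\Phi_{\bm\theta(0)},w^*)$ is either $d\cdot n^{-1}$ (take $\delta$ below the common value, pay variance for all $d$ coords) or $\|w^*\|_2^2$ (take $\delta$ above it, pay full bias), whichever is smaller — giving the stated $\min(d/n,\|w^*\|_2^2)$; at $t=t_*$, by the escape/confinement dichotomy the top $s^*$ weights are the signal coords and everything else is below threshold, so the optimum is $s^*\cdot n^{-1} + 0 = s^*/n$.

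**Main obstacle.** The delicate point is step (4): establishing genuine \emph{monotonicity} of $\caE^*$ rather than just comparing the two endpoints. Because $\caE^*$ is an infimum over $\delta$ of a non-smooth expression, and because \emph{all} weights $\lambda_j(t)$ (signal and noise) are increasing simultaneously, one must rule out that a noise coordinate's weight crossing the running optimal threshold $\delta^{\mathrm{opt}}_t$ causes a jump up in the variance count that is not compensated. The resolution is quantitative control of the separation of time scales — showing that on $[0,t_*]$ with $t_*\asymp\log n$ and $\alpha\asymp d^{-1/2}$, the noise weights stay so far below the signal weights that the optimal $\delta$ can always be chosen in the gap, so only signal-coordinate crossings (which are strictly beneficial, trading a bias chunk for a fixed $n^{-1}$) ever matter. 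Making the window $[0,t_*]$ exactly the right length, and checking that $\alpha\asymp d^{-1/2}$ is the precise initialization that lets $t_*\asymp \log n$ both exceed all signal escape times and stay below all noise escape times, is where the bulk of the careful estimation lies; the rest is bookkeeping with Gaussian tail bounds and the explicit (or comparison-based) solution of the scalar dynamics.
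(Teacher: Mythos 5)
Your high-level strategy matches the paper's: decouple into scalar two-layer ODEs, bound $z_j$ on a good Gaussian event, and establish an ordering between signal and noise weights so the infimum over $\delta$ can only improve. The endpoint calculations (steps 2, 5) are exactly right. The difference — and the place your argument as written has a real gap — is in how you establish monotonicity.

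Your step (4) as packaged will not go through: ``using monotonicity of each $\lambda_j(\cdot)$ this is a direct comparison of the two sums'' is false. If every $\lambda_j$ increases, then with $\delta'$ fixed at the old optimizer, the variance count $\#\{j:\lambda_j(t')\ge\delta'\}$ can only go up while the bias $\sum (w^*_j)^2\bm 1\{\lambda_j(t')<\delta'\}$ can only go down, and nothing forces the net change to be $\le 0$: a noise coordinate crossing $\delta'$ adds $n^{-1}$ with no compensating bias drop. What you actually need is \emph{relative} ordering, and you correctly flag this as the main obstacle, but your proposed fix (``separation of time scales, so the optimal $\delta$ fits in the gap'') is the wrong tool for the beginning of the trajectory: at $t=0$ all weights equal $\alpha^2$, there is no gap, and it is only at $t=0$ where your claim would be most delicate.

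The paper resolves this with two ideas that you do not quite reach. First, a structural lemma (Proposition~\ref{prop:FEM_Monotonic} / Corollary~\ref{cor:FEM_Monotonic}) characterizing exactly when $\caE^*$ can increase: only via an \emph{up-crossing}, i.e.\ a signal weight that was above a noise weight dropping below it. This replaces your attempted direct infimum comparison with a clean combinatorial criterion and is the right abstraction (it is reused verbatim for the nonparametric case). Second, and crucially, rather than a quantitative timescale-separation bound, the paper proves an ODE comparison lemma (Lemma~\ref{lem:TwoLayerEq_Comparison}): for the two-layer scalar dynamics, if $\min_t|z_j|\ge\max_t|z_k|$ and $\theta_j(t_0)\ge\theta_k(t_0)$ then $\theta_j(t)\ge\theta_k(t)$ for all $t\ge t_0$. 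Since all $\theta_j(0)=\alpha$ are equal and on the good event signals have $|z_j|\ge c$ dominating noises' $|z_k|\lesssim\sqrt{\log d/n}$, the ordering holds at $t=0$ and is preserved forever — no up-crossing can occur, and Lemma~\ref{lem:FEM_Monotonic} gives monotonicity on all of $[0,t_*]$ without any quantitative gap estimate. Your escape/confinement bounds (Lemma~\ref{lem:TwoLayerEq_SignalLowerBound}, Lemma~\ref{lem:TwoLayerEq_NoiseUpperBound} in the paper) are then only needed for the endpoint value $\caE^*(n^{-1};\Phi_{\bm\theta(t_*)},w^*)=s^*/n$, not for the monotonicity itself. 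So to complete your argument, replace the timescale-separation heuristic with (i) the up-crossing characterization and (ii) a comparison principle for the scalar flow.
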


As an instance of the adaptive feature program, \cref{thm:DiagSparseSeq_Monotonic} demonstrates that
the over-parameterized high-dimensional linear regression improves the feature error measure during the training process.
The result shows that as soon as the training starts, the feature error measure decreases monotonically until the early stopping time \( t_* \).
Moreover, the initial feature map is agnostic to the true signal and has an error measure of $d/n$.
In contrast, by identifying the relevant features during training, the final feature map reduces the error to the optimal rate \( s^*/n \).


\subsubsection{Non-parametric Regression}\label{subsubsec:DiagNonparam}

We now turn to non-parametric regression under the sequence model observation \cref{eq:SeqModel} with the fixed feature basis $\dk{e_j}_{j \geq 1}$.
We consider the parameterized feature map in the form
\begin{equation}
  \label{eq:DiagAdaK_FeatureMap}
  \Phi_{\bm{\theta}}(x) = (\theta_j e_j(x))_{j \geq 1},\quad \bm{\theta} = (\theta_j)_{j \geq 1} \in \ell^2(\bbN),
\end{equation}
which is similar to the one in \cref{eq:OpGDLinear}.
With a coefficient vector $\beta \in \ell^2(\bbN)$, we define the predictor as
$f(x) = \ang{\beta,\Phi_{\bm\theta}(x)}_{\ell^2(\bbN)} = \sum_{j \geq 1} \beta_j \theta_j e_j(x)$,
and $f_j = \beta_j \theta_j$ being the corresponding coefficient.
Using \cref{eq:SeqLoss}, we consider the following adaptive feature model:
\begin{equation}
  \label{eq:DiagAdaK_Seq}
  \left\{
    \begin{aligned}
      \dot{\bm{\beta}}(t) &= - \nabla_{\bm\beta} \bar\caL_n,\quad \beta_j(0) = 0; \\
      \dot{\bm\theta}(t) &= - \nabla_{\bm\theta} \bar\caL_n, \quad \theta_j(0) = \lambda_j^{\hf},
    \end{aligned}
  \right.
\end{equation}
where $(\lambda_j)_{j \geq 1} \in \ell^1(\bbN)$ is a fixed weight sequence giving the initialization of the feature map.
While the generalization performance of the method \cref{eq:DiagAdaK_Seq} was studied in \citet{li2024_ImprovingAdaptivity},
we further investigate the evolution of the feature map using the feature error measure in \cref{eq:FeaturedErrorFixed}.

We make the following assumption on the weight sequence $\lambda_j$ and the truth coefficients $f_j^*$ as considered in \citet{li2024_ImprovingAdaptivity}.
\begin{assumption}
  \label{assu:DiagonalKernel_pq}
  Assume that $\lambda_j \asymp j^{-\gamma}$ for $\gamma > 1$.
  Furthermore, there exist $p > 0$ and $q > 1$ such that $f_{j(\ell)}^* \asymp \ell^{-\frac{p+1}{2}}$ for an index sequence $j(\ell) \asymp \ell^{q}$,
  and $f_{k}^* = 0$ for all other indices.
\end{assumption}
\cref{assu:DiagonalKernel_pq} quantifies the ``smoothness'' of the truth function as well as the ``misalignment'' between the truth coefficients and the initial weight sequence:
the former is characterized by the decay rate parameter $p$, while the latter is quantified by the parameter $q$.
Larger $q$ indicates a larger misalignment between the truth coefficients and the initial weight sequence.
This assumption holds, for example, if $f^*$ is a low-dimensional function expressed in a high-dimensional basis,
where $q$ often scales as the ambient dimension $d$.
We can establish the following theorem, which is proven in \suppref{subsec:OpSeq}.

\begin{theorem}
  \label{thm:DiagonalSeq_Monotonic}
  Consider the model defined in \cref{eq:DiagAdaK_Seq} under \cref{assu:DiagonalKernel_pq}.
  With $t_* = t_*(n) \asymp \sqrt{n / (\log n)}$, it holds with probability at least $1 - C n^{-2}$ that
  \begin{equation}
    \caE^*(n^{-1};\Phi_{\bm{\theta}(t)},f^*) \quad \text{is monotonically decreasing in $t$ for $t \in [0,t_*]$}.
  \end{equation}
  Furthermore, if $\gamma > \frac{1}{2}(1 + \frac{p}{q})$, then
  \begin{equation*}
    \caE^*(n^{-1};\Phi_{\bm{\theta}(0)},f^*) \asymp n^{-\frac{p}{p+q}} \gg
    n^{-(1-\frac{1}{2\gamma})} + n^{-\frac{p}{p+1}} (\log n)^{\frac{2p}{p+1}}
    \gtrsim
    \caE^*(n^{-1};\Phi_{\bm{\theta}(t_*)},f^*).
  \end{equation*}
\end{theorem}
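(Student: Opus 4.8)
The plan is to integrate the coupled flow coordinatewise, pass to a high-probability event on the noise, and then treat the monotonicity and the two endpoint estimates separately.

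\emph{Step 1 (coordinatewise reduction).} Writing $f_j = \beta_j\theta_j$ for the predictor coefficients and $r_j = z_j - f_j$ for the residuals, \cref{eq:DiagAdaK_Seq} decouples into $\dot\beta_j = r_j\theta_j$, $\dot\theta_j = r_j\beta_j$, which conserves $\theta_j^2 - \beta_j^2 \equiv \theta_j(0)^2 - \beta_j(0)^2 = \lambda_j$. Hence $w_j(t) := \theta_j(t)^2 = \lambda_j + \beta_j(t)^2$ is exactly the $j$-th weight of $\Phi_{\bm\theta(t)}$, and one computes $\dot w_j = 2 f_j(z_j - f_j)$ and $\dot f_j = r_j(2\beta_j^2 + \lambda_j)$. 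By uniqueness for this scalar system with $f_j(0)=0$, $f_j$ moves monotonically from $0$ to $z_j$ without overshoot, so $w_j$ is non-decreasing on $[0,\infty)$, with $w_j(0)=\lambda_j$ and $w_j(\infty) = \bar\lambda_j := \tfrac12\big(\lambda_j + \sqrt{\lambda_j^2 + 4 z_j^2}\big)$. Comparison with $\dot u = |z_j|\sqrt{\lambda_j + u^2}$ gives the two-sided control $\lambda_j \le w_j(t) \le \lambda_j\cosh^2(|z_j|t)$, together with the matching statement that $w_j(t) \gtrsim \bar\lambda_j$ once $t \gtrsim |z_j|^{-1}\log(1/\lambda_j)$; these per-coordinate estimates are essentially those of \citet{li2024_ImprovingAdaptivity}, which I would cite.

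\emph{Step 2 (concentration).} Since the sequence model arises from discretizing $n$ samples, the index set may be taken of polynomial size; a Gaussian tail estimate and a union bound then give, with probability at least $1 - Cn^{-2}$, that $\max_j|\ep_j| \le C\sigma\sqrt{(\log n)/n}$. On this event $|z_{j(\ell)}| \asymp |f^*_{j(\ell)}| \asymp \ell^{-(p+1)/2}$ for all $\ell$ with $\ell^{-(p+1)/2} \gtrsim \sqrt{(\log n)/n}$, while $|z_k| \lesssim \sqrt{(\log n)/n}$ at every index $k$ off the signal support. I work on this event for the rest of the proof.

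\emph{Step 3 (monotonicity on $[0,t_*]$).} With $\ep^2 = n^{-1}$ and $\caE_{\text{Proj}}=0$ we have $\caE(\delta,n^{-1};\Phi_{\bm\theta(t)},f^*) = v\big(\{j : w_j(t)\ge\delta\}\big)$ where $v(S) := |S|/n + \sum_{j\notin S}(f_j^*)^2$ does \emph{not} depend on $t$; hence $\caE^*(n^{-1};\Phi_{\bm\theta(t)},f^*) = \min_S v(S)$ over the super-level sets $S$ of $(w_j(t))_j$. For $0\le t\le t'\le t_*$ I would take the optimal threshold $\delta$ at time $t$, with set $S_t$, transport it to $S_{t'}=\{j:w_j(t')\ge\delta\}\supseteq S_t$ (using $w_j$ non-decreasing), and bound $v(S_{t'})-v(S_t) = \sum_{j\in S_{t'}\setminus S_t}\big(1/n - (f_j^*)^2\big)$. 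Any such $j$ has $\lambda_j = w_j(0) < \delta \le w_j(t')$, so its weight was genuinely boosted; by $w_j(t')\le\lambda_j\cosh^2(|z_j|t')$ and $t'\le t_*\asymp\sqrt{n/\log n}$ this forces $|z_j|\gtrsim\sqrt{(\log n)/n}$, which on the concentration event rules out noise indices and confines any signal index $j=j(\ell)$ to $\ell\lesssim(n/\log n)^{1/(p+1)}$, whence $(f_j^*)^2\asymp\ell^{-(p+1)}\ge 1/n$ for large $n$ — so every term is $\le 0$ and $\caE^*(t')\le\caE^*(t)$. \textbf{This is the main obstacle}: the argument above is clean for the "activated" coordinates, but non-activated and noise coordinates still drift past nearby thresholds within a multiplicative $1+o(1)$ window, and one must argue — using that the optimal threshold stays suitably separated from their weights, or by re-optimizing $\delta$ to exclude them at zero cost — that this never raises $\caE^*$. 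This is precisely why $t_*$ is taken at the scale $\sqrt{n/\log n}$ (the largest time at which the noise floor is not activated) and why the two-sided ODE estimates of Step~1 must be sharp.

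\emph{Step 4 (endpoint estimates).} At $t=0$, $w_j(0)=\lambda_j\asymp j^{-\gamma}$ gives $\#\{j:\lambda_j\ge\delta\}\asymp\delta^{-1/\gamma}$ and bias $\sum_{\ell:\lambda_{j(\ell)}<\delta}\ell^{-(p+1)}\asymp\delta^{p/(q\gamma)}$; minimizing $\tfrac1n\delta^{-1/\gamma}+\delta^{p/(q\gamma)}$ over $\delta$ yields $\caE^*(n^{-1};\Phi_{\bm\theta(0)},f^*)\asymp n^{-p/(p+q)}$. At $t=t_*$, Steps~1--2 show that signal coordinates $j(\ell)$ with $\ell\lesssim L_*:=(n/\log n)^{1/(p+1)}$ (up to logarithmic factors) are fully activated, with $w_{j(\ell)}(t_*)\asymp|z_{j(\ell)}|\asymp\ell^{-(p+1)/2}$, while every other coordinate $k$ keeps $w_k(t_*)\asymp\lambda_k\asymp k^{-\gamma}$; choosing the threshold $\delta\asymp w_{j(L)}(t_*)\asymp L^{-(p+1)/2}$ for $L\le L_*$ includes the top $\asymp L$ signal coordinates plus $\asymp L^{(p+1)/(2\gamma)}$ other coordinates, so $\caE\lesssim (L+L^{(p+1)/(2\gamma)})/n + L^{-p}$, and optimizing over $L\le L_*$ gives the stated bound $n^{-(1-1/(2\gamma))}+n^{-p/(p+1)}(\log n)^{2p/(p+1)}$ (the logarithmic powers absorbing the $\log(1/\lambda_j)$ in the activation times and the union-bound factor). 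Finally $\gamma>\tfrac12(1+p/q)$ is equivalent to $\tfrac{p}{p+q}<1-\tfrac1{2\gamma}$, and $q>1$ gives $\tfrac{p}{p+q}<\tfrac{p}{p+1}$, so $n^{-p/(p+q)}$ dominates both terms on the right and the displayed strict separation follows.
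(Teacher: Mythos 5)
Your Steps 1, 2, and 4 follow the same route as the paper. The gap you flag in Step 3 is real, and it is precisely what the paper's appendix machinery is built to resolve — not by transporting a fixed threshold, but by characterizing when $\caE^*$ can increase at all. Your fixed-$\delta$ transport gives $v(S_{t'}) - v(S_t) = \sum_{j\in S_{t'}\setminus S_t}\big(1/n - (f_j^*)^2\big)$, and you try to show each newly captured $j$ has $(f_j^*)^2\geq 1/n$ by arguing $|z_j|\gtrsim\sqrt{(\log n)/n}$. But the chain $\delta \leq w_j(t') \leq \lambda_j\cosh^2(|z_j|t')$ only gives $\cosh^2(|z_j|t') \geq \delta/\lambda_j$, which for a noise index with $\lambda_j$ just below $\delta$ imposes no meaningful lower bound on $|z_j|$, so the term $1/n-(f_j^*)^2$ can genuinely be positive. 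As you note, one can try to re-optimize $\delta$ at time $t'$ to exclude that index again; this works unless the noise index's weight has in the meantime overtaken the weight of some signal index that was in $S_t$ — in which case excluding the noise index also excludes that signal index, and the comparison can fail. Ruling out that overtaking is the missing ingredient.

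The paper avoids the threshold-transport argument entirely. Its Proposition and Corollary on feature-error monotonicity show that $\caE^*$ depends only on the \emph{order} of the weight sequence, and can increase only through an ``up-crossing'': a noise index $k$ (with $(f_k^*)^2<\epsilon^2$) attaining a weight at least that of a signal index $j$ (with $(f_j^*)^2\geq\epsilon^2$) that was above the current optimal threshold. Its monotonicity lemma then reduces the proof to two checks. First, on the head block $N_1 = \{j < L\}$ with $L\asymp(n/\log n)^{q/(p+1)}$, no signal index is ever overtaken by a noise index; this follows from the two-layer comparison lemma — if $|z_j|\geq|z_k|$ and $\theta_j(t_0)\geq\theta_k(t_0)$ then $\theta_j(t)\geq\theta_k(t)$ for all $t\geq t_0$ — applied on the high-probability event where every in-range signal $|z_{j(\ell)}|$ strictly dominates every noise $|z_k|$, $k<L$. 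Second, the tail $N_2=\{j\geq L\}$ stays strictly below the optimal threshold, certified by a continuity bootstrap that gives $\delta^*(\bm\lambda(t)) \geq \delta_0 \asymp n^{-q\gamma/(p+q)}$ whenever $\caE^*$ has not exceeded its initial value, together with the $\cosh$ upper bound on tail weights. Both checks compare weight to weight, or weight to a strictly separated threshold, so the multiplicative-$1+o(1)$ window never enters. You already have the per-coordinate ODE facts needed for both checks in your Step 1; what your Step 3 is missing is the reduction of $\caE^*$-increase to pairwise order reversals, after which the comparison principle finishes the argument.
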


Following \citet{li2024_ImprovingAdaptivity}, we can consider a deeper parameterization of the feature map.
Let $D \in \bbN^*$ be a fixed depth.
We consider the parameterized feature map
\begin{equation}
  \Phi_{\bm{\theta},\bm{b}}(x) = \xk{\theta_j b_j^D e_j(x) }_{j \geq 1},\quad \bm{\theta} = \xk{\theta_j}_{j \geq 1},~\bm{b} = \xk{b_j}_{j \geq 1}.
\end{equation}
Then, the predictor is given by
$f(x) = \ang{\beta,\Phi_{\bm\theta,\bm{b}}(x)}_{\ell^2(\bbN)} = \sum_{j \geq 1} \beta_j b_j^D \theta_j e_j(x)$, where $\beta_j, b_j, \theta_j$ are all trainable parameters.
The corresponding adaptive feature model writes
\begin{align}
  \label{eq:DiagAdaK_Seq_Multilayer}
  \left\{
    \begin{aligned}
      \dot{\bm{\beta}}(t) &= - \nabla_{\bm\beta} \bar\caL_n,\quad \bm{\beta}(0) = \bm{0}; \\
      \dot{\bm{\theta}}(t) &= - \nabla_{\bm\theta} \bar\caL_n, \quad \theta_j(0) = \lambda_j^{1/2}; \\
      \dot{\bm{b}}(t) &= - \nabla_{\bm{b}} \bar\caL_n, \quad b_j(0) = b_0,
    \end{aligned}
  \right.
\end{align}
where $b_0 > 0$ is a common initialization of the trainable weights $b_j$ which can be chosen according to $n$.
Regarding the deeper parameterization, we have the following theorem.

\begin{theorem}
  \label{thm:DiagonalKernelMonotonic_Multilayer}
  Consider the model defined in \cref{eq:DiagAdaK_Seq_Multilayer} under \cref{assu:DiagonalKernel_pq}.
  With $t_* = t_*(n) \asymp n^{\frac{D+1}{D+2}} / \sqrt{\log n}$ and $b_0 \asymp n^{-\frac{1}{2(D+2)}}$, it holds with probability at least $1 - C n^{-2}$ that
  \begin{equation}
    \caE^*(n^{-1};\Phi_{\bm{\theta}(t)},f^*) \quad \text{is monotonically decreasing in $t$ for $t \in [0,t_*]$}.
  \end{equation}
  Furthermore, if $\gamma > \frac{1}{D+2}(1 + \frac{p}{q})$, then
  \begin{equation*}
    \caE^*(n^{-1};\Phi_{\bm{\theta}(0)},f^*) \asymp n^{-\frac{p}{p+q}} \gg
    n^{-(1 - \frac{1}{(D+2)\gamma})}
    + n^{-\frac{p}{p+1}} (\log n)^{\frac{2p}{p+1}}
    \gtrsim
    \caE^*(n^{-1};\Phi_{\bm{\theta}(t_*)},f^*).
  \end{equation*}
\end{theorem}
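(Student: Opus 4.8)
\emph{Proof plan.} The plan is to decouple the flow \cref{eq:DiagAdaK_Seq_Multilayer} into scalar ODEs, exactly as for \cref{thm:DiagonalSeq_Monotonic}, and then cope with the extra nonlinearity coming from the depth $D$. Since $\bar\caL_n(f)=\hf\sum_j(f_j-z_j)^2$ with $f_j=\beta_j b_j^D\theta_j$, each triple $(\beta_j,\theta_j,b_j)$ evolves autonomously. A direct computation of the gradient flow gives the two conserved quantities $\theta_j^2-\beta_j^2\equiv\lambda_j$ and $b_j^2-D\theta_j^2\equiv b_0^2-D\lambda_j$, hence $\theta_j^2=\lambda_j+\beta_j^2$ and $b_j^2=b_0^2+D\beta_j^2$, so the whole coordinate is driven by the single scalar $\beta_j$, with predictor coefficient $f_j=\beta_j(b_0^2+D\beta_j^2)^{D/2}(\lambda_j+\beta_j^2)^{1/2}$ and \emph{effective feature weight} $\mu_j(t):=\theta_j(t)^2b_j(t)^{2D}=(\lambda_j+\beta_j(t)^2)(b_0^2+D\beta_j(t)^2)^D$. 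The feature error measure at time $t$ depends on $t$ only through the sequence $(\mu_j(t))_j$; since $\caE$ and $\caE^*$ are invariant under a uniform rescaling of the weights, and $\mu_j(0)=\lambda_j b_0^{2D}$, the quantity $\caE^*(n^{-1};\Phi_{\bm\theta(0),\bm b(0)},f^*)$ equals the value computed in \cref{thm:DiagonalSeq_Monotonic}, namely $\asymp n^{-p/(p+q)}$.

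Next I would prove coordinate-wise monotonicity of $\mu_j$. Because $b_j^2=b_0^2+D\beta_j^2\ge b_0^2>0$ and $\theta_j^2=\lambda_j+\beta_j^2>0$, the factor $b_j^D\theta_j$ keeps a fixed sign along the flow, so from $\dot\beta_j=(z_j-f_j)b_j^D\theta_j$ one gets $\mathrm{sign}(\dot\beta_j)=\mathrm{sign}(z_j-f_j)$; as $f_j$ is strictly increasing and odd in $\beta_j$, the scalar trajectory $\beta_j(t)$ stays on the side of $z_j$, $|\beta_j(t)|$ is nondecreasing, and it converges monotonically to the unique root of $f_j=z_j$. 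Hence $t\mapsto\mu_j(t)$ is nondecreasing for every $j$, and in particular global existence of the flow is automatic since each $|\beta_j|$ stays bounded by its limiting value.

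The technical heart is to upgrade this to monotonicity of the \emph{optimally tuned} error on $[0,t_*]$, which does not follow from the coordinate-wise statement since raising all $\mu_j$ trades bias against variance. The mechanism, as for \cref{thm:DiagonalSeq_Monotonic}, is a two-timescale separation. On the event $|\ep_j|\lesssim n^{-1/2}\sqrt{\log n}$ for all $j$ in a polynomial range (probability $\ge 1-Cn^{-2}$), I would extract from the scalar ODE: (i) a lower bound showing that each signal index $j(\ell)$, driven by $|z_{j(\ell)}|\asymp\ell^{-(p+1)/2}$, reaches its near-saturation level — where $\mu_{j(\ell)}$ is boosted well above its initial value $\lambda_{j(\ell)}b_0^{2D}$ — before time $t_*$; and (ii) an upper bound showing that a pure-noise index with $\lambda_j$ above the critical scale stays in the linearized phase $\beta_j^2\ll\min(b_0^2,\lambda_j)$, so $\mu_j(t)\asymp\lambda_j b_0^{2D}$, throughout $[0,t_*]$, since its growth rate is throttled by the small factor $b_0^{2D}\lambda_j$. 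The calibration $b_0\asymp n^{-1/(2(D+2))}$, $t_*\asymp n^{(D+1)/(D+2)}/\sqrt{\log n}$ is exactly what makes these timescales separate. Granting (i)--(ii), one fixes a threshold $\bar\delta$ near the critical scale $\asymp n^{-(D+1)/(D+2)}\log n$ that is near-optimal uniformly in $t\in[0,t_*]$: at level $\bar\delta$ the noise part of the active set is frozen at $\{j:\lambda_j b_0^{2D}\ge\bar\delta\}$ over the whole interval, while the signal part only enlarges with $t$, so $\caE(\bar\delta,n^{-1};\Phi_{\bm\theta(t),\bm b(t)},f^*)$ is manifestly nonincreasing; combined with the matching lower bound $\caE^*\gtrsim\caE(\bar\delta,\cdot)$ obtained from the same frontier estimates, this yields monotonicity of $\caE^*$ on $[0,t_*]$.

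Finally, the endpoints: $t=0$ was handled above, and at $t=t_*$ the frozen noise frontier $\#\{j:\lambda_j b_0^{2D}\gtrsim\bar\delta\}$ contributes the variance term $\asymp n^{-(1-1/((D+2)\gamma))}$ — this is where the depth enters, through $\mu_j(0)=\lambda_j b_0^{2D}$ and $b_0\asymp n^{-1/(2(D+2))}$ — while summing $(f^*_{j(\ell)})^2$ over the signal indices not yet activated gives the residual bias $\asymp n^{-p/(p+1)}(\log n)^{2p/(p+1)}$; the hypothesis $\gamma>\tfrac1{D+2}(1+\tfrac pq)$ is precisely the condition ensuring the signal frontier lies beyond the noise frontier so that this two-term bound is the operative one. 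I expect the main obstacle to be steps (i)--(ii): the scalar equation $\dot\beta_j=(z_j-f_j)b_j^D\theta_j$ genuinely changes character across the regimes where $\beta_j^2$ is small or large compared with $b_0^2$ and with $\lambda_j$, and stitching the hitting-time estimates across these regimes — which is what produces the exponent $\tfrac{D+1}{D+2}$ — while simultaneously controlling the optimization over $\delta$ uniformly in $t$ is the delicate part; the rest follows the template of \cref{thm:DiagonalSeq_Monotonic}.
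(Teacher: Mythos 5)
Your scalar reduction, the two conservation laws, the effective weight $\mu_j=\theta_j^2 b_j^{2D}$, and the coordinate-wise monotonicity of $\mu_j$ all match the paper's setup, and the endpoint calculations (initial FEM invariant under uniform rescaling, variance frontier $\#\{j:\lambda_j b_0^{2D}\gtrsim\bar\delta\}\asymp n^{1/((D+2)\gamma)}$, residual bias from non-activated signal indices) are essentially the paper's. The genuine gap is in the monotonicity step. You propose to fix a single threshold $\bar\delta$, show that $t\mapsto\caE(\bar\delta,n^{-1};\Phi_{\bm\theta(t),\bm b(t)},f^*)$ is nonincreasing, and combine this with $\caE^*\gtrsim\caE(\bar\delta,\cdot)$. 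But since $\caE^*\le\caE(\bar\delta,\cdot)$ always holds by definition, your two bounds together only give $\caE^*\asymp\caE(\bar\delta,\cdot)$, i.e.\ nonincrease up to constant factors; they do \emph{not} give the theorem's claim that $t\mapsto\caE^*(n^{-1};\Phi_{\bm\theta(t),\bm b(t)},f^*)$ is itself monotone. (Take any sandwiched quantity oscillating within a constant band to see the failure.)

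The paper avoids this by working directly with the optimizer $\delta^*(t)$ rather than a fixed $\bar\delta$. The key structural input is \cref{lem:FEM_Monotonic} together with \cref{cor:FEM_Monotonic}: $\caE^*$ can only increase if an ``up-crossing'' occurs, meaning some noise index $k$ overtakes a signal index $j$ in the ordering of effective weights while both straddle $\delta^*(t)$. Ruling this out requires (a) the dynamic comparison principle \cref{lem:MultiLayerEq_Comparison} (larger driving term $|z_j|$ preserves dominance in $\theta_j b_j^D$, using the common initialization $b_j(0)=b_0$), and (b) a proof that $\delta^*(t)$ stays above a floor $\delta_0\asymp b_0^{2D}n^{-q\gamma/(p+q)}$ so that the large-$j$ tail never enters the active set, which is itself an a-priori bound established by a continuity/bootstrapping argument (the paragraph starting ``Finally, let us show that the time interval $[0,T]$ can actually cover\ldots''). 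Your plan contains the ingredients for (a) implicitly (the regime analysis of the scalar ODE), but not the mechanism by which pointwise ODE estimates upgrade to monotonicity of the infimum over $\delta$; you would need to replace the fixed-$\bar\delta$ argument with the up-crossing/ordering argument tracking $\delta^*(t)$ itself.
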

\cref{thm:DiagonalSeq_Monotonic} and \cref{thm:DiagonalKernelMonotonic_Multilayer} show
the improvement of the feature error measure during the training process,
being similar to the regression case in \cref{thm:DiagSparseSeq_Monotonic} but more complicated.

The feature map \( \Phi_{\bm{\theta}(0)} \) has a feature error measure \( \caE^*(n^{-1}; \Phi_{\bm{\theta}(0)}, f^*) \asymp n^{-\frac{p}{p+q}} \),
which is largely impacted by the misalignment \( q > 1 \).
During the training process, the adaptive feature gradually adjusts to the truth function,
resulting in a feature error measure \( \caE^*(n^{-1}; \Phi_{\bm{\theta}(t)}, f^*) \) that is monotonically decreasing in \( t \).
The term \( n^{-(1 - \frac{1}{(D+2)\gamma})} \) in the final feature error measure comes from the initial misalignment that cannot be fully corrected.
Nevertheless, as long as the initial decay is fast enough that $\gamma > \frac{p+1}{D+2}$, the final feature error measure recovers the non-parametric optimal rate \( n^{-\frac{p}{p+1}} \) with a logarithmic factor.

Moreover, as observed in \citet{li2024_ImprovingAdaptivity}, the introduction of the depth \( D \) in \cref{thm:DiagonalKernelMonotonic_Multilayer} can potentially improve the feature error measure by relaxing the need for initial fast decay.
The benefits of depth appear on the extra error term \( n^{-(1 - \frac{1}{(D+2)\gamma})} \) caused by the misalignment,
which decreases as \( D \) increases.
This improvement stems from the deeper parameterization's enhanced flexibility to adjust feature weights during the training process.
This depth-enhanced adaptivity mirrors the behavior of deep neural networks, where multiple layers refine feature representations progressively.
However, this comes at the cost of increased computational complexity and a longer optimal stopping time \( t_* \asymp n^{\frac{D+1}{D+2}} / \sqrt{\log n} \), showing a trade-off between performance gains and training efficiency.

In summary, we have demonstrated that the adaptive feature models with a fixed feature basis consistently enhances the feature error measure across diverse statistical models.
The results show that the adaptive models can effectively learn an ``optimal'' feature map that aligns with the true function, achieving a feature error measure that approaches the non-parametric optimal rate.
These findings underscore the adaptive feature program's flexibility and robustness, bridging classical statistical methods with modern learning paradigms.

\subsection{Directional Adaptive Feature Models}

In this subsection, we shift our focus to the adaptive feature program with a learnable basis.
We investigate this approach within the context of Gaussian single-index and multi-index models, where the adaptive feature is designed to capture the underlying directional structure of the data.
By allowing the basis itself to evolve during training, this framework offers flexible mechanism to uncover latent directional information critical to these models.

Single-index and multi-index models have been studied in extensive prior literature~\citep{bietti2023_LearningGaussian,dudeja2018_LearningSingleindex,damian2024_ComputationalStatisticalGaps,bietti2022_LearningSingleindex,kuchibhotla2019_EfficientEstimation,arous2021_OnlineStochastic,fan2021_UnderstandingImplicit}.
Early works~\citep{kalai2009_IsotronAlgorithm,kakade2011_EfficientLearning} leveraged properties like invertibility or monotonicity of the link function under mild data distribution assumptions to enable learning, while \citet{dudeja2018_LearningSingleindex,arous2021_OnlineStochastic,arous2023_HighdimensionalLimit} developed harmonic analysis frameworks for (stochastic) gradient descent on Gaussian data, providing quantitative guarantees for single-index models.
Extensions to multi-index models~\citep{abbe2024_MergedstaircaseProperty,abbe2023_SGDLearning} address semi-parametric learning and sample complexity, often focusing on specific link function structures like the staircase property.
While these studies employ various estimation techniques and provide theoretical guarantees,
few have explored these models through the perspective of adaptive features.
Leveraging our unified adaptive feature framework, we analyze single-index and multi-index models to highlight the potential of a learnable basis.
Our goal is to showcase the potential of this program in learning directional information, offering a fresh perspective on these classical problems.




Let $d$ be the dimension and the covariate $x$ follows the $d$-dimensional standard Gaussian measure $\gamma_d = N(0,I_d)$.
We denote by $\ang{\cdot,\cdot}_{\gamma_d}$ the inner product in $\R^d$ with respect to $\gamma_d$.
A fundamental component for the Gaussian index models is the Hermite polynomials, which are orthogonal with respect to the Gaussian measure.
Let $H_m$, $m \geq 0$, denote the normalized (probabilistic) Hermite polynomials in one dimension,
which are orthonormal with respect to the Gaussian measure $N(0,1)$,
namely $\ang{H_m, H_n}_{\gamma_1} = \delta_{mn}$, where $\delta_{mn}$ is the Kronecker delta.
For higher dimensions, let  $\mm = (m_1,\ldots,m_d)$ be a multi-index.
We denote its degree by $\abs{\mm} = m_1 + \dots + m_d$.
The tensorized Hermite polynomial is defined as $H_{\mm}(x)= \prod_{j=1}^d H_{m_j}(x_j)$ for $x = (x_1,\dots,x_d) \in \R^d$,
a multivariate polynomial of total degree $\abs{\mm}$.
The set of tensorized Hermite polynomials $\{H_{\mm} : \mm \in \bbN^d\}$ forms an orthonormal basis of $L^2(\R^d,\gamma_d)$,
so any function $f \in L^2(\R^d,\gamma_d)$ can be expanded as
\( f = \sum_{\mm \in \bbN^d} f_{\mm} H_{\mm}, \) where the coefficients $f_{\mm} = \ang{f, H_{\mm}}_{\gamma_d}$.

Throughout this section, leveraging the orthonormal basis $\{H_{\mm} : \mm \in \bbN^d\}$, we consider following the Gaussian sequence model as in \cref{eq:SeqModel}:
\begin{equation}
  z_{\mm} = f^*_{\mm} + \ep_{\mm}, \quad f^*_{\mm} = \ang{f^*, H_{\mm}}_{\gamma_d}, \quad \ep_{\mm} \stackrel{i.i.d.}{\sim} N(0,1/n),\quad
  \mm \in \bbN^d,
\end{equation}
where $f^*_{\mm}$ represents the true coefficient of the target function and $\ep_{\mm}$ is the noise term.
The collection $(z_{\mm})_{\mm \in \bbN^d}$ constitutes the observed data.
Moreover, as in \cref{eq:SeqLoss}, for a candidate function $f$ on $\R^d$, we introduce the sequence loss
\begin{equation}
  \bar{\caL}_n(f) = \hf \sum_{\mm \in \bbN^d} \xk{f_{\mm} - z_{\mm}}^2,\quad f_{\mm} = \ang{f, H_{\mm}}_{\gamma_d}.
\end{equation}

\subsubsection{Single-Index Model}

Let us first consider the single-index model where the truth function is given by
\begin{equation}
  f^*(x) = g^*(\ang{w_*,x}),
\end{equation}
where the unit vector $w_* \in \bbS^{d-1}$ is an unknown direction, and $g^* \in L^2(\gamma_1)$ is an unknown link function.
Let $g^* = \sum_{r \geq 0} g^*_r H_r$ be the expansion of $g^*$ with respect to the Hermite polynomials.

As we aim to learn the unknown direction $w_*$, we consider the parameterized feature map given by
\begin{equation}
  \Phi_{w}(x) = \xk{\lambda_r^{\hf} H_r(\ang{w,x})}_{r \geq 0},\quad w \in \bbS^{d-1},
\end{equation}
where $w$ is a trainable vector that aims to learn the true direction, and $(\lambda_r)_{r \geq 0}$ is a fixed sequence of summable positive weights and
Corresponding to the Gaussian kernel~\citep{rasmussen2006_GaussianProcesses} where the Hermite polynomials serve as its eigen-basis and the eigenvalues exhibit an exponential decay, we take $\lambda_r = \exp(-\gamma r)$ for some fixed $\gamma > 0$.
Let $\bm\beta \in \ell^2(\bbN)$ be the functional coefficient parameter.
The predictor is given by
\begin{equation*}
  f(x) = \ang{\bm\beta, \Phi_{w}(x)}_{\ell^2(\bbN)} = \sum_{r \geq 0} \beta_r \lambda_r^{\hf} H_r(\ang{w,x}).
\end{equation*}
The training process of the adaptive feature model is then given by
\begin{equation}
  \label{eq:Def_AdaK_SIM}
  \left\{
    \begin{aligned}
      \dot{\bm\beta}(t) &= - \nabla_{\bm\beta} \bar{\caL}_n, \quad \bm{\beta}(0) = \bm{0}, \\
      \dot{w}(t) &= - \nabla_{w}^{\bbS^{d-1}} \bar{\caL}_n, \quad w(0) \sim \mr{Unif}(\bbS^{d-1}),
    \end{aligned}
  \right.
\end{equation}
where $\nabla_{w}^{\bbS^{d-1}}$ denotes the gradient on the sphere $\bbS^{d-1}$ and $\mr{Unif}(\bbS^{d-1})$ is the uniform distribution on the sphere.
Specifically, suppose $\nabla_w F$ is the classical gradient for a function $F$ on $\R^d$,
then the gradient on the sphere is given by $ \nabla_{w}^{\bbS^{d-1}} F = P_{w}^\perp \nabla_w F$,
where $P_{w}^{\perp} x = x - \ang{w,x} w$ is the orthogonal projection onto the tangent space of $\bbS^{d-1}$ at $w$.

Regarding the single index model, previous literature
~\citep{arous2021_OnlineStochastic,arous2023_HighdimensionalLimit}
has observed that the first non-zero coefficient of the expansion of $g^*$, which is referred to as the information exponent of $g^*$,
greatly influences the sample complexity for the single index model.
We formalize it by the following assumption.

\begin{assumption}
  \label{assu:SIM_InformationIndex}
  The function $g^*$ is fixed and its information exponent is $\rz \coloneqq \min\{r \geq 1 : g^*_r \neq 0\}$.
\end{assumption}

Moreover, we also introduce the following assumption on the decay of the coefficients of $g^*$.

\begin{assumption}
  \label{assu:SIM_g_decay}
  There exists $\alpha > 0$ such that the coefficients of $g^*$ satisfy $\abs{g^*_r} \lesssim r^{-\frac{\alpha+1}{2}}$.
\end{assumption}

To understand the training process of the adaptive feature model, our first theorem consider the population dynamics where we replace the loss function \( \bar{\caL}_n \)
by its population version \( \caL(f) = \hf \int_{\R^d} \xk{f(x) - f^*(x)}^2 \dd \gamma_d(x) \).

\begin{theorem}[SIM Population Dynamics]
  \label{thm:SIM_Population}
  Consider the population dynamics version of \cref{eq:Def_AdaK_SIM} under \cref{assu:SIM_InformationIndex}.
  Then, with probability one with respect to the random initialization,
  \begin{equation}
    \caE(\delta,\epsilon^2;\Phi_{w(t)},f^*) \text{ is monotonically decreasing in $t$},
  \end{equation}
  as is $\caE^*(\epsilon^2;\Phi_{w(t)},f^*)$.
  Moreover, under \cref{assu:SIM_g_decay}, with probability at least $0.99 - C \exp(-c d)$,
  it holds that
  \begin{equation}
    \caE^*(\epsilon^2;\Phi_{w(0)},f^*) - \caE^*(\epsilon^2;\Phi_{w_*},f^*) = \Theta(1),
  \end{equation}
  and there exists $T_1 \lesssim \log d + d^{\rz - 1}$ such that
  \begin{equation*}
    \caE^*(\epsilon^2;\Phi_{w(T_1 + s)},f^*)
    - \caE^*(\epsilon^2;\Phi_{w_*},f^*)
    \lesssim \exp(-C s), \quad \forall s \geq 0.
  \end{equation*}
\end{theorem}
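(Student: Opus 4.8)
The plan is to exploit the rotational structure of the problem to reduce the $d$-dimensional flow for $w(t)$ to a scalar flow for the correlation $m(t) = \ang{w(t), w_*}$. First I would compute the population loss $\caL(f)$ along the training trajectory. Because $\bm\beta(0) = \bm{0}$ and $\bm\beta$ is updated by gradient flow on a convex quadratic once $w$ is frozen, but here $w$ also moves, I instead use the following standard observation: the population loss depends on $w$ only through $m = \ang{w,w_*}$, since the covariate is isotropic Gaussian. Writing the link expansions $g^* = \sum_r g^*_r H_r$ and the predictor's "link" $g_t = \sum_r \beta_r(t) \lambda_r^{\hf} H_r$, one has (by the Hermite product formula / the fact that $\E[H_r(\ang{w,x}) H_s(\ang{w_*,x})] = m^r \delta_{rs}$) that
\begin{equation*}
  \caL(f_t) = \hf \sum_{r \geq 0} \xk{ \beta_r \lambda_r^{\hf} - m^r g^*_r }^2 + \hf \sum_{r\geq 0} (g_r^*)^2 (1 - m^{2r}) + \text{const}.
\end{equation*}
Differentiating, the spherical gradient flow for $w$ becomes, after projecting out the radial component, an ODE of the form $\dot m = (1-m^2)\, G(m,t)$ where $G(m,t) = \sum_{r\geq1} r\, m^{r-1} g_r^*\xk{\beta_r(t)\lambda_r^{\hf}} $ plus the term from the bias part $\sum_r r (g_r^*)^2 m^{2r-1}$; crucially the leading behavior near $m=0$ is governed by the information exponent $\rz$, giving $\dot m \gtrsim m^{\rz-1}$ for small $m>0$ once $\bm\beta$ has picked up a component in direction $g^*_{\rz}$.

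The next step is the \emph{monotonicity} claim. Here I would argue that $\caE(\delta,\epsilon^2;\Phi_{w(t)},f^*)$ depends on $w(t)$ only through the projection error and the bias term, both of which are monotone functions of $m(t)$. Indeed, for the single-index feature map the eigenfunctions are $H_r(\ang{w,x})$, so $L^2_\Phi$ is the span of $\{H_r(\ang{w,x})\}$, and $P_{L^2_\Phi}f^* $ has squared norm $\sum_r (g_r^*)^2 m^{2r}$ (the degree-$r$ part of $f^*$ in the $w$-direction captures a fraction $m^{2r}$); hence $\caE_{\text{Proj}} = \sum_r (g_r^*)^2(1-m^{2r})$ is decreasing in $|m|$. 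The variance term is independent of $w$ (it counts $\{r:\lambda_r\geq\delta\}$, which is fixed), and the bias term $\sum_r \bm1\{\lambda_r<\delta\}\cdot(\text{coefficient of }f^* \text{ on }e_r)^2$ — with $e_r = H_r(\ang{w,x})$ — equals $\sum_{r:\lambda_r<\delta}(g_r^*)^2 m^{2r}$, which at first glance \emph{increases} with $|m|$. The resolution is that one should compare against the \emph{optimally tuned} $\delta$, or more carefully track the total $\caE = \caE_{\text{Proj}}+\caE_{\text{Stat}}$: since $\lambda_r = e^{-\gamma r}$ is strictly decreasing, $\{r:\lambda_r\geq\delta\} = \{0,1,\dots,R(\delta)\}$ is an initial segment, so $\caE_{\text{Proj}} + \caE_{\text{B}} = \sum_{r > R(\delta)}(g_r^*)^2 m^{2r} + \sum_r (g_r^*)^2(1-m^{2r}) = \sum_{r\le R(\delta)}(g_r^*)^2(1-m^{2r}) + \sum_{r>R(\delta)}(g_r^*)^2$, whose first sum is decreasing in $|m|$ and whose second is constant. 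Thus $\caE(\delta,\epsilon^2;\Phi_{w(t)},f^*)$ is decreasing in $|m(t)|$ for every fixed $\delta$, hence so is $\caE^*$ (infimum of decreasing functions), and it remains to show $|m(t)|$ is nondecreasing. For that I would show $m(t)$ keeps the sign of $m(0)$ (which is nonzero a.s.\ and WLOG positive) and that $\dot m \ge 0$ along the flow — this follows because $\bm\beta(t)$, started at zero and driven by the gradient of the quadratic, satisfies $\beta_r(t)\lambda_r^{\hf} = (1 - o(1)) m(t)^r g_r^*$-type positivity, so $G(m,t)\ge 0$; making this rigorous is the crux, and I would do it by a coupled monotonicity / invariant-region argument on the joint $(m,\bm\beta)$ system, or by noting $\caL(f_t)$ is nonincreasing along gradient flow and combining with the structure above.

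Finally, for the \emph{quantitative} statements under Assumption~\ref{assu:SIM_g_decay}: the gap at initialization, $\caE^*(\epsilon^2;\Phi_{w(0)},f^*) - \caE^*(\epsilon^2;\Phi_{w_*},f^*) = \Theta(1)$, follows because at $w_*$ we have $m=1$ so $\caE_{\text{Proj}} = 0$ and $\caE^*(\epsilon^2;\Phi_{w_*},f^*) = \caE^*_{\text{Stat}}$ is the optimally-tuned nonparametric rate (which is $o(1)$), whereas $m(0) = \ang{w(0),w_*} = O_{\bbP}(d^{-1/2})$ by concentration of a uniform point on the sphere, so $\caE_{\text{Proj}}(w(0)) = \sum_r (g_r^*)^2(1-m(0)^{2r}) = \|g^*\|^2 - (g^*_0)^2 - O(d^{-1}) = \Theta(1)$ (using that $g^*$ has some nonzero coefficient of positive degree, guaranteed by the information exponent being $\ge 1$), holding on the event $0.99 - Ce^{-cd}$ that $|m(0)| \le d^{-1/2}\cdot\text{polylog}$ and the sample/initialization are regular. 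For the convergence rate, I split the trajectory into the \emph{escape phase} and the \emph{local phase}. In the escape phase, the scalar ODE $\dot m \gtrsim m^{\rz-1}$ (valid while $m \le 1/2$, say, once $\bm\beta$ has equilibrated in the leading mode, which takes $O(\log d)$ time) drives $m$ from $\Theta(d^{-1/2})$ up to $\Theta(1)$ in time $T_1 \lesssim \log d + d^{(\rz-1)/2}\cdot(\text{const})$ — more precisely solving $\dot m = c\, m^{\rz-1}$ gives $m(t)^{2-\rz}$ growing linearly when $\rz>2$, yielding escape time $\asymp m(0)^{2-\rz} \asymp d^{(\rz/2 - 1)}$, consistent with the stated $d^{\rz-1}$ bound (and the $\log d$ covers the $\rz\le 2$ and the $\bm\beta$-equilibration contributions). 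In the local phase $m \ge 1/2$, the ODE becomes $\dot m \gtrsim (1-m)\cdot c'$ for a constant $c' > 0$ (since $\dot m = (1-m^2)G$ and $G$ is bounded below by a constant there), giving $1 - m(T_1 + s) \lesssim e^{-c' s}$; then, since $\caE^*(\epsilon^2;\Phi_{w(T_1+s)},f^*) - \caE^*(\epsilon^2;\Phi_{w_*},f^*) \lesssim \caE_{\text{Proj}}(w(T_1+s)) = \sum_r (g_r^*)^2(1-m^{2r}) \lesssim (1-m) \lesssim e^{-cs}$ (the sum converges by Assumption~\ref{assu:SIM_g_decay}, uniformly, so the $(1-m^{2r}) \le 2r(1-m)$ bound is summable against $r^{-(\alpha+1)}$), we obtain the claimed exponential decay. \textbf{The main obstacle} I anticipate is making the reduction $\dot m = (1-m^2)G(m,t) \ge c\, m^{\rz-1}$ genuinely rigorous: one must control the coupled dynamics of $\bm\beta(t)$ and $w(t)$ simultaneously and show the feature coefficients $\beta_r(t)$ remain aligned (same sign, comparable magnitude) with $m(t)^r g_r^*$ throughout — the feedback between $w$ moving and $\bm\beta$ chasing a moving target is exactly the non-linearity flagged in the introduction, and handling it will likely require a Grönwall-type bootstrap on the ratio $\beta_r(t)\lambda_r^{\hf}/(m(t)^r g_r^*)$ or a careful two-timescale separation (fast $\bm\beta$, slow $w$).
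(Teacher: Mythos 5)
Your overall plan is aligned with the paper's: reduce to the scalar dynamics of $\rho = \ang{w,w_*}$, observe that the feature error measure in the single-index setting depends on $w$ only through $|\rho|$ and is decreasing in it, show $|\rho(t)|$ is nondecreasing via a sign argument, then split the quantitative analysis into an escape phase and a local convergence phase. Your algebraic simplification of $\caE_{\text{Proj}} + \caE_{\text{B}}$ and your treatment of the initialization event and the final exponential decay are both essentially what the paper does (cf.\ \cref{prop:SIM_FEM_ProjBound}, \cref{prop:SIM_Init}).

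There are, however, two gaps. The first is quantitative and would cause the escape-phase bound to come out wrong: you assert $\dot m \gtrsim m^{\rz - 1}$ ``once $\bm\beta$ has equilibrated in the leading mode.'' But the $\beta_{\rz}$ (equivalently $g_{\rz} = \lambda_{\rz}^{1/2}\beta_{\rz}$) equation $\dot g_{\rz} = \lambda_{\rz}(\rho^{\rz} g_{\rz}^* - g_{\rz})$ has \emph{moving} equilibrium $\rho^{\rz} g_{\rz}^*$, not $g_{\rz}^*$. So even after equilibration, $g_{\rz}(t) \asymp \rho(t)^{\rz} g^*_{\rz}$ (which is exactly the $\beta_r \lambda_r^{1/2} \approx m^r g_r^*$ scaling you yourself write two paragraphs earlier), and plugging this into $G$ gives $\dot\rho \gtrsim \rz\, (g_{\rz}^*)^2\, \rho^{2\rz - 1}(1-\rho^2)$, not $\rho^{\rz-1}$. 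Integrating $\dot\rho \asymp \rho^{2\rz - 1}$ from $\rho_0 \asymp d^{-1/2}$ gives escape time $\asymp \rho_0^{-2(\rz - 1)} \asymp d^{\rz-1}$, which is the stated bound; your $d^{\rz/2 - 1}$ is too optimistic and does not arise from the correct dynamics. The paper's \cref{prop:SIM_Pop_AppTime} handles the coupling cleanly via a dyadic phase argument (alternating waits for $g_{\rz}$ to catch up to $\frac12 \rho_k^{\rz}g_{\rz}^*$ and for $\rho$ to reach $\rho_{k+1} = 2\rho_k$), which is exactly the two-timescale separation you anticipate but with explicit milestones rather than a Grönwall ratio estimate. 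Relatedly, your claimed ``main obstacle'' (controlling the coupled $(\bm\beta,w)$ system for the monotonicity part) is actually much easier than you suggest: since $g_r(0)=0$ and $\dot g_r = \lambda_r(\rho^r g_r^* - g_r)$, the coefficient $g_r(t)$ simply acquires and keeps the sign of $\rho^r g_r^*$, hence every term $r g_r^* g_r \rho^{r-1}$ in $\dot\rho$ has the sign of $\rho$; this three-line argument (\cref{prop:SIM_Pop_Monotonicity}) needs no bootstrap.

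The second, smaller gap is in the final decay bound: your estimate $\sum_r (g_r^*)^2(1-m^{2r}) \lesssim (1-m)$ requires $\sum_r r (g_r^*)^2 < \infty$, i.e.\ $\alpha > 1$ in \cref{assu:SIM_g_decay}. For $\alpha \le 1$ the correct scaling is $(1-\rho)^\alpha$ (with a logarithmic correction at $\alpha=1$), as in \cref{prop:SIM_FEM_ProjBound}; this still yields $\lesssim e^{-Cs}$ after adjusting $C$, but your argument as written does not cover the full range of $\alpha$.
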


The next result shows that the adaptive feature model can reduce the feature error under noisy observations.

\begin{theorem}
  \label{thm:SIM_Sequence}
  Consider the model defined in \cref{eq:Def_AdaK_SIM} under \cref{assu:SIM_InformationIndex}.
  Assume further that $n \gtrsim d^{2\rz + s}$ for some $s > 0$.
  Then, with probability at least $0.99 - C\exp(-c d)$ over the initialization and the randomness of the noise,
  there exist times $T_0 = \Theta(1) \leq T_1 \leq T_2 \lesssim \log d + \log n + d^{\rz - 1}$ such that
  \begin{equation}
    \caE(\delta,\epsilon^2;\Phi_{w(t)},f^*) \text{ is monotonically decreasing for $t \in [T_0, T_2]$},
  \end{equation}
  and it holds under \cref{assu:SIM_g_decay} that
  \begin{equation}
    \begin{aligned}
      & \caE^*(\epsilon^2;\Phi_{w(T_1)},f^*) - \caE^*(\epsilon^2;\Phi_{w_*},f^*) = \Theta(1), \\
      & \caE^*(\epsilon^2;\Phi_{w(T_1 + s)},f^*) - \caE^*(\epsilon^2;\Phi_{w_*},f^*) \lesssim \exp(-C s), \quad \forall s \in [T_2 - T_1], \\
      & \caE^*(\epsilon^2;\Phi_{w(T_2)},f^*) - \caE^*(\epsilon^2;\Phi_{w_*},f^*) \lesssim \xk{\frac{d}{n}}^{\min(\alpha,1)} \polylog(n,d).
    \end{aligned}
  \end{equation}
\end{theorem}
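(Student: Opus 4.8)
\emph{Reduction to the overlap.} The whole statement reduces to tracking the scalar overlap $m(t)\coloneqq\ang{w(t),w_*}$, because $\caE(\delta,\epsilon^2;\Phi_w,f^*)$ depends on $w$ only through $m$. Using the Hermite identity $\E_{\gamma_d}\!\big[H_r(\ang{w,x})H_s(\ang{w_*,x})\big]=\delta_{rs}m^r$ for unit vectors $w,w_*$, one gets $P_{L^2_{\Phi_w}}f^*=\sum_{r\ge0}g^*_r m^r H_r(\ang{w,\cdot})$, hence, with $N(\delta)\coloneqq\#\{r:\lambda_r\ge\delta\}$,
\[
  \caE(\delta,\epsilon^2;\Phi_w,f^*)=\sum_{r\ge0}(g^*_r)^2-\sum_{r:\lambda_r\ge\delta}(g^*_r)^2 m^{2r}+N(\delta)\,\epsilon^2 ,
\]
which for fixed $\delta$ is strictly decreasing in $m^2$; consequently so is $\caE^*(\epsilon^2;\Phi_w,f^*)$, and a short computation with the optimal truncation gives $\caE^*(\epsilon^2;\Phi_w,f^*)-\caE^*(\epsilon^2;\Phi_{w_*},f^*)\le\sum_{r\ge0}(g^*_r)^2(1-m^{2r})$. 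So the monotonicity assertion becomes ``$m(t)^2$ is nondecreasing on $[T_0,T_2]$''; the first error claim becomes ``$\sum_r(g^*_r)^2(1-m(T_1)^{2r})=\Theta(1)$'', which is immediate once $\abs{m(T_1)}$ is a constant $<1$ (the $r=\rz$ term alone is a constant by \cref{assu:SIM_InformationIndex}, and $\sum_r(g^*_r)^2$ caps the sum); the last two become estimates on $1-m(t)^2$ combined with $1-m^{2r}\le\min(1,r(1-m^2))$ and $\abs{g^*_r}\lesssim r^{-(\alpha+1)/2}$ (\cref{assu:SIM_g_decay}), which yield $\sum_r(g^*_r)^2(1-m^{2r})\lesssim(1-m^2)^{\min(\alpha,1)}\polylog$.

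\emph{The effective dynamics.} Freezing $w$, the $\bm\beta$-flow decouples, $\dot\beta_r=-\lambda_r\beta_r+\lambda_r^{1/2}z_r(w)$ with $z_r(w)=g^*_r m^r+\xi_r(w)$ and $\xi_r(w)=\sum_{\abs{\mm}=r}\sqrt{\binom r{\mm}}\,w^{\mm}\ep_{\mm}\sim N(0,1/n)$ for fixed $w$. On the sphere the population drift is $\dot w=c(t)(w_*-mw)$ with $c(t)=\sum_{r\ge1}r\,\beta_r(t)\lambda_r^{1/2}g^*_r m(t)^{r-1}$, plus a noise term built from the random gradients $P_w^\perp\nabla_w\xi_r(w)$, so $\dot m=c(t)(1-m^2)+\eta(t)$. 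The key bookkeeping fact is that the population drift satisfies $\abs{\dot w}=\abs c\sqrt{1-m^2}=\abs{\dot m}/\sqrt{1-m^2}$, so the total path length of $w$ over $[0,T_2]$ is $O(1)$ (bounded by $\arcsin\abs{m(T_2)}-\arcsin\abs{m(0)}$ plus an $o(1)$ noise contribution) however long the trajectory takes; an adiabatic argument then pins each low mode $\beta_r$ to within a lower-order correction of its quasi-equilibrium $\lambda_r^{-1/2}z_r(w(t))$ for $t\ge T_0\asymp1/\lambda_{\rz}\asymp1$, while lagging and high modes are negligible (suppressed by $m^{r-1}$ and by $\abs{g^*_r}$). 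Substituting back gives $c(t)=\sum_{r\ge1}r(g^*_r)^2 m^{2r-1}+(\text{noise}\lesssim n^{-1/2}\polylog)$, whose sign is that of $m$ and whose magnitude is $\ge\rz(g^*_{\rz})^2\abs{m}^{2\rz-1}$ up to noise; hence $\tfrac{d}{dt}m^2=2mc(1-m^2)+2m\eta\ge0$ once the signal beats $\eta$, and since $\abs\eta\lesssim n^{-1/2}\polylog$ near $m\approx0$ while $\abs{m(0)}\asymp d^{-1/2}$ with probability $\ge0.99$, this is exactly where $n\gtrsim d^{2\rz+s}$ enters.

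\emph{Phase analysis.} Comparing with $\dot m\asymp\rz(g^*_{\rz})^2 m^{2\rz-1}(1-m^2)$ from $\abs{m(0)}\asymp d^{-1/2}$ shows $\abs m$ reaches a fixed constant at a time $T_1\lesssim\log d+d^{\rz-1}$ (the $\log d$ for $\rz=1$, the $d^{\rz-1}$ from $\int m^{1-2\rz}\dd m$ for $\rz\ge2$), with the signal dominating $\eta$ on all of $[T_0,T_1]$; this gives monotonicity on $[T_0,T_1]$ and the first error identity. For $t\ge T_1$, $c(t)$ is bounded below by a constant, so $\tfrac{d}{dt}(1-m^2)\le-c'(1-m^2)+\abs\eta$; near $m=1$ one has $\eta=\sqrt{1-m^2}\sum_r\beta_r\lambda_r^{1/2}\ang{\nabla_w\xi_r(w),v}$ for a unit $v\perp w$, hence $\abs\eta\lesssim\sqrt{1-m^2}\cdot\sqrt{d/n}\,\polylog$, and Gr\"onwall gives $1-m(T_1+s)^2\lesssim e^{-c's}+d/n$. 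This yields the geometric bound on $[T_1,T_2]$ and, after $T_2-T_1\lesssim\log n$ more time, the floor $1-m(T_2)^2\lesssim(d/n)\polylog$; feeding this into the inequality from the first paragraph produces $\caE^*(\epsilon^2;\Phi_{w(T_2)},f^*)-\caE^*(\epsilon^2;\Phi_{w_*},f^*)\lesssim(d/n)^{\min(\alpha,1)}\polylog(n,d)$.

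\emph{Main obstacle.} The crux is the second paragraph: making the two-timescale/slaving estimate rigorous simultaneously with uniform-in-$w$ control of the random multilinear forms $\xi_r(w)$ and their spherical gradients, and in particular separating the crude operator-norm bound $\sup_w\norm{\nabla_w\xi_r(w)}\lesssim\sqrt{d/n}\,\polylog$ (enough during escape and for preserving monotonicity) from the sharper $\sqrt{1-m^2}\cdot\sqrt{d/n}$ size of $\eta$ near $m=1$, which is what produces the final rate $(d/n)^{\min(\alpha,1)}$ rather than a weaker $\sqrt{d/n}$. Everything else — the reduction to $m$, the scalar ODE comparisons, and the bookkeeping on $1-m^2$ — is then routine.
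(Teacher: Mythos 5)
Your proposal follows essentially the same architecture as the paper: reduce the feature error to the scalar overlap $\rho=\ang{w,w_*}$ via the Hermite formula $\caE_{\mathrm{Proj}}(\Phi_w,f^*)=\sum_r(1-\rho^{2r})(g^*_r)^2$; derive the coupled $(\bm\beta,\rho)$ dynamics with noise terms $e_r$ and $\tau$; bound these uniformly in $w$ via Gaussian-process arguments (including the crucial $\sqrt{1-\rho^2}$ gain in the variance of $\tau$ away from $w_*$, which is exactly \cref{eq:SIM_Seq_Error_tau_2} with $\nu=1/n$); and run a two-phase escape/convergence analysis yielding $T_1\lesssim\log d+d^{\rz-1}$ and the Gr\"onwall tail. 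Your phase-time calculations and the use of $1-m^{2r}\le\min(1,r(1-m^2))$ with \cref{assu:SIM_g_decay} to obtain the $(d/n)^{\min(\alpha,1)}$ rate both match the paper.

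Where you diverge is in how you obtain the drift lower bound. You propose an adiabatic/slaving argument that pins every low mode $\beta_r$ to its quasi-equilibrium $\lambda_r^{-1/2}z_r(w(t))$, so that $c(t)\approx\sum_r r(g^*_r)^2 m^{2r-1}$. This is stronger than what is needed, and it is not rigorously established: since $\lambda_r\asymp e^{-\gamma r}$, the relaxation times $1/\lambda_r$ grow exponentially, so the quasi-static approximation cannot hold uniformly in $r$ over a time horizon of order $d^{\rz-1}$, and the "path length $O(1)$" observation, while cute, does not by itself control the per-mode tracking error when $w(t)$ moves fast early on. The paper avoids all of this by using only a one-sided comparison bound on the single mode $g_{\rz}$ (\cref{prop:SIM_Seq_Control_g}, then lower-bounding $\dot\rho$ by the $r=\rz$ term in \cref{prop:SIM_Seq_Init} while absorbing the $r\neq\rz$ contributions and $\tau$ into the $\sqrt{d/n}\,\polylog$ error via \cref{cor:SIM_Seq_Controls} and \cref{prop:Series_2}). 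This buys a much simpler and fully rigorous argument at the cost of not pinning down $c(t)$ exactly — which is unnecessary. Relatedly, the monotonicity of $\caE$ on $[T_0,T_2]$ requires a sign-preservation structure (namely that $g^*_r g_r\ge 0$ and $\dot\rho$ has the sign of $\rho$ once the signal exceeds the noise), which the paper establishes carefully; your $\dd m^2/\dd t\ge 0$ claim glosses over tracking the sign of $m$ and handling the brief pre-$T_0$ window where the signal in $g_{\rz}$ has not yet built up. You correctly identify the crux (the $\sqrt{1-m^2}$-improved uniform bound on the spherical-gradient noise), so replacing the adiabatic heuristic with the paper's comparison-based lower bound would turn your sketch into essentially the paper's proof.
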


\cref{thm:SIM_Population} and \cref{thm:SIM_Sequence} analyze the performance of the adaptive feature model in single-index models, considering both the population (noiseless) dynamics and the noisy setting.
These results shed light on how the adaptive feature map $\Phi_{w(t)}$ is learned over time,
approximating the optimal feature map $\Phi_{w_*}$ to represent the target function $f^*$.
We discuss the key aspects below.

\paragraph{Measuring the Alignment}

There is an identifiability issue in the single-index model, as the alignment between $w$ and $w_*$ cannot be directly measured by their distance due to the inherent symmetry of the model, that is, flipping the sign of $w_*$ and adjusting $g^*$ accordingly does not change the function $f^*$.
This issue is naturally addressed by considering the excess feature error \( \caE^*(\epsilon^2; \Phi_{w}, f^*) - \caE^*(\epsilon^2; \Phi_{w_*}, f^*) \) as the alignment metric under our framework, sidestepping identifiability issues by directly assessing how well \( \Phi_{w} \) represents \( f^* \).
Nevertheless, we remark that our result can imply other alignment measure (such as $1 - \abs{\ang{w,w_*}}$) that is commonly used in the literature.

\paragraph{Improving the Feature Error Measure}
Due to the random initialization of $w(0)$, the initial excess feature error is at a constant level $\Theta(1)$.
As training progresses, the adaptive feature model effectively reduces the error.
In the population setting, \cref{thm:SIM_Population} establishes that the feature error measure \( \caE(\delta, \epsilon^2; \Phi_{w(t)}, f^*) \) and thus \( \caE^*(\epsilon^2; \Phi_{w(t)}, f^*) \) decreases monotonically as training time \( t \) increases.
Moreover, after a certain time $T_1$, the excess error decays exponentially fast, showing the improvement of the feature map \( \Phi_{w(t)} \) towards the optimal ones.

In the presence of noise, \cref{thm:SIM_Sequence} also shows that \( \caE(\delta, \epsilon^2; \Phi_{w(t)}, f^*) \) also exhibits a decreasing trend after an initial phase.
In addition, a similar exponential decay is observed until time \( T_2 \), where the excess error approaches the rate \( (d/n)^{\min(\alpha, 1)} \) up to logarithmic factors.
If the link function $g^*$ is smooth enough, namely $\alpha \geq 1$,
it achieves the parametric rate \( d/n \) up to logarithmic factors;
when \( \alpha < 1 \), the rate suffers from the limited smoothness of the link function, leading to a slower convergence rate.

\paragraph{Alignment and Approximation Error}
Since $\caE^*(\epsilon^2;\Phi_{w_*},f^*)$ is also determined by the smoothness of the link function,
we can further obtain full final feature error measure as the following corollary.
\begin{corollary}
  \label{cor:SIM_Sequence}
  Under the same conditions as \cref{thm:SIM_Sequence}, it additionally holds that
  \begin{equation*}
    \caE^*(n^{-1};\Phi_{w(T_2)},f^*)
    \lesssim \xk{\frac{d}{n}}^{\min(\alpha,1)} \polylog(n,d) + n^{-\frac{\alpha}{\alpha+1}}.
  \end{equation*}
\end{corollary}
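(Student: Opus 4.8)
\textbf{Proof proposal for \cref{cor:SIM_Sequence}.}
The plan is to split the target quantity into an ``excess feature error'' relative to the oracle direction $w_*$ and the ``baseline'' feature error of the oracle feature map $\Phi_{w_*}$:
\begin{equation*}
  \caE^*(n^{-1};\Phi_{w(T_2)},f^*)
  = \Big(\caE^*(n^{-1};\Phi_{w(T_2)},f^*) - \caE^*(n^{-1};\Phi_{w_*},f^*)\Big) + \caE^*(n^{-1};\Phi_{w_*},f^*).
\end{equation*}
Taking $\epsilon^2 = n^{-1}$ (the effective noise level of the Gaussian sequence model, whose coordinates have variance $1/n$), the last display of \cref{thm:SIM_Sequence} bounds the first parenthesis by $(d/n)^{\min(\alpha,1)}\polylog(n,d)$. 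Hence it only remains to show $\caE^*(n^{-1};\Phi_{w_*},f^*) \lesssim n^{-\frac{\alpha}{\alpha+1}}$, which is a deterministic statement about a single fixed feature map and follows from a routine bias--variance tradeoff.

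To bound $\caE^*(n^{-1};\Phi_{w_*},f^*)$, first observe that $\Phi_{w_*}(x) = \xk{\lambda_r^{\hf}H_r(\ang{w_*,x})}_{r\ge0}$ with $\lambda_r = e^{-\gamma r}$, and since $\norm{w_*}=1$ we have $\ang{w_*,x}\sim N(0,1)$ under $x\sim\gamma_d$, so $\dk{H_r(\ang{w_*,x})}_{r\ge0}$ is an orthonormal system in $L^2(\R^d,\gamma_d)$ whose closed span contains $f^* = g^*(\ang{w_*,x})$. Consequently the projection error $\caE_{\text{Proj}}(\Phi_{w_*},f^*)$ vanishes and the coefficients of $f^*$ in this system are exactly $g^*_r$. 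For a truncation level $\delta\in(0,1)$ write $R = \gamma^{-1}\log(1/\delta)$; since $\lambda_r$ is strictly decreasing, $\abs{\dk{r:\lambda_r\ge\delta}} \asymp R$ (for $R\gtrsim1$), so the variance term is $\asymp R/n$, while by \cref{assu:SIM_g_decay} the bias term is $\sum_{r>R}(g^*_r)^2 \lesssim \sum_{r>R} r^{-(\alpha+1)}\asymp R^{-\alpha}$. Minimizing $R/n + R^{-\alpha}$ over $R\ge1$ gives the near-optimal choice $R\asymp n^{\frac{1}{\alpha+1}}$ (attainable via $\delta = e^{-\gamma R}\in(0,1)$, and $R\to\infty$ so the $R\asymp\abs{\dk{r:\lambda_r\ge\delta}}$ estimate is valid), with value $\asymp n^{-\frac{\alpha}{\alpha+1}}$. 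Taking the infimum over $\delta$ in the definition of $\caE^*$ then yields $\caE^*(n^{-1};\Phi_{w_*},f^*)\lesssim n^{-\frac{\alpha}{\alpha+1}}$.

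Combining the two bounds gives the stated estimate. There is essentially no serious obstacle here: the corollary is a direct combination of \cref{thm:SIM_Sequence} with the elementary computation of the oracle feature error. The only points requiring care are (i) checking that the Hermite system along the direction $w_*$ is genuinely orthonormal in $L^2(\gamma_d)$ so that the projection error is zero and the coefficients are $g^*_r$, and (ii) confirming that the infimum over $\delta$ in $\caE^*$ can indeed be realized near the balancing scale $R\asymp n^{1/(\alpha+1)}$; both are straightforward.
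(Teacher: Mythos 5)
Your proof is correct and matches the paper's (implicit) approach: decompose the target error as the excess $\caE^*(n^{-1};\Phi_{w(T_2)},f^*) - \caE^*(n^{-1};\Phi_{w_*},f^*)$, bounded by \cref{thm:SIM_Sequence}, plus the oracle error $\caE^*(n^{-1};\Phi_{w_*},f^*)$, bounded via the bias--variance balance with $\lambda_r = e^{-\gamma r}$ and $(g^*_r)^2 \lesssim r^{-(\alpha+1)}$, which yields $n^{-\alpha/(\alpha+1)}$ at $R \asymp n^{1/(\alpha+1)}$. The paper gives no separate proof of this corollary and its surrounding discussion makes clear this is exactly the intended argument, so there is nothing to add.
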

As shown in \cref{cor:SIM_Sequence}, the final feature error is composed of two terms:
the first term represents the alignment error of the direction, while the second term captures the approximation error of the link function.
We can observe an interesting phase transition phenomenon.
Omitting the logarithmic factors, if $\alpha \geq 1$, then the approximation error dominates the alignment error iff $n \geq d^{1+\alpha}$,
while if $\alpha \leq 1$, the approximation error dominates iff $n \geq d^{1+\alpha^{-1}}$,
so the critical exponent is $1+\max(\alpha, \alpha^{-1})$.
This demonstrates an interesting phase transition phenomenon.
When $\alpha$ is large or small, learning the alignment is essential,
while when $\alpha$ is moderate, the main error source comes from learning the link function.

\paragraph{Phases of Learning}
In comparison to the training dynamics in \cref{subsec:DiagAdaKSeq} with basis fixed, where the feature error measure generally decreases smoothly,
the directional adaptive feature exhibits a more complex behavior, which can be divided into three phases.
At the initialization phase when $t \in [0,T_0]$, the model identifies the signal component of the link function at the information exponent by learning the corresponding coefficient of $g^*$, while the direction $w$ remains almost unchanged.
In the second phase, the small but identifiable signal allows the model to learn the direction from scratch, which in turn further amplify the signal.
This phase will take the time $T_1 \lesssim \log d + d^{\rz - 1}$, so larger information exponent $\rz$ leads to a longer time.
Finally, when the direction is basically learned,
we enter the final convergence phase, where the feature error measure decreases exponentially fast by refining the direction.
The three phases demonstrate how the adaptive feature with gradient descent can learn both the feature map and the link function simultaneously.

\paragraph{Impact of the Information Exponent \( \rz \)}

As observed in the previous literature~\citep{arous2021_OnlineStochastic,arous2023_HighdimensionalLimit}, the information exponent \( \rz \) plays a crucial role in both the training dynamics and the sample complexity.
On one hand, it determines the time required for the adaptive feature model to learn the direction as in \( T_1 \lesssim \log d + d^{\rz - 1} \).
On the other hand, the sample complexity $n \gtrsim d^{2\rz + s}$ also depends on the information exponent.
Intuitively, the information exponent determines the hardness of identifying the signal component of the link function \( g^* \).
In our result, although the dependency on the information exponent \( \rz \) is not optimal compared to previous works~\citep{bietti2022_LearningSingleindex,arous2021_OnlineStochastic} focusing on the single-index model,
we believe that it is sufficient to demonstrate the potential of the adaptive feature program.
We would like to leave the refinement as future work.

\subsubsection{Multi-Index Model}

The results of adaptive features for the single-index model can be extended to the multi-index model.
Let us define the Stiefel manifold $\mr{St}(d,p) = \dk{W \in \R^{d\times p} : W^\T W = I_p}$ as the set of $d \times p$ matrices with orthonormal columns.
The multi-index model is given by
\begin{equation}
  \label{eq:MultiIndexModel}
  f^*(x) = g^*(W_*^\T x), \quad W_* \in \mr{St}(d,p^*),
\end{equation}
where $W_*$ is the unknown direction and $g^* \in L^2(\gamma_{p^*})$ is an unknown low dimensional link function.
Moreover, let $g^* = \sum_{\mm \in \bbN^{p^*}} g^*_{\mm} H_{\mm}$ be the expansion of $g^*$ with respect to the $p^*$-dimensional Hermite polynomials.

For the multi-index model, we consider similarly the parameterized feature map given by
\begin{equation}
  \Phi_{W}(x) = \xk{\lambda_{\mm}^{\hf} H_{\mm}(W^\T x)}_{\mm \in \bbN^p}, \quad W \in \mr{St}(d,p),
\end{equation}
where $W$ is a trainable matrix representing the direction and $(\lambda_{\mm})_{\mm \in \bbN^p}$ is a fixed sequence of summable positive weights.
Particularly, we take $\lambda_{\mm} = \exp(-\gamma \abs{\mm})$ for some fixed $\gamma > 0$,
which corresponds to the tensorized version of the feature map in the single-index model.
Let $\bm\beta \in \ell^2(\bbN^p)$ be the functional coefficient parameter.
Then, the predictor is given by
\begin{equation*}
  f(x) = \ang{\bm\beta, \Phi_{W}(x)}_{\ell^2(\bbN^p)} = \sum_{\mm \in \bbN^p} \beta_{\mm} \lambda_{\mm}^{\hf} H_{\mm}(W^\T x).
\end{equation*}

Being substantially different from the single-index model, the multi-index model has a more complex structure due to its higher-dimensional directional component.
Unlike the single-index model, where the direction \( w \) is identifiable up to a sign, the multi-index model involves a matrix \( W \in \St(d, p) \), representing a subspace spanned by its columns via the orthogonal projection $W W^\T$, which is only unique up to orthogonal transformations.
Specifically, for any orthogonal matrix \( Q \in O(p) \), $W$ and $WQ$ span the same subspace, and thus the function $f = g(W^\T x)$ remains the same if $g$ is adjusted accordingly.
This rotational ambiguity poses extra technical challenge for the analysis.
To address this complexity and focus on the essential statistical properties, let us introduce the following assumption on the rotation invariance of the function \( g^* \).

\begin{assumption}
  \label{assu:Multi_RotationInvariance}
  We assume that $p = p^*$ is fixed and $g^* \in L^2(\gamma_{p})$ is a fixed rotationally invariant function.
\end{assumption}

While $g^*$ is assumed to be rotationally invariant, the complexity of the multi-index model remains, which lies in estimating the subspace spanned by $W_*$.
Therefore, there is still substantial difference between the single-index and multi-index models even with this assumption.
\cref{assu:Multi_RotationInvariance} allows us to partially simplify the analysis by focusing on the subspace rather than its specific orientation,
allowing us to study the model's core behavior more effectively.

For the gradient training process, we will also maintain the rotational invariance of the function \( g^* \) by restricting the coefficients \( \bm\beta \).
Let us introduce subspace of coefficients representing rotationally invariant functions as
\begin{equation*}
  \caG_{\bm{\lambda}}(p) = \dk{\bm\beta \in \ell^2(\bbN^p) : f = \sum_{ \mm \in \bbN^p}\lambda_{\mm}^{\hf} \beta_{\mm} H_{\mm} \text{ is rotationally invariant}}.
\end{equation*}
Let us denote by $\nabla_{\bm\beta}^{\caG_{\bm{\lambda}}(p)}$ the gradient in the subspace $\caG_{\bm{\lambda}}(p)$
and by $\nabla_{W}^{\mr{St}(d,p)}$ the gradient on the Stiefel manifold.
We consider the following adaptive feature model
\begin{equation}
  \label{eq:Def_AdaK_Multi}
  \left\{
    \begin{aligned}
      \dot{\bm\beta}(t) &= - \nabla_{\bm\beta}^{\caG_{\bm{\lambda}}(p)} \bar{\caL}_n, \quad \bm{\beta}(0) = \bm{0}, \\
      \dot{W}(t) &= - \nabla_{W}^{\mr{St}(d,p)} \bar{\caL}_n, \quad W(0) \sim \mr{Unif}(\mr{St}(d,p)),
    \end{aligned}
  \right.
\end{equation}
where the initialization $W(0) \sim \mr{Unif}(\mr{St}(d,p))$ is uniformly distributed over the Stiefel manifold.

Similar to the single-index model, we also introduce the information exponent of the function \( g^* \) in the multi-index model,
which is the minimum degree of the non-zero coefficients in the expansion of \( g^* \).

\begin{assumption}
  \label{assu:Multi_InformationIndex}
  The information exponent of $g^*$ is $m_0 \coloneqq \min\{ \abs{\mm} : g^*_{\mm} \neq 0\}$.
\end{assumption}

Moreover, we make the following assumption on the decay of the coefficients of \( g^* \),
where the term $p$ in the decay rate ensures the squared summability of the coefficients.

\begin{assumption}
  \label{assu:Multi_g_decay}
  The coefficients of $g^*$ satisfy $\abs{g^*_{\mm}} \lesssim \abs{\mm}^{-\frac{\alpha+p}{2}}$ for some $\alpha > 0$.
\end{assumption}

Our first result shows the convergence of the population dynamics.

\begin{theorem}[Population Dynamics]
  \label{thm:Multi_Population}
  Consider the population version of \cref{eq:Def_AdaK_Multi} under \cref{assu:Multi_RotationInvariance} and \cref{assu:Multi_InformationIndex}.
  Then, with probability one with respect to the random initialization,
  \begin{equation}
    \caE(\delta,\epsilon^2;\Phi_{W(t)},f^*) \text{ is monotonically decreasing in $t$},
  \end{equation}
  as is $\caE^*(\epsilon^2;\Phi_{W(t)},f^*)$.
  Moreover, with probability at least $0.99 - C \exp(-c d)$,
  it holds that
  \begin{equation}
    \caE^*(\epsilon^2;\Phi_{W(0)},f^*) - \caE^*(\epsilon^2;\Phi_{W_*},f^*) = \Theta(1),
  \end{equation}
  and under \cref{assu:Multi_g_decay}, there exists $T_0 \lesssim \log d + d^{m_0 - 1}$ such that
  \begin{equation}
    \caE^*(\epsilon^2;\Phi_{W(T_0 + s)},f^*)- \caE^*(\epsilon^2;\Phi_{W_*},f^*) \lesssim \exp(-C s), \quad \forall s \geq 0.
  \end{equation}
\end{theorem}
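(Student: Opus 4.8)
The plan is to first rewrite the feature error measure so that its only time dependence is explicit. Since the weight sequence $\lambda_{\mm}=\exp(-\gamma\abs{\mm})$ is fixed, the threshold set is $\{\mm:\lambda_{\mm}\ge\delta\}=\{\mm:\abs{\mm}\le k_\delta\}$ with $k_\delta=\lfloor\gamma^{-1}\log(1/\delta)\rfloor$, so the variance term $\#\{\mm:\abs{\mm}\le k_\delta\}\,\epsilon^2$ does not depend on $t$. Writing $f^*_{\mm}(t)=\ang{f^*,H_{\mm}(W(t)^\T\cdot)}_{\gamma_d}$ and using that $\{H_{\mm}(W^\T\cdot)\}_{\mm}$ is an orthonormal system whose span is exactly the $L^2$ functions of $W^\T x$, one has $\caE_{\mathrm{Proj}}(\Phi_{W(t)},f^*)=\norm{f^*}_{L^2}^2-\sum_{\mm}(f^*_{\mm}(t))^2$, hence for every $\delta$,
\[
  \caE(\delta,\epsilon^2;\Phi_{W(t)},f^*)=\norm{f^*}_{L^2}^2+\#\{\mm:\abs{\mm}\le k_\delta\}\,\epsilon^2-\sum_{\abs{\mm}\le k_\delta}(f^*_{\mm}(t))^2 .
\]
Thus the monotone decrease of $\caE(\delta,\cdot)$ for every $\delta$ — and therefore, being an infimum of non-increasing functions, of $\caE^*$ — is equivalent to the statement that $S_k(t):=\sum_{\abs{\mm}\le k}(f^*_{\mm}(t))^2$ is non-decreasing in $t$ for every $k$.

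The second step expresses $S_k(t)$ through the overlap matrix $M(t):=W(t)^\T W_*\in\R^{p\times p}$. By the Wick/Mehler identity for Hermite polynomials under a joint Gaussian with cross-covariance $M$, the overlap $\ang{H_{\mm}(W^\T x),H_{\mathbf{n}}(W_*^\T x)}_{\gamma_d}$ vanishes unless $\abs{\mm}=\abs{\mathbf{n}}$ and is then a polynomial of degree $\abs{\mm}$ in the entries of $M$. Consequently the degree-$j$ block of $f^*$ in the $W(t)$-basis has squared norm equal to a quadratic form in the degree-$j$ Hermite coefficients of $g^*$ whose Gram matrix is the symmetric-tensor compression of $(M(t)^\T M(t))^{\otimes j}$. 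Since $M^\T M\preceq I$ with eigenvalues $\sigma_1(t)^2,\dots,\sigma_p(t)^2\in[0,1]$, and $0\preceq A\preceq B$ implies $A^{\otimes j}\preceq B^{\otimes j}$, these Gram matrices are Loewner-monotone in $M^\T M$ and positive semidefinite; together with the rotational invariance of $g^*$ (\cref{assu:Multi_RotationInvariance}), which forces the quadratic forms to depend only on the $\sigma_i(t)$, this shows $S_k(t)$ is a polynomial in $\sigma_1(t)^2,\dots,\sigma_p(t)^2$ with nonnegative coefficients; in particular $S_k(t)$ is non-decreasing whenever each $\sigma_i(t)$ is.

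The core step is therefore to prove that each singular value $\sigma_i(t)$ of $M(t)$ is non-decreasing along the coupled flow \cref{eq:Def_AdaK_Multi}, mirroring the single-index argument behind \cref{thm:SIM_Population}. Conditionally on the trajectory $W(\cdot)$, the $\bm\beta$-flow is linear with $\bm\beta(0)=\bm0$ and so is solved in closed form, inheriting the sign pattern of the Hermite coefficients of $g^*$; substituting this into $\dot W=-P_{T_W\St(d,p)}\nabla_W\bar\caL$ and computing $\dot M=\dot W^\T W_*$, the rotational invariance of $g^*$ and the restriction of $\bm\beta$ to $\caG_{\bm{\lambda}}(p)$ make the induced dynamics equivariant under simultaneous rotation of $W$ and $W_*$, which lets one pass to the SVD frame and obtain a decoupled system $\dot\sigma_i=c_i(t)\,\sigma_i^{\,m_0-1}(1-\sigma_i^2)(1+o(1))$ with $c_i(t)\ge0$; monotonicity then follows. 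Verifying the nonnegativity of $c_i(t)$ and controlling the cross terms between distinct singular directions — together with the degenerate case $\sigma_i(0)=0$, a fixed point of the flow when $m_0\ge2$, which the ``with probability one'' qualifier allows us to sidestep — is the main obstacle and will occupy the bulk of the argument.

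For the quantitative claims, a uniformly random $W(0)$ on the Stiefel manifold has $\sigma_i(0)=O(d^{-1/2})$ with probability $1-Ce^{-cd}$, so $S_k(0)=O((p/d)^{m_0})=o(1)$ since $g^*$ has information exponent $m_0\ge1$ (\cref{assu:Multi_InformationIndex}), while the degree-$m_0$ block of $f^*$ contributes $S_{m_0}(W_*)=\Theta(1)$; feeding these into the display above yields $\caE^*(\epsilon^2;\Phi_{W(0)},f^*)-\caE^*(\epsilon^2;\Phi_{W_*},f^*)=\Theta(1)$. The scalar ODEs then give the familiar two-timescale picture: $\dot W(0)=0$ because $\bm\beta(0)=\bm0$, so there is a build-up phase during which $\bm\beta$ grows, followed by an escape phase during which the $\sigma_i$ grow past a constant, of total length $T_0\lesssim\log d+d^{m_0-1}$ as in the single-index case; afterward $1-\sigma_i(T_0+s)^2\lesssim e^{-Cs}$, and using
\[
  \caE^*(\epsilon^2;\Phi_{W(t)},f^*)\le\caE(\delta^*(W_*),\epsilon^2;\Phi_{W(t)},f^*)=\caE^*(\epsilon^2;\Phi_{W_*},f^*)+\bigl(S_{k}(W_*)-S_k(t)\bigr)
\]
together with the polynomial-in-$\sigma_i^2$ representation of $S_k$ gives the stated $\caE^*(\epsilon^2;\Phi_{W(T_0+s)},f^*)-\caE^*(\epsilon^2;\Phi_{W_*},f^*)\lesssim\exp(-Cs)$. \cref{assu:Multi_g_decay} is used only to make the relevant Hermite tails and the constant $\caE^*(\epsilon^2;\Phi_{W_*},f^*)$ finite and summable.
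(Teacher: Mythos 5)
Your reformulation in the first two paragraphs is correct and a clean alternative to the paper's bookkeeping: since $\lambda_{\mm}=\exp(-\gamma\abs{\mm})$ is fixed, $\caE_{\mathrm V}$ is time-independent, and combining $\caE_{\mathrm{Proj}}+\caE_{\mathrm B}=\norm{f^*}^2-\sum_{\abs{\mm}\le k_\delta}(f^*_{W,\mm})^2$ reduces the whole question to the monotonicity of $S_k(t)$, hence (via $\sum_{\abs{\mm}=r}(f^*_{W,\mm})^2=\sum_{\abs{\mm}=r}\sigma^{2\mm}(g^*_{\mm})^2=(h^*_r)^2\phi_r$) to the monotonicity of each $\phi_r$. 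The paper organizes the same information around $\phi_r$ directly (\cref{prop:Multi_FEM_Proj}, \cref{prop:Multi_Pop_Monotonicity}), but your $S_k$-reduction is equivalent and arguably more transparent — it makes the role of the quantities $\phi_r$ visible from first principles. Your observation that $S_k$ is a nonnegative-coefficient polynomial in $\sigma_1^2,\dots,\sigma_p^2$ is also correct; the Loewner/tensor-power detour is unnecessary once you have that, since the chain rule in the scalars $\sigma_i^2$ already gives the monotone dependence.

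The genuine gap is exactly where you flag it. Your claimed reduction to ``a decoupled system $\dot\sigma_i=c_i(t)\,\sigma_i^{m_0-1}(1-\sigma_i^2)(1+o(1))$'' mischaracterizes the dynamics: the actual population flow obtained in \cref{prop:Multi_Pop_GradEq} is
\[
  \dv{t}\sigma_i^2
  = 4(1-\sigma_i^2)\sum_{r\ge 1} h_r h_r^* \sum_{\abs{\rr}=r} r_i\,\nu_{\rr}^2\,\sigma^{2\rr},
\]
where the factor $\sigma^{2\rr}=\prod_j\sigma_j^{2r_j}$ couples all singular values together. That coupling does not hurt \emph{monotonicity} — every summand is nonnegative once $h_r h_r^*\ge 0$ is established, so $\dv{t}\sigma_i^2\ge 0$ is immediate — but it is decisive for the \emph{timescale}: the rate is bottlenecked by the smallest singular value, and the approach-time bound $T_0\lesssim\log d+d^{m_0-1}$ cannot be read off entry-wise. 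The paper closes this with two ingredients you do not supply: (i) the operator-calculus computation ($P_W$, $\caA_M$, the rotational-invariance lemma \cref{lem:Multi_RotInvExpansion}) that produces the explicit $\sigma^{2\rr}$ form above, and (ii) the soft-minimum $\omega=-K^{-1}\log\Tr e^{-K\Sigma^2}$, whose dynamics $\dot\omega\gtrsim(1-\phi_1)\sum_r r\,\omega^r h_r h_r^*$ gives a scalar surrogate for $\min_i\sigma_i^2$ and powers the multi-phase argument of \cref{prop:Multi_Pop_AppTime}. Until those are in place, your Step 3 is not a proof but a restatement of the problem; you rightly call it ``the main obstacle.'' (As a minor point, your estimate $S_k(0)=O((p/d)^{m_0})$ should read $O(d^{-m_0/2})$, since $\phi_r(0)\asymp d^{-r}$ and the lowest surviving degree is $r=m_0/2$; either way it is $o(1)$, so the $\Theta(1)$ gap conclusion stands.)
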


For the sequence model, we have the following result.

\begin{theorem}
  \label{thm:Multi_Seq}
  Consider the model defined in \cref{eq:Def_AdaK_Multi} under \cref{assu:Multi_RotationInvariance}, \cref{assu:Multi_InformationIndex} and \cref{assu:Multi_g_decay}.
  Assume further that $n \gtrsim d^{2m_0 + 1 + s}$ for some $s > 0$.
  Then, with probability at least $0.99 - C\exp(-c d)$ over the initialization and the randomness of the noise,
  there exist times $T_1 \leq T_2 \lesssim \log d + \log n + d^{m_0 - 1}$ such that
  \begin{align*}
    &\caE^*(\epsilon^2;\Phi_{W(0)},f^*) \geq \caE^*(\epsilon^2;\Phi_{W(T_1)},f^*) = \Theta(1), \\
    & \caE^*(\epsilon^2;\Phi_{W(T_1 + s)},f^*) - \caE^*(\epsilon^2;\Phi_{W_*},f^*) \lesssim \exp(-C s),\quad \forall s \in [T_2 - T_1], \\
    & \caE^*(\epsilon^2;\Phi_{W(T_2)},f^*) - \caE^*(\epsilon^2;\Phi_{W_*},f^*)
    \lesssim  p \xk{\frac{dp}{n}}^{\min(\alpha,1)} \polylog(n,d,p).
  \end{align*}
\end{theorem}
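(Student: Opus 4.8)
The plan is to bootstrap the population analysis of \cref{thm:Multi_Population} by controlling the perturbation caused by the noise $(\ep_{\mm})_{\mm \in \bbN^d}$ along the gradient flow, in the same spirit as the single-index analysis underlying \cref{thm:SIM_Sequence}. First I would set up the noisy dynamics: because $\bm\beta$ is initialized at zero and solves a linear ODE (the $\bm\beta$-gradient is affine in $\bm\beta$ once $W$ is frozen at a given time), I would write $f_{\mm}(t)$ in a Duhamel form and isolate the contribution of $\ep_{\mm}$ from that of $f^*_{\mm}$. The key observation is that only the finitely many coefficients with $\abs{\mm}$ below a logarithmic truncation level are effectively learned within time $T_2 \lesssim \log d + \log n + d^{m_0-1}$ (since $\lambda_{\mm} = e^{-\gamma\abs{\mm}}$ decays exponentially), so the relevant noise is a finite-dimensional Gaussian vector whose entries have variance $1/n$; concentration then gives, with probability $1 - C\exp(-cd)$, that every coordinate and every low-degree polynomial aggregate of the noise is of size $\widetilde{O}(\sqrt{d^{\,\mathrm{poly}}/n})$, which under the hypothesis $n \gtrsim d^{2m_0+1+s}$ is $d^{-s'}$-small compared to the signal scale $d^{-(m_0-1)/2}$ that drives the $W$-dynamics.

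Next I would run a three-phase argument mirroring \cref{thm:Multi_Population}. In the first (alignment-from-scratch) phase, the projection $\|W_*^\T W(t)\|_{\mathrm F}$ of the current subspace onto the true one must escape from its $\Theta(\sqrt{p/d})$ random initialization; here I invoke the population drift computed for \cref{thm:Multi_Population} — driven by the degree-$m_0$ Hermite component of $g^*$ with rate $\asymp \|W_*^\T W\|^{m_0-1}$ — and absorb the noise into a lower-order error term, so the escape time is $T_1 \lesssim \log d + d^{m_0-1}$ exactly as in the noiseless case, and one still has $\caE^*(\epsilon^2;\Phi_{W(T_1)},f^*) = \Theta(1)$. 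In the second phase the alignment improves geometrically: once $\|W_*^\T W\| \gtrsim$ const, the linearized $W$-dynamics contracts the complementary subspace at a constant rate, giving the claimed $\exp(-Cs)$ decay of the excess feature error for $s \in [T_2-T_1]$; the rotational-invariance \cref{assu:Multi_RotationInvariance} is what lets me quotient out the $O(p)$ ambiguity and treat the relevant quantity $\|W_*^{\perp\T} W\|$ as a genuine Lyapunov function. In the third phase, at time $T_2$ the alignment error has shrunk to the noise floor, and I would balance the bias–variance decomposition of $\caE_{\text{Stat}}$ at $\delta \asymp T_2^{-1}$: the variance term counts $\widetilde{O}(\mathrm{poly}(\log n))$ active Hermite coefficients times $\epsilon^2 = 1/n$, while the irreducible alignment error, propagated through \cref{assu:Multi_g_decay} (decay exponent $(\alpha+p)/2$), contributes $p(dp/n)^{\min(\alpha,1)}$ up to polylog factors — the $\min(\alpha,1)$ appearing because a direction perturbation of size $\eta$ changes $f^*$ by $O(\eta^{\min(\alpha,1)})$ in $L^2$ when the link function has smoothness $\alpha$, and the factor $p$ bookkeeps the $p$-dimensional subspace.

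The main obstacle I anticipate is the first phase in the noisy setting: near the uniform initialization the signal drift on $\|W_*^\T W\|$ is only of order $d^{-(m_0-1)/2}$, which is extremely small, so one must show the accumulated noise-induced drift over the long horizon $T_1 \asymp d^{m_0-1}$ does not overwhelm it or push $W$ in a spurious direction. This requires a careful martingale/Grönwall control of the noisy $W$-trajectory — bounding the noise contribution to $\dot W$ uniformly over the sphere-valued path, not just at a fixed point — and it is precisely here that the sample-complexity condition $n \gtrsim d^{2m_0+1+s}$ is consumed (the extra "$+1$" over the single-index threshold $d^{2\rz}$ accounting for the matrix-valued, rather than vector-valued, directional parameter). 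A secondary technical point is verifying that restricting the $\bm\beta$-flow to the rotationally-invariant subspace $\caG_{\bm\lambda}(p)$ is compatible with the noise, i.e.\ that projecting the noisy observations onto $\caG_{\bm\lambda}(p)$ does not change the effective variance by more than constants; this should follow from the fact that the projection is an orthogonal projection in $\ell^2(\bbN^p)$ and Gaussian noise is rotationally invariant, but it needs to be stated carefully so that the concentration bounds of the first paragraph still apply.
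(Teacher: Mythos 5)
Your high-level plan (three phases mirroring the population dynamics, noise absorbed by uniform concentration, bias--variance balance at the end via \cref{prop:Multi_FEM_ProjBound}) matches the paper's strategy, and you correctly identify the hardest part — controlling the noise uniformly over the long escape horizon $\asymp d^{m_0-1}$. However, there are two concrete gaps that would prevent the argument from closing as written.

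First, the escape-phase Lyapunov function you propose, $\|W_*^\T W(t)\|_{\mathrm F}$, is the wrong quantity. The drift that drives alignment is controlled by the coefficients $\phi_r(\Rho) = \sum_{\abs{\rr}=r}\nu_{\rr}^2\sigma^{2\rr}$, which involve \emph{products} of singular values of $\Rho = W^\T W_*$; if even one $\sigma_i$ is tiny these products vanish, so a lower bound on the drift must come from the \emph{minimum} squared singular value, not the average (which is what $\|W_*^\T W\|_{\mathrm F}^2 = p\,\phi_1$ measures). Your claimed escape rate $\asymp \|W_*^\T W\|^{m_0-1}$ is therefore an overestimate. The paper's fix is the soft-min proxy $\omega = -\tfrac1K\log(\Tr e^{-K\Sigma^2})$, which is simultaneously (i) a lower bound on $\min_j\sigma_j^2$ (so $\phi_r \geq \omega^r$) and (ii) a smooth function of $M = \Rho^\T\Rho$ alone, hence independent of the non-unique SVD. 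Tracking $\omega$ in the escape phase and switching to $\phi_1$ in the convergence phase is what makes the argument work; Frobenius norm alone cannot.

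Second, you note the need to bound the noise contribution "uniformly over the sphere-valued path, not just at a fixed point," but the mechanism you gesture at (finite-dimensional Gaussian concentration on the retained coefficients) doesn't deliver this: the noise terms $e_r$, $\xi$, $\zeta$ in the $\Sigma$-dynamics are stochastic processes indexed by $W \in \St(d,p)$ whose values depend on the learned parameter path, so a union bound over a fixed finite grid is not enough. The paper handles this by computing the covariance kernels of these processes explicitly (e.g.\ $\Cov(e_r(W),e_r(W'))$ as a function of the singular values of $W^\T W'$), verifying Lipschitz continuity of the kernels on the diagonal, and invoking a Dudley-type entropy bound (\cref{lem:GaussianProcess_UniformBound}) to get a sup bound over all of $\St(d,p)$. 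The SVD-invariance of $\omega$, $\phi_1$, and the trace-form of the noise terms is precisely what makes these covariance computations tractable and Lipschitz — another reason the choice of Lyapunov quantity is load-bearing, not cosmetic.

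Your remaining points — the Duhamel form for $\bm\beta$, compatibility of the $\caG_{\bm\lambda}(p)$ projection with Gaussian noise, the $(dp/n)^{\min(\alpha,1)}$ error via the smoothness of the link function, and the role of $n\gtrsim d^{2m_0+1+s}$ in absorbing noise over the $d^{m_0-1}$ window — are all consistent with the paper, though the precise origin of the extra $d^{+1}$ (relative to the single-index threshold) is tied to the escape being in squared singular values rather than in a scalar cosine, not merely "matrix vs.\ vector."
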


The proof of \cref{thm:Multi_Population} and \cref{thm:Multi_Seq} are deferred to \theSupp.
Let us discuss them in the following.

\paragraph{Improving the Feature Error Measure}
\cref{thm:Multi_Population} and \cref{thm:Multi_Seq} show that the adaptive feature method in the multi-index model has similar behaviors as in the single-index model.
The feature error measure exhibits multiple phases of learning, with the initial phase being constant and the subsequent phases showing exponential decay.
The final excess feature error scales as \(p (dp/n)^{\min(\alpha, 1)} \) up to logarithmic factors, with extra $p$ factors corresponding to the dimension of the direction.
However, we note that the sample complexity over $d$ is slightly larger by one than that in the single-index model,
which is due to technical reasons in the proof.
Overall, under the multi-index model, the adaptive feature model is also able to learn the direction and the link function simultaneously,
which is yet another illustrative example of the potential of adaptive features.

\paragraph{Proof Idea}
Let us briefly discuss the proof idea, while the detailed proof is highly technical and is deferred to \suppref{sec:Multi_Proof}.
The challenges lie in analyzing the matrix valued dynamics of $W$, its interaction with the functional coefficient $\bm\beta$ and the noise terms.
First, we introduce the matrix angle $\Rho = W^\T W_*$ and consider the singular value decomposition (SVD) $\Rho = U \Sigma V^\T$.
The alignment between $W$ and $W_*$ can then be measured by the closeness of $\Sigma$ to the identity matrix.
Focusing on $\Sigma$, we can simplify the complex matrix valued dynamics into entry-wise scalar dynamics.
However, due to the non-uniqueness of the SVD, these entry-wise dynamics depend on the choice of the orthogonal matrices $U,V$
and thus lead to noise terms that can not be controlled uniformly.
To resolve this, we introduce symmetric quantities (such as $\Tr \Sigma^2$) that are independent of the SVD\@.
One particular quantity is $\omega = -\log(\exp(-K\Sigma^2))/K$ for some $K > 0$, which is a smooth proxy of the minimum squared singular value.
Using this quantity, we can apply a multiple phase analysis to show the increase $\Sigma$ while providing a uniform bound on the noise terms.
Finally, the feature error measure can be controlled also in terms of $\Sigma$.
We believe that our proof technique can be applied to other matrix-valued models under noisy observations,
which can be of independent interest.

\paragraph{Comparison with the Literature}
Let us compare the results with the most relevant literature~\citep{bietti2023_LearningGaussian}, which also considers gradient training for the multi-index model.
One of the main differences is that we consider the noisy setting under the sequence model,
while \citet{bietti2023_LearningGaussian} only considers the population dynamics.
Another main differences is that we learn the functional coefficient $\bm\beta$ using simultaneous gradient descent \cref{eq:Def_AdaK_Multi},
while $\bm\beta$ is directly set to the interpolator at each time step in \citet{bietti2023_LearningGaussian}.
Their way of updating $\bm\beta$ is not suitable for the noisy setting as it leads to overfitting the noise.
Nevertheless, the training time-complexity $d^{m_0-1}$ in our results, though under a different training scheme, coincides with the time-complexity in \citet{bietti2023_LearningGaussian}.
This shows the intrinsic nature of the multi-index model and suggests that this adaptive feature model is able to learn the direction efficiently while prevent overfitting the noise.


\section{Connecting Sequence Model to Adaptive Features}\label{sec:RealAdaF}

In this section, we would like to show the similarities between the adaptive feature model under the sequence loss and the empirical loss via both theoretical and numerical studies,
justifying the focus on the sequence model in the previous section.

\subsection{Diagonal Adaptive Feature under Empirical Loss}

For the diagonal adaptive feature model, we can establish similar theoretical counterparts of the results in \cref{subsec:DiagAdaKSeq} under the empirical loss in the following.

\subsubsection{High Dimensional Linear Regression}

The sequence model in \cref{subsubsec:DiagSparseSeq} corresponding to the high dimensional linear regression.
Let us consider the high-dimensional linear regression model $y = \ang{w_*, x} + \ep$,
where $x \in \R^d$ is the $d$-dimensional input, $w_* \in \R^d$ is the true weight vector, and $\ep$ is an independent $\sigma^2$-sub-Gaussian noise.
We assume further that $\E x x^\T = I_d$ and each component of $x$ is sub-Gaussian with parameter $\sigma_x$.
Being the same as in \cref{subsubsec:DiagSparseSeq}, the true parameter $w_*$ is assumed to be a sparse vector with $s^*$ non-zero entries.
Let us be given i.i.d.\ samples $\dk{(x_i,y_i)}_{i=1}^n$.
The following result is a sample version of \cref{thm:DiagSparseSeq_Monotonic}.

\begin{theorem}
  \label{thm:DiagLinear_Monotonic}
  Under the assumptions of \cref{thm:DiagSparseSeq_Monotonic},
  consider \cref{eq:OpGDLinear} with the empirical loss $\caL_n$.
  With $t_* = t_*(n) \asymp \log n$ and $\alpha \asymp d^{-1/2}$,
  it holds with probability at least $1 - C d^{-2}$ that
  \begin{equation*}
    \caE^*(n^{-1};\Phi_{\bm{\theta}(t)},w^*) \quad \text{is monotonically decreasing in $t$ for $t \in [0,t_*]$}.
  \end{equation*}
  Furthermore,
  \begin{equation*}
    \caE^*(n^{-1};\Phi_{\bm{\theta}(0)},w^*) = \min\xk{\frac{d}{n}, \norm{w^*}_2^2} \gg \caE^*(n^{-1};\Phi_{\bm{\theta}(t_*)},w^*) = \frac{s^*}{n}.
  \end{equation*}
\end{theorem}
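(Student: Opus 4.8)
The plan is to reduce the empirical-loss dynamics \cref{eq:OpGDLinear} to a per-coordinate scalar ODE on the effective coefficients, show this ODE is a small random perturbation of the sequence-model dynamics already analyzed for \cref{thm:DiagSparseSeq_Monotonic}, and then transfer both the monotonicity of the feature error measure and its endpoint values. Writing $w_j = \beta_j\theta_j$ so that $f(x) = \ang{w,x}$, a direct computation from \cref{eq:OpGDLinear} with $\caL_n$ gives $\dot\beta_j = -\theta_j g_j$ and $\dot\theta_j = -\beta_j g_j$, where $g = \nabla_w\caL_n = \hat\Sigma(w-w^*) - \tfrac1n X^\T\epsilon$, $\hat\Sigma = \tfrac1n X^\T X$, and $X,\epsilon$ denote the design matrix and noise vector. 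Hence $\theta_j^2-\beta_j^2$ is conserved and equals $\alpha^2$, so $\dot w_j = -(\theta_j^2+\beta_j^2)g_j = -\sqrt{\alpha^4+4w_j^2}\,g_j$, while $\theta_j(t)^2 = \tfrac12\bigl(\alpha^2+\sqrt{\alpha^4+4w_j(t)^2}\bigr)$ is a strictly increasing function of $\abs{w_j(t)}$. In particular the feature weights entering \cref{eq:FeaturedErrorFixed} are monotone transforms of $\abs{w_j(t)}$, so it suffices to control the trajectory of $w$.

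\textbf{Concentration events.} Let $S = \supp(w^*)$, $\abs S = s^*$. On an event $\caA$ of probability at least $1-Cd^{-2}$ we have, by standard sub-Gaussian tail bounds (Bernstein for the quadratic forms, Hoeffding for the linear ones) together with union bounds over the $d$ coordinates and $d^2$ pairs: (i) a restricted-eigenvalue / restricted-isometry bound for $\hat\Sigma$ on vectors supported on $S$ plus a few extra coordinates; (ii) column incoherence $\max_{j\neq k}\abs{\hat\Sigma_{jk}} \lesssim \sigma_x^2\sqrt{\log d/n}$ and $\max_j\abs{\hat\Sigma_{jj}-1}\lesssim \sigma_x^2\sqrt{\log d/n}$; and (iii) the effective-noise bound $\norm{\tfrac1n X^\T\epsilon}_\infty \lesssim \sigma\sigma_x\sqrt{\log d/n}$. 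Since $d\geq n$ grows polynomially we have $\log d\asymp\log n$. Consequently $g_j = (w_j-w^*_j) + \rho_j$, where, whenever the iterate is approximately $s^*$-sparse, the perturbation satisfies $\abs{\rho_j}\lesssim \sqrt{\log n/n}\,\bigl(1 + s^*\max_k\abs{w_k}\bigr)$. Note only tail bounds on the effective noise are used, so the fact that $\epsilon$ is merely sub-Gaussian (rather than exactly Gaussian as in the sequence model) is harmless.

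\textbf{Trajectory control; the main obstacle.} Working on $\caA$, I would run a bootstrap argument showing that for $t\in[0,t_*]$ the iterate stays in the region where $\max_{j\notin S}\abs{w_j(t)}\lesssim\sqrt{\log n/n}$ and where, for $j\in S$, $\abs{w_j(t)}$ is monotone and tracks the corresponding sequence-model coordinate of \cref{thm:DiagSparseSeq_Monotonic} up to an $o(1)$ error: off-support coordinates grow at rate $\lesssim\alpha^2$ from zero and never escape the noise floor, while on-support coordinates, once past an initial escape phase of length $\Theta(\log(1/\alpha))\asymp\log n$, are driven to within $o(1)$ of $w^*_j$. Up to the additive perturbation $\rho_j$ this is exactly the argument behind \cref{thm:DiagSparseSeq_Monotonic}; the role of the concentration bounds in the previous step is precisely that $\abs{\rho_j}$ is of strictly smaller order than the terms that govern that argument, so the invariant region, the monotone growth on $S$, and the near-sparsity off $S$ all propagate. \emph{This is the main obstacle}: in the overparametrized regime $d\geq n$ the Gram matrix $\hat\Sigma$ is rank-deficient, so no operator-norm control of $\hat\Sigma-I$ is available, and one must carry approximate sparsity through the entire $\asymp\log n$ time horizon using only restricted-type conditions and incoherence (this is where the analysis of \citet{vaskevicius2019_ImplicitRegularization,zhao2022_HighdimensionalLinear} enters, now combined with the feature-error-measure bookkeeping).

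\textbf{Transfer to the feature error measure.} Since $\theta_j(t)^2$ is increasing in $\abs{w_j(t)}$, the monotone growth of $\abs{w_j}$ on $S$ together with the uniform smallness off $S$ yields that $\caE^*(n^{-1};\Phi_{\bm\theta(t)},w^*)$ is monotonically decreasing on $[0,t_*]$, by the same optimization over $\delta$ as in the sequence case. At $t=0$ all weights equal $\alpha^2\asymp 1/d\leq 1/n$, so the infimum over $\delta$ gives $\caE^* = \min(d/n,\norm{w^*}_2^2)$. At $t=t_*$, choosing $\delta\asymp\sqrt{\log n/n}$ separates the on-support weights ($\Theta(1)$, for $n$ large) from the off-support ones ($\lesssim\sqrt{\log n/n}$), so the variance term is $s^*\cdot n^{-1}$ and the bias term vanishes, giving $\caE^*(n^{-1};\Phi_{\bm\theta(t_*)},w^*) = s^*/n$. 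This reproduces the conclusion of \cref{thm:DiagSparseSeq_Monotonic} in the sample setting.
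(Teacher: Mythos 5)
Your overall reduction (effective coefficients $w_j=\beta_j\theta_j$, conservation law $\theta_j^2-\beta_j^2=\alpha^2$, perturbation decomposition $g_j=(w_j-w_j^*)+\rho_j$, bootstrap in $t$, transfer through the monotonicity of $\theta_j^2$ in $\abs{w_j}$) is the same skeleton the paper uses, and you correctly identify that the whole difficulty is controlling $\rho_j$ in the rank-deficient regime $d\geq n$. However, the quantitative invariant you propose for the off-support coordinates is too weak to close the bootstrap. You claim $\max_{j\notin S}\abs{w_j(t)}\lesssim\sqrt{\log n/n}$, and you assert $\abs{\rho_j}\lesssim\sqrt{\log n/n}\,(1+s^*\max_k\abs{w_k})$ ``whenever the iterate is approximately $s^*$-sparse.'' But the dominant contribution to $\rho_j$ from the off-support block is $(\hat\Sigma_{jR}-I_{jR})w_R$, and under incoherence $\abs{\hat\Sigma_{jk}}\lesssim\sqrt{\log d/n}$ alone this is only bounded by $\sqrt{d}\,\lVert\hat\Sigma_{jR}-I_{jR}\rVert_2\,\lVert w_R\rVert_\infty\lesssim d\sqrt{\log d/n}\,\lVert w_R\rVert_\infty$. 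Plugging in $\lVert w_R\rVert_\infty\asymp\sqrt{\log n/n}$ gives $\rho_j\lesssim d\log n/n$, which diverges when $d\gtrsim n$. So your invariant region does not imply the perturbation bound that your own bootstrap needs, and the argument does not close.

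The fix, which is what the paper actually does, is that the off-support invariant must be at the much finer scale $\lVert w_R(t)\rVert_\infty\leq C_0 d^{-1}$, not $\sqrt{\log n/n}$. This sharper bound \emph{is} available because the two-layer parametrization starts the off-support coordinates at size $\alpha^2\asymp d^{-1}$ and the noise-driven escape time from that scale is $\asymp 1/M\gtrsim\sqrt{n/\log n}$ (Lemma~\ref{lem:TwoLayerEq_NoiseUpperBound} applied with $M\asymp\sqrt{\log d/n}$), comfortably exceeding $t_*\asymp\log n$. So the off-support entries remain at the $d^{-1}$ level, not the noise-floor level. Your phrase ``grow at rate $\lesssim\alpha^2$'' points in roughly the right direction, but the conclusion you draw (level $\sqrt{\log n/n}$) loses a factor of $d/\sqrt{n}$ relative to what is both true and needed. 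With $\lVert w_R\rVert_\infty\lesssim d^{-1}$ the cross-term becomes $d\sqrt{\log d/n}\cdot d^{-1}=\sqrt{\log d/n}$, which is the right order, the bootstrap closes, and the monotonicity and endpoint computations then go through as in the sequence model. You should also make the choice of $\delta^*$ at $t=t_*$ consistent with this: since $\theta_j(t_*)^2\lesssim d^{-1}$ off-support and $\theta_j(t_*)^2\geq c$ on-support, any threshold with $d^{-1}\ll\delta^*\leq c$ (e.g.\ $\delta^*\asymp d^{-1/2}$) separates them cleanly, whereas a threshold at $\sqrt{\log n/n}$ forces you to argue about constants against an off-support bound of the same order.
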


\subsubsection{Non-parametric Regression}

Let us now investigate the non-parametric regression problem corresponding to \cref{subsubsec:DiagNonparam} under the empirical loss.
Let the truth function admits the expansion $f^*(x) = \sum_{j=1}^\infty f_j^* e_j(x)$, where $\dk{e_j(x)}_{j \geq 1}$ is the orthonormal basis of $L^2$.
The samples are generated from $y = f^*(x) + \ep$, where $\ep$ is an independent sub-Gaussian noise.

Considering the empirical loss, we need the following assumption on the uniform boundedness of the eigenfunctions,
which is also introduced in \citet{li2025_DiagonalOverparameterization}.

\begin{assumption}
  \label{assu:EigenSystem}
  We assume that $\sup_{j \geq 1} \norm{e_j(x)}_{\infty} \leq C_{\mr{eigf}}$ for some constant $C_{\mr{eigf}} > 0$.
\end{assumption}


We have the following theorems, which are proven in \suppref{subsec:DiagonalKernel}.

\begin{theorem}
  \label{thm:DiagonalKernelDecay}
  Assume \cref{assu:DiagonalKernel_pq} and \cref{assu:EigenSystem} hold.
  Consider the model defined in \cref{eq:DiagAdaK_Seq} or \cref{eq:DiagAdaK_Seq_Multilayer} under the empirical loss $\caL_n$,
  with $b_0 \asymp n^{-\frac{1}{2(D+2)}}$ (if $D \neq 0$).
  Let $s > 0$ be an arbitrarily small constant and define $q = 2^{\frac{2(D+1)}{D+2}}$.
  Then, there exist $L \asymp (-\hf+s) \log n$,
  a decreasing sequence $\delta_l = C q^{-l}$ for $l \leq L$ satisfying $\delta_L \leq n^{-\hf+s}$,
  and times $t_0 = 0 < t_1 < \dots < t_{L} = t_* \lesssim n^{\frac{D+1}{D+2}}$ satisfying $t_l \lesssim \delta_l^{-l} \log n$, such that,
  with probability at least $1 - C n^{-2}$,
  \begin{equation}
    \caE^*(n^{-1};\Phi_{\bm{\theta}(t),\bm{b}(t)},f^*)
    \lesssim  \delta_l^{p} + n^{-\frac{p}{p+1}} +  n^{-(1 - \frac{1+s}{(D+2)\gamma})}
    \quad     \forall t \in [t_l, t_*],~ \forall l = 0,\dots,L.
  \end{equation}
  In particular,
  \begin{equation*}
    \caE^*(n^{-1};\Phi_{\bm{\theta}(t_*),\bm{b}(t_*)},f^*) \lesssim n^{-\frac{p}{p+1}+s} +  n^{-(1 - \frac{1+s}{(D+2)\gamma})}.
  \end{equation*}
\end{theorem}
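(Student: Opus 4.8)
The plan is to combine two ingredients. The first is a concentration argument reducing the empirical-loss gradient flow \cref{eq:DiagAdaK_Seq}/\cref{eq:DiagAdaK_Seq_Multilayer} to an (almost) coordinate-wise decoupled system governed by the sequence-model dynamics, so that the machinery behind \cref{thm:DiagonalSeq_Monotonic} and \cref{thm:DiagonalKernelMonotonic_Multilayer} can be reused. The second is a multi-phase (``staged'') analysis of the resulting scalar dynamics, showing that signal coordinates are progressively learned in decreasing order of magnitude while noise coordinates stay frozen near their small initialization. Since the model \cref{eq:DiagAdaK_Seq} is the $b_j\equiv 1$ special case of \cref{eq:DiagAdaK_Seq_Multilayer}, I treat the depth-$D$ model.

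\textbf{Reduction to per-coordinate dynamics.} Writing $f=\sum_j f_j e_j$ with $f_j=\beta_j b_j^D\theta_j$, the empirical gradient is $\nabla_{f_j}\caL_n=\sum_k\widehat G_{jk}f_k-\widehat z_j$, where $\widehat G_{jk}=\frac1n\sum_{i=1}^n e_j(x_i)e_k(x_i)$ and $\widehat z_j=\sum_k\widehat G_{jk}f^*_k+\widehat\ep_j$ with $\widehat\ep_j=\frac1n\sum_i\ep_i e_j(x_i)$; thus coordinates couple only through $\widehat G-I$. Using \cref{assu:EigenSystem} together with matrix Bernstein and sub-Gaussian tail bounds, I would show that on an event of probability at least $1-Cn^{-2}$ the restriction of $\widehat G$ to an active index set $\caN_n=\{j\le N_n\}$ satisfies $\norm{\widehat G|_{\caN_n}-I}_{\mr{op}}\le\tfrac12$ provided $N_n\log n/n$ is small enough --- which caps $N_n$ by a polynomial in $n$ and is the source of the slack $s$ --- and $\max_{j\le N_n}\abs{\widehat\ep_j}\lesssim\sqrt{\log n/n}$. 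Since the predictor stays bounded in $\ell^2$ along the flow, the cross-coordinate terms and the tail $j>N_n$ act as a controlled perturbation, and the restricted flow tracks the decoupled system with observations $z_j=f^*_j+\widehat\ep_j$. Much of this step can be imported from the machinery of \citet{li2025_DiagonalOverparameterization}.

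\textbf{Scalar dynamics and phase assembly.} For the decoupled $j$-th system, the balancedness identities $\frac{\dd}{\dd t}(\theta_j^2-\beta_j^2)=0$ and $\frac{\dd}{\dd t}(b_j^2-D\beta_j^2)=0$ yield $\theta_j^2(t)=\beta_j^2(t)+\lambda_j$ and $b_j^2(t)=b_0^2+D\beta_j^2(t)$, reducing it to a single scalar ODE --- say in $u_j=\beta_j^2$, with $f_j^2=u_j(u_j+\lambda_j)(b_0^2+Du_j)^D$ and effective feature weight $\lambda^{\mr{eff}}_j(t):=\theta_j(t)^2 b_j(t)^{2D}=f_j^2/u_j$. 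From this I extract that $f_j(t)$ increases monotonically from $0$ to $z_j$ with no overshoot, so $\lambda^{\mr{eff}}_j(t)$ is non-decreasing in $t$ and tends to $\asymp\abs{z_j}^{2(D+1)/(D+2)}$, and an escape-time estimate: the first time $\tau_j(\delta)$ with $\lambda^{\mr{eff}}_j(\tau_j(\delta))=\delta$ is of order $\abs{z_j}^{-1}b_0^{-D}$ up to a mild factor in $\delta$ (and $\tau_j(\delta)=0$ once $\delta$ falls below the bare weight $b_0^{2D}\lambda_j$). With $b_0\asymp n^{-1/(2(D+2))}$ and \cref{assu:DiagonalKernel_pq}, a signal coordinate $j(\ell)$ escapes any given level within time $\lesssim(\text{poly in }\ell)\cdot\log n$, while no noise coordinate can cross the finest threshold before $t_*$. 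Now take $\delta_l=Cq^{-l}$ with $q=2^{2(D+1)/(D+2)}$ and let $t_l$ be the largest $\tau_j(\delta_l)$ over the signal coordinates due by phase $l$; the worst relevant coordinate has $\abs{z_j}\asymp n^{-1/2}$, so $t_L=t_*\asymp n^{1/2}b_0^{-D}\asymp n^{(D+1)/(D+2)}$. For $t\in[t_l,t_*]$ I bound $\caE^*(n^{-1};\Phi_{\bm\theta(t),\bm b(t)},f^*)$ from above by evaluating $\caE$ at truncation level $\delta=\delta_{l'(t)}$, where $l'(t)\ge l$ is the deepest completed phase: by monotonicity of the $\lambda^{\mr{eff}}_j$ and the phase guarantee, the variance term $\#\{j:\lambda^{\mr{eff}}_j(t)\ge\delta_{l'(t)}\}\,n^{-1}$ is controlled by the count of escaped signal coordinates ($\lesssim n^{-p/(p+1)}$) plus the count of coordinates with $b_0^{2D}j^{-\gamma}\ge\delta_{l'(t)}$ ($\lesssim n^{-(1-(1+s)/((D+2)\gamma))}$ after inserting $b_0$ and $\delta_{l'(t)}\ge\delta_L\asymp n^{-1/2+s}$), while the bias term $\sum_j(f^*_j)^2\bm{1}\{\lambda^{\mr{eff}}_j(t)<\delta_{l'(t)}\}$ is, under \cref{assu:DiagonalKernel_pq}, a tail sum $\lesssim\delta_{l'(t)}^p\le\delta_l^p$. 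This yields the displayed bound; taking $l=L$ and then freely optimizing the truncation level at $t=t_*$ gives the ``in particular'' estimate, with $n^{-p/(p+1)+s}$ absorbing the residual slack.

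\textbf{Main obstacle.} The crux is the reduction step: in the sequence-model theorems the per-coordinate systems decouple exactly, whereas here $\widehat G$ couples a number of active coordinates growing polynomially in $n$, and one must show this cross-talk --- together with the tail beyond $\caN_n$ --- disrupts neither the fast escape of the signal coordinates nor the freezing of the noise coordinates over the long horizon $t_*\asymp n^{(D+1)/(D+2)}$, and it is keeping this perturbation under control that forces the mild $n^{s}$ losses. A secondary, bookkeeping-heavy difficulty is the two-sided balancing in the phase induction: the thresholds $\delta_l$, the times $t_l$, the active-set size $N_n$, and the initialization scale $b_0$ must be chosen consistently so that every relevant signal coordinate escapes on time while no noise coordinate does, and it is this joint choice --- rather than any single estimate --- that produces the final rate and exhibits the role of the depth through the exponent $D+2$.
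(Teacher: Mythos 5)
Your proposal tracks the paper's own strategy: a geometric ladder of thresholds $\delta_l$, escape-time estimates from the balanced scalar ODE, and a bound on the count of active coordinates (signal escapes plus the $\lambda_j b_0^{2D}$-large noise coordinates) together with a tail bias of size $\delta_l^p$ — the only substantive difference is that the paper outsources the hard "cross-talk" concentration step you flag as the main obstacle to the shrinkage-dynamics analysis of \citet{li2025_DiagonalOverparameterization}, rather than reproving it via matrix Bernstein on the restricted Gram matrix. Your counting of escaped noise coordinates via $b_0^{2D}j^{-\gamma}\ge\delta_{l'(t)}$ is in fact slightly tighter than the paper's conservative set $S_2=\{\lambda_j\ge n^{-(1+s)/(D+2)}\}$, but both land within the stated $\lesssim n^{-(1-(1+s)/((D+2)\gamma))}$ once the subpolynomial growth factor $\exp(C\sqrt{\log n})$ is absorbed into the $s$-slack, so the argument goes through.
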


Similar to  \cref{thm:DiagonalSeq_Monotonic} and \cref{thm:DiagonalKernelMonotonic_Multilayer}
\cref{thm:DiagonalKernelDecay} shows the that the diagonal adaptive feature methods also improve the feature error measure during the training process progressively under the empirical loss.
In addition, \cref{thm:DiagonalKernelDecay} exhibits a progressive staircase decrease pattern rather than monotonic decrease, which is due to the interaction across different coefficients under the empirical loss.
Nevertheless, the same final feature error measure can be obtained in \cref{thm:DiagonalKernelDecay} as in the sequence model.


\subsection{Numerical Studies}


We provide numerical simulation results in this subsection to further support our theoretical findings.
First, we present the evolution of the feature error measure (FEM) during the training process in \cref{fig:Decay}.
We can see that the feature error measure decreases as the training progresses.
For the diagonal adaptive feature, while the initial FEM decreases at $n$ increases, the final FEM more rapidly.
For the directional adaptive feature, the initial FEM remains a constant as $n$ increases, but the final FEM shows a clear decrease.
Both two settings show the improved performance via the adaptive feature program.

\begin{figure}[ht]
  \centering

  \subfigure{
    \begin{minipage}[t]{0.4\linewidth}
      \centering
      \includegraphics[width=1.\linewidth]{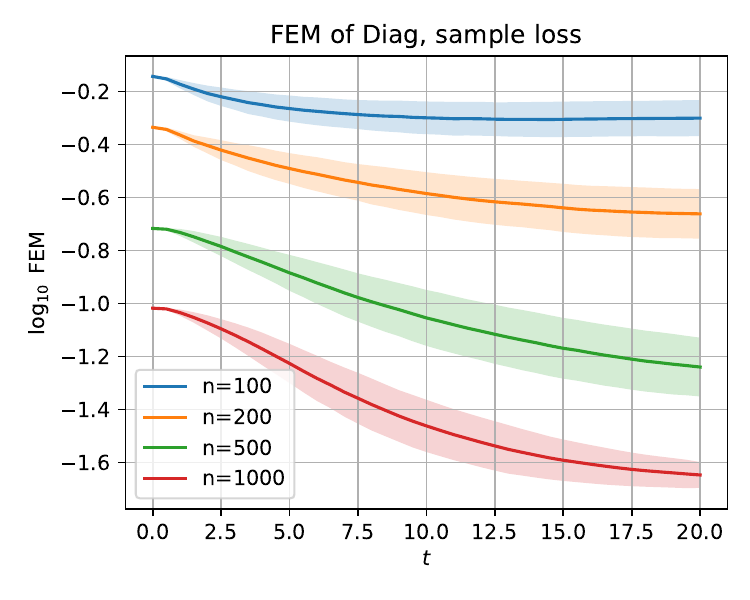}
    \end{minipage}%
  }%
  \subfigure{
    \begin{minipage}[t]{0.4\linewidth}
      \centering
      \includegraphics[width=1.\linewidth]{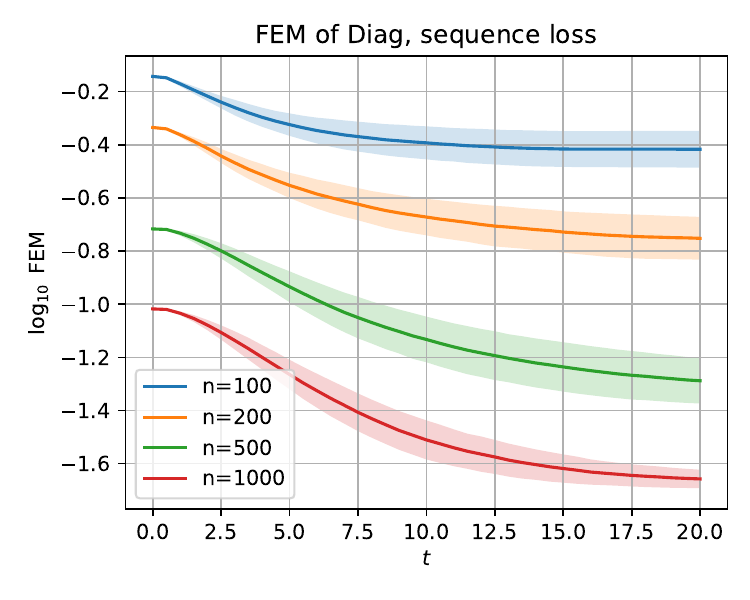}
    \end{minipage}%
  }%

  \subfigure{
    \begin{minipage}[t]{0.4\linewidth}
      \centering
      \includegraphics[width=1.\linewidth]{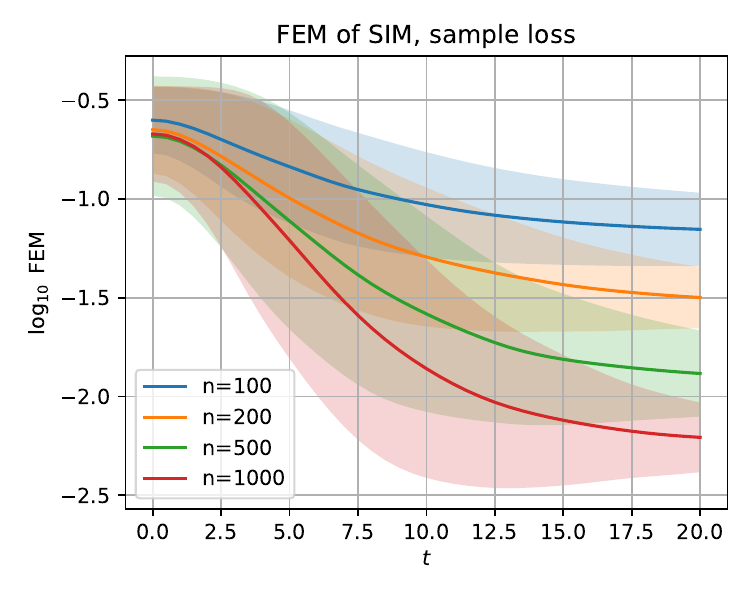}
    \end{minipage}%
  }%
  \subfigure{
    \begin{minipage}[t]{0.4\linewidth}
      \centering
      \includegraphics[width=1.\linewidth]{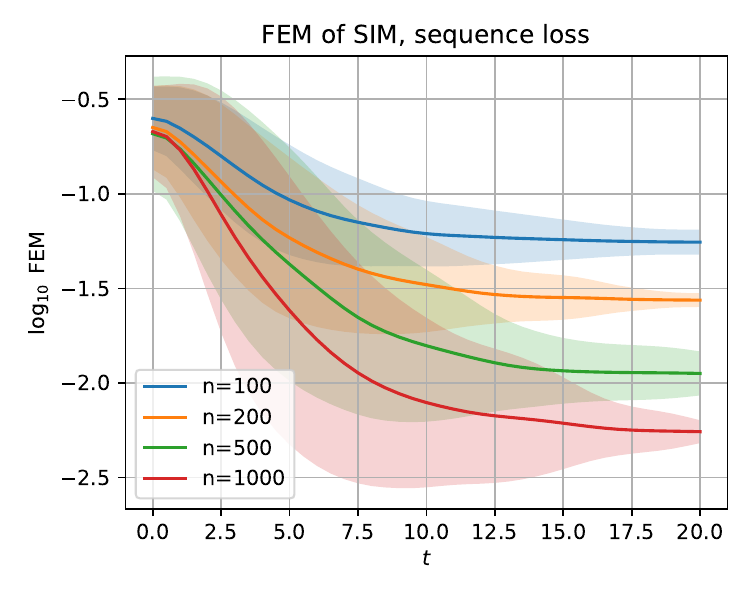}
    \end{minipage}%
  }%

  \caption{Decay of feature error measure $\caE^*$ (FEM) during the training process.
  Upper row: diagonal adaptive feature (Diag); lower row: directional adaptive feature for single-index model (SIM).
  Left column: empirical loss; right column: sequence loss.
  The shaded regions represent the standard deviation computed by 200 runs.
  }
  \label{fig:Decay}
\end{figure}

The similarity of the FEM curves in \cref{fig:Decay} between the sample loss and the sequence loss also validates the effectiveness of focusing on the sequence model.
Motivated by this similarity,
we would like to propose a strong path equivalence between the adaptive feature model under the two losses.

Formally, denoting by $\hat{f}^{\text{GD}}_t$ and $\hat{f}^{\text{Seq}}_t$ the predictor at time $t$ under the empirical loss $\caL_n$ and the sequence loss $\bar{\caL}_n$ respectively,
we hypothesize that the distributions of $\hat{f}^{\text{GD}}_t$ and $\hat{f}^{\text{Seq}}_t$ with respect to the random samples converge as $n \to \infty$.
As a result, the generalization errors and the feature error measures are also asymptotically equivalent.
This hypothesis is supported by the numerical results in \cref{fig:SeqDiff},
where we measure the distance between two distributions of functions via the energy distance with respect to the $L^2$ norm.
Furthermore, the FEMs under the two losses are also shown to converge in \cref{fig:FEM_Diff}.
However, proving this hypothesis in general can be very challenging and would require more involved analysis,
which is beyond the scope of this paper.
We would like to leave this as an open problem for future work.

\begin{figure}[ht]
  \centering
  \subfigure{
    \begin{minipage}[t]{0.32\linewidth}
      \centering
      \includegraphics[width=1.\linewidth]{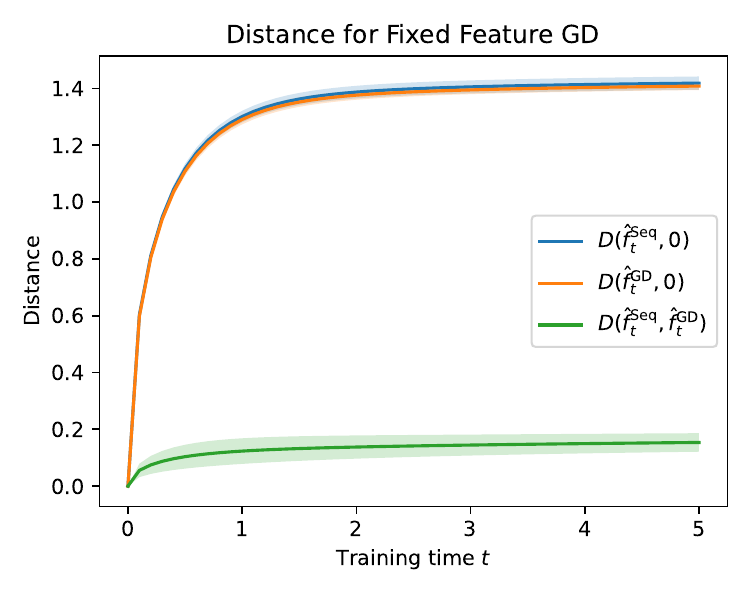}
    \end{minipage}%
  }%
  \subfigure{
    \begin{minipage}[t]{0.32\linewidth}
      \centering
      \includegraphics[width=1.\linewidth]{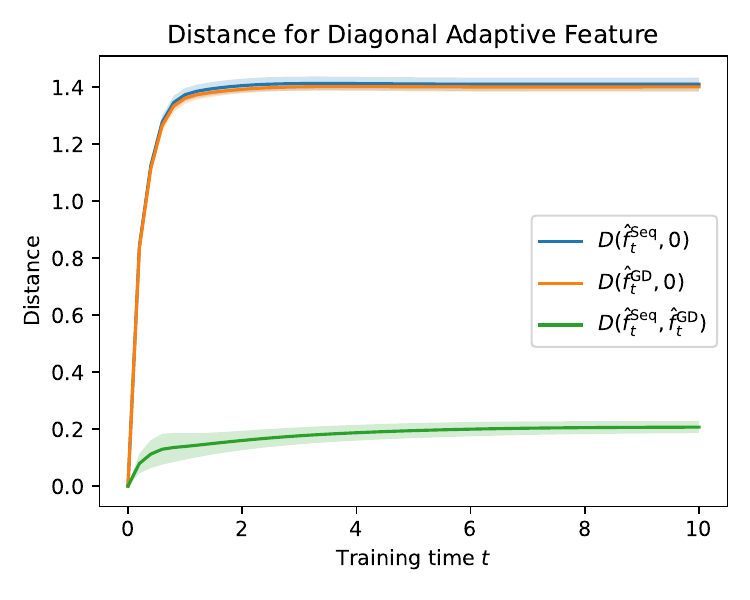}
    \end{minipage}%
  }%
  \subfigure{
    \begin{minipage}[t]{0.32\linewidth}
      \centering
      \includegraphics[width=1.\linewidth]{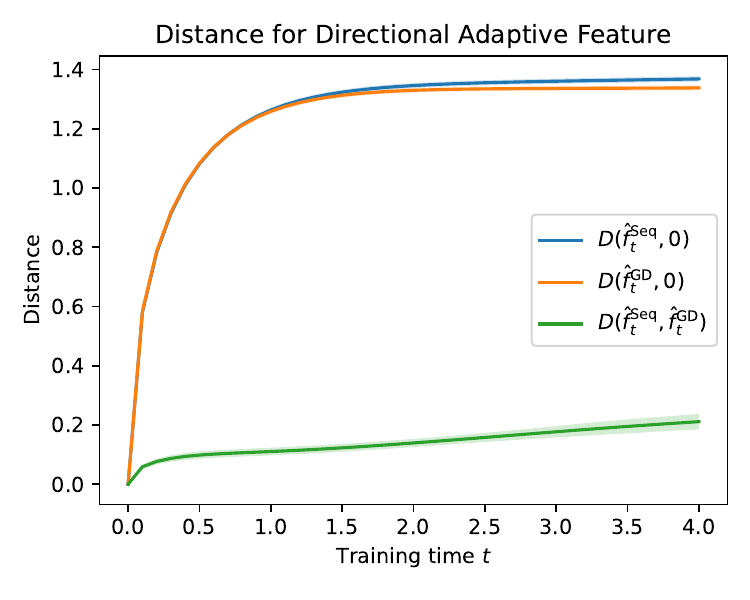}
    \end{minipage}%
  }%

  \subfigure{
    \begin{minipage}[t]{0.32\linewidth}
      \centering
      \includegraphics[width=1.\linewidth]{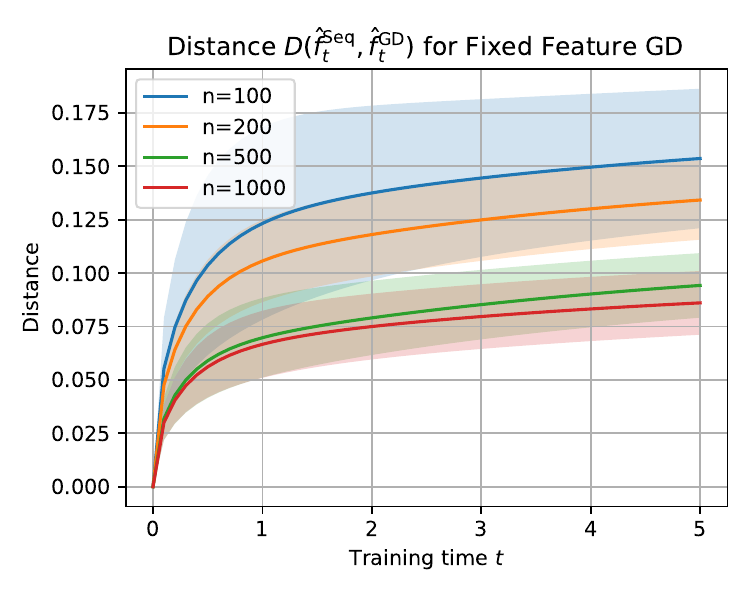}
    \end{minipage}%
  }%
  \subfigure{
    \begin{minipage}[t]{0.32\linewidth}
      \centering
      \includegraphics[width=1.\linewidth]{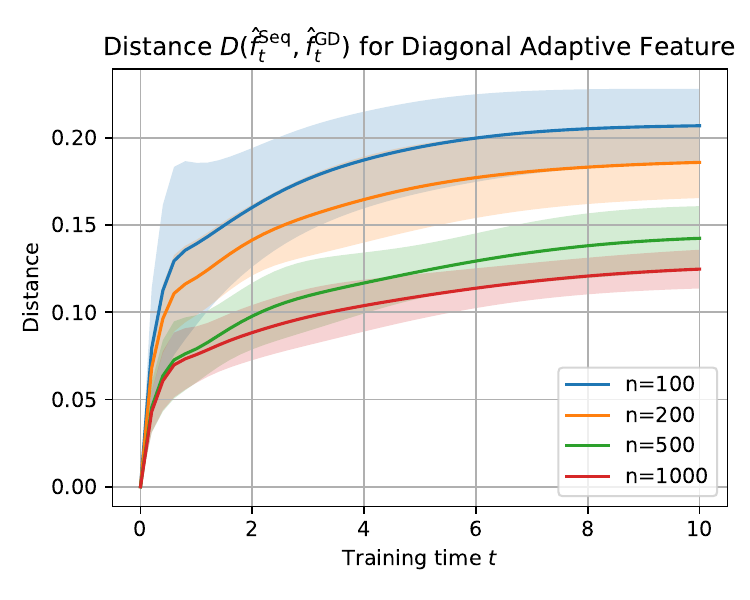}
    \end{minipage}%
  }%
  \subfigure{
    \begin{minipage}[t]{0.32\linewidth}
      \centering
      \includegraphics[width=1.\linewidth]{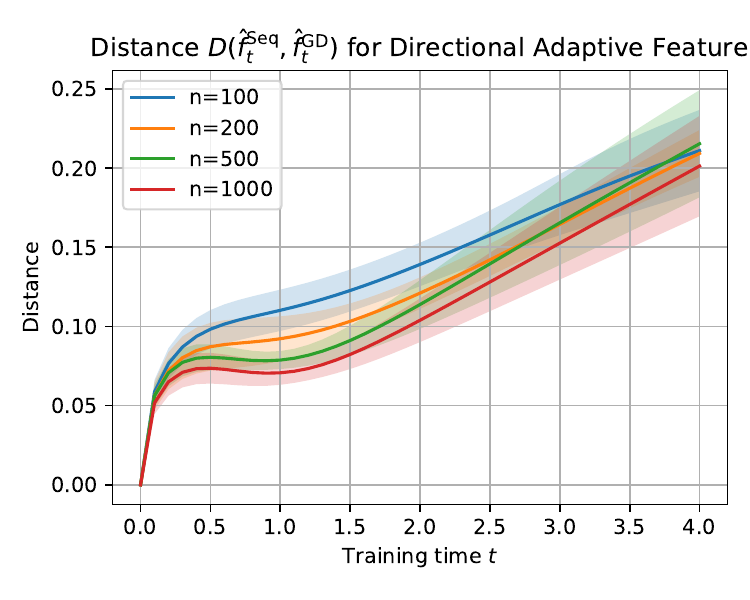}
    \end{minipage}%
  }%

  \caption{
    Similarity between the training curves under the empirical loss $\caL_n$ and sequence loss $\bar{\caL}_n$.
  We plot the energy distances estimated from 200 independent runs, and also shaded regions represent the standard deviation  estimated by bootstrapping.
  Upper row: $D(\hat{f}^{\text{Seq}}_t,\hat{f}^{\text{GD}}_t)$ is much smaller than that of $D(\hat{f}^{\text{Seq}}_t,0)$, $D(\hat{f}^{\text{GD}}_t,0)$ along the training path.
  Lower row: The difference between $\hat{f}^{\text{GD}}_t$ and $\hat{f}^{\text{Seq}}_t$ decreases as $n$ increases.
  The methods in three columns are fixed feature method,  diagonal adaptive kernel method and directional adaptive feature method respectively.
  }
  \label{fig:SeqDiff}
\end{figure}

\begin{figure}[ht]
  \centering

  \subfigure{
    \begin{minipage}[t]{0.4\linewidth}
      \centering
      \includegraphics[width=1.\linewidth]{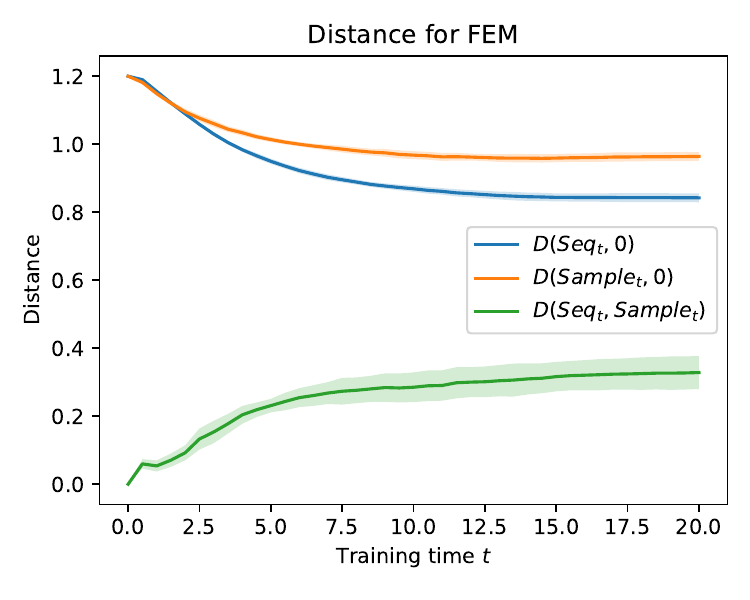}
    \end{minipage}%
  }%
  \subfigure{
    \begin{minipage}[t]{0.4\linewidth}
      \centering
      \includegraphics[width=1.\linewidth]{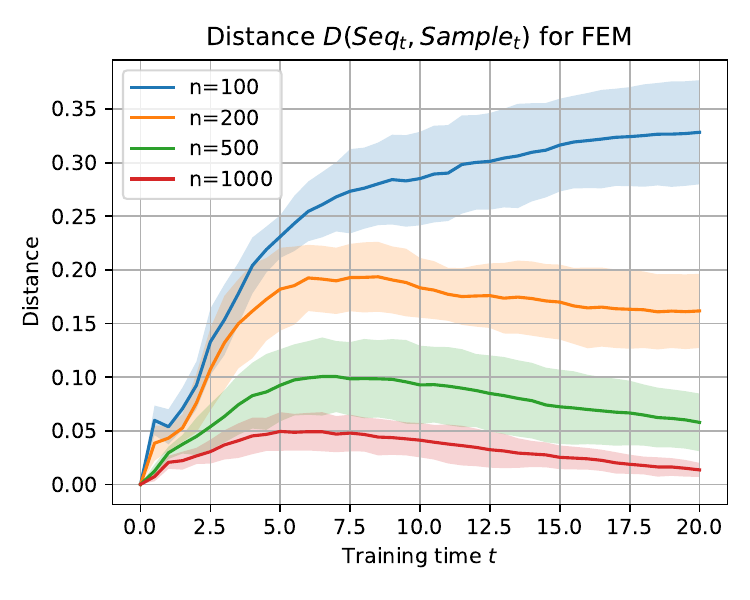}
    \end{minipage}%
  }%


  \caption{Energy distances between the feature error measure $\caE^*$ (FEM)  under the empirical loss $\caL_n$ and sequence loss $\bar{\caL}_n$.
  }
  \label{fig:FEM_Diff}
\end{figure}


\section{Conclusion}

In this paper, we consider the adaptive feature program, a unified framework that allows us to mirror the training dynamics of complex neural networks,
and propose the feature error measure, a metric that quantifies the quality of the feature map in learning the target function.
We investigate various instances of the adaptive feature scheme, including those with trainable feature weights and trainable feature basis,
and demonstrate its effectiveness in improving the feature error measure.
The adaptive feature scheme not only connects classical statistical techniques with modern machine learning methods,
but also provides new insights into the feature learning of neural networks.

\paragraph{Future Directions}
The adaptive feature scheme opens up several avenues for future research.
Besides the models considered in this paper, we can explore other models that can be expressed in the adaptive feature scheme,
such as random feature~\citep{rudi2016_GeneralizationProperties} and matrix factorization~\citep{gunasekar2017_ImplicitRegularization,arora2019_ImplicitRegularization} models.
Moreover, we can investigate the  parameterization form of the feature map $\Phi_{\theta}$ corresponding to different neural network architectures such as convolutional neural networks and transformers.
Another technical direction is to study the strong ``path equivalence'' (see \cref{subsec:SeqModel} and \cref{sec:RealAdaF}) between the empirical loss and sequence loss for general adaptive feature models, which will deeply enhance the understanding of non-parametric regression.
We believe that these explorations will lead to a deeper understanding of the feature learning process in neural networks and its implications for generalization.

\paragraph{Acknowledgements.}
Qian Lin's research was supported in part by the National Natural Science Foundation of China (Grant 92370122, Grant 11971257).

\clearpage
\bibliographystyle{plainnat}
\bibliography{main}

\clearpage
\appendix

\paragraph{Additional Notations}

Let us introduce some additional notations that will be used in the proofs.
We denote by $\log^+(x) = \max(\log x, 0)$.
For a function $f(z)$, we denote by $[z^r] f(z)$ the coefficient of $z^r$ in the Taylor expansion of $f(z)$ around $0$ (provided that it is well-defined).


\section{Proof for Diagonal Overparametrization}

In the following, let us fix the feature basis $\dk{e_j}_{j \geq 1}$ as well as the truth function $f^*$ and thus the coefficients $\dk{f_j^*}_{j \geq 1}$.
Now, the feature error measure is only related to the weights $\bm{\lambda}= \xk{\lambda_j}_{j \geq 1}$.
To simplify the notation, we denote
\begin{equation*}
  \caE(\delta,\epsilon^2;\bm{\lambda}) =  \caE(\delta,\epsilon^2;\Phi,f^*)  = \# \dk{j \in N :  \lambda_j \geq \delta} \epsilon^2 + \sum_{j \in N} (f_j^*)^2 \bm{1} \dk{ \lambda_j < \delta},
\end{equation*}
and
\begin{equation*}
  \caE^*(\epsilon^2;\bm{\lambda}) = \inf_{\delta \geq 0} \caE(\delta;\epsilon^2;\bm{\lambda}),\qquad
  \delta^*(\epsilon^2;\bm{\lambda}) \in \argmin_{\delta \geq 0} \caE(\delta;\epsilon^2;\bm{\lambda}).
\end{equation*}
From the expression of the feature error measure, it is clear that $\caE^*(\epsilon^2;\bm{\lambda})$
only depends on the order of the indices induced by the weights $\bm{\lambda}$.
In addition, one can choose $\delta^*(\epsilon^2;\bm{\lambda}) = \lambda_j$ for some $j \geq 1$.
Furthermore, we have the local condition:
\begin{equation}
  \label{eq:FEM_LocalCondition}
  \sum_{\lambda_j = \delta^*(\epsilon^2;\bm{\lambda})} (f_j^*)^2
  \geq  \# \dk{j \in N :  \lambda_j = \delta} \epsilon^2,
\end{equation}
since otherwise we can increase $\delta$ to obtain a smaller error.

\paragraph{Further notations.}
For index sets $I,J$,
we use $v_I$ to denote the vector with indices in $I$ and $A_{IJ}$ to denote the submatrix with rows in $I$ and columns in $J$.

\subsection{Basic properties on the feature error measure}
Let us define the index sets of signals and noises as
\begin{equation}
  \caI_{\mr{s}}(\epsilon^2) = \dk{j : (f_j^*)^2 \geq \epsilon^2},\qquad
  \caI_{\mr{n}}(\epsilon^2) = \dk{j : (f_j^*)^2 < \epsilon^2}.
\end{equation}
The following proposition characterizes sufficient conditions for the feature error measure to be non-increasing.

\begin{proposition}
  \label{prop:FEM_Monotonic}
  Let $\bm{\lambda}$ be a sequence of weights and $\bm{\lambda}'$ be the result of modifying $\bm{\lambda}$ by changing only $\lambda_j$ to $\lambda_j'$.
  Suppose that $\lambda_r = \delta^*(\epsilon^2;\bm{\lambda})$.
  Then, $\caE^*(\epsilon^2;\bm{\lambda}') > \caE^*(\epsilon^2;\bm{\lambda})$ is only possible if
  \begin{enumerate}[(a)]
    \item $\lambda_j < \lambda_r$, $\lambda_j' \geq \lambda_r$ and $(f_j^*)^2 < \epsilon^2$;
    \item $\lambda_j \geq \lambda_r$, $\lambda_j' < \lambda_r$, $(f_j^*)^2 > \epsilon^2$ and there is some $\lambda_l$ such that
    $(f_l^*)^2 < \epsilon^2$, $\lambda_r > \lambda_l \geq \lambda_j$.
  \end{enumerate}
\end{proposition}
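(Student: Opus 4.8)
The plan is to track exactly how the optimally-tuned feature error measure $\caE^*(\epsilon^2;\cdot)$ can change when we perturb a single weight, and to show that in every case \emph{not} covered by items (a) and (b) the error cannot strictly increase. Recall two facts established just above the statement: first, $\caE^*$ depends only on the order the weights induce on the indices; second, one may always choose the optimal truncation level at an existing weight, and the local optimality condition \cref{eq:FEM_LocalCondition} holds at $\delta^* = \lambda_r$. The key observation is that the candidate error $\caE(\delta^*(\epsilon^2;\bm{\lambda}),\epsilon^2;\bm{\lambda}')$ evaluated at the \emph{old} threshold $\delta^* = \lambda_r$ is an upper bound for $\caE^*(\epsilon^2;\bm{\lambda}')$, so it suffices to compare this quantity with $\caE^*(\epsilon^2;\bm{\lambda}) = \caE(\lambda_r,\epsilon^2;\bm{\lambda})$.

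First I would classify the perturbation $\lambda_j \mapsto \lambda_j'$ by whether index $j$ crosses the threshold $\lambda_r$. If $j$ stays on the same side of $\lambda_r$ (i.e. $\lambda_j,\lambda_j' \geq \lambda_r$ together, or $\lambda_j,\lambda_j' < \lambda_r$ together), then the set $\{k : \lambda_k' \geq \lambda_r\}$ equals $\{k : \lambda_k \geq \lambda_r\}$ as sets, so $\caE(\lambda_r,\epsilon^2;\bm{\lambda}') = \caE(\lambda_r,\epsilon^2;\bm{\lambda}) = \caE^*(\epsilon^2;\bm{\lambda})$, hence $\caE^*(\epsilon^2;\bm{\lambda}') \le \caE^*(\epsilon^2;\bm{\lambda})$ and there is no increase. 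So a strict increase forces $j$ to cross $\lambda_r$, leaving two cases: (i) $\lambda_j < \lambda_r$ but $\lambda_j' \ge \lambda_r$ — index $j$ moves into the ``active'' set; (ii) $\lambda_j \ge \lambda_r$ but $\lambda_j' < \lambda_r$ — index $j$ drops out.

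In case (i), evaluating at threshold $\lambda_r$ changes the variance term by exactly $+\epsilon^2$ and the bias term by exactly $-(f_j^*)^2$; this net change is $\le 0$ whenever $(f_j^*)^2 \ge \epsilon^2$, so for a strict increase we need $(f_j^*)^2 < \epsilon^2$, which is precisely condition (a). (One must also note that $\lambda_j = \lambda_r$ with $\lambda_j'\ge\lambda_r$ is harmless: if $\lambda_j = \lambda_r$ the index was already at the threshold, and the local condition $\cref{eq:FEM_LocalCondition}$ together with order-invariance handles ties — I would fold this into the $\lambda_j < \lambda_r$ bookkeeping, reading ``$<$'' as ``was counted on the inactive side under the chosen $\delta^*$''.) In case (ii), evaluating at $\lambda_r$ changes the variance term by $-\epsilon^2$ and the bias term by $+(f_j^*)^2$, a net increase only if $(f_j^*)^2 > \epsilon^2$; this alone is not enough to conclude (b), so here I would argue further: if no index $l$ with $(f_l^*)^2 < \epsilon^2$ and $\lambda_r > \lambda_l \ge \lambda_j$ exists, then I can produce a threshold for $\bm{\lambda}'$ whose error does not exceed $\caE^*(\epsilon^2;\bm{\lambda})$. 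Concretely, either lowering the threshold to just below $\lambda_j'$ reabsorbs $j$ as an active signal index at net cost $\le 0$ (using $(f_j^*)^2 > \epsilon^2$), or, when such a lowering would also sweep in extra indices, the absence of an ``inactive-but-active-weight'' index $l$ in the relevant band means every swept-in index carries $(f^*_\cdot)^2 \ge \epsilon^2$, so the swap is again non-increasing; pushing this through the order-invariance and the local condition $\cref{eq:FEM_LocalCondition}$ pins down the extra structural requirement in (b).

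The main obstacle I anticipate is the bookkeeping in case (ii): unlike case (i), the naive comparison at the fixed old threshold does not settle it, and one must choose a \emph{new} threshold for $\bm{\lambda}'$ and carefully account for all indices whose active/inactive status flips as the threshold moves past $\lambda_j'$ and past $\lambda_r$. Handling ties (several indices sharing the value $\lambda_r$ or $\lambda_j$) cleanly, and making sure the ``there is some $\lambda_l$'' clause in (b) is exactly the negation of ``a non-increasing rethreshold exists,'' is where the argument needs care; the local optimality inequality $\cref{eq:FEM_LocalCondition}$ is the tool that closes that gap. Everything else is a short case check.
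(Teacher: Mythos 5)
Your proposal matches the paper's own proof: partition by whether $j$ crosses the optimal threshold $\lambda_r$, compare $\caE(\lambda_r,\cdot;\bm\lambda')$ against $\caE(\lambda_r,\cdot;\bm\lambda)=\caE^*(\epsilon^2;\bm\lambda)$ for the non-crossing and upward-crossing cases, and for the downward-crossing case rethreshold at $\lambda_j'$ and read off the sum of $(f_l^*)^2 - \epsilon^2$ over the swept band $\lambda_j' \le \lambda_l < \lambda_r$. You can drop the appeal to the local optimality condition \cref{eq:FEM_LocalCondition} here — the rethreshold-at-$\lambda_j'$ comparison closes case (b) on its own without it, and the paper only invokes that condition later, in the proof of \cref{cor:FEM_Monotonic}.
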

\begin{proof}
  We enumerate the following cases and consider them one by one:
  \begin{enumerate}[(1)]
    \item $\lambda_j \geq \lambda_r$ and $\lambda_j' \geq \lambda_r$, or $\lambda_j < \lambda_r$ and $\lambda_j' < \lambda_r$;
    \item $\lambda_j < \lambda_r$, $\lambda_j' \geq \lambda_r$;
    \item $\lambda_j \geq \lambda_{k}$, $\lambda_j' < \lambda_r$.
  \end{enumerate}
  For case (1), we always have
  \begin{equation*}
    \caE^*(\epsilon^2;\bm{\lambda}') \leq \caE(\lambda_r,\epsilon^2;\bm{\lambda}') = \caE(\lambda_r,\epsilon^2;\bm{\lambda}) = \caE^*(\epsilon^2;\bm{\lambda}).
  \end{equation*}

  \noindent For case (2), if $(f_j^*)^2 \geq \epsilon^2$, we find that
  \begin{equation*}
    \caE^*(\epsilon^2;\bm{\lambda}) - \caE(\lambda_r,\epsilon^2;\bm{\lambda}')
    = \caE(\lambda_r,\epsilon^2;\bm{\lambda}) - \caE(\lambda_r,\epsilon^2;\bm{\lambda}')
    = (f_j^*)^2 - \epsilon^2 \geq 0,
  \end{equation*}
  so $\caE^*(\epsilon^2;\bm{\lambda}') > \caE^*(\epsilon^2;\bm{\lambda})$ only if $(f_j^*)^2 < \epsilon^2$, which is case (a).

  \noindent For case (3), if $(f_j^*)^2 \leq \epsilon^2$, similar to the previous case, we have
  \begin{equation*}
    \caE^*(\epsilon^2;\bm{\lambda}) - \caE(\lambda_r,\epsilon^2;\bm{\lambda}') = \epsilon^2 - (f_j^*)^2 \geq 0.
  \end{equation*}
  Now, if $(f_j^*)^2 > \epsilon^2$, but there is no $\lambda_l$ as specified in (b), we have
  \begin{equation*}
    \caE^*(\epsilon^2;\bm{\lambda}) - \caE(\lambda_j',\epsilon^2;\bm{\lambda}')
    = \caE(\lambda_r,\epsilon^2;\bm{\lambda}) - \caE(\lambda_j',\epsilon^2;\bm{\lambda}')
    = \sum_{l : \lambda_r >  \lambda_l \geq \lambda_j'} \zk{(f_l^*)^2 - \epsilon^2} \geq 0.
  \end{equation*}
\end{proof}

\begin{corollary}
  \label{cor:FEM_Monotonic}
  Under the same setting as in \cref{prop:FEM_Monotonic},
  $\caE^*(\epsilon^2;\bm{\lambda}') > \caE^*(\epsilon^2;\bm{\lambda})$ is only possible if there is an ``up-crossing''.
  Namely, there is some indices $j,k$ such that
  (1) $\lambda_j \geq \lambda_r > \lambda_k$; (2) $(f_j^*)^2 \geq \epsilon^2$ and $(f_k^*)^2 < \epsilon^2$;
  (3) $\lambda_j' \leq \lambda_k'$.
\end{corollary}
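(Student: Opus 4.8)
The plan is to derive the corollary directly from \cref{prop:FEM_Monotonic} by unpacking what its two alternatives (a) and (b) say and repackaging them into the stated ``up-crossing'' condition. The key observation is that both case (a) and case (b) of the proposition, although stated asymmetrically in terms of the single modified index $j$ and the pivot $r$ with $\lambda_r = \delta^*(\epsilon^2;\bm{\lambda})$, always produce a pair of indices whose large/small weight assignment relative to $\lambda_r$ is \emph{swapped} in a way that fights against the signal/noise ordering. First I would handle case (a): here $j$ itself is a ``noise'' index ($(f_j^*)^2 < \epsilon^2$) that starts below the pivot ($\lambda_j < \lambda_r$) and is pushed up to or above it ($\lambda_j' \geq \lambda_r$). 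Since $\lambda_r = \delta^*$ is an active threshold, the local condition \cref{eq:FEM_LocalCondition} guarantees there is at least one index $k$ with $\lambda_k = \lambda_r$ and $(f_k^*)^2 \geq \epsilon^2$ (a signal index at the threshold); this $k$ is not the modified index, so $\lambda_k' = \lambda_k = \lambda_r$. Setting this $k$ as the ``$j$'' of the corollary and the original $j$ as the ``$k$'' of the corollary, one checks: $\lambda_k \geq \lambda_r > \lambda_j$ gives (1); the signal/noise memberships give (2); and $\lambda_k' = \lambda_r \leq \lambda_j'$ gives (3). (A minor care point: if the only threshold signal index is itself being modified, one instead uses the sum in \cref{eq:FEM_LocalCondition} to locate an appropriate index; I would spell this edge case out.)

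Next I would handle case (b) of the proposition: here $j$ is a ``signal'' index ($(f_j^*)^2 > \epsilon^2$) starting at or above the pivot ($\lambda_j \geq \lambda_r$) that gets pushed below it ($\lambda_j' < \lambda_r$), and there is a witness $\lambda_l$ with $(f_l^*)^2 < \epsilon^2$ and $\lambda_r > \lambda_l \geq \lambda_j$. Wait --- the constraint $\lambda_l \geq \lambda_j$ together with $\lambda_j \geq \lambda_r > \lambda_l$ forces $\lambda_j = \lambda_r = \lambda_l$ up to the boundary; more precisely the intended reading is that $l$ is a noise index sitting just below the pivot that ends up ordered above the demoted $j$. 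I would take the corollary's ``$j$'' to be the original $j$ (signal, $\lambda_j \geq \lambda_r$) and the corollary's ``$k$'' to be this witness $l$ (noise, $\lambda_l < \lambda_r$), so that (1) $\lambda_j \geq \lambda_r > \lambda_l$, (2) $(f_j^*)^2 \geq \epsilon^2 > (f_l^*)^2$, and (3) after the modification $\lambda_j' < \lambda_r$ while $\lambda_l' = \lambda_l$ is unchanged and lies in $[\lambda_j, \lambda_r)$, hence $\lambda_j' \leq \lambda_l'$ provided $\lambda_j' < \lambda_l$; to secure this last inequality I would note that whenever $\lambda_j' \geq \lambda_l$ the pair does not create a genuine re-ordering and one can re-run the computation in case (3) of the proposition's proof to see no error increase occurs, so the truly problematic subcase does give $\lambda_j' < \lambda_l = \lambda_l'$.

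The main obstacle I anticipate is the bookkeeping around \emph{which} index to designate as the witness $k$ (or $l$) and verifying the inequalities at the boundary where several weights equal $\lambda_r = \delta^*$ exactly --- the proposition is stated with strict/non-strict inequalities that must be tracked carefully (e.g.\ $(f_j^*)^2 < \epsilon^2$ versus $\leq$, $\lambda_j' \geq \lambda_r$ versus $>$), and the corollary relaxes some of these to $\geq$/$<$. I would therefore be careful to invoke the freedom, noted in the paragraph before \cref{prop:FEM_Monotonic}, that $\delta^*(\epsilon^2;\bm{\lambda})$ can be taken equal to $\lambda_r$ for some $r$, and the local optimality condition \cref{eq:FEM_LocalCondition}, to guarantee the existence of the threshold signal index needed in case (a). Once the correct index pair is identified in each case, conditions (1)--(3) of the corollary are immediate from the hypotheses of the corresponding case in \cref{prop:FEM_Monotonic}, so the proof is essentially a case analysis with no further computation.
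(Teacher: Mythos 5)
Your proof is correct and follows essentially the same route as the paper's: for case (b) take the pair $(j,l)$ directly, and for case (a) invoke the local optimality condition \eqref{eq:FEM_LocalCondition} to produce a signal index $k$ at the threshold $\lambda_r$ and use the pair $(k,j)$. Two small remarks: the ``edge case'' you worry about in (a) cannot occur, since case (a) requires $\lambda_j < \lambda_r$ so the modified index is never the threshold index; and the apparent contradiction you flag in case (b) ($\lambda_l \geq \lambda_j$ vs.\ $\lambda_j \geq \lambda_r > \lambda_l$) is a typo in the proposition---inspecting its proof, the witness condition should read $\lambda_r > \lambda_l \geq \lambda_j'$, which makes condition (3) immediate without the extra workaround you sketch.
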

\begin{proof}
  For the case (b), the condition already holds for the pair $(j,l)$.
  For the case (a), using the local condition \cref{eq:FEM_LocalCondition}, we can find there is some $k$ with $\lambda_k = \lambda_r$ and $(f_k^*)^2 \geq \epsilon^2$.
  Then, the pair $(k,j)$ satisfies the conditions.
\end{proof}

From \cref{prop:FEM_Monotonic}, we find that $\caE^*(\epsilon^2;\bm{\lambda})$
is non-increasing after the change of $\bm{\lambda}$ if there is

\begin{lemma}
  \label{lem:FEM_Monotonic}
  Let $\lambda(t), t \in [0,T]$ be a continuous flow of weights.
  Let $N = N_1 \sqcup N_2$ be a partition of the index set $N$.
  Assume further that
  \begin{enumerate}[(1)]
    \item For each $j \in N_1 \cap \caI_{\mr{s}}(\epsilon^2)$ and $k \in N_1 \cap \caI_{\mr{n}}(\epsilon^2)$,
    if there is some $t_0$ such that $\lambda_j(t_0) \geq \lambda_k(t_0)$, then $\lambda_j(t) \geq \lambda_k(t)$ for all $t \geq t_0$.
    \item For each $j \in N_2$, $\lambda_j(t) < \delta^*(\epsilon^2;\bm{\lambda}(t))$.
  \end{enumerate}
  Then, $\caE^*(\epsilon^2;\bm{\lambda}(t))$ is non-increasing in $t$.
\end{lemma}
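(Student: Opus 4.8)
The plan is to exploit that $\caE^*(\epsilon^2;\bm{\lambda})$ depends on $\bm{\lambda}$ only through the order it induces on the indices, and that this quantity can be raised along a continuous flow only through the ``up-crossing'' mechanism isolated in \cref{prop:FEM_Monotonic} and \cref{cor:FEM_Monotonic}: through a $\caI_{\mr{s}}(\epsilon^2)$-index sitting at the bottom of the current optimal active set being overtaken by a $\caI_{\mr{n}}(\epsilon^2)$-index lying just below it. Hypotheses~(1)--(2) are tailored to forbid exactly this, and I would make it precise by a direct comparison: fix $0\le s<t\le T$ and show $\caE^*(\epsilon^2;\bm{\lambda}(t))\le\caE^*(\epsilon^2;\bm{\lambda}(s))$.

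For the bookkeeping I would first recast the feature error measure in a penalty form. Since $f^*\in L^2$ the set $\caI_{\mr{s}}(\epsilon^2)$ is finite, so $C_0:=\abs{\caI_{\mr{s}}(\epsilon^2)}\,\epsilon^2+\sum_{k\in\caI_{\mr{n}}(\epsilon^2)}(f_k^*)^2<\infty$, and for any weights $\bm{\mu}$ and threshold $\delta$, writing $P_\delta=\{l:\mu_l\ge\delta\}$, one has $\caE(\delta,\epsilon^2;\bm{\mu})=C_0+\sum_{j\in\caI_{\mr{s}}(\epsilon^2)\setminus P_\delta}\bigl((f_j^*)^2-\epsilon^2\bigr)+\sum_{k\in\caI_{\mr{n}}(\epsilon^2)\cap P_\delta}\bigl(\epsilon^2-(f_k^*)^2\bigr)$, a constant plus two nonnegative ``penalties''. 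Let $P^\star=\{l:\lambda_l(s)\ge\delta^*(\epsilon^2;\bm{\lambda}(s))\}$ realize $\caE^*$ at time $s$; by hypothesis~(2) every $l\in N_2$ has $\lambda_l(s)<\delta^*(\epsilon^2;\bm{\lambda}(s))$, so $P^\star\cap N_2=\varnothing$, and I split $P^\star=A\sqcup B$ with $A=P^\star\cap\caI_{\mr{s}}(\epsilon^2)\subseteq N_1\cap\caI_{\mr{s}}(\epsilon^2)$ and $B=P^\star\cap\caI_{\mr{n}}(\epsilon^2)\subseteq N_1\cap\caI_{\mr{n}}(\epsilon^2)$. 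To bound $\caE^*$ at time $t$ I transport $P^\star$ forward: let $Q$ be the smallest $\bm{\lambda}(t)$-threshold set containing $A$, i.e.\ $Q=\{l:\lambda_l(t)\ge\min_{j\in A}\lambda_j(t)\}$ (and $Q=\varnothing$ if $A=\varnothing$); then $\caE^*(\epsilon^2;\bm{\lambda}(t))\le C_0+(\text{penalties of }Q)$. Because $A\subseteq Q$, the signal penalty of $Q$ is already $\le$ that of $P^\star$, so everything reduces to the \emph{key inclusion} $\caI_{\mr{n}}(\epsilon^2)\cap Q\subseteq B$. Given $k\in\caI_{\mr{n}}(\epsilon^2)\cap Q$, pick $j\in A$ with $\lambda_k(t)\ge\lambda_j(t)$. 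If $k\in N_1$, then $j\in N_1\cap\caI_{\mr{s}}(\epsilon^2)$ and $k\in N_1\cap\caI_{\mr{n}}(\epsilon^2)$; were $k\notin P^\star$, the fact that $P^\star$ is a threshold set containing $j$ forces $\lambda_k(s)\le\lambda_j(s)$, and then hypothesis~(1) gives $\lambda_j(u)\ge\lambda_k(u)$ for all $u\ge s$, in particular $\lambda_j(t)=\lambda_k(t)$ (a boundary tie, addressed below); otherwise $k\in P^\star\cap\caI_{\mr{n}}(\epsilon^2)=B$. If $k\in N_2$, hypothesis~(2) gives $\lambda_k(t)<\delta^*(\epsilon^2;\bm{\lambda}(t))$, which, combined with the auxiliary fact that members of $A$ stay above the optimal threshold at time $t$, contradicts $\lambda_k(t)\ge\lambda_j(t)$.

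The main obstacle, I expect, is the pair of degenerate boundary situations encountered above, and making the reduction airtight around them. First, I must rule out that an exact tie $\lambda_j(t)=\lambda_k(t)$ between an active signal index $j$ and a noise index $k\notin P^\star$ can strictly increase $\caE^*$; here I would use either the extra regularity of the flows arising in the applications (e.g.\ real-analyticity, so that any two coordinate curves either coincide identically or agree only on a discrete set of times --- letting one work with strict orderings plus a limiting argument), or, equivalently, the observation that $t\mapsto\caE^*(\epsilon^2;\bm{\lambda}(t))$ can jump up only at order-change times, where \cref{cor:FEM_Monotonic} applies directly. Second, the auxiliary monotonicity used in the $N_2$ case --- that a signal index active at time $s$ remains above the optimal threshold at every later time --- needs its own proof, which parallels the main argument (again using hypothesis~(1), that signals are not demoted below noise indices) and carries the same tie subtleties. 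Once these degenerate cases are dispatched, the remaining steps are the routine bookkeeping sketched above, so I would concentrate the effort on the tie analysis.
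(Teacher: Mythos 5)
Your penalty decomposition $\caE(\delta,\epsilon^2;\bm\mu)=C_0+\sum_{j\in\caI_{\mr{s}}\setminus P_\delta}((f_j^*)^2-\epsilon^2)+\sum_{k\in\caI_{\mr{n}}\cap P_\delta}(\epsilon^2-(f_k^*)^2)$ is correct and gives a genuinely different line of attack from the paper's, which instead discretizes the flow into single-weight changes and invokes \cref{cor:FEM_Monotonic}. Your $N_1$ case, modulo ties, is a clean re-derivation of what the paper obtains from the corollary. The tie issue you flag is real; the paper does not address it in this lemma and only invokes genericity (``$\lambda_j(t),\lambda_{j'}(t)$ do not coincide with probability one'') inside the proof of \cref{thm:DiagonalSeq_Monotonic}, so on that point you are being more careful than the source.

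The genuine gap is the $N_2$ branch, and it is not a loose end that ``parallels the main argument'': the auxiliary claim you invoke (every $j\in A$ satisfies $\lambda_j(t)\ge\delta^*(\epsilon^2;\bm\lambda(t))$ for all $t\ge s$) is \emph{not} a consequence of hypotheses (1)--(2). Indeed, the stated hypotheses do not constrain the relative order of a signal in $N_1$ and a noise in $N_2$, so a weak $N_1$-signal can sink below an $N_2$-noise while (2) remains true, and this both falsifies the auxiliary claim and makes $\caE^*$ jump up. Concretely, take $N_1=\{j_1,j_2\}$, $N_2=\{k\}$ with $(f_{j_1}^*)^2\gg\epsilon^2$, $(f_{j_2}^*)^2=\epsilon^2+\eta$ for small $\eta>0$, $(f_k^*)^2=0$, and a continuous flow with $\lambda_{j_1}\equiv1$, $\lambda_{j_2}$ decreasing from $0.5$ to $0.2$ and $\lambda_k$ decreasing from $0.3$ to $0.25$. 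Hypothesis (1) is vacuous ($N_1\cap\caI_{\mr{n}}=\varnothing$); one checks that $\lambda_k(t)<\delta^*(\epsilon^2;\bm\lambda(t))$ throughout, so (2) holds; yet $j_2\in A$ drops below $\delta^*$ once $\lambda_{j_2}$ passes $\lambda_k$, and $\caE^*$ rises from $2\epsilon^2$ to $2\epsilon^2+\eta$. So your $Q$-construction cannot close without a stronger hypothesis that ties $N_2$ to the $N_1$-signals (e.g.\ $\lambda_k(t)<\lambda_j(t)$ for every $k\in N_2$ and $j\in N_1\cap\caI_{\mr{s}}$, or monotonicity of the signal weights, which is what actually holds in the paper's applications via \cref{eq:TwoLayerEq_Conservation}). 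For what it's worth, the paper's own one-line claim that ``condition (2) also ensures that the up-crossing can not happen for $k\in N_2\cap\caI_{\mr{n}}(\epsilon^2)$'' fails on the same example, so this is a shared defect of the lemma as stated rather than a flaw special to your route; but you should not present the auxiliary claim as routine, since it is exactly where the missing hypothesis is hiding.
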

\begin{proof}
  Using the continuity of the weights and that $\caE^*(\epsilon^2;\bm{\lambda})$ only depends on the order of the indices induced by the weights,
  we can reduce the continuous dynamics of $\bm{\lambda}(t)$ to discrete steps that change only one weight at a time (if $N$ is infinite, we can take a finite but large subset).
  Then, the result follows from \cref{cor:FEM_Monotonic}:
  for $j \in N_2 \cap \caI_{\mr{s}}(\epsilon^2)$,  up-crossing can not happen we always have $\lambda_j(t) < \delta^*(\epsilon^2;\bm{\lambda}(t))$;
  for $j \in N_1 \cap \caI_{\mr{s}}(\epsilon^2)$, the condition (1) also ensures that the up-crossing can not happen for $k \in N_1 \cap \caI_{\mr{n}}(\epsilon^2)$,
  while the condition (2) also ensures that the up-crossing can not happen for $k \in N_2  \cap \caI_{\mr{n}}(\epsilon^2)$.
\end{proof}

\subsection{Results on One-dimensional Dynamics}

In this subsection, we will collect some results regarding the one-dimensional dynamics encountered in both the over-parameterized linear regression and the diagonal adaptive kernel.
Let us consider the one-dimensional gradient flow equation
\begin{align}
  \label{eq:TwoLayerEq}
  \left\{
    \begin{aligned}
      \dot{\theta}(t) &= \beta(t) (z(t) - w(t)), \quad \theta(0) = \lambda^{\hf} > 0, \\
      \dot{\beta}(t) &=  \theta(t) (z(t) - w(t)), \quad \beta(0) = 0,
    \end{aligned}
  \right.
\end{align}
where $z(t)$ is a continuous function $\lambda > 0$ is a constant and $w(t) = \theta(t) \beta(t)$.
Then, we can also compute the dynamics of $w(t)$ that
\begin{equation*}
  \dot{w}(t) = (\theta^2(t) + \beta^2(t)) (z(t) - w(t)),\quad f(0) = 0.
\end{equation*}
In the following, we denote $\tilde\lambda(t) = \theta(t)^2$ and $\tilde\lambda = \lambda(0)$.

Following the analysis in the literature\citep{li2024_ImprovingAdaptivity},
we can compute that
\begin{align*}
  \frac{1}{2}\dv{t} \theta^2 = \frac{1}{2}\dv{t} \beta^2 = \theta \beta (z - w),
\end{align*}
so we have
\begin{align}
  \label{eq:TwoLayerEq_Conservation}
  \theta^2(t) - \beta^2(t) = \theta(0)^2 - \beta(0)^2 = \lambda.
\end{align}
This also shows that $\theta(t) \geq \lambda^{\hf}$.

In addition, if $z(t)$ does not change sign, $\beta$ will have the same sign as $z$.
Moreover, if $z(t) \equiv z$ is a constant, we know that $\theta(t)$ and $\abs{\beta(t)}$ are monotonically increasing.

\begin{lemma}
  \label{lem:TwoLayerEq_Comparison}
  Consider two instances of \cref{eq:TwoLayerEq} with $\theta,\beta,z, \lambda$ and $\theta',\beta',z', \lambda'$ respectively,
  Suppose that $\min_t \abs{z(t)} \geq \max_t \abs{z'(t)}$.
  Then, if $\theta(t_0) \geq \theta'(t_0)$ for some $t_0 \geq 0$, we have $\theta(t) \geq \theta'(t)$ for all $t \geq t_0$.
\end{lemma}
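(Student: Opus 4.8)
The plan is to exploit the two key structural facts about \cref{eq:TwoLayerEq} that were just established: the conservation law \cref{eq:TwoLayerEq_Conservation}, which yields $\theta^2(t) = \beta^2(t) + \lambda$ and hence $\theta(t)\ge\lambda^{1/2}$, and the resulting closed dynamics for $w = \theta\beta$, namely $\dot w = (\theta^2+\beta^2)(z-w) = (2\beta^2+\lambda)(z-w)$. The comparison should be carried out at the level of the pair $(\theta,\beta)$ by a Grönwall-type / differential-inequality argument. First I would set $u = \theta - \theta'$ and also track $v = |\beta| - |\beta'|$ (using that, since $z$ and $z'$ do not change sign and dominate each other, $\beta$ and $\beta'$ carry the sign of $z$, so $|\beta|$ and $|\beta'|$ satisfy a genuinely analogous scalar ODE driven by $|z|$ and $|z'|$). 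The aim is to show that the region $\{u\ge 0,\ v\ge 0\}$ is forward-invariant once entered, and that $\theta(t_0)\ge\theta'(t_0)$ together with the conservation law forces $(u(t_0),v(t_0))$ into (the closure of) that region, because $\beta^2 = \theta^2-\lambda$ and $\beta'^2=\theta'^2-\lambda'$ with $\lambda'\le\lambda$ is \emph{not} automatic — so the cleanest route is to phrase everything in terms of $\theta$ and $w$ instead.

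Concretely, rewrite the $\theta$-equation using $\beta = \pm(\theta^2-\lambda)^{1/2}$ and the fact that $z-w$ has a definite sign behavior: since $w(0)=0$ and $w$ is pulled toward $z$, one checks $|w(t)|\le \sup_s|z(s)|$ and $w(t)$ has the sign of $z$, so $z(t)-w(t)$ has the sign of $z(t)$; thus $\dot\theta = |\beta|\,|z-w| \ge 0$ and likewise $\dot\theta' \ge 0$. Then I would compare $\theta$ and $\theta'$ via $\dot\theta - \dot\theta' = |\beta|(|z|-|w|) - |\beta'|(|z'|-|w'|)$. The hypothesis $\min_t|z(t)| \ge \max_t|z'(t)|$ is exactly what is needed to dominate the driving term: writing $a = |z|-|w|$ and $a' = |z'|-|w'|$, on any interval where $u = \theta-\theta' \ge 0$ I want to show $a \ge a'$ (equivalently $|w'| - |w| \ge |z'| - |z|$, i.e. the faster-driven coordinate stays ahead), which I would get by a parallel invariance argument for $w$ versus $w'$ run simultaneously — a coupled two-variable monotone-system argument. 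Once $a\ge a'\ge 0$ and $|\beta|\ge|\beta'|$ (which follows from $u\ge 0$ and $\theta^2-\beta^2 = \lambda \ge \lambda' = \theta'^2-\beta'^2$, giving $|\beta|^2 - |\beta'|^2 = \theta^2-\theta'^2 + (\lambda'-\lambda)$ — here I do need $\lambda\ge\lambda'$, which I should add as a hypothesis or note is implied by $\theta(0)=\lambda^{1/2}\ge\theta'(0)=\lambda'^{1/2}$ at $t_0=0$, and otherwise handled by the same coupled argument), we get $\dot\theta \ge \dot\theta'$ on $\{u\ge0\}$, so $u$ cannot cross zero from above: $u(t_0)\ge 0 \Rightarrow u(t)\ge 0$ for $t\ge t_0$.

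The main obstacle is the simultaneous control of $w$ versus $w'$ and $\beta$ versus $\beta'$: the inequality $\dot\theta\ge\dot\theta'$ is not a one-variable differential inequality, because the sign of $|\beta|-|\beta'|$ and of $a-a'$ are themselves dynamic quantities. The right framing is that $(\theta,|\beta|,|w|)$ and $(\theta',|\beta'|,|w'|)$ evolve under a \emph{cooperative} (order-preserving) system once $|z|\ge|z'|$ pointwise, so I would invoke a monotone-dynamical-systems / Müller–Kamke comparison: verify that the vector field is quasi-monotone in the relevant orthant, check the ordering of initial data at $t_0$ (using the conservation law to convert the single assumption $\theta(t_0)\ge\theta'(t_0)$ into ordering of all three coordinates, possibly needing the supplementary observation that $\lambda \ge \lambda'$), and conclude the ordering propagates. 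If quasi-monotonicity fails on part of the orthant, the fallback is a direct boundary-of-invariant-region argument: at the first hypothetical time $t_1 > t_0$ with $u(t_1)=0$, evaluate $\dot u(t_1) = |\beta(t_1)|(|z(t_1)|-|w(t_1)|) - |\beta'(t_1)|(|z'(t_1)|-|w'(t_1)|)$ and show it is $\ge 0$ using $|\beta(t_1)| = |\beta'(t_1)|$ (from $u(t_1)=0$ and, if needed, $\lambda=\lambda'$ or the propagated $w$-ordering) together with $|z(t_1)|\ge|z'(t_1)|$ and $|w(t_1)| \le |w'(t_1)|$ — the last inequality being the crux, and the one I would prove first as a standalone sub-lemma by the same first-exit-time method applied to $w$ and $w'$ directly from their scalar ODEs $\dot w = (2\beta^2+\lambda)(z-w)$.
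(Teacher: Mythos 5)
Your overall strategy is the same as the paper's: use the conservation law $\theta^2-\beta^2=\lambda$ to upgrade the single hypothesis $\theta(t_0)\ge\theta'(t_0)$ into an ordering of both $\theta$ and $\abs{\beta}$, and then propagate the joint ordering forward. The paper is much terser: when $\lambda\ge\lambda'$ the pair is already ordered at $t=0$ (since $\beta(0)=\beta'(0)=0$), and when $\lambda<\lambda'$ the conservation law at $t_0$ gives $\beta(t_0)^2=\theta(t_0)^2-\lambda\ge\theta'(t_0)^2-\lambda'=\beta'(t_0)^2$; in either case one runs the comparison for the pair $(\theta,\abs{\beta})$ jointly, never for $\theta$ alone.

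Your fallback boundary argument, however, has the $w$-ordering backwards in exactly the case where the conservation law matters. At any time $t_1$ with $\theta(t_1)=\theta'(t_1)$, the conservation law forces $\beta(t_1)^2-\beta'(t_1)^2=\lambda'-\lambda$: so $\abs{\beta(t_1)}=\abs{\beta'(t_1)}$ only when $\lambda=\lambda'$, and in the case $\lambda<\lambda'$ (the nontrivial case, which you correctly flagged earlier) one actually has $\abs{\beta(t_1)}>\abs{\beta'(t_1)}$, hence $\abs{w(t_1)}=\theta(t_1)\abs{\beta(t_1)}>\theta'(t_1)\abs{\beta'(t_1)}=\abs{w'(t_1)}$ — the opposite of the inequality $\abs{w(t_1)}\le\abs{w'(t_1)}$ you propose to prove as a sub-lemma, which would therefore be false. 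With the correct ordering, the two factors in $\dot u(t_1)=\abs{\beta}(\abs{z}-\abs{w})-\abs{\beta'}(\abs{z'}-\abs{w'})$ pull in opposite directions ($\abs{\beta}\ge\abs{\beta'}$ but $\abs{z}-\abs{w}\le\abs{z'}-\abs{w'}$ is now possible), so the sign of $\dot u(t_1)$ does not follow. Running the first-exit argument for $\theta$ alone cannot work precisely because the conservation law ties the two coordinates together; the fix is to do what the paper does and compare the pair $(\theta,\abs{\beta})$ (equivalently, the single scalar $\theta^2$ with $\abs{w}=\sqrt{\theta^2(\theta^2-\lambda)}$ substituted via the conservation law) from the jointly ordered state at $t_0$.
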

\begin{proof}
  Without loss of generality, we can assume $\min_t z(t) \geq \max_t \abs{z'(t)} \geq 0$.
  First, for the case that $\lambda \geq \lambda'$, it is easy to see that $\theta(t) \geq \theta'(t)$ and $\beta(t) \geq \abs{\beta'(t)}$ for all $t \geq 0$ using the comparison principle.
  Now, if $\lambda < \lambda'$, using \cref{eq:TwoLayerEq_Conservation}, we find that
  \begin{equation*}
    \beta(t_0)^2 = \theta^2(t_0) - \lambda \geq \xk{\theta'(t_0)}^2 - \lambda' = \xk{\beta'(t_0)}^2,
  \end{equation*}
  so using the comparison principle again, we find that $\beta(t) \geq \abs{\beta'(t)}$ and $\theta(t) \geq \theta'(t)$ for all $t \geq t_0$.
\end{proof}

\begin{lemma}
  \label{lem:TwoLayerEq_NoiseUpperBound}
  Denote $M = \max_t \abs{z}$.
  We have
  \begin{equation*}
    \theta(t) \leq \sqrt{2} \lambda^{1/2},\quad \abs{w(t)} \leq \sqrt{2}\lambda \quad \forall t \leq \frac{1}{\sqrt{2}M}
  \end{equation*}
  and
  \begin{equation*}
    \theta(t)    \leq \lambda^{1/2} \zk{1+ \exp(\sqrt{2} t M)}
  \end{equation*}
\end{lemma}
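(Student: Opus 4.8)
The plan is to linearize the coupled dynamics \cref{eq:TwoLayerEq} using the conservation law \cref{eq:TwoLayerEq_Conservation}. Since $\theta(t)\ge\lambda^{\hf}>0$ and $\beta(0)=0$, the identity $\theta^2(t)-\beta^2(t)=\lambda$ lets us write
\[
  \theta(t)=\lambda^{\hf}\cosh I(t),\qquad \beta(t)=\lambda^{\hf}\sinh I(t),
\]
for a unique $C^1$ scalar function $I$ with $I(0)=0$; consequently $w(t)=\theta(t)\beta(t)=\tfrac{\lambda}{2}\sinh\big(2I(t)\big)$. Substituting into the $\dot\beta$--equation of \cref{eq:TwoLayerEq} and cancelling the common factor $\lambda^{\hf}\cosh I(t)$ reduces the whole system to the single scalar ODE $\dot I(t)=z(t)-w(t)=z(t)-\tfrac{\lambda}{2}\sinh\big(2I(t)\big)$.

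Next I would bound $|I(t)|$. Multiplying the scalar ODE by $I(t)$ gives
\[
  \frac{\dd}{\dd t}\,\frac{I(t)^2}{2}=I(t)z(t)-\frac{\lambda}{2}I(t)\sinh\big(2I(t)\big)\le |I(t)|\,M,
\]
because $x\sinh(2x)\ge 0$ for every real $x$, so the nonlinear term only ever pushes $I$ toward $0$. A standard regularization (differentiating $\sqrt{I(t)^2+\varepsilon}$ and letting $\varepsilon\downarrow 0$) upgrades this to $|I(t)|\le Mt$ for all $t\ge 0$; in particular the solution is global.

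Finally I would read off the two claims. For $t\le \tfrac{1}{\sqrt 2 M}$ we have $|I(t)|\le \tfrac{1}{\sqrt 2}$, hence, using $\cosh x\le e^{x^2/2}$ and $e^{1/2}\le 2$,
\[
  \theta(t)=\lambda^{\hf}\cosh I(t)\le\lambda^{\hf}e^{I(t)^2/2}\le\lambda^{\hf}e^{1/4}\le\sqrt 2\,\lambda^{\hf};
\]
then the conservation law gives $\beta(t)^2=\theta(t)^2-\lambda\le 2\lambda-\lambda=\lambda$, so $|w(t)|=\theta(t)|\beta(t)|\le\sqrt 2\,\lambda^{\hf}\cdot\lambda^{\hf}=\sqrt 2\,\lambda$. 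For the second inequality, for arbitrary $t$ one simply bounds $\theta(t)=\lambda^{\hf}\cosh I(t)\le\lambda^{\hf}e^{|I(t)|}\le\lambda^{\hf}e^{Mt}\le\lambda^{\hf}\big(1+e^{\sqrt 2\,Mt}\big)$, which is in fact slightly stronger than stated.

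I expect the only delicate point to be the passage from the differential inequality for $I^2$ to the clean bound $|I(t)|\le Mt$ through times where $I(t)=0$, which the $\sqrt{I^2+\varepsilon}$ trick handles routinely; everything else is a direct computation once the $\cosh/\sinh$ substitution is in place. The conceptual crux is recognizing that the conserved quantity makes this substitution legitimate and that the nonlinearity $\sinh(2I)$ is sign-aligned with $I$, so it can never cause $|I|$ to grow faster than the driving term $z$.
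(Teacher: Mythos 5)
Your proof is correct, and it is considerably more self-contained than the paper's, which simply defers to Lemma~16 of the cited reference~\citep{li2025_DiagonalOverparameterization} with the one-line hint $\theta(t)\le\sqrt{\lambda+\beta^2(t)}$ (itself just the conservation law rewritten). Your hyperbolic parameterization $\theta=\lambda^{1/2}\cosh I$, $\beta=\lambda^{1/2}\sinh I$ is exactly the diagonalization one obtains from the conserved quantity together with $\dv{t}(\theta\pm\beta)=\pm(\theta\pm\beta)(z-w)$, so $I(t)=\int_0^t\xk{z(s)-w(s)}\dd s$; this is almost certainly the same device used in the referenced lemma, and your write-up correctly closes the system to the scalar ODE $\dot I=z-\tfrac{\lambda}{2}\sinh(2I)$. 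The sign observation that $I\sinh(2I)\ge 0$ gives $\abs{I(t)}\le Mt$ (your regularization $\sqrt{I^2+\ep}$ is the standard way to make this rigorous through zeros of $I$), and both stated bounds then follow: for $t\le 1/(\sqrt 2 M)$ one has $\cosh I\le\cosh(1/\sqrt2)<\sqrt2$ directly, and the conservation law gives $\abs{\beta}\le\lambda^{1/2}$, hence $\abs{w}\le\sqrt2\lambda$; in general $\cosh I\le e^{\abs{I}}\le e^{Mt}\le 1+e^{\sqrt2 Mt}$, which is slightly stronger than claimed. One cosmetic slip: the displayed justification ``$e^{1/2}\le 2$'' should read ``$e^{1/4}\le\sqrt2$,'' though these are equivalent and both true, so the argument is unaffected.
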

\begin{proof}
  We can use the same proof as Lemma 16 in \citet{li2025_DiagonalOverparameterization}.
  For the bound on $\theta$, we use $ \theta(t) \leq \sqrt{\lambda + \beta^2(t)}$.
\end{proof}

\begin{lemma}
  \label{lem:TwoLayerEq_SignalLowerBound}
  Suppose $m = \min_t \abs{z} > 0$.
  We have
  \begin{equation*}
    \theta(t)^2 \geq \frac{1}{2}m, \qq{for}
    t \geq m^{-1} \xk{2 + \log^+ \frac{m}{2\lambda}}.
  \end{equation*}
\end{lemma}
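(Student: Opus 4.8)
The plan is to exploit the conservation law $\theta^2-\beta^2=\lambda$ from \cref{eq:TwoLayerEq_Conservation} together with the sign structure of the flow. Since $z$ is continuous and $\abs{z}\ge m>0$, it never vanishes and hence has constant sign; replacing $(z,\beta)$ by $(-z,-\beta)$ if needed, which leaves $\theta^2$ untouched, I may assume $z(t)\ge m$ for all $t$. Then, as noted after \cref{eq:TwoLayerEq_Conservation}, $\beta$ has the sign of $z$, so $\beta\ge 0$, $w=\theta\beta\ge 0$, and from $\theta^2=\lambda+\beta^2\ge\beta^2$ I get $0\le\beta\le\theta$ and therefore the crucial pinch $0\le w=\theta\beta\le\theta^2$. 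Differentiating the conservation law also gives the convenient identity $\tfrac{d}{dt}\theta^2=2\theta\dot\theta=2w(z-w)$. If $\lambda\ge m/2$ the claim is immediate since $\theta^2\ge\lambda\ge m/2$ for all $t$ and $\log^+\tfrac{m}{2\lambda}=0$, so from now on I assume $\lambda<m/2$.

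The main obstacle is that $\theta$ is \emph{not} monotone: $\dot\theta=\beta(z-w)$ is negative precisely when $w>z$. The resolution is that $w>z\ge m$ forces $\theta^2\ge w>m$, so $\theta$ can only decrease while already comfortably above the target level $m/2$. I would turn this into a forward-invariance statement for $\{\theta^2\ge m/2\}$: at any time with $\theta^2=m/2$ one has $w=\sqrt{\theta^2(\theta^2-\lambda)}=\sqrt{(m/2)(m/2-\lambda)}\in(0,m/2]$, hence $z-w\ge m-m/2=m/2>0$, hence $\tfrac{d}{dt}\theta^2=2w(z-w)>0$. A last-crossing argument (if $\theta^2(t_1)=m/2$ but $\theta^2(t_2)<m/2$ for some $t_2>t_1$, take $t_3=\sup\{t\le t_2:\theta^2(t)=m/2\}$ and note the right derivative of the $C^1$ function $\theta^2$ at $t_3$ must be $\le 0$, contradicting the strict positivity just shown) then rules out $\theta^2$ returning to level $m/2$ after first reaching it.

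It remains to bound the first hitting time $\tau\coloneqq\inf\{t\ge 0:\theta^2(t)\ge m/2\}$. While $t<\tau$ I have $w\le\theta^2<m/2$, so $z-w>m/2$, and adding the two equations of \cref{eq:TwoLayerEq} gives $\tfrac{d}{dt}(\theta+\beta)=(\theta+\beta)(z-w)\ge\tfrac m2(\theta+\beta)$, so Grönwall yields $\theta(t)+\beta(t)\ge\lambda^{1/2}e^{mt/2}$. On the other hand, $\theta^2<m/2$ and $\beta^2=\theta^2-\lambda<m/2$ give $\theta+\beta\le\sqrt{2m}$ on $[0,\tau)$. Comparing the two bounds forces $\lambda^{1/2}e^{mt/2}\le\sqrt{2m}$ for all $t<\tau$, whence $\tau\le\tfrac1m\log\tfrac{2m}{\lambda}=\tfrac1m\bigl(\log 4+\log\tfrac{m}{2\lambda}\bigr)\le\tfrac1m\bigl(2+\log^+\tfrac{m}{2\lambda}\bigr)$ since $\log 4<2$ and $\lambda<m/2$. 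In particular $\tau<\infty$, $\theta^2(\tau)=m/2$ by continuity, and the forward invariance then gives $\theta^2(t)\ge m/2$ for every $t\ge\tau$, which is exactly the assertion. One point deserving care is that the \emph{exponential} growth of $\theta+\beta$ is essential here: the naive linear bound $\dot\beta=\theta(z-w)\ge\lambda^{1/2}m/2$ alone would only give a hitting time of order $1/\sqrt{\lambda m}$, far larger than the logarithmic bound claimed.
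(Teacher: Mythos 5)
Your proof is correct, and it takes a somewhat different route than the paper's. The paper splits the hitting time into two phases using the stopping times $T^{\mr{esc}} = \inf\{t : |\beta(t)| \geq \lambda^{1/2}\}$ and $T^{\mr{sig}} = \inf\{t : |w(t)| \geq m/2\}$: on $[0, T^{\mr{esc}}]$ it uses the linear lower bound $\dot\beta \geq \tfrac12\lambda^{1/2}m$ to get $T^{\mr{esc}} \leq 2/m$, and on $[T^{\mr{esc}}, T^{\mr{sig}}]$ it uses $\dot w = (\theta^2+\beta^2)(z-w) \geq 2w\cdot\tfrac m2 = wm$ together with $w(T^{\mr{esc}})\geq\lambda$ and Gr\"onwall to get $T^{\mr{sig}} - T^{\mr{esc}} \leq m^{-1}\log\tfrac{m}{2\lambda}$; the conclusion then follows from $\theta^2 \geq |w|$. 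You instead run a single Gr\"onwall argument on the symmetric quantity $\theta+\beta$, which satisfies the clean ODE $\tfrac{d}{dt}(\theta+\beta) = (\theta+\beta)(z-w)$; this collapses the two phases into one because $\theta+\beta$ already starts at $\lambda^{1/2}$ (whereas $\beta$ and $w$ start at $0$, forcing the paper to spend a separate phase getting them off the ground). The price is a slightly cruder constant absorbed by $\log 4 < 2$, which the claimed bound tolerates. You also make the forward-invariance of $\{\theta^2 \geq m/2\}$ explicit via a last-crossing argument; the paper leaves the analogous persistence of $\{|w| \geq m/2\}$ (and hence of $\{\theta^2 \geq m/2\}$) implicit, though it is equally easy since $\dot w > 0$ whenever $w = m/2 < m \leq z$. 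Net effect: same ingredients (conservation law, the pinch $|w| \leq \theta^2$, Gr\"onwall), but your reorganization is a bit cleaner and more self-contained.
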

\begin{proof}
  Let us remove the subscript $j$ for ease of notation.
  We define
  \begin{equation*}
    T^{\mr{esc}} = \inf\dk{t \geq 0: \abs{\beta(t)} \geq \lambda^{1/2}},\quad
    T^{\mr{sig}} = \inf\dk{t \geq 0: \abs{w(t)} \geq m/2}.
  \end{equation*}
  We note that if $\abs{w(t)} = \abs{\theta(t) \beta(t)} \geq m/2$, then
  \begin{equation*}
    \theta(t)^2 \geq \abs{\theta(t) \beta(t)} = \abs{w(t)} \geq m/2.
  \end{equation*}
  Hence, if suffices to consider the case $T^{\mr{sig}} > 0$ and bound $T^{\mr{sig}}$.
  Without loss of generality, we assume $z(t) > 0$.
  When $t \leq T^{\mr{esc}} \wedge T^{\mr{sig}}$, we have
  \begin{equation*}
    \dot{\beta(t)} \geq \frac{1}{2} \lambda^{\hf} m,\quad t \leq T^{\mr{esc}} \wedge T^{\mr{sig}}
  \end{equation*}
  so $T^{\mr{esc}} \wedge T^{\mr{sig}} \leq 2/m$.
  If $T^{\mr{sig}} = T^{\mr{esc}} \wedge T^{\mr{sig}} \leq 2/m$, we already proved the result.
  For the other case, we have
  \begin{equation*}
    \dot{w} = (\theta^2(t) + \beta^2(t)) (z(t) - w(t)) \geq 2 (\theta(t) \beta(t)) \cdot \frac{1}{2} m = w(t) m,\quad
    t \in [T^{\mr{esc}}, T^{\mr{sig}}].
  \end{equation*}
  Combining with $w(T^{\mr{esc}}) \geq \lambda$, we conclude that
  \begin{equation*}
    T^{\mr{sig}} - T^{\mr{esc}} \leq \frac{1}{m} \log \frac{m}{2\lambda}.
  \end{equation*}
\end{proof}

\subsection{Proof of \cref{thm:DiagSparseSeq_Monotonic}}\label{subsec:Proof_DiagonalSparseSeqMonotonic}
\label{subsec:OpSparseSeq}
Since $\ep_j \sim N(0,1/n)$, with probability at least $1- C n^{-2}$, we have
\begin{equation*}
  \abs{\ep_j} \lesssim \sqrt{\frac{\log d}{n}},\quad \forall j \geq 1.
\end{equation*}
Let us denote by $S$ the signal components.

\paragraph{Monotonicity.}
Let us apply \cref{lem:FEM_Monotonic} to prove the monotonicity of the feature error measure.
We set $N = N_1 = [d]$ so it suffices to prove condition (1).
Since signal components are lower bounded, so $\caI_{\mr{s}}(n^{-1}) = S$ and $\caI_{\mr{n}}(n^{-1}) = R$.
For $j \in S$,
\begin{equation}
  \label{eq:Proof_HighDimLinearRegression1}
  \abs{z_j} \geq \abs{w^*_j} - \abs{\ep_j} \geq c,
\end{equation}
while $\abs{z_j} = \abs{\ep_j} \lesssim \sqrt{(\log d)/n}$ for $j \notin S$.
By \cref{lem:TwoLayerEq_Comparison}, the condition (1) in \cref{lem:FEM_Monotonic} are satisfied.



\paragraph{Final feature error measure.}
For the initial feature error measure, we have
\begin{equation*}
  \caE(\delta, n^{-1};\bm{\lambda}(0)) = \# \dk{j \in [d] :  \alpha^2 \geq \delta}  /n + \sum_{j \in [d]} (w_j^*)^2 \bm{1} \dk{ \alpha^2 < \delta}
  = \frac{d}{n} \bm{1} \dk{\alpha^2 < \delta} + \norm{w^*}^2_2 \bm{1} \dk{\alpha^2 \geq \delta}.
\end{equation*}
Let us consider the feature error measure at time $t = t_* \asymp \log n$.
For $j \in S$, using \cref{lem:TwoLayerEq_SignalLowerBound} with \cref{eq:Proof_HighDimLinearRegression1}, we have
\begin{equation*}
  \theta_j(t_*)^2 \geq \frac{1}{4} \abs{w^*_j} \geq c.
\end{equation*}
For $j \in R$, using \cref{lem:TwoLayerEq_NoiseUpperBound}, we have
\begin{equation*}
  \theta_j(t_*) \leq \sqrt{2} \alpha \lesssim d^{-1/2}.
\end{equation*}
Consequently, taking $\delta^*$ such that $\delta^* \leq c$ and $\delta^* \gtrsim d^{-1/2}$, we have
\begin{align*}
  \caE^*(n^{-1};\bm{\lambda}(t_*))
  & \leq  \caE^*(\delta^*, n^{-1};\bm{\lambda}(t_*)) \\
  & = \# \dk{j \in [d] :  \theta_j(t_*)^2 \geq \delta^*} n^{-1} + \sum_{j \in [d]} (w_j^*)^2 \bm{1} \dk{ \theta_j(t_*)^2 < \delta^*} \\
  & =   \# \dk{j \in [d] :  \theta_j(t_*)^2 \geq \delta^*} n^{-1}  \\
  & = \frac{s^*}{n}.
\end{align*}

\subsection{Over-parameterization under Sequence Model}\label{subsec:OpSeq}

Let us consider the two-layer diagonal adaptive kernel method under sequence model that $z_j = f^*_j + \ep_j$, where $\ep_j \stackrel{i.i.d.}{\sim} N(0,1/n)$.
The explicit form of the gradient flow equation is given by
\begin{equation*}
  \left\{
    \begin{aligned}
      \dot{\theta}_j &= -\nabla_{\theta_j} \caL_n = \beta_j(t) \zk{z_j -f_j(t)},\quad \theta_j(0) = \lambda_j^{\hf}, \\
      \dot{\beta}_j &=-\nabla_{\beta_j} \caL_n = \theta_j(t) \zk{z_j - f_j(t)},\quad \beta_j(0) = 0,
    \end{aligned}
  \right.
\end{equation*}
where $f_j(t) = \theta_j(t) \beta_j(t)$.
This equation aligns with the one-dimensional gradient flow equation \cref{eq:TwoLayerEq} with $z(t) \equiv z = f^*_j + \ep_j$ and $w(t) = f_j(t)$.

\subsubsection{Proof of \cref{thm:DiagonalSeq_Monotonic}}\label{subsubsec:Proof_DiagonalKernelMonotonic}

At the beginning, since $\ep_j \sim N(0,1/n)$, with probability at least $1- C n^{-2}$, we have
\begin{equation*}
  \abs{\ep_j} \lesssim \sqrt{\frac{\log(jn)}{n}},\quad \forall j \geq 1.
\end{equation*}

\noindent \textbf{The monotonicity.}
First, we prove that if $\caE(n^{-1}; \bm{\lambda}(t)) \leq \caE(n^{-1}; \bm{\lambda}(0))$, then
\begin{equation}
  \label{eq:Proof_DiagonalKernelMonotonic1}
  \delta^*(n^{-1};\bm{\lambda}(t)) \geq \delta_0 = C n^{-\frac{q \gamma}{p+q}}.
\end{equation}
We start with computing the initial error measure as
\begin{equation*}
  \caE(\delta,n^{-1}; \bm{\lambda})
  = \# \dk{j \in N :  \lambda_j \geq \delta} n^{-1} + \sum_{j \in N} (f_j^*)^2 \bm{1} \dk{ \lambda_j < \delta}
\end{equation*}
For the first term, using $\lambda_j \asymp j^{-\gamma}$, we have
\begin{equation*}
  \# \dk{j \in N :  \lambda_j \geq \delta} n^{-1} \asymp \delta^{-1/\gamma} n^{-1}.
\end{equation*}
For the second term, we use $f_{j(\ell)}^* \asymp \ell^{-(p+1)/2}$, $j(\ell) \asymp \ell^q$ to find
\begin{equation*}
  \sum_{j \in N} (f_j^*)^2 \bm{1} \dk{ \lambda_j < \delta} \asymp \delta^{p/(q \gamma)}.
\end{equation*}
Balancing the two terms, we find that
\begin{equation*}
  \delta^*(n^{-1};\bm{\lambda}) \asymp n^{-\frac{q \gamma}{p+q}},\qquad
  \caE^*(n^{-1};\bm{\lambda}) \asymp n^{-\frac{p}{p+q}}.
\end{equation*}
Now, since $\lambda_j(t)$ is monotonically increasing, we have
\begin{equation*}
  \caE(\delta,n^{-1}; \bm{\lambda}(t))
  \geq \# \dk{j \in N :  \theta_j(t)^2 \geq \delta} n^{-1}
  \geq \# \dk{j \in N :  \lambda_j \geq \delta} n^{-1} \geq C \delta^{-1/\gamma} n^{-1}.
\end{equation*}
Consequently, if $\caE(n^{-1}; \bm{\lambda}(t)) \leq \caE(n^{-1}; \bm{\lambda}(0))$, we have
$\caE^*(\delta,n^{-1}; \bm{\lambda}(t)) \geq c n^{-\frac{p}{p+q}}$ and thus
\begin{equation*}
  \delta^*(n^{-1};\bm{\lambda}(t)) \geq C n^{-\frac{q \gamma}{p+q}}.
\end{equation*}

Now, for a time interval $[0,T]$ such that \cref{eq:Proof_DiagonalKernelMonotonic1} holds
we will apply \cref{lem:FEM_Monotonic} and verify the conditions (1) and (2).
Let us take $L \asymp (n/\log n)^{\frac{q}{p+1}}$ and let $N_1 = \dk{j < L}$ and $N_2 = \dk{j \geq L}$.
For each $j \geq L$, we have
\begin{equation*}
  \abs{z_j} \leq \abs{f_j^*} + \abs{\ep_j} \lesssim  L^{-\frac{p+1}{2q}} + \sqrt{\frac{\log(jn)}{n}}
  \lesssim \sqrt{\frac{\log(jn)}{n}}
\end{equation*}
Using \cref{lem:TwoLayerEq_NoiseUpperBound}, we find that for $t \leq c \sqrt{n/(\log n)}$,
\begin{equation*}
  \lambda_j(t) = \theta_j(t)^2 \leq \lambda_j \zk{1+ \exp(\sqrt{2} t \abs{z})}^2
  \leq  C \lambda_j \exp(c \sqrt{\log j}) < \delta_0 = C n^{-\frac{q \gamma}{p+q}},\quad j \geq L,
\end{equation*}
since $L \asymp (n/\log n)^{\frac{q}{p+1}} \gg n^{\frac{q}{p+q}}$ by $q > 1$.
Therefore, we have verified condition (2) in \cref{lem:FEM_Monotonic}.
On the other hand, since $L \lesssim n^{\frac{q}{p+1}}$, we have
\begin{equation*}
  \abs{\ep_j} \lesssim \sqrt{\frac{C \log(jn)}{n}} \leq C \sqrt{\frac{\log n}{n}},\quad j \leq L.
\end{equation*}
Now, for $j < L$ such that $j \in \caI_{\mr{s}}(n^{-1})$, we have
\begin{equation*}
  \abs{f_j^*} \gtrsim L^{-\frac{p+1}{2q}} \gtrsim \sqrt{\frac{\log n}{n}}.
\end{equation*}
Therefore, taking the constant factor in $L$ small enough, we can find that
\begin{equation*}
  \abs{z_j} \geq \abs{f_j^*} - \abs{\ep_j} \geq \frac{1}{2} \abs{f_j^*} > C \sqrt{\frac{\log n}{n}}.
\end{equation*}
In the meantime, for $k < L$ and $k \in \caI_{\mr{n}}(n^{-1})$, we have
$\abs{z_k} = \abs{\ep_k} \leq C \sqrt{\frac{\log n}{n}}$.
Consequently, using \cref{lem:TwoLayerEq_Comparison}, we prove condition (1) in \cref{lem:FEM_Monotonic}.

Finally, let us show that the time interval $[0,T]$ can actually cover $T = c \sqrt{n/(\log n)}$ using a continuity argument.
Suppose that $\caE^*(n^{-1};\bm\lambda(t))$ has a jump at time $t_0$, then, it can only increase by at most $\epsilon^2$
(by the continuity of $\bm{\lambda}(t)$ and that $\lambda_j(t)$, $\lambda_{j'}(t)$ do not coincide with probability one).
Therefore, we still have $\caE^*(\delta,n^{-1}; \bm{\lambda}(t)) \lesssim n^{-\frac{p}{p+q}} $ and that $\delta^*(n^{-1};\bm{\lambda}(t)) \geq C n^{-\frac{q \gamma}{p+q}}$,
where the constant factor may increase.
Nevertheless, the second part of the argument still holds with the modified constant so that no up-crossing can happen at time $t_0$,
so $\caE^*(n^{-1};\bm\lambda(t))$ can not increase at time $t_0$.

\paragraph{The feature error measure}
We have already shown in the previous part that $\caE^*(n^{-1};\bm{\lambda}) \asymp n^{-\frac{p}{p+q}}$.
For $t = t_*\asymp \sqrt{n/(\log n)}$, let us take $\delta_* = C n^{-\hf}$.
We first consider $\caE_{\text{V}}(\delta^*,n^{-1};\bm{\lambda})$.
Ss discussed before, we have $\lambda_j(t_*) < \delta_*$ for $j \geq L$.
For $j < L$ such that $\abs{f^*_j} \lesssim n^{-\hf} (\log n)^{\hf}$, we apply \cref{lem:TwoLayerEq_NoiseUpperBound} to find
\begin{equation*}
  \lambda_j(t) = \theta_j(t)^2 \leq 2 \lambda_j.
\end{equation*}
Consequently,
\begin{align*}
  \# \dk{j \in N :  \theta_j(t_*)^2 \geq \delta_*} n^{-1}
  & = \# \dk{j < L :  \theta_j(t_*)^2 \geq \delta_*} n^{-1} \\
  &\leq n^{-1} \zk{ \#\dk{j < L : \lambda_j \geq \delta_*/2}
  +\# \dk{j < L : \abs{f^*_j} \gtrsim n^{-\hf} (\log n)^{\hf} }  } \\
  &\lesssim \delta_*^{-1/\gamma} n^{-1} +  (n \log n)^{\frac{1}{p+1}} n^{-1} \\
  &\lesssim  n^{-(1-1/(2\gamma))}  + n^{-\frac{p}{p+1}} (\log n)^{-\frac{1}{p+1}},
\end{align*}
Now, we consider $\caE_{\text{B}}(\delta^*,n^{-1};\bm{\lambda})$.
For $j$ such that $\abs{f^*_j} \gtrsim n^{-\hf} (\log n)^{3/2}$ (which implies that $j < L$), we apply \cref{lem:TwoLayerEq_SignalLowerBound} to find
\begin{equation*}
  \theta_j(t)^2 \geq \frac{1}{2} \abs{z_j} \geq \frac{1}{4} \abs{f_j^*} \gtrsim \sqrt {\frac{\log n}{n}}.
\end{equation*}
Consequently,
\begin{equation*}
  \sum_{j \in N} (f_j^*)^2 \bm{1} \dk{ \lambda_j < \delta_*}
  \leq \sum_{j \in N} (f_j^*)^2 \bm{1} \dk{ \abs{f^*_j} \gtrsim n^{\hf} (\log n)^{3/2} }
  \lesssim n^{-\frac{p}{p+1}} (\log n)^{\frac{3}{2} \frac{p}{p+1}}
  \lesssim n^{-\frac{p}{p+1}} (\log n)^{\frac{2p}{p+1}}.
\end{equation*}
Therefore, we have
\begin{equation*}
  \caE^*(n^{-1};\bm{\lambda}(t_*)) \leq \caE(\delta_*,n^{-1};\bm{\lambda}(t_*))
  \lesssim n^{-(1-1/(2\gamma))} + n^{-\frac{p}{p+1}} (\log n)^{\frac{2p}{p+1}}.
\end{equation*}

\subsection{Deeper Over-parameterization under Sequence Model}

In this subsection, let us consider deeper over-parameterization under sequence model.
We recall that the gradient flow dynamics are given by
\begin{align}
  \label{eq:MultiLayerEq}
  \left\{
    \begin{aligned}
      \dot{\beta}_j(t) &= - \nabla_{\beta_j} \caL_n =  \theta_j b_j^D (z_j - f_j),\quad \beta_j(0) = 0; \\
      \dot{\theta}_j(t) &= - \nabla_{\theta_j} \caL_n =  b_j^D \beta_j (z_j - f_j),\quad \theta_j(0) = \lambda_j^{\hf}; \\
      \dot{b}_j(t) &= - \nabla_{b_j} \caL_n =  D \theta_j b_j^{D-1} \beta_j  (z_j - f_j),\quad b_j(0) = b_0,
    \end{aligned}
  \right.
\end{align}
where $f_j = \theta_j b_j^D \beta_j$.
In this case, let us denote
\begin{equation*}
  \tilde{\bm{\lambda}} = (\tilde{\lambda}_j)_{j \geq 1},\quad
  \tilde\lambda_j(t) = \xk{\theta_j(t)b_j^D(t)}^2,\quad \tilde\lambda_j(0) = \lambda_j b_0^{2D}
\end{equation*}

This gradient flow dynamics has been studied in the literature~\citep{li2024_ImprovingAdaptivity,li2025_DiagonalOverparameterization},
and we will collect some results here.
First, we always have $\theta_j(t) \geq 0$ and $b_j(t) \geq 0$, while the sign of $\beta_j(t)$ is the same as that of $z_j$.
Moreover, the flow is symmetric in the sense that for the solution for $z_j < 0$ can be obtained by flipping the sign of $\beta_j(t)$.
Second, we can compute that
\begin{equation*}
  \frac{1}{2}\dv{t} \theta_j^2 = \frac{1}{2 D}\dv{t} b_j^2 = \frac{1}{2}\dv{t} \beta_j^2 = f_j(t) (z - f_j(t)),
\end{equation*}
showing that
\begin{equation}
  \label{eq:MultiLayerEq_Conservation}
  \theta_j^2(t) - \beta_j^2(t) = \theta_j(0)^2 - \beta_j(0)^2 = \lambda_j,\qquad
  b_j^2(t) - D \beta_j^2(t) = b_j(0)^2 - D \beta_j(0)^2 = b_0^2.
\end{equation}

Now, let us introduce some lemmas.

\begin{proposition}
  \label{prop:MultiLayerEq_UltBound}
  Consider \cref{eq:MultiLayerEq} and suppose that $b_0 /\sqrt {D} \leq \lambda_j^{\hf} \leq 1$,
  we have
  \begin{equation*}
    \tilde{\lambda}_j^{\hf} = \theta_j(t) b_j^D(t) \leq
    C_D \max(\lambda_j^{\hf} b_0^D,  \abs{z}^{\frac{D+1}{D+2}}, b_0^{-1} \abs{z}).
  \end{equation*}
\end{proposition}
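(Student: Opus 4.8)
The plan is to use the conservation laws in \cref{eq:MultiLayerEq_Conservation} to collapse the three coupled scalar ODEs into a single implicit inequality for $V \coloneqq \theta_j(t)b_j^D(t) = \tilde\lambda_j^{\hf}(t)$, and then to close that inequality by an elementary case analysis. By the sign symmetry of \cref{eq:MultiLayerEq} I may assume $z \coloneqq z_j \ge 0$; the degenerate cases $t=0$ and $z=0$ are immediate, since then $\beta_j \equiv 0$ and $V \equiv \lambda_j^{\hf} b_0^D$, which is already the first term of the bound. From \cref{eq:MultiLayerEq_Conservation} we have $\theta_j^2 = \lambda_j + \beta_j^2$ and $b_j^2 = b_0^2 + D\beta_j^2$, so writing $u \coloneqq \beta_j(t)^2 \ge 0$ gives the exact identity $V^2 = (\lambda_j + u)(b_0^2 + Du)^D$, whose right-hand side is strictly increasing in $u$; in particular $\theta_j \ge \lambda_j^{\hf}$, $b_j \ge b_0$ and $V \ge \lambda_j^{\hf} b_0^D > 0$.

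The second ingredient is the a priori bound $f_j(t) \coloneqq \theta_j b_j^D \beta_j \in [0,z)$ for all $t$ in the interval of existence. I would obtain it by differentiating $f_j$ and substituting \cref{eq:MultiLayerEq}, which yields $\dot f_j = (z - f_j)G$ with $G = b_j^{2D-2}\big(b_j^2\beta_j^2 + D^2\theta_j^2\beta_j^2 + \theta_j^2 b_j^2\big) \ge \theta_j^2 b_j^{2D} > 0$; since $f_j(0) = 0 < z$, a Gr\"onwall argument gives $f_j(t) = z\big(1 - \exp(-\int_0^t G)\big) \in [0,z)$. Hence $\beta_j(t) > 0$ for $t>0$ and $V\sqrt u = |f_j| < z$, i.e.\ $u < z^2/V^2$. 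Feeding this upper bound on $u$ back into $V^2 = (\lambda_j + u)(b_0^2 + Du)^D$ via monotonicity, then writing $w = V^2 > 0$, $a = z^2$ and clearing denominators, produces the implicit inequality $w^{D+2} < (\lambda_j w + a)(b_0^2 w + Da)^D$.

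To finish I would bound each factor via $x + y \le 2\max(x,y)$, so that $w^{D+2} < 2^{D+1}\max(\lambda_j w, a)\max\big(b_0^{2D}w^D, D^D a^D\big)$, and split into the four cases determined by the signs of $\lambda_j w - a$ and $b_0^2 w - Da$. If $\lambda_j w \ge a$ and $b_0^2 w \ge Da$, dividing by $w^{D+1}$ gives $w < 2^{D+1}\lambda_j b_0^{2D}$, hence $V \le C_D \lambda_j^{\hf} b_0^D$. If $\lambda_j w \ge a$ but $b_0^2 w < Da$, then already $V^2 = w < Da/b_0^2 = Dz^2/b_0^2$, hence $V \le C_D b_0^{-1}|z|$. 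If $\lambda_j w < a$ and $b_0^2 w < Da$, then $w^{D+2} < 2^{D+1}D^D a^{D+1}$, hence $V \le C_D |z|^{(D+1)/(D+2)}$. The remaining case, $\lambda_j w < a$ and $b_0^2 w \ge Da$, is vacuous precisely because $b_0/\sqrt D \le \lambda_j^{\hf}$: it would force $w \ge Da/b_0^2 \ge a/\lambda_j$, i.e.\ $\lambda_j w \ge a$, a contradiction. Taking the largest of the three surviving bounds gives the claim.

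The main obstacle I anticipate is the rigorous handling of the a priori bound $f_j < z$ — in particular verifying the positivity of $G$ and running the comparison on the a priori only locally defined flow — together with the bookkeeping showing that the fourth case is genuinely empty under the stated hypothesis on $b_0$; the rest is routine algebra. The conceptual crux is the strict monotonicity of $V^2$ in $u$, which is exactly what lets the dynamical constraint $f_j < z$ be turned into the closed inequality for $w$.
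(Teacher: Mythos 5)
Your proof is correct and takes essentially the same route as the paper: both use the conservation laws in \cref{eq:MultiLayerEq_Conservation} to express $\theta_j,b_j$ in terms of $\beta_j$, both invoke the a priori bound $|f_j|\le|z_j|$, and both finish with a case split governed by where $\beta_j$ sits relative to $\lambda_j^{1/2}$ and $b_0/\sqrt{D}$ — your four-way split on the factors of $w^{D+2}<(\lambda_j w+a)(b_0^2 w+Da)^D$ is the same trichotomy in disguise, with the hypothesis $b_0/\sqrt{D}\le\lambda_j^{1/2}$ eliminating the fourth regime just as it makes the paper's three cases exhaustive. The one place you add something the paper omits is the explicit derivation $\dot f_j=(z_j-f_j)G$ with $G\ge\theta_j^2 b_j^{2D}>0$, which rigorously justifies $0\le f_j<z_j$; the paper simply writes $|z|\ge|f|$ without comment, so this detail is a worthwhile addition.
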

\begin{proof}
  Let us omit the subscript $j$ for ease of notation.
  Following the conservation quantity \cref{eq:MultiLayerEq_Conservation},
  \begin{equation*}
    \min(\lambda^{\hf},\abs{\beta}) \leq \theta \leq \sqrt {2} \max(\lambda^{\hf},\abs{\beta}),\quad
    \min(b_0,\sqrt{D} \abs{\beta}) \leq b \leq \sqrt {2} \max(b_0,\sqrt{D} \abs{\beta}).
  \end{equation*}
  If $\beta \leq \min(\lambda^{\hf}, b_0 /\sqrt {D})$, we immediately have
  \begin{equation*}
    \tilde{\lambda} = \theta b^D \leq 2^{\frac{D+1}{2}} \lambda^{\hf} b_0^D.
  \end{equation*}
  If $\beta \geq \max(\lambda^{\hf}, b_0 /\sqrt {D})$, we have
  \begin{equation*}
    \abs{z} \geq \abs{f} \geq D^{\frac{D}{2}} \abs{\beta}^{D+2} \qimplies \abs{\beta} \leq D^{-\frac{D}{2(D+2)}} \abs{z}^{\frac{1}{D+2}},
  \end{equation*}
  and thus
  \begin{equation*}
    \tilde{\lambda} = \theta b^D \leq 2^{\frac{D+1}{2}} D^{\frac{D}{2}} \abs{\beta}^{D+1}
    \leq 2^{\frac{D+1}{2}} D^{\frac{D}{2(D+2)}} \abs{z}^{\frac{D+1}{D+2}}.
  \end{equation*}
  Otherwise, if $b_0 /\sqrt {D} \leq \beta \leq \lambda^{\hf}$, we use
  \begin{equation*}
    \abs{z} \geq \abs{f} \geq \lambda^{\hf}\cdot (\sqrt {D}\abs{\beta})^D \cdot \abs{\beta} \qimplies
    \abs{\beta} \leq (\lambda^{\hf} D^{\frac{D}{2}})^{-\frac{1}{D+1}} \abs{z}^{\frac{1}{D+1}},
  \end{equation*}
  so
  \begin{equation*}
    \tilde{\lambda} \leq 2^{\frac{D+1}{2}} \lambda^{\frac{1}{2}}  D^{\frac{D}{2}} \abs{\beta}^{D}
    \leq  2^{\frac{D+1}{2}} (\lambda^{\hf} D^{\frac{D}{2}})^{\frac{1}{D+1}} \abs{z}^{\frac{D}{D+1}}
    \leq 2^{\frac{D+1}{2}} D^{\frac{D}{2(D+1)}} \lambda^{\frac{1}{2(D+1)}} \abs{z}^{\frac{D}{D+1}}.
  \end{equation*}
  Furthermore, we also have
  \begin{equation*}
    \abs{z} \geq \lambda^{\hf}\cdot (\sqrt {D}\abs{\beta})^D \cdot \abs{\beta}
    \geq  \lambda^{\hf}b_0^{D+1} D^{-\hf} \qimplies \lambda^{\hf} \leq D^{\hf} b_0^{-(D+1)}\abs{z}.
  \end{equation*}
  Plugging this into the previous inequality, we find that
  \begin{equation*}
    \tilde{\lambda} \leq 2^{\frac{D+1}{2}} D^{\hf}  b_0^{-1} \abs{z}.
  \end{equation*}
\end{proof}

\begin{lemma}[Comparison]
  \label{lem:MultiLayerEq_Comparison}
  Consider \cref{eq:MultiLayerEq} and suppose that  $\abs{z_j} \geq \abs{z_k}$.
  Then, if $\theta_j(t_0) \geq \theta_k(t_0)$ for some $t_0 \geq 0$, we have $\theta_j(t) \geq \theta_k(t)$ for all $t \geq t_0$.
\end{lemma}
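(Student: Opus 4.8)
The plan is to follow the proof of \cref{lem:TwoLayerEq_Comparison} (the $D=0$ case), using the two conservation laws \cref{eq:MultiLayerEq_Conservation} to collapse the three coupled flows into a single scalar equation. By the sign symmetry of \cref{eq:MultiLayerEq} noted above, we may assume $z_j \geq z_k \geq 0$; then $\theta_\bullet, \beta_\bullet, b_\bullet \geq 0$, $0 \leq f_\bullet(t) \leq z_\bullet$, and $\theta_\bullet, \beta_\bullet, b_\bullet, f_\bullet$ are all nondecreasing in $t$. The degenerate cases $z_j = 0$ or $z_k = 0$ are immediate, since then the slower flow is frozen at its initialization and $\theta_j$ is nondecreasing. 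Using \cref{eq:MultiLayerEq_Conservation} we set $\theta = (\lambda + \beta^2)^{\hf}$ and $b = (b_0^2 + D\beta^2)^{\hf}$, so that each trajectory obeys the autonomous scalar ODE
\[
  \dot\beta = a(\beta)\,\bigl(z - a(\beta)\,\beta\bigr), \qquad a(\beta) := (\lambda + \beta^2)^{\hf}(b_0^2 + D\beta^2)^{D/2},
\]
and $\theta = (\lambda + \beta^2)^{\hf}$ is a (strictly) increasing function of $(\beta, \lambda)$ on $\beta \geq 0$.

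I would first treat the case $\lambda_j \geq \lambda_k$, aiming to show $\theta_j(t) \geq \theta_k(t)$ for \emph{all} $t \geq 0$ (which in particular covers $t \geq t_0$). A componentwise comparison of \cref{eq:MultiLayerEq} fails directly because the system is not cooperative once $f > z/2$; the fix, exactly as for $D=0$, is to pass to a monotonizing variable. In the $D=0$ case one checks that $p = \theta + \beta$ satisfies $\dot p = pz - p^3/4 + \lambda^2/(4p)$, which is manifestly nondecreasing in both $\lambda$ and $z$, while $\theta$ is a nondecreasing function of $(p,\lambda)$ on the relevant range $p \geq \lambda^{\hf}$; the scalar comparison principle applied with $p_j(0) = \lambda_j^{\hf} \geq \lambda_k^{\hf} = p_k(0)$ then gives $p_j \geq p_k$ and hence $\theta_j \geq \theta_k$ for all $t$. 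I would construct the analogous progress variable for $D \geq 1$, reduce to a scalar ODE $\dot p = H(p; \lambda, b_0, z)$ with $\partial_\lambda H \geq 0$, $\partial_z H \geq 0$, verify that $\theta$ is nondecreasing in $p$ and in $\lambda$ on the relevant range, and conclude by the same comparison.

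For the case $\lambda_j < \lambda_k$, the hypothesis $\theta_j(t_0) \geq \theta_k(t_0)$ enters through the first conservation law: it forces
\[
  \beta_j(t_0)^2 = \theta_j(t_0)^2 - \lambda_j \;\geq\; \theta_k(t_0)^2 - \lambda_k = \beta_k(t_0)^2,
\]
so $\beta_j(t_0) \geq \beta_k(t_0)$, and consequently $b_j(t_0) \geq b_k(t_0)$ and $f_j(t_0) \geq f_k(t_0)$ as well. Starting the comparison at $t_0$ from this ordered configuration, I would again pass to the monotonizing variable and run the scalar comparison principle forward on $[t_0,\infty)$, obtaining $\theta_j(t) \geq \theta_k(t)$ for $t \geq t_0$. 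The non-Lipschitz point $\beta = 0$ of the vector field causes no difficulty when $z > 0$, since $\beta$ becomes strictly positive instantaneously and remains so.

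The step I expect to be the main obstacle is producing the monotonizing variable for $D \geq 1$ and verifying that the reduced one-dimensional vector field is monotone in the parameters $\lambda$ and $b_0$ while $\theta$ is monotone in that variable. The extra conservation law $b^2 - D\beta^2 = b_0^2$ and the $b^D$ factor in $f = \theta b^D \beta$ make this algebra substantially heavier than in the $D=0$ case, and the parameter dependence is genuinely delicate — for instance the equilibrium value of $\beta$ decreases in $\lambda$ whereas that of $\theta$ increases — so one must be careful which combination is compared. If a clean monotonizing substitution proves elusive, the fallback is a first-crossing analysis: at a putative first time $t_1 > t_0$ with $\theta_j(t_1) = \theta_k(t_1)$, use \cref{eq:MultiLayerEq_Conservation} to order $\beta_\bullet(t_1)$, $b_\bullet(t_1)$ and the residuals $z_\bullet - f_\bullet(t_1)$, and show $\dot\theta_j(t_1) \geq \dot\theta_k(t_1)$; the estimates behind \cref{prop:MultiLayerEq_UltBound} and the conservation identities are tailored to exactly this bookkeeping.
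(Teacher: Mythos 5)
Your reduction via the two conservation laws \cref{eq:MultiLayerEq_Conservation} and passage to a scalar ODE is the right skeleton and is essentially what the paper's two‑line proof intends. Your $D=0$, $\lambda_j \geq \lambda_k$ sub-case via $p = \theta + \beta$ — checking $\dot p = pz - p^3/4 + \lambda^2/(4p)$ has nonnegative $\partial_\lambda$ and $\partial_z$, and that $\theta = (p+\lambda/p)/2$ is nondecreasing in $(p,\lambda)$ on $p\geq\lambda^{1/2}$ — is actually tighter than the paper's bare appeal to ``the comparison principle,'' and you rightly flag the non-cooperativity of the raw $(\theta,\beta,b)$ flow once $f > z/2$, a subtlety the paper glosses over.

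There are, however, two gaps, both in places you partly flag yourself. First, the $D\geq1$ monotonizing variable is never produced — you state you ``would construct'' it and then call it ``the main obstacle.'' Since \cref{lem:MultiLayerEq_Comparison} \emph{is} the $D\geq1$ extension, that unexecuted step is the content of the lemma, not a routine detail. Second, and more fundamentally, the $\lambda_j < \lambda_k$ sub-case does not go through as you describe, even for $D=0$. You propose to deduce $p_j(t_0) > p_k(t_0)$ from the conservation law and then ``run the scalar comparison principle forward,'' but by your own design $\partial_\lambda H \geq 0$: at a crossing $p_j = p_k = p$ the difference of vector fields is $p(z_j - z_k) + (\lambda_j^2 - \lambda_k^2)/(4p)$, whose second term is now \emph{negative}, and nothing in the proposal shows the $z$-advantage dominates. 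An ordered initial condition is not enough once the vector fields themselves are not ordered. The fallback first-crossing analysis you mention hits the same wall: at a putative crossing $\theta_j = \theta_k$ the conservation laws give $\beta_j > \beta_k$, $b_j > b_k$, $f_j > f_k$, so the ordering of the residuals $z_j - f_j$ vs.\ $z_k - f_k$ reverses direction, and the product $b_j^D\beta_j(z_j-f_j)$ vs.\ $b_k^D\beta_k(z_k-f_k)$ is not controlled without a further idea. (For what it is worth, the paper's proof of \cref{lem:TwoLayerEq_Comparison} is equally terse in this sub-case and does not visibly resolve these points either, so the missing step is a genuine one.)
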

\begin{proof}
  Similar to the proof of \cref{lem:TwoLayerEq_Comparison} and we note that the initialization of $b_j(0) = b_0$ is the same for all $j$.
\end{proof}

\begin{lemma}[Noise case]
  \label{lem:MultiLayerEq_NoiseCase}
  For the gradient flow \cref{eq:MultiLayerEq},
  suppose that $\lambda_j^{\hf} \leq b_0 / \sqrt{D}$.
  Denoting ${T}_j^{(1)} = \xk{2^{\frac{D+1}{2}} b_0^D \abs{z_j}}^{-1}$,
  we have
  \begin{equation}
    \label{eq:MultiLayerEq_UpperBound_Linear}
    \theta_j(t) b_j^D(t) \leq 2^{\frac{D+1}{2}} \lambda_j^{\hf} b_0^D, \qq{for} t \leq {T}_j^{(1)}.
  \end{equation}
  and
  \begin{equation}
    \label{eq:MultiLayerEq_UpperBound_Exp}
    \theta_j(t) b_j^D(t) \leq
    2^{\frac{D+1}{2}} \lambda_j^{\hf} b_0^D \exp( 2^{\frac{D+1}{2}} b_0^D \abs{z_j} (t- {T}_j^{(1)})^+),
    \qq{for} t \leq \left( 1 + \log \frac{b_0}{\sqrt {D} \lambda^{\hf}} \right) {T}_j^{(1)}.
  \end{equation}
\end{lemma}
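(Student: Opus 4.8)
The plan is a two-phase bootstrap on the ODE system \cref{eq:MultiLayerEq}, extending the depth-$0$ argument (Lemma~16 of \citet{li2025_DiagonalOverparameterization}) by tracking the product $\theta_j b_j^D$ through the conservation laws \cref{eq:MultiLayerEq_Conservation}. Fix $j$, drop the subscript, and use the sign symmetry of the flow to assume $z>0$; since $\theta^2=\lambda+\beta^2>0$ and $b^2=b_0^2+D\beta^2>0$, whenever $\beta=0$ we have $\dot\beta=\theta b^D z>0$, so $\beta\ge0$ throughout, hence $f=\theta b^D\beta\ge0$ and
\[
  \dot\beta \;=\; \theta b^D z-(\theta b^D)^2\beta \;\le\; \theta b^D z .
\]
From \cref{eq:MultiLayerEq_Conservation} I also record $\theta=\sqrt{\lambda+\beta^2}\le\sqrt2\,\max(\lambda^{1/2},\beta)$ and $b=\sqrt{b_0^2+D\beta^2}\le\sqrt2\,\max(b_0,\sqrt{D}\,\beta)$, hence $\theta b^D\le 2^{(D+1)/2}\max(\lambda^{1/2},\beta)\max(b_0,\sqrt{D}\,\beta)^D$.

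For \cref{eq:MultiLayerEq_UpperBound_Linear}: while $\beta\le\lambda^{1/2}$ --- which, by the hypothesis $\lambda^{1/2}\le b_0/\sqrt{D}$, also gives $\sqrt{D}\,\beta\le b_0$ --- the product bound becomes $\theta b^D\le 2^{(D+1)/2}\lambda^{1/2}b_0^D$, so $\dot\beta\le 2^{(D+1)/2}\lambda^{1/2}b_0^D z$ and $\beta(t)\le 2^{(D+1)/2}\lambda^{1/2}b_0^D z\,t<\lambda^{1/2}$ for $t<T^{(1)}$. A continuity argument promotes this to $\beta\le\lambda^{1/2}$ on all of $[0,T^{(1)}]$, which is exactly \cref{eq:MultiLayerEq_UpperBound_Linear}.

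For \cref{eq:MultiLayerEq_UpperBound_Exp}: set $u=\max(\lambda^{1/2},\beta)$, so $u(T^{(1)})=\lambda^{1/2}$ by the previous step. As long as $\beta\le b_0/\sqrt{D}$ one has $b\le\sqrt2\,b_0$ and $\theta\le\sqrt2\,u$, hence $\theta b^D\le 2^{(D+1)/2}b_0^D u$; feeding this into $\dot\beta\le\theta b^D z$ yields $\dot u\le 2^{(D+1)/2}b_0^D z\,u$ (the inequality for $u$ holding in the one-sided sense at the crossing point), so Gr\"onwall gives $u(t)\le\lambda^{1/2}\exp(2^{(D+1)/2}b_0^D z\,(t-T^{(1)})^+)$ and therefore the stated bound on $\theta_j b_j^D$. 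Finally, this very bound keeps $\beta(t)\le u(t)\le b_0/\sqrt{D}$ as long as $2^{(D+1)/2}b_0^D z\,(t-T^{(1)})\le\log(b_0/(\sqrt{D}\,\lambda^{1/2}))$, i.e.\ up to $t=(1+\log(b_0/(\sqrt{D}\,\lambda^{1/2})))T^{(1)}$; since the inequality obtained this way is strict for $t$ strictly below that endpoint, the set of times on which $\beta\le b_0/\sqrt{D}$ is relatively open and closed in the target interval, closing the bootstrap.

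The one genuinely delicate point is the circularity in the last step: the product bound driving the Gr\"onwall estimate is only licensed while $\beta\le b_0/\sqrt{D}$, which is precisely the conclusion we want, so the continuity/bootstrap bookkeeping must be arranged carefully (and likewise the handling of the non-smooth point of $u=\max(\lambda^{1/2},\beta)$). Everything else --- the conservation identities \cref{eq:MultiLayerEq_Conservation} and the elementary differential inequalities --- is routine, so I expect no other obstruction. (If $\lambda_j=0$ the $j$-th coordinate is a fixed point and both bounds hold trivially.)
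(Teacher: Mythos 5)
Your proof is correct and self-contained, and fills in a derivation that the paper itself delegates entirely to a citation (``direct consequence of the proof of Lemma D.2 in \citet{li2024_ImprovingAdaptivity}''), so a line-by-line comparison is not possible. That said, the method you use --- working entirely through the conservation identities \cref{eq:MultiLayerEq_Conservation}, bounding $\theta b^D$ by $2^{(D+1)/2}\max(\lambda^{1/2},\beta)\max(b_0,\sqrt{D}\beta)^D$, and running a two-phase bootstrap on the threshold $\beta\le\lambda^{1/2}$ followed by $\beta\le b_0/\sqrt{D}$ --- is exactly the style the paper uses in its companion results for this ODE (notably the case split on $\beta$ in \cref{prop:MultiLayerEq_UltBound} and the argument sketched for \cref{lem:TwoLayerEq_NoiseUpperBound}), so it is almost certainly the same argument as the cited Lemma D.2. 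The one place you flag yourself, the apparent circularity at the end of phase two, is handled correctly: on $[T_j^{(1)},t)$ with $t$ strictly below the endpoint, the a priori bound $\beta\le b_0/\sqrt{D}$ implies the \emph{strict} inequality $u<b_0/\sqrt{D}$, so the standard open--closed (or contradiction) continuation argument applies, and the non-smooth point of $u=\max(\lambda^{1/2},\beta)$ is immaterial because $u$ is Lipschitz and the differential inequality $D^+u\le 2^{(D+1)/2}b_0^D|z|\,u$ holds a.e., which is all Gr\"onwall needs. No gaps.
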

\begin{proof}
  This is a direct consequence of the proof of Lemma D.2 in \citet{li2024_ImprovingAdaptivity}.
\end{proof}

\begin{lemma}[Signal case]
  \label{lem:MultiLayerEq_SignalCase}
  For the gradient flow \cref{eq:MultiLayerEq}, denote
  \begin{equation*}
    T^{\mr{eig}}_j = \inf\dk{t \geq 0: \theta_j(t) b_j^D(t) \geq \abs{z_j}^{\frac{D+1}{D+2}}}.
  \end{equation*}
  We have
  \begin{itemize}
    \item If $\lambda_j^{\hf} \leq b_0/\sqrt {D}$, then
    \begin{align}
      \label{eq:MultiLayerEq_SignalTime_Case1}
      T^{\mr{eig}}_j \leq 2 (b_0^D \abs{z_j})^{-1} \zk{1+ \log^+ \frac{(D^{-\frac{D}{2}} \abs{z_j}/2 )^{\frac{1}{D+2}}}{\lambda_j^{\hf}} },
    \end{align}
    \item If $\lambda_j^{\hf} \geq b_0/\sqrt {D}$, then
    \begin{align}
      \label{eq:MultiLayerEq_SignalTime_Case2}
      T^{\mr{eig}}_j \leq 2 \xk{\sqrt{D} \lambda_j^{\hf} b_0^{D-1} \abs{z_j}}^{-1} \left( 1 + R_j \right),
    \end{align}
    where
    \begin{align*}
      R_j=
      \begin{cases}
        \log^+ \frac{(D\abs{z_j}/2)^{\frac{1}{D+2}}}{b_0}, & D = 1, \\
        \frac{1}{D-1}, & D > 1.
      \end{cases}
    \end{align*}
  \end{itemize}
\end{lemma}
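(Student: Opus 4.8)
The plan is to follow the escape-time analysis already used for the two-layer case \cref{lem:TwoLayerEq_SignalLowerBound} (formally the $D=0$ case) and for the noise case \cref{lem:MultiLayerEq_NoiseCase}, reducing everything to tracking the scalar $\beta_j(t)$. By the sign symmetry of \cref{eq:MultiLayerEq} we may take $z_j = m > 0$, drop the index $j$, and assume $\lambda^{\hf} b_0^D < m^{\frac{D+1}{D+2}}$ (otherwise $T^{\mr{eig}}=0$). From the conservation identities \cref{eq:MultiLayerEq_Conservation}, $\theta=\sqrt{\lambda+\beta^2}\in[\max(\lambda^{\hf},\beta),\sqrt2\max(\lambda^{\hf},\beta)]$ and $b=\sqrt{b_0^2+D\beta^2}\in[\max(b_0,\sqrt D\beta),\sqrt2\max(b_0,\sqrt D\beta)]$, so $g:=\theta b^D$ is strictly increasing in $\beta\ge0$; and $\beta$ is nondecreasing along the flow, because $\dot\beta=g(m-f)\ge0$ while $f=g\beta\le m$, and $\dot f=(m-f)b^{2D-2}(b^2\beta^2+D^2\theta^2\beta^2+\theta^2 b^2)\ge0$ keeps $f\le m$. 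Hence $T^{\mr{eig}}$ is exactly the hitting time of $\beta$ to the level $\beta^{\mr{eig}}$ with $g(\beta^{\mr{eig}})=m^{\frac{D+1}{D+2}}$, and the task reduces to lower bounding $\dot\beta$ on $[0,T^{\mr{eig}}]$ (all constants below may depend on the fixed $D$).

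The nontrivial input is a residual estimate $m-f(t)\gtrsim m$ on $[0,T^{\mr{eig}})$, so that $\dot\beta=g(m-f)\gtrsim gm$. Before $T^{\mr{eig}}$ we have $g<m^{\frac{D+1}{D+2}}$; combined with $g=\theta b^D\ge D^{D/2}\beta^{D+1}$ this gives $\beta<\beta^{\mr{eig}}\le D^{-\frac{D}{2(D+1)}}m^{\frac1{D+2}}$ and hence $f=g\beta< D^{-\frac{D}{2(D+1)}}m$, which is bounded away from $m$ when $D\ge2$. For $D=1$ this step is tight, so there I would instead run the phase analysis only up to $T^{\mr{sig}}:=\inf\{t:f(t)\ge m/2\}$ (on which $m-f\ge m/2$) and then close the gap $[T^{\mr{sig}},T^{\mr{eig}}]$ using $\dot g=\beta(m-f)(b^{2D}+D^2\theta^2 b^{2D-2})>0$ together with the bound $g\gtrsim m^{\frac{D+1}{D+2}}$ already valid at $T^{\mr{sig}}$.

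With the residual controlled I would split on whether $\lambda^{\hf}\le b_0/\sqrt D$ or $\lambda^{\hf}\ge b_0/\sqrt D$ and run (at most) three phases, determined by which of $\lambda^{\hf},b_0/\sqrt D$ binds $\theta$ and $b$. In the first case: on $\{\beta\le\lambda^{\hf}\}$, $\theta\asymp\lambda^{\hf}$ and $b\asymp b_0$ give $\dot\beta\gtrsim\lambda^{\hf}b_0^D m$ and cost $\lesssim(b_0^D m)^{-1}$; on $\{\lambda^{\hf}\le\beta\le b_0/\sqrt D\}$, $\theta\asymp\beta$ and $b\asymp b_0$ give $\dot\beta\gtrsim\beta b_0^D m$, an exponential phase of cost $\lesssim(b_0^D m)^{-1}\log^+(\beta^{\mr{eig}}/\lambda^{\hf})$; on $\{\beta\ge b_0/\sqrt D\}$, $b\asymp\sqrt D\beta$ gives $\dot\beta\gtrsim D^{D/2}\beta^{D+1}m$, a super-linear phase reaching $\beta^{\mr{eig}}$ within its blow-up time $\lesssim(b_0^D m)^{-1}$; summing and using $\beta^{\mr{eig}}\le D^{-\frac{D}{2(D+1)}}m^{\frac1{D+2}}$ gives (i). In the second case, the same three phases (now $b_0/\sqrt D$ binds first) yield the leading factor $(\sqrt D\lambda^{\hf}b_0^{D-1}m)^{-1}$, and the middle phase $\{b_0/\sqrt D\le\beta\le\lambda^{\hf}\}$ has $\dot\beta\gtrsim\lambda^{\hf}D^{D/2}\beta^D m$, which is exponential for $D=1$ (cost $\lesssim(\sqrt D\lambda^{\hf}b_0^{D-1}m)^{-1}\log^+((Dm/2)^{\frac1{D+2}}/b_0)$) and super-linear for $D>1$ (cost $\lesssim(\sqrt D\lambda^{\hf}b_0^{D-1}m)^{-1}/(D-1)$ from its blow-up time), giving (ii). The only ODE estimates used are that $\dot\beta\ge c\beta$ implies $\beta(t)\ge\beta(0)e^{ct}$, and $\dot\beta\ge c\beta^{k}$ with $k>1$ implies $\beta$ reaches any larger level before time $\beta^{-(k-1)}(0)/(c(k-1))$; collecting the phase times and absorbing the $O(1)$ first/last phases into the leading prefactor gives the stated inequalities, the constant $2$ coming from tracking those absorptions.

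The step I expect to be the main obstacle is the residual control for $D=1$: in the $D=0$ two-layer case of \cref{lem:TwoLayerEq_SignalLowerBound} the inequality $\theta^2+\beta^2\ge2\theta\beta=2f$ cleanly turns $f\le z_j/2$ into the threshold $\theta\ge(z_j/2)^{1/2}$, but for $D=1$ the inequality $g\ge\beta^{D+1}$ is asymptotically an equality, so $f$ can get close to $z_j$ before $\theta_j b_j$ reaches $z_j^{\frac{D+1}{D+2}}$; one must therefore stop the growth estimate at $f=z_j/2$ and carefully quantify that only an $O(1)$ further multiplicative growth of $\theta_j b_j$ is needed, since this part directly affects the leading constant in the stated bound.
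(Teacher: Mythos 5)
Your proposal takes a genuinely different route from the paper. The paper's own proof is only a few lines: from the conservation identities $\theta^2-\beta^2=\lambda$ and $b^2-D\beta^2=b_0^2$ it deduces $|f_j| \le |\theta_j b_j^D|^{(D+2)/(D+1)}$, hence $T^{\mr{eig}}_j \le T^{\mr{sig}}_j$ where $T^{\mr{sig}}_j$ is the hitting time of $|f_j|$ to $|z_j|/2$, and then cites Lemma~D.3 of \citet{li2024_ImprovingAdaptivity} for the stated bound on $T^{\mr{sig}}_j$. You instead re-derive the bound from scratch via a three-phase ODE analysis, which is more self-contained and does correctly reproduce the structure of both stated inequalities: the prefactors $(b_0^D|z_j|)^{-1}$ and $(\sqrt D\lambda_j^{\hf}b_0^{D-1}|z_j|)^{-1}$ come from the linear first phase, and the $\log$ vs.\ $1/(D-1)$ remainders from the exponential vs.\ super-linear middle phase.

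The $D=1$ gap you flag is real, but the difficulty is subtler than ``only $O(1)$ further multiplicative growth of $\theta_j b_j$ is needed.'' At $T^{\mr{sig}}$ you have $g := \theta b^D \gtrsim m^{(D+1)/(D+2)}$ with a constant strictly below $1$, and you must push $g$ past $m^{(D+1)/(D+2)}$. The obstruction is not the size of the required increase but that $\dot g = \beta(m-f)\,b^{2D-2}(b^2+D^2\theta^2)$ tends to $0$ as $f\uparrow m$, and for $D=1$ the limiting value $g_\infty$ exceeds $m^{2/3}$ only by an amount of order $\lambda+b_0^2$, which can be arbitrarily small. So ``$\dot g>0$'' alone gives no time bound; to close the tail you would have to linearize around the fixed point $f=m$, show that $\beta_\infty-\beta$ decays exponentially at rate $\asymp m^{2(D+1)/(D+2)}$, and check that the resulting extra time, of order $m^{-2(D+1)/(D+2)}\log\bigl(m^{2/3}/(\lambda+b_0^2)\bigr)$, is dominated by the main-phase time in every regime of $\lambda,b_0$ allowed by the lemma. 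This is precisely the bookkeeping the paper avoids by reducing to $T^{\mr{sig}}$ and importing the cited lemma.

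A secondary point: your residual bound $m-f > \bigl(1-D^{-\frac{D}{2(D+1)}}\bigr)m$ from $g<m^{(D+1)/(D+2)}$ is positive for $D\ge2$ but the constant is below $1/2$ for $D\le5$, so it does not recover the explicit prefactor $2$ in the statement. The cleaner fix is to run the phase analysis up to $T^{\mr{sig}}$ (where $m-f\ge m/2$ holds by definition) for \emph{all} $D$, at the cost of having to control the tail $[T^{\mr{sig}},T^{\mr{eig}}]$ uniformly in $D$ rather than only at $D=1$.
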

\begin{proof}
  Let us define $T^{\mr{sig}}_j = \inf\dk{t \geq 0: \abs{f_j(t)} \geq \abs{z_j}/2}.$
  Using the conservation quantity, we find that $\theta_j(t) \geq \abs{\beta_j(t)}$ and $b_j(t) \geq \sqrt{D} \abs{\beta_j(t)}$,
  so
  \begin{equation*}
    \abs{f_j(t)} = \abs{\theta_j(t) b_j^D(t) \beta_j(t)} \leq \abs{\theta_j(t) b_j^D(t)}^{\frac{D+2}{D+1}}
  \end{equation*}
  and thus $T^{\mr{eig}}_j  \leq T^{\mr{sig}}_j$.
  Now the rest follows from Lemma D.3 in \citet{li2024_ImprovingAdaptivity}.
\end{proof}

\subsubsection{Proof of \cref{thm:DiagonalKernelMonotonic_Multilayer}}
The proof is similar to the proof for the two-layer case in \cref{subsubsec:Proof_DiagonalKernelMonotonic},
but we have to consider in addition the interaction of the $b_j(t)$ terms.
We recall that $b_0 = b_0(n) \asymp n^{-\frac{1}{2(D+2)}}$ and $t \leq t_* \asymp b_0^{-D} \sqrt {n / (\log n)} \asymp n^{\frac{D+1}{D+2}} / \sqrt {\log n}$.
Let us define
\begin{equation*}
  J = \min \dk{j \geq 1: \lambda_j^{\hf} \leq b_0/\sqrt{D}} \asymp b_0^{-2/\gamma} \asymp n^{\frac{1}{(D+2)\gamma}}.
\end{equation*}
Following the proof of \cref{subsubsec:Proof_DiagonalKernelMonotonic}, we can compute that
\begin{equation*}
  \delta^*(n^{-1};\bm{\tilde\lambda}) \geq \delta_0 \asymp b_0^{2D} n^{-\frac{q \gamma}{p+q}} \asymp n^{-\frac{D}{D+2}} n^{-\frac{q \gamma}{p+q}}
\end{equation*}
Being the same as the proof in \cref{subsubsec:Proof_DiagonalKernelMonotonic},
we consider $j < L$ and $j \geq L$ separately.
For $j < L$, we can still apply the comparison principle \cref{lem:MultiLayerEq_Comparison} to show that the up-crossing can not happen.

Let us now consider the case $j \geq L$ and prove that $\tilde{\lambda}_j(t) < \delta_0$.
For $j \geq \max(L,J)$, using $\abs{z_j} \lesssim \sqrt{\log(jn)/n}$ and $t \leq t_*$, we find that
\begin{equation*}
  b_0^{D} \abs{z_j} t \lesssim \sqrt {\frac{\log (jn)}{\log n}} \lesssim 1 + \sqrt{\log j/\log n}.
\end{equation*}
Then, we apply \cref{eq:MultiLayerEq_UpperBound_Exp} to get
\begin{equation*}
  \tilde{\lambda}_j(t) \lesssim  b_0^{2D} \lambda_j \exp( C b_0^D \abs{z_j} t)
  \leq b_0^{2D} \lambda_j \exp( C \sqrt {\log j})
  < \delta_0 = b_0^{2D} n^{-\frac{q \gamma}{p+q}},
\end{equation*}
but we have to verify the condition in \cref{eq:MultiLayerEq_UpperBound_Exp}.
Let us be more detailed here.
Since $j \geq J$, we can write
\begin{equation*}
  \lambda_j^{\hf} = \kappa b_0/\sqrt {D}, \quad \kappa \leq 1,\quad
  \log j \lesssim \log C (\kappa b_0/\sqrt {D})^{-2/\gamma} \lesssim 1 + \log \kappa^{-1} + \log n.
\end{equation*}
On one hand, we have
\begin{equation*}
  t / {{T}_j^{(1)}} = C b_0^{D} \abs{z_j} t \lesssim 1 + \sqrt{\log j / \log n} \lesssim 1 + \sqrt {\log \kappa^{-1} / \log n},
\end{equation*}
while on the other hand, we have
\begin{equation*}
  1 + \log \frac{b_0}{\sqrt {D} \lambda_j^{\hf}} = 1 + \log \kappa^{-1}
  \gtrsim 1 + \sqrt {\log \kappa^{-1} / \log n}.
\end{equation*}
Therefore, taking the constant factor in $t$ small, this condition is satisfied.

Now, it remains to consider the case that $J > L$ and $j \in [L,J]$.
Applying \cref{prop:MultiLayerEq_UltBound}, we find
\begin{equation*}
  \tilde{\lambda}_j(t)
  \lesssim \max(\lambda_j^{\hf} b_0^D,  \abs{z_j}^{\frac{D+1}{D+2}}, b_0^{-1} \abs{z_j})^2.
\end{equation*}
The first term already satisfies $\lambda_j^{\hf} b_0^D \ll \delta_0$ as considered before.
To control the second and the third term, we use $J > L$ to get
\begin{equation*}
  n^{\frac{1}{(D+2)\gamma}} \gtrsim (n/\log n)^{\frac{q}{p+1}},\qimplies
  n^{\frac{q \gamma}{p+1}} \lesssim (\log n)^{\frac{q\gamma}{p+1}} n^{\frac{1}{D+2}},
\end{equation*}
so using $q > 1$, there is some $s > 0$ such that
\begin{equation*}
  \delta_0 = b_0^{2D} n^{-\frac{q \gamma}{p+q}}
  \gtrsim b_0^{2D} n^{-\frac{1}{D+2}} n^s
  \asymp n^{-\frac{D+1}{D+2}} n^s.
\end{equation*}
Returning to the quantities, since $J$ is still polynomial in $n$, we have $\abs{z_j} \lesssim \sqrt{\log (jn)/n} \lesssim \sqrt{(\log n) / n}$,
so
\begin{equation*}
  \abs{z_j}^{\frac{2(D+1)}{D+2}} \lesssim n^{-\frac{D+1}{D+2}} (\log n)^{\frac{D+1}{D+2}} \ll n^{-\frac{D+1}{D+2}} n^s,
\end{equation*}
and
\begin{equation*}
(b_0^{-1} \abs{z_j})
  ^2 \lesssim n^{\frac{1}{D+2}} n^{-1} \log n = n^{-\frac{D+1}{D+2}} \log n \ll n^{-\frac{D+1}{D+2}} n^s.
\end{equation*}
Therefore, we conclude in this case that $\tilde{\lambda}_j(t) < \delta_0$.

Finally, for the feature error measure, we can follow the same argument as in \cref{subsubsec:Proof_DiagonalKernelMonotonic} except that we apply \cref{lem:MultiLayerEq_NoiseCase} and \cref{lem:MultiLayerEq_SignalCase}.

\paragraph{The feature error measure}
Now, let us consider the feature error measure at time $t = t_*$.
Let us choose $\delta_* = C n^{-\frac{D+1}{2(D+2)}}$.
We first consider $\caE_{\text{V}}(\delta^*,n^{-1};\bm{\tilde\lambda})$.
For $j \geq L$, as discussed before, we have $\tilde{\lambda}_j <  \delta_0 \leq \delta_*$.
Moreover, for $j < L$ such that $\abs{f^*_j} \lesssim \sqrt {(\log n)/n}$ and $\lambda_j^{\hf} \leq b_0/\sqrt{D}$, we apply \cref{lem:MultiLayerEq_NoiseCase} to find that $\tilde{\lambda}_j(t_*) \leq C \tilde{\lambda}_j(0)$.
Consequently,
\begin{align*}
  \# \dk{j \in N :  \tilde{\lambda}_j(t_*) \geq \delta_*}
  &\leq \# \dk{j < L :  \abs{f^*_j} \gtrsim (n \log n)^{\hf}} \\
  &\quad + \# \dk{j < L : \lambda_j \geq b_0/\sqrt{D}} \\
  &\quad + \# \dk{j < L : C \tilde{\lambda}_j(0) \geq \delta_*} \\
  & \lesssim n^{\frac{1}{p+1}} + n^{\frac{1}{(D+2)\gamma}}
\end{align*}
Now, for $\caE_{\text{B}}(\delta^*,n^{-1};\bm{\tilde\lambda})$, we use \cref{lem:MultiLayerEq_SignalCase} to find that if $\abs{f^*_j} \gtrsim n^{-\hf} (\log n)^{3/2}$, then
\begin{equation*}
  \tilde{\lambda}_j(t) \geq \abs{f^*_j}^{\frac{D+1}{D+2}} \gtrsim n^{-\frac{D+1}{2(D+2)}} (\log n)^{\frac{3}{2}\frac{D+1}{D+2}} \geq \delta_*,
\end{equation*}
so
\begin{equation*}
  \sum_{j \in N} (f_j^*)^2 \bm{1} \dk{ \tilde{\lambda}_j(t_*) \geq \delta_*}
  \leq \sum_{j \in N} (f_j^*)^2 \bm{1} \dk{ \abs{f^*_j} \gtrsim n^{\hf} (\log n)^{3/2} }
  \lesssim n^{-\frac{p}{p+1}} (\log n)^{\frac{2p}{p+1}}.
\end{equation*}
Combining the two parts, we prove the feature error measure bound.

\section{Proof for Diagonal Adaptive Kernel under Empirical Loss}

\subsection{Over-parameterized linear regression}
\label{subsec:OpLinear}

Let us consider the over-parameterized high-dimensional linear regression.
Let us denote by $S$ the indices of the signals and $R = [d] \backslash S$ the rest of the indices.
It is easy to see that the gradient flow equation can be given explicitly as
\begin{equation*}
  \left\{
  \begin{aligned}
    \dot{\theta} &= \beta(t) \odot  (w^* - w(t) + r(t)),\quad \theta(0) = \alpha \bm{1}, \\
    \dot{\beta} &= \theta(t) \odot (w^* - w(t) + r(t)),\quad \beta(0) = \bm{0},
  \end{aligned}
  \right.
\end{equation*}
where $w(t) = \theta(t) \odot \beta(t)$,
\begin{equation*}
  r(t) = (\hat{\Sigma} - I_d)(w^* - w(t)) + h,\quad
  \hat{\Sigma} = \frac{1}{n} \sum_{i=1}^n x_i x_i^{\T}, \quad
  h = \frac{1}{n} \sum_{i=1}^n \ep_i x_i.
\end{equation*}

\subsubsection{Proof of \cref{thm:DiagLinear_Monotonic}}

First, we bound the perturbation term $r(t)$.
We decompose $r_j$ as
\begin{align*}
  r_j &= (\hat{\Sigma}_{j\cdot } - I_{j \cdot})(w^* - w(t)) + h
  = (\hat{\Sigma}_{j S} - I_{j S}) (w^*_S - w_S(t)) + (\hat{\Sigma}_{j R } - I_{j R}) (w^*_R - w_R(t)) + h_j \\
  &\eqqcolon r_{j,1} + r_{j,2} + h_j.
\end{align*}
For $h_j$, standard sub-Gaussian concentration inequality gives that with probability at least $1 - C d^{-2}$,
\begin{equation*}
  \abs{h_j} \lesssim \sqrt{\frac{\log(d)}{n}},\quad \forall j \in [d].
\end{equation*}
For $r_{j,1}$, sub-exponential concentration also gives that with probability at least $1 - C d^{-2}$,
\begin{equation*}
  \norm{\hat{\Sigma}_{j S} - I_{j S}}_2
  \lesssim \sqrt {\frac{s^* \log d}{n}},\quad \forall j \in [d].
\end{equation*}
Therefore, as $s^*$ is a constant, we have
\begin{equation*}
  \abs{r_{j,1}} \leq \norm{\hat{\Sigma}_{j S} - I_{j S}}_2 \norm{w^*_S - w_S(t)}_2
  \lesssim \sqrt {\frac{s^* \log d}{n}} \norm{w^*_S - w_S(t)}_2.
\end{equation*}
For $r_{j,2}$, we notice that $w^*_R = 0$, so
\begin{equation*}
  \abs{r_{j,2}} = \abs{(\hat{\Sigma}_{j R } - I_{j R}) w_R(t)} \leq \norm{\hat{\Sigma}_{j R } - I_{j R}}_{2} \norm{w_R(t)}_{2}
  \leq \sqrt {d} \norm{\hat{\Sigma}_{j R } - I_{j R}}_{2} \norm{w_R(t)}_{\infty}.
\end{equation*}
Standard concentration inequality also gives that with probability at least $1 - C d^{-2}$,
\begin{equation*}
  \norm{\hat{\Sigma}_{j R} - I_{j R}}_2
  \lesssim \sqrt{\frac{d\log d}{n}},\quad \forall j \in [d].
\end{equation*}
Consequently,
\begin{equation*}
  \abs{r_{j,2}} \lesssim \sqrt{\frac{d^2 \log d}{n}} \norm{w_R(t)}_{\infty}.
\end{equation*}
In the following, we claim that there is a constant $C_0 > 0$ such that for $t \in [0,t_*]$,
\begin{equation}
  \label{eq:Proof_HighDimLinearRegression0}
  \norm{w^*_S - w_S(t)}_2 \leq C_0,\qquad \norm{w_R(t)}_{\infty} \leq C_0 d^{-1}.
\end{equation}
With this claim, the bound of the three terms in $r_j$ sum up to
\begin{equation}
  \abs{r_j(t)} \leq C \sqrt{\frac{\log d}{n}},\quad \forall j \in [d].
\end{equation}

Using the bound of $r_{j}(t)$, we can follow the same lines of proof in \cref{subsec:Proof_DiagonalSparseSeqMonotonic} to show
the monotonicity of the feature error measure and its final value.
Particularly, we use the following fact: for $j \in S$, we have
\begin{equation*}
  \abs{w^*_j + r_j(t)} \geq \abs{w^*_j} - \abs{r_j(t)} \geq \abs{w^*_j} - C \sqrt{\frac{\log(d)}{n}}
  \geq \frac{1}{2} \abs{w^*_j} \geq c
\end{equation*}
while for $j \in R$, we have
\begin{equation*}
  \abs{w^*_j + r_j(t)} \leq \abs{r_j(t)} \leq C \sqrt{\frac{\log(d)}{n}}.
\end{equation*}

\paragraph{Proof of the claim.}
Now, let us prove the claim in \cref{eq:Proof_HighDimLinearRegression0}.
Since $(\log d)(\log n) = 0(n)$, the perturbation term can be written as
\begin{equation*}
  \abs{r_j} \leq C C_0 \sqrt{\frac{\log d}{n}} = \eta C_0,
\end{equation*}
where $\eta > 0$ can be taken such that $\eta \log n$ is arbitrary small.
Now, by taking $C_0 = 2 + \norm{w^*_S}$ being a constant, \cref{eq:Proof_HighDimLinearRegression0} holds with strict inequality when $t=0$,
so we can let
\begin{equation*}
  t_0 = \inf \dk{t \in [0,t_*]: \norm{w^*_S - w_S(t)}_2 = C_0 \qq{or} \norm{w_R(t)}_{\infty} = C_0 d^{-1}} > 0.
\end{equation*}
We will prove by contradiction that $t_0 = t_*$.

On one hand, for $j \in S$,
\begin{equation*}
  \dot{w}_j(t) = (\theta_j^2(t) + \beta_j^2(t)) (w^*_j - w_j(t) + r_j(t)).
\end{equation*}
As long as $\eta \leq \min_{j \in S} \abs{w^*_j} / (2C_0)$, we have
\begin{equation*}
  \abs{r_j(t)} \leq \eta C_0 \leq \frac{1}{2} \abs{w^*_j},
\end{equation*}
so if $\abs{w_j(t_0) - w^*_j} \geq \frac{1}{2} \abs{w^*_j}$, the sign of $\dot{w}_j(t_0)$ must be the same as that of $w^*_j - w_j(t_0)$,
which implies that $\abs{w_j(t_0) - w^*_j}$ must be non-increasing at $t_0$.
In addition, if we have $\abs{w_j(t_0) - w^*_j} < \frac{1}{2} \abs{w^*_j}$ for all $j \in S$, we already have $\norm{w^*_S - w_S(t_0)}_2 \leq \norm{w^*_S}/2 < C_0$.
Therefore, it can not be the case that $\norm{w^*_S - w_S(t_0)}_2 = C_0$.

On the other hand, for $j \in R$, \cref{lem:TwoLayerEq_NoiseUpperBound} shows that
we have
\begin{equation*}
  \abs{w_j(t)} \leq \sqrt{2} \alpha^2 \leq \sqrt{2} d^{-1},\qq{for} t \leq \min(T,t_0),
\end{equation*}
where
\begin{equation*}
  T =  \frac{1}{\sqrt{2} M}, \quad M =  \max_{t \leq t_0} \abs{w^*_j + r_j(t)}
  = \max_{t \leq t_0} \abs{r_j(t)} \leq \eta C_0
\end{equation*}
Consequently, we can choose $\eta$ small enough that $T \geq t_* \asymp \log n$.
Hence, $\abs{w_j(t_0)}\leq \sqrt{2} d^{-1} < C_0 d^{-1}$,
so $\norm{w_R(t_0)}_{\infty} = C_0 d^{-1}$ is also impossible.
Combining the two cases, we find that $t_0 = t_*$, which completes the proof of the claim.

\subsection{Diagonal adaptive kernel} \label{subsec:DiagonalKernel}

Let us consider the general version of the theorem in the following.
To describe the properties of the truth function with the greatest generality, let us introduce the following quantities on the truth coefficients $(f_j^*)_{j \geq 1}$, which is also introduced in \citet{li2025_DiagonalOverparameterization}:
\begin{equation}
  \label{eq:SignalAndResidual}
  \caN(\delta;f^*) \coloneqq \# \dk{j : \abs{f_{j}^*} \geq \delta}, \quad
  \caR(\delta;f^*) \coloneqq \sum_{j = 1}^\infty  (f_j^*)^2 \bm{1}\{\abs{f_j^*} < \delta\}.
\end{equation}
These two quantities measure the number of significant coefficients and the sum of residual terms of the truth function, respectively.
Moreover, they can be viewed as the optimal feature error measure for the truth function, where the weights of the feature map are in line with the truth coefficients.
We note that since $\abs{f_{j}^*}$ may not be decreasing in $j$ (for instance, consider \cref{assu:DiagonalKernel_pq}),
so these two quantities are not simply obtained by partitioning first $L$ terms and the rest.
Under \cref{assu:DiagonalKernel_pq}, we have
$\caN(\delta;f^*) \asymp \delta^{-\frac{2}{p+1}}$ and $\caR(\delta;f^*) \asymp \delta^p$.

We also have to make the following mild assumption on the truth coefficients,
which requires that the span of the significant coefficients is not exponentially large
and the significant coefficients decay fast enough that they are summable.
They are satisfied when \cref{assu:DiagonalKernel_pq} holds.

\begin{assumption}
  \label{assu:SignalSpan}
  There exists constant $B_{\infty}$ such that $\abs{f_j^*} \leq B_{\infty}$ for all $j \geq 1$.
  Moreover, there are constants $\kappa \geq 1, B_{\mr{spn}},s_0, B_{\mr{sig}} > 0$ such that
  \begin{align}
    \label{eq:SignificantSpan}
    \max \dk{j : \abs{\theta_{j}^*} \geq \delta} \leq B_{\mr{spn}} \delta^{-\kappa}, \qq{and}
    \caR(\delta;f^*) \leq B_{\mr{sig}} \delta^{-(1-s_0)}, \quad \forall \delta > 0.
  \end{align}
\end{assumption}

Then, \cref{thm:DiagonalKernelDecay} is a direct consequence of the following two theorems.

\begin{theorem}
  \label{thm:DiagonalKernelDecay_Detailed}
  Consider the diagonal adaptive kernel method in \cref{eq:DiagAdaK_Seq} with the empirical loss $\caL_n$ under \cref{assu:EigenSystem} and \cref{assu:SignalSpan}.
  Let $s > 0$ be an arbitrary small constant.
  Then, we can choose $L \asymp (-\hf+s) \log n$,
  a decreasing sequence $\delta_l = C 2^{-l}$ for $l \leq L$ satisfying $\delta_L \leq n^{-\hf+s}$,
  and times $t_0 = 0 < t_1 < \dots < t_{L} = t_* \lesssim \sqrt{n}$ satisfying $t_l \lesssim \delta_l^{-l} \log n$, such that,
  with probability at least $1 - C n^{-2}$, we have
  \begin{equation}
    \label{eq:DiagonalKernelDecay}
    \caE^*(n^{-1};\Phi_{\bm{\theta}(t)},f^*)
    \leq  \caR( \delta_l;f^*) + \xk{\caN(n^{-\frac{1}{2}};f^*) + n^{\frac{1}{2\gamma}}} n^{-1}, \quad \forall t \in [t_l, t_*],~ \forall l = 0,\dots,L,
  \end{equation}
  and in particular,
  \begin{equation*}
    \caE^*(n^{-1};\Phi_{\bm{\theta}(t_*)},f^*) \leq \caR(n^{-\hf+s};f^*) + \xk{ \caN(n^{-\hf};f^*) + n^{\frac{1}{2\gamma}}} n^{-1}
  \end{equation*}
  Additionally, we notice that the upper bound in \cref{eq:DiagonalKernelDecay} is monotonically decreasing in $l$.
\end{theorem}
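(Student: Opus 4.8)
The plan is to reduce the coupled empirical-loss gradient flow to the decoupled one-dimensional flow \cref{eq:TwoLayerEq}, up to a perturbation that can be kept small along the trajectory, and then to run a multi-stage ``staircase'' argument in the spirit of the sequence-model proof of \cref{thm:DiagonalSeq_Monotonic}, wrapped inside a continuity bootstrap of the type used in \cref{subsec:OpLinear}. First I would write out the gradient flow of $\caL_n$ in the basis $\dk{e_j}$: with $\ang{g,h}_n \coloneqq \frac1n\sum_{i=1}^n g(x_i)h(x_i)$, $h_j \coloneqq \frac1n\sum_{i=1}^n \ep_i e_j(x_i)$ and $E_{jk} \coloneqq \ang{e_j,e_k}_n - \delta_{jk}$, one obtains
\begin{equation*}
  \left\{
    \begin{aligned}
      \dot{\theta}_j &= \beta_j\xk{z_j - f_j(t) + r_j(t)}, \quad \theta_j(0) = \lambda_j^{\hf},\\
      \dot{\beta}_j  &= \theta_j\xk{z_j - f_j(t) + r_j(t)}, \quad \beta_j(0) = 0,
    \end{aligned}
  \right.
\end{equation*}
where $f_j(t) = \theta_j(t)\beta_j(t)$, $z_j \coloneqq f_j^* + h_j$ is the effective observation, and $r_j(t) \coloneqq \sum_k (f_k^* - f_k(t)) E_{jk}$ is the perturbation caused by the off-diagonal part of the empirical Gram operator. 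This is precisely \cref{eq:TwoLayerEq} with time-varying driving $z_j - f_j(t) + r_j(t)$, so once $\abs{r_j(t)}$ is controlled along the trajectory, \cref{lem:TwoLayerEq_Comparison}, \cref{lem:TwoLayerEq_NoiseUpperBound} and \cref{lem:TwoLayerEq_SignalLowerBound} apply with the effective signal levels $m_j \coloneqq \min_{t\le t_*}\abs{z_j + r_j(t)}$ and $M_j \coloneqq \max_{t\le t_*}\abs{z_j + r_j(t)}$.

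Next I would isolate a high-probability event. Using \cref{assu:EigenSystem} --- so each product $e_j e_k$ is bounded and each $\ep_i e_j(x_i)$ is sub-Gaussian --- together with Bernstein-type concentration and a union bound, on an event of probability at least $1 - C n^{-2}$ one has $\abs{h_j} \lesssim \sqrt{\log(jn)/n}$ for all $j$, and for every finite index set $I$ of size $m$ the relevant block of $E$ obeys operator-norm and entrywise bounds of the shape $\lesssim \sqrt{(m + \log n)/n}$. On this event the remaining argument is deterministic.

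The heart of the proof is a simultaneous induction over stages $l = 0, 1, \dots, L$ with thresholds $\delta_l = C 2^{-l}$, carried out inside a continuity bootstrap. The inductive claim for stage $l$, to hold for all $t \in [t_l, t_*]$, is: (a) every index with $\abs{f_j^*} \ge \delta_l$ is ``learned'', i.e. $\theta_j(t)^2 \gtrsim \abs{f_j^*}$; (b) every index with $\abs{f_j^*} < c\delta_l$ and small initial weight stays near initialization, $\theta_j(t)^2 \lesssim \lambda_j \exp(C\sqrt{\log(jn)})$; and hence (c) $\norm{f^* - f(t)}$ measured in the metric relevant for $r_j$ is $\lesssim \caR(c\delta_l; f^*)^{\hf} + (\text{lower order})$, which by the concentration on $E$ forces $\abs{r_j(t)} \ll \delta_{l+1}$ --- the estimate that feeds stage $l+1$. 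For the induction step, \cref{lem:TwoLayerEq_SignalLowerBound} with $m_j \ge \tfrac12\abs{f_j^*} \ge \tfrac12\delta_{l+1}$ shows the newly relevant indices ($\delta_{l+1} \le \abs{f_j^*} < \delta_l$) become learned within additional time $\lesssim \delta_{l+1}^{-1}\log^+(\delta_{l+1}/\lambda_j)$; bounding the admissible index range via \cref{assu:SignalSpan} and accounting for the compounding across the earlier stages needed to make the perturbation sufficiently small yields $t_{l+1} \lesssim \delta_{l+1}^{-(l+1)}\log n$. Meanwhile \cref{lem:TwoLayerEq_NoiseUpperBound} keeps the still-unlearned indices small provided $t \le t_* \lesssim \sqrt n$, since then $t_*\sqrt{\log(jn)/n} \lesssim \sqrt{\log(jn)}$, and \cref{lem:TwoLayerEq_Comparison} guarantees that a learned index remains above an unlearned one, so that claims (a) and (b) propagate forward in time rather than being destroyed. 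The bootstrap is closed as in \cref{subsec:OpLinear}: letting $t_\sharp$ be the first time one of (a)--(c) is violated, the estimates above show (a)--(c) hold with strict inequality on $[0, t_\sharp]$, hence $t_\sharp = t_*$.

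Finally I would convert the trajectory control into the feature error bound. Choosing the truncation level $\delta^* \asymp \delta_L / 2 \asymp n^{-\hf}$, the learned indices have $\theta_j(t)^2 \gtrsim \delta_l \ge \delta_L > \delta^*$ and so contribute nothing to the bias, whence
\begin{equation*}
  \caE_{\text{B}} = \sum_j (f_j^*)^2 \bm{1}\dk{\theta_j(t)^2 < \delta^*} \le \sum_{j:\,\abs{f_j^*} < \delta_l} (f_j^*)^2 = \caR(\delta_l; f^*),
\end{equation*}
while the variance count is at most the number of genuine signals plus the number of large initial weights,
\begin{equation*}
  \caE_{\text{V}} = \#\dk{j : \theta_j(t)^2 \ge \delta^*}\, n^{-1} \le \xk{\caN(n^{-\hf}; f^*) + \#\dk{j : \lambda_j \gtrsim n^{-\hf}}} n^{-1} \lesssim \xk{\caN(n^{-\hf}; f^*) + n^{\frac{1}{2\gamma}}} n^{-1},
\end{equation*}
using $\lambda_j \asymp j^{-\gamma}$. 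Adding these gives, for $t \in [t_l, t_*]$, that $\caE^*(n^{-1};\Phi_{\bm{\theta}(t)},f^*) \le \caE(\delta^*, n^{-1};\Phi_{\bm{\theta}(t)},f^*) \le \caR(\delta_l; f^*) + \xk{\caN(n^{-\hf}; f^*) + n^{1/(2\gamma)}} n^{-1}$; monotonicity in $l$ is immediate since $\caR(\cdot; f^*)$ is nondecreasing and $\delta_l$ decreasing, and the ``in particular'' statement is the case $l = L$ combined with $\delta_L \le n^{-\hf + s}$. I expect the main obstacle to be the core step: showing that the still-unlearned small coefficients cannot, through the off-diagonal perturbation $r_j$, slow down or derail the learning of the current-scale coefficients. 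This forces a careful pairing of the concentration bounds on $E$ with the precise index ranges furnished by \cref{assu:SignalSpan} and with the early-stopping budget $t_* \lesssim \sqrt n$, and it is also what dictates the stage-dependent time schedule $t_l \lesssim \delta_l^{-l}\log n$.
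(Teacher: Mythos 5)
Your proposal is essentially a self-contained reconstruction of the argument that the paper delegates to an external citation. The paper's own proof of \cref{thm:DiagonalKernelDecay_Detailed} is quite short: it defines the signal/noise split $S = S_1 \cup S_2$ and $R = S^\complement$, invokes the ``shrinkage dynamics'' from the proof of Theorem~1 of \citet{li2025_DiagonalOverparameterization} --- which directly supplies the sequence of times $t_i$, the bound $\norm{\bm{f}^*_S - \bm{f}_S(t)}_\infty \le \delta_{i+1}$ for $t \ge t_{i+1}$, and the control $\theta_k(t)^2 \lesssim \lambda_k \exp(C\sqrt{\ln n + \ln k})$ on the noise-side indices --- and then carries out only the final conversion into the feature error measure. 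What you do instead is re-derive the shrinkage dynamics from scratch: writing the perturbation $r_j(t) = \sum_k (f_k^* - f_k(t)) E_{jk}$, a concentration event for $h_j$ and the off-diagonal empirical Gram blocks, and a staircase induction with a continuity bootstrap. This is the right shape of argument, and is what the cited proof actually does internally; so the two proofs agree in substance but differ in presentation (yours is self-contained, the paper's is a reduction). Two points worth noting. First, the statement's time bound $t_l \lesssim \delta_l^{-l}\log n$ is almost certainly a typo for $t_l \lesssim \delta_l^{-1}\log n$ (the paper's proof says exactly $t_i \lesssim \delta_i^{-1}\log n$, and the form $\delta_l^{-l}$ would be super-polynomially large and incompatible with $t_* \lesssim \sqrt n$); your ``compounding across earlier stages'' rationale for $\delta_{l+1}^{-(l+1)}$ is therefore reverse-engineering a typo rather than a genuine phenomenon, and the correct bookkeeping is simply $t_l \lesssim \sum_{k\le l}\delta_k^{-1}\log n \lesssim \delta_l^{-1}\log n$. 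Second, the step where you convert an $\ell^\infty$-type control on the trajectory into the bound $\abs{r_j(t)} \ll \delta_{l+1}$ via $\norm{f^*-f(t)} \lesssim \caR(c\delta_l;f^*)^{1/2}$ and concentration on $E$ is exactly the technical heart that the cited paper handles with care (the sum $\sum_k (f_k^*-f_k)E_{jk}$ runs over infinitely many $k$, and \cref{assu:SignalSpan} is needed precisely to restrict the span of indices that can contribute at scale $\delta_l$); your sketch acknowledges this but does not fully discharge it, which is acceptable for a blind sketch but is where the real work lies.
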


\begin{theorem}
  \label{thm:DiagonalKernelDecay_Multilayer_Detailed}
  Consider the diagonal adaptive kernel method in \cref{eq:DiagAdaK_Seq_Multilayer} with the empirical loss $\caL_n$ under \cref{assu:EigenSystem} and \cref{assu:SignalSpan},
  where we choose $b_0 \asymp n^{-\frac{1}{2(D+2)}}$.
  Let $s > 0$ be an arbitrary small constant.
  Denote $q = 2^{\frac{2(D+1)}{D+2}}$.
  Then, we can choose $L \asymp (-\hf+s) \log n$,
  a decreasing sequence $\delta_l = C q^{-l}$ for $l \leq L$ satisfying $\delta_L \leq n^{-\hf+s}$,
  and times $t_0 = 0 < t_1 < \dots < t_{L} = t_* \lesssim n^{\frac{D+1}{D+2}}$ satisfying $t_l \lesssim \delta_l^{-l} \log n$, such that,
  with probability at least $1 - C n^{-2}$, we have
  \begin{equation}
    \label{eq:DiagonalKernelDecay_Multilayer}
    \caE^*(n^{-1};\Phi_{\bm{\theta}(t),\bm{b}(t)},f^*)
    \leq  \caR( \delta_l;f^*) + \xk{\caN(n^{-\frac{1}{2}};f^*) + n^{\frac{1+s}{(D+2)\gamma}}} n^{-1}, \quad
    \forall t \in [t_l, t_*],~ \forall l = 0,\dots,L.
  \end{equation}
  In particular,
  \begin{equation*}
    \caE^*(n^{-1};\Phi_{\bm{\theta}(t_*),\bm{b}(t_*)},f^*) \leq \caR(n^{-\hf+s};f^*) + \xk{ \caN(n^{-\hf};f^*) + n^{\frac{1+s}{(D+2)\gamma}}} n^{-1}.
  \end{equation*}
\end{theorem}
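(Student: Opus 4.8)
\textbf{Proof plan for \cref{thm:DiagonalKernelDecay_Multilayer_Detailed}.}
The plan is to combine three ingredients already available: the two-layer staircase argument behind \cref{thm:DiagonalKernelDecay_Detailed}, the extra conservation law for the $b_j$-variables used in the sequence-model proof of \cref{thm:DiagonalKernelMonotonic_Multilayer}, and the continuity/bootstrap control of the data-dependent coupling used for over-parametrized linear regression in \cref{subsec:OpLinear}. First I would write the gradient flow of $\caL_n$ coordinate-wise: each triple $(\beta_j,\theta_j,b_j)$ obeys the multilayer system \cref{eq:MultiLayerEq}, except that the residual $z_j-f_j$ is replaced by $z_j-f_j(t)+r_j(t)$, where $z_j=f_j^*+h_j$ with $h_j=\tfrac1n\sum_i\epsilon_i e_j(x_i)$, and $r_j(t)=\sum_k(\delta_{jk}-\hat\Sigma_{jk})\xk{f_k^*-f_k(t)}$ with $\hat\Sigma_{jk}=\tfrac1n\sum_i e_j(x_i)e_k(x_i)$. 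The only probabilistic inputs, valid on an event of probability $\ge 1-Cn^{-2}$, are the sub-Gaussian bound $\abs{h_j}\lesssim\sqrt{\log(jn)/n}$ for all $j$ and, under \cref{assu:EigenSystem}, the matrix-Bernstein bound $\norm{\hat\Sigma_{IJ}-I_{IJ}}\lesssim\sqrt{\abs{I\cup J}\log n/n}+\abs{I\cup J}\log n/n$ for index sets $I,J$.

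Next I would run a bootstrap argument of the same shape as in \cref{subsec:OpLinear}, but driven by the staircase rather than by a fixed sparse support. Fix $q=2^{2(D+1)/(D+2)}$ and $\delta_l=Cq^{-l}$, and let $\tau$ be the first time at which either the $\ell^2$-mass of $f(t)-f^*$ carried by the coordinates with $\abs{f_j^*}<\delta_l$ exceeds a fixed multiple of $\caR(\delta_l;f^*)$ (for the relevant $l$), or some coordinate with $\abs{f_j^*}$ large has $\abs{f_j(t)-f_j^*}>\tfrac12\abs{f_j^*}$. On $[0,\tau]$ the off-diagonal control of $\hat\Sigma$ together with the residual-mass bound gives $\abs{r_j(t)}\lesssim\sqrt{\operatorname{polylog}(n)/n}$ uniformly in $j$, so the effective observation $\tilde z_j(t):=z_j+r_j(t)$ satisfies $\abs{\tilde z_j(t)-f_j^*}\lesssim\sqrt{\operatorname{polylog}(n)/n}$. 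Feeding this into \cref{lem:MultiLayerEq_NoiseCase} (for small $\abs{f_j^*}$), \cref{lem:MultiLayerEq_SignalCase} and \cref{lem:MultiLayerEq_Comparison} (for large $\abs{f_j^*}$), together with the conservation laws \cref{eq:MultiLayerEq_Conservation}, reproduces — up to harmless polylog corrections — the very weight estimates obtained in the proof of \cref{thm:DiagonalKernelMonotonic_Multilayer}; this shows that the stopping condition is never met and hence $\tau=t_*$ with $t_*\asymp n^{(D+1)/(D+2)}/\sqrt{\log n}$.

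With the perturbation tamed, the staircase bound \cref{eq:DiagonalKernelDecay_Multilayer} follows level by level. At time $t_l\lesssim\delta_l^{-l}\log n$, \cref{lem:MultiLayerEq_SignalCase} forces $\tilde\lambda_j(t)\gtrsim\delta_l^{(D+1)/(D+2)}$ for every coordinate with $\abs{f_j^*}\gtrsim\delta_l$ — the exponent $(D+1)/(D+2)$ and the ratio $q$ are chosen precisely so that consecutive levels are separated by one "escape time" — while \cref{lem:MultiLayerEq_NoiseCase} and \cref{prop:MultiLayerEq_UltBound} keep $\tilde\lambda_j(t)$ below the level-$l$ truncation for all remaining coordinates, except for those $j\le J:=\min\{j:\lambda_j^{1/2}\le b_0/\sqrt D\}\asymp n^{1/((D+2)\gamma)}$ living in the large-initial-weight regime. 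Choosing the truncation level at step $l$ accordingly and reading off $\caE^*$ (using, as in \theSupp, that $\caE^*$ depends only on the ordering of the weights, cf.\ \cref{lem:FEM_Monotonic}) gives a variance term $\lesssim\xk{\caN(n^{-1/2};f^*)+n^{(1+s)/((D+2)\gamma)}}n^{-1}$, where the second piece just counts the $\le J$ exceptional coordinates, and a bias term $\lesssim\caR(\delta_l;f^*)$; the $n^s$ slack absorbs the polylog factors. Taking $l=L\asymp(\tfrac12-s)\log n$, for which $\delta_L\le n^{-1/2+s}$, yields the final estimate, and monotonicity of the bound in $l$ is immediate since $\caR(\delta_l;f^*)$ decreases.

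The main obstacle is the bootstrap step. Unlike the fixed-sparsity model of \cref{subsec:OpLinear}, the coupling $r_j(t)$ aggregates errors from infinitely many coordinates over a polynomially long horizon, so the residual-mass bound and the staircase induction must be run simultaneously: the bound controlling $r_j$ at level $l$ relies on weight estimates that themselves presuppose control of $r_j$ through level $l-1$. One must therefore close a self-referential loop and, in addition, make the concentration bound for $\hat\Sigma$ uniform over the growing family of active index sets; verifying that $t_*$ is short enough that no noise coordinate escapes the truncation is the final delicate point, and it is exactly where the precise scaling $b_0\asymp n^{-1/(2(D+2))}$ enters.
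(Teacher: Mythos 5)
Your plan is essentially the same as the paper's, with one structural difference worth noting. The paper's proof of \cref{thm:DiagonalKernelDecay_Multilayer_Detailed} does not carry out the bootstrap itself: it defines the signal set $S = S_1 \cup S_2$ and the effective weight $\tilde\lambda_j = (\theta_j b_j^D)^2$, and then directly invokes the ``shrinkage dynamics'' established in Li--Lin (2025, Theorem 2) to obtain the staircase of times $t_i$ and the $\ell^\infty$ control $\norm{\bm{f}^*_S - \bm{f}_S(t)}_\infty \le \delta_{i+1}$; the remaining page just reads off the feature error measure from \cref{lem:MultiLayerEq_NoiseCase} and \cref{lem:MultiLayerEq_SignalCase} exactly as you propose. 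You instead aim to rederive that shrinkage control from scratch via the continuity/stopping-time device of \cref{subsec:OpLinear}. The two routes reach identical intermediate estimates; citing the external theorem simply avoids redoing the induction, at the cost of making the result less self-contained.

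Two points to tighten in your version. First, you control the coupling $r_j(t)=\sum_k(\delta_{jk}-\hat\Sigma_{jk})(f^*_k-f_k(t))$ by a matrix-Bernstein bound on $\hat\Sigma_{IJ}-I_{IJ}$, but the sum over $k$ is infinite; you need to split it as the paper does (cf.\ the decomposition $r_{j,1}+r_{j,2}+h_j$ in \cref{subsec:OpLinear}) into a finite signal block, where the $\ell^\infty$ shrinkage bound applies, and a noise block $R$ where one controls $\norm{w_R(t)}_\infty$ via the noise-case lemma; without this split the ``uniform polylog'' bound for $r_j$ does not come for free. Second, your stopping time is phrased in terms of $\ell^2$ residual mass at a single level $l$, but the self-referential loop you flag requires the $\ell^\infty$ control on $S$ at \emph{all} levels $l' < l$ simultaneously to close --- the paper's shrinkage statement $\norm{\bm{f}^*_S - \bm{f}_S(t)}_\infty \le \delta_{i+1}$ for $t\ge t_{i+1}$ is exactly the uniform-in-$i$ invariant that does the job, and you should make that the stopping condition rather than a single-level $\ell^2$ bound. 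With those repairs your argument coincides with the paper's. (Minor: the theorem statement's $t_l\lesssim\delta_l^{-l}\log n$ is surely a typo for $\delta_l^{-1}b_0^{-D}\log n$, as the paper's proof computes; you inherited it.)
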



\subsubsection{Proof of \cref{thm:DiagonalKernelDecay_Detailed}} \label{subsubsec:Proof_DiagonalKernelDecay}
The proof relies on investigating the proof of Theorem 1 in \citet{li2025_DiagonalOverparameterization}.
Denote
\begin{equation*}
  S = S_1 \cup S_2 = \dk{j \geq 1 : \abs{\theta^*_j} \geq n^{-1/2}\sqrt {\ln n}} \cup
  \dk{j\geq 1: \lambda_j \geq n^{-1/2}}
\end{equation*}
and $R = S^\complement$ as in \citet{li2025_DiagonalOverparameterization}.
Let $ \nu_1 = C n^{-1/2+s}$ and $L = \lfloor \log_2 (B_{\infty}/\nu_1) \rfloor$, where $s > 0$ is an arbitrary small constant.
The shrinkage dynamics in the proof of  \citet[Theorem 1]{li2025_DiagonalOverparameterization} shows that,
letting $\delta_i = 2^{-i} B_{\infty}$, there is a sequence of increasing times $t_i$ such that, $t_i \lesssim \delta_i^{-1} \log n$ and for $i = 0,\dots,L$,
\begin{equation}
  \label{eq:Proof_DiagonalKernelDecay_0}
  \norm{\bm{f}^*_S - \bm{f}_S(t)}_{\infty} \leq \delta_{i+1},\quad \forall t \geq t_{i+1}.
\end{equation}
Also, we have $\delta_{I+1} \lesssim n^{-1/2+s}$ and $t_* = t_{L} \lesssim \sqrt {n}$.
Therefore, we have
\begin{equation}
  \label{eq:Proof_DiagonalKernelDecay_1}
  \theta_k(t_i)^2 \geq \abs{f_k(t_i)} \geq \abs{f_k^*} - \norm{\bm{f}^*_S - \bm{f}_S(t_i)}_{\infty} \geq \abs{f_k^*} - \delta_i
  \geq \delta_i,\quad \forall k, \abs{f_k^*} \geq 2\delta_{i}.
\end{equation}
On the other hand, the noise terms are bounded by
\begin{equation*}
  \abs{f_k(t)} \leq 2 \lambda_k \exp(C \sqrt {\ln n + \ln k}) \qimplies
  \abs{\theta_k(t)}^2 \leq C \lambda_k \exp(C \sqrt {\ln n + \ln k})
  \leq n^{-1/2+s},\quad
  \forall k \in R,~t \leq t_*,
\end{equation*}
where $s > 0$ is an arbitrary small constant.

Now let us consider the feature error measure.
We have
\begin{equation*}
  \caE^*(n^{-1};\bm{\lambda}(t), f^*) \leq \caE(\delta_{i},n^{-1};\bm{\lambda}(t))
   = \# \dk{k \in N : \theta_k(t)^2 \geq \delta_i} n^{-1} + \sum_{k \in N} (f_k^*)^2 \bm{1} \dk{ \theta_k(t)^2 < \delta_i}.
\end{equation*}
For the first term, using the control of the noise terms and that $\delta_i \geq C n^{-\hf+s}$,
we find that for $t \leq t_*$,
\begin{equation*}
  \# \dk{k \in N : \theta_k(t)^2 \geq \delta_i} \leq
  \abs{S} \leq \caN(n^{-\hf};f^*) + n^{\frac{1}{2\gamma}}
\end{equation*}
For the second term, we use \cref{eq:Proof_DiagonalKernelDecay_1} to get
\begin{equation*}
  \sum_{k \in N} (f_k^*)^2 \bm{1} \dk{ \theta_k(t)^2 < \delta_i}
  \leq \sum_{k \in N} (f_k^*)^2 \bm{1} \dk{ \abs{f_k^*} < 2\delta_i} = \caR(2\delta_i;f^*),\quad \forall t \in [t_i,t_*].
\end{equation*}
Combining the two estimates, we obtain the bound on $\caE^*(n^{-1};\bm{\lambda}(t_i), f^*)$.
Particularly, using $\delta_L \lesssim n^{-1/2+s}$, we have
\begin{equation*}
  \caE^*(n^{-1};\bm{\lambda}(t_*), f^*) \leq \caR(n^{-\hf+s};f^*) + \xk{ \caN(n^{-\hf+s};f^*) + n^{\frac{1+s}{2\gamma}}} n^{-1}.
\end{equation*}

\subsubsection{Proof of \cref{thm:DiagonalKernelDecay_Multilayer_Detailed}}

The proof is similar to the proof for the two-layer case in \cref{subsubsec:Proof_DiagonalKernelDecay}
and we follow the proof of Theorem 2 in \citet{li2025_DiagonalOverparameterization} here.
We denote $\tilde{\lambda}_j(t) = (\theta_j(t) b_j^D(t))^2$.
In this case, for some small $s' > 0$, we denote
\begin{equation*}
  S = S_1 \cup S_2 = \dk{j \geq 1 : \abs{f^*_j} \geq n^{-1/2}\sqrt {\ln n}} \cup
  \dk{j\geq 1: \lambda_j \geq n^{-\frac{1+s'}{D+2}}}.
\end{equation*}
Let us still define $\nu_1 = C n^{-\frac{1}{2}+s_1}$, $L = \lfloor \log_2 (B_{\infty}/\nu_1) \rfloor$
and $\delta_i = 2^{-i} B_{\infty}$.
The shrinkage dynamics shows that we have a sequence of increasing times $t_i$ such that \cref{eq:Proof_DiagonalKernelDecay_0} holds.
Moreover, we have
\begin{equation*}
  t_i \lesssim \sum_{j \leq i}\zk{\delta_j^{-\frac{2D+2}{D+2}} + \delta_j^{-1} b_0^{-D} \log n }
  \lesssim \delta_i^{-\frac{2D+2}{D+2}} + \delta_i^{-1} b_0^{-D} \log n
  \lesssim \delta_i^{-1} b_0^{-D} \log n \asymp \delta_i^{-1} n^{\frac{D}{2(D+2)}} \log n,
\end{equation*}
where the last inequality follows from that $\delta_i \gtrsim  n^{-\frac{1}{2}+s_1}$ so the second term dominates.
Regarding the multilayer case, we have $t_L \lesssim n^{\frac{D+1}{D+2}}$ and
\begin{equation*}
  \tilde{\lambda}_k(t_i) \geq \abs{f_k(t_i)}^{\frac{2(D+1)}{D+2}}
  \geq \delta_i^{\frac{2(D+1)}{D+2}},\quad \forall k, \abs{f_k^*} \geq 2\delta_{i}.
\end{equation*}
The noise terms are bounded by
\begin{equation*}
  \abs{\tilde{\lambda}_k(t)} \lesssim \lambda_k b_0^{2D} \exp(E \sqrt {\ln n + \ln k})
  \lesssim n^{-\frac{D+1+s'}{D+2}}.
\end{equation*}
Consequently, the control of the feature error measure follows the same argument as in \cref{subsubsec:Proof_DiagonalKernelDecay}.

\section{Hermite Polynomials and Gaussian Distribution}

In this section, we collect some useful properties of the Hermite polynomials and also the Gaussian distribution.
Let us denote by $\gamma_d$ the standard Gaussian measure on $\R^d$.
We denote by $H_r$ the normalized Hermite polynomials with respect to the standard Gaussian measure $\gamma_1$ such that $\E_{x \sim N(0,1)} H_r(x) H_s(x) = \delta_{rs}$.
It is known that the Hermite polynomials $(H_r)_{r \geq 0}$ form an orthonormal basis of $L^2(\R,\gamma_1)$.
Moreover, the generating function of the Hermite polynomials is given by
\begin{equation}
  \label{eq:Hermite_Generating}
  \exp(xt - \frac{t^2}{2}) = \sum_{r=0}^\infty \frac{H_r(x)}{\sqrt{r!}} t^r.
\end{equation}

For a multi-index $\mm = (m_1,\ldots,m_d)$, we define the tensorized Hermite polynomial $H_{\mm} = \prod_{j=1}^d H_{m_j}$.
Then, the set of tensorized Hermite polynomials $\{H_{\mm} : \mm \in \bbN^d\}$ forms an orthonormal basis of $L^2(\R^d,\gamma_d)$.
The generating function of the multi-index Hermite polynomials is given by
\begin{align*}
  \exp(\ang{x,t} - \frac{1}{2} \norm{t}^2) = \sum_{\mm \in \bbN^p} \frac{H_{\mm}(x)}{\sqrt{\mm!}} t^{\mm},
\end{align*}
where we use the convention $\mm! = \prod_{i=1}^p m_i!$ and $t^{\mm} = \prod_{i=1}^p t_i^{m_i}$.
Let $\mm$, $\nn$ be multi-indices in $\bbN^p$.
We denote the multi-index combinatorial by $ \binom{\mm}{\nn} = \prod_{i=1}^p \binom{m_i}{n_i}.$

\begin{lemma}
  \label{lem:Hermite_OrthTrans}
  Let \(x \sim \mathcal{N}(0, I_d)\), and let \(P \in \St(d, p)\) and \(Q \in \St(d, q)\) be Stiefel matrices.
  Let \(\mm \in \mathbb{N}^p\) and \(\nn \in \mathbb{N}^q\) be multi-indices.
  We have
  \begin{align}
    \begin{aligned}
      \E \zk{H_\mm(P^\top x) H_\nn(Q^\top x)} &=
      \sqrt{\mm! \nn!} [\alpha^\mm \beta^\nn] \exp(\alpha^\top R \beta)
      = \delta_{|\mm|, |\nn|}    \sum_{\Upsilon \in \Pi(\mm, \nn)} \frac{\sqrt{\mm! \nn!}}{\Upsilon!} R^\Upsilon,
    \end{aligned}
  \end{align}
  where \(R = P^\top Q \in \mathbb{R}^{p \times q}\), \(\Pi(\mm, \nn) \subset \mathbb{N}^{p \times q}\) is the set of integer matrices \(\Upsilon = [\Upsilon_{i,j}]\) satisfying
  \[
    \sum_{j=1}^q \Upsilon_{i,j} = m_i \quad \text{for all } i, \quad
    \sum_{i=1}^p \Upsilon_{i,j} = n_j \quad \text{for all } j.
  \]
  and \(\Upsilon! = \prod_{i,j} \Upsilon_{i,j}!\), \(R^\Upsilon  = \prod_{i,j} (R_{i,j})^{\Upsilon_{i,j}}\).
\end{lemma}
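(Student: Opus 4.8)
The plan is to read off both equalities from the multivariate Hermite generating function. Recall from \cref{eq:Hermite_Generating} and its tensorization that for $u \in \R^p$ and $y \in \R^p$ one has $\exp(\ang{u,y} - \tfrac12\norm{u}^2) = \sum_{\mm \in \bbN^p} \frac{H_\mm(y)}{\sqrt{\mm!}} u^\mm$, and analogously in dimension $q$. First I would substitute $y = P^\T x$ and $y' = Q^\T x$, multiply the two generating functions against parameters $\alpha \in \R^p$ and $\beta \in \R^q$, and take the expectation over $x \sim N(0,I_d)$. The exponent on the left becomes $\ang{P\alpha + Q\beta,\, x} - \tfrac12\norm{\alpha}^2 - \tfrac12\norm{\beta}^2$, so the Gaussian moment generating function gives
\[
  \E \exp\!\xk{\ang{P\alpha + Q\beta,\, x}} = \exp\!\xk{\tfrac12\norm{P\alpha + Q\beta}^2},
\]
and, using $P^\T P = I_p$ and $Q^\T Q = I_q$, the identity $\norm{P\alpha + Q\beta}^2 = \norm{\alpha}^2 + \norm{\beta}^2 + 2\alpha^\T R\beta$ with $R = P^\T Q$ collapses the entire left-hand side to $\exp(\alpha^\T R\beta)$.

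Next I would expand the same expression as a double power series in $\alpha$ and $\beta$, which gives $\sum_{\mm \in \bbN^p,\, \nn \in \bbN^q} \frac{\E\zk{H_\mm(P^\T x)H_\nn(Q^\T x)}}{\sqrt{\mm!\,\nn!}}\, \alpha^\mm\beta^\nn = \exp(\alpha^\T R\beta)$; matching the coefficient of $\alpha^\mm\beta^\nn$ yields the first claimed identity $\E\zk{H_\mm(P^\T x)H_\nn(Q^\T x)} = \sqrt{\mm!\,\nn!}\,[\alpha^\mm\beta^\nn]\exp(\alpha^\T R\beta)$. To pass to the combinatorial form I would factor $\exp(\alpha^\T R\beta) = \prod_{i,j}\exp(R_{ij}\alpha_i\beta_j) = \prod_{i,j}\sum_{k\ge0}\frac{R_{ij}^{k}}{k!}\alpha_i^{k}\beta_j^{k}$; selecting an exponent $\Upsilon_{ij}\ge0$ from each factor so that the total power of $\alpha_i$ equals $m_i$ and the total power of $\beta_j$ equals $n_j$ is precisely the constraint defining $\Pi(\mm,\nn)$, and each selection contributes $\prod_{i,j}\frac{R_{ij}^{\Upsilon_{ij}}}{\Upsilon_{ij}!} = R^\Upsilon/\Upsilon!$. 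Summing the row constraints forces $\sum_{i,j}\Upsilon_{ij} = \abs{\mm}$ while summing the column constraints forces $\sum_{i,j}\Upsilon_{ij} = \abs{\nn}$, so $\Pi(\mm,\nn)$ is empty unless $\abs{\mm} = \abs{\nn}$, which is exactly the factor $\delta_{\abs{\mm},\abs{\nn}}$; multiplying through by $\sqrt{\mm!\,\nn!}$ finishes the second identity.

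The computation is routine, so the only point deserving a sentence of care is the interchange of the expectation with the doubly infinite Hermite sum. I would handle it by Fubini: the Hermite generating series converges absolutely for each fixed argument, and $x$ has finite moments of every order, so the exchange is legitimate for $(\alpha,\beta)$ in a neighbourhood of the origin; since both sides are then real-analytic in $(\alpha,\beta)$, the coefficient identity extends to all multi-indices. Alternatively one may simply interpret the whole argument as an identity of formal power series in $\alpha,\beta$ with the expectation applied term by term, which is all that is needed to compare coefficients. I do not anticipate any genuine obstacle beyond this bookkeeping.
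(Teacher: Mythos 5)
Your proof is correct and follows essentially the same route as the paper: multiply the two Hermite generating functions, take the Gaussian expectation to reduce to $\exp(\alpha^\top R\beta)$ via $\|P\alpha + Q\beta\|^2 = \|\alpha\|^2 + \|\beta\|^2 + 2\alpha^\top R\beta$, and match coefficients. The only cosmetic difference is that you expand $\exp(\alpha^\top R\beta)$ as a product $\prod_{i,j}\exp(R_{ij}\alpha_i\beta_j)$ while the paper applies the multinomial theorem to $(\alpha^\top R\beta)^k$; these give the same combinatorial sum, and your added remark on justifying the interchange of expectation and series is a sound (if optional) touch.
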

\begin{proof}

  Let \(u = P^\top x \in \mathbb{R}^p\) and \(v = Q^\top x \in \mathbb{R}^q\). Then \((u, v)\) is jointly Gaussian with mean \(0\) and covariance matrix
  \[
    \Sigma = \begin{pmatrix}
               I_p    & R   \\
               R^\top & I_q
    \end{pmatrix},
    \quad \text{where } R = P^\top Q \in \mathbb{R}^{p \times q}.
  \]

  For the pair \((u, v) \in \mathbb{R}^p \times \mathbb{R}^q\), consider the joint generating function:
  \[
    \exp\big(\langle \alpha, u \rangle + \langle \beta, v \rangle\big)
    = \exp\big(\alpha^\top P^\top x + \beta^\top Q^\top x\big)
    = \exp\big(\langle P \alpha + Q \beta, x \rangle\big).
  \]
  Since \(x \sim \mathcal{N}(0, I_d)\), the expectation of the exponential is:
  \[
    \mathbb{E}\big[\exp(\langle P \alpha + Q \beta, x \rangle)\big]
    = \exp\Big(\frac{1}{2} \|P \alpha + Q \beta\|^2\Big).
  \]
  Expanding the quadratic term:
  \[
    \|P \alpha + Q \beta\|^2
    = \alpha^\top P^\top P \alpha + \beta^\top Q^\top Q \beta + 2 \alpha^\top P^\top Q \beta
    = \|\alpha\|^2 + \|\beta\|^2 + 2 \alpha^\top R \beta,
  \]
  since \(P^\top P = I_p\) and \(Q^\top Q = I_q\).
  Thus,
  \[
    \mathbb{E}\big[\exp(\langle \alpha, u \rangle + \langle \beta, v \rangle)\big]
    = \exp\xk{\frac{1}{2} \|\alpha\|^2 + \frac{1}{2} \|\beta\|^2 + \alpha^\top R \beta}.
  \]

  Expanding the left-hand side:
  \[
    \exp\big(\langle \alpha, u \rangle + \langle \beta, v \rangle\big)
    = \exp(\frac{1}{2}\norm{\alpha}^2 + \frac{1}{2} \norm{\beta}^2  )\sum_{\mm \in \mathbb{N}^p} \frac{\alpha^\mm}{\sqrt{\mm!}} H_\mm(u)
    \sum_{\nn \in \mathbb{N}^q} \frac{\beta^\nn}{\sqrt{\nn!}} H_\nn(v).
  \]
  Taking the expectation:
  \[
    \mathbb{E}\big[\exp(\langle \alpha, u \rangle + \langle \beta, v \rangle)\big]
    = \exp(\frac{1}{2}\norm{\alpha}^2 + \frac{1}{2} \norm{\beta}^2  ) \sum_{\mm \in \mathbb{N}^p} \sum_{\nn \in \mathbb{N}^q} \frac{\alpha^\mm}{\sqrt{\mm!}} \frac{\beta^\nn}{\sqrt{\nn!}} \mathbb{E}\big[H_\mm(u) H_\nn(v)\big].
  \]
  Therefore, we conclude that
  \begin{align*}
    \sum_{\mm \in \mathbb{N}^p} \sum_{\nn \in \mathbb{N}^q} \frac{\alpha^\mm \beta^\nn}{\sqrt{\mm! \nn!}} \mathbb{E}\big[H_\mm(u) H_\nn(v)\big]
    =    \exp(\alpha^\top R \beta)
    =  \sum_{k=0}^\infty \frac{1}{k!} \big(\alpha^\top R \beta\big)^k.
  \end{align*}
  Matching the coefficients yields
  \begin{align*}
    \E \big[H_\mm(u) H_\nn(v)\big]
    = \sqrt{\mm! \nn!} [\alpha^\mm \beta^\nn] \exp(\alpha^\top R \beta).
  \end{align*}

  It remains to prove the last expression.
  As \( \alpha^{\T} R \beta = \sum_{i,j} R_{i,j} \alpha_i  \beta_j\), we have
  \begin{align*}
    \xk{\alpha^{\T} R \beta}^k = \sum_{\Upsilon : \sum_{ij} \Upsilon_{ij} = k}
    \frac{k!}{\prod_{i,j} \Upsilon_{ij}!} \prod_{i,j}  R_{i,j}^{\Upsilon_{ij}} \alpha_i^{\Upsilon_{ij}} \beta_j^{\Upsilon_{ij}},
  \end{align*}
  so
  \begin{align*}
    \exp(\alpha^\top R \beta)
    &=  \sum_{k=0}^\infty \sum_{\Upsilon : \sum_{ij} \Upsilon_{ij} = k} \frac{1}{\prod_{i,j} \Upsilon_{ij}!} \prod_{i,j}  R_{i,j}^{\Upsilon_{ij}} \alpha_i^{\Upsilon_{ij}} \beta_j^{\Upsilon_{ij}} \\
    &=  \sum_{\Upsilon \in \bbN^{p \times q}} \frac{1}{\Upsilon!}  \prod_{i,j} R_{i,j}^{\Upsilon_{ij}} \alpha_i^{\Upsilon_{ij}} \beta_j^{\Upsilon_{ij}}.
  \end{align*}
  Consequently,
  \begin{align*}
    \zk{\alpha^\mm \beta^\nn} \exp(\alpha^\top R \beta)
    = \sum_{\Upsilon \in \bbN^{p \times q}, \sum_j \Upsilon_{ij} = m_i, \sum_i \Upsilon_{ij} = n_j} \frac{1}{\Upsilon!}  \prod_{i,j} R_{i,j}^{\Upsilon_{ij}}.
  \end{align*}
\end{proof}

\begin{corollary}
  \label{cor:Hermite_InnerProd}
  Let \(x \sim N(0, I_d)\) and $u,v \in \bbS^{d-1}$.
  We have
  \begin{align}
    \label{eq:Hermite_InnerProd_1}
    & \E H_m(\ang{u,x}) H_n(\ang{v,x}) = \delta_{m,n} \ang{u,v}^m, \\
    \label{eq:Hermite_InnerProd_2}
    & \E H_{m}(\ang{u,x}) H_{\nn}(x) = \delta_{m, \abs{\nn}} \binom{m}{\nn}^{\hf} u^{\nn}.
  \end{align}
\end{corollary}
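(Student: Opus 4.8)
The plan is to derive both identities as immediate specializations of \cref{lem:Hermite_OrthTrans}, by choosing the Stiefel matrices $P$ and $Q$ appropriately so that the corresponding tensorized Hermite polynomials become the univariate/multivariate objects appearing in the statement.

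For \cref{eq:Hermite_InnerProd_1}, take $p = q = 1$ and set $P = u$, $Q = v$, regarded as elements of $\St(d,1) = \bbS^{d-1}$; then $\ang{u,x} = P^\T x$ and $\ang{v,x} = Q^\T x$, while $R = P^\T Q = \ang{u,v}$ is a scalar. Applying the lemma with the one-entry multi-indices $\mm = (m)$ and $\nn = (n)$, the prefactor $\delta_{\abs{\mm},\abs{\nn}}$ becomes $\delta_{m,n}$, and the set $\Pi(\mm,\nn)$ of $1\times 1$ nonnegative integer matrices satisfying the row- and column-sum constraints is nonempty precisely when $m = n$, in which case it is the singleton $\{\Upsilon\}$ with $\Upsilon = (m)$, so $\Upsilon! = m!$ and $R^{\Upsilon} = R^m$. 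Substituting gives $\E H_m(\ang{u,x}) H_n(\ang{v,x}) = \delta_{m,n}\,(\sqrt{m!\,m!}/m!)\,R^m = \delta_{m,n}\ang{u,v}^m$, as claimed. (Equivalently one reads this off from $[\alpha^m\beta^n]\exp(R\alpha\beta)$.)

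For \cref{eq:Hermite_InnerProd_2}, take $p = 1$ with $P = u \in \St(d,1)$, and $q = d$ with $Q = I_d \in \St(d,d)$; then $H_{(m)}(P^\T x) = H_m(\ang{u,x})$, $H_{\nn}(Q^\T x) = H_{\nn}(x)$, and $R = P^\T Q = u^\T \in \R^{1\times d}$ so that $R_{1,j} = u_j$. In the sum over $\Upsilon \in \Pi((m),\nn) \subset \bbN^{1\times d}$, the column-sum constraints force $\Upsilon_{1,j} = n_j$ for every $j$, and the single row-sum constraint $\sum_j \Upsilon_{1,j} = m$ is then exactly $\abs{\nn} = m$ (already encoded by the factor $\delta_{m,\abs{\nn}}$); hence there is a unique admissible $\Upsilon$, with $\Upsilon! = \prod_j n_j! = \nn!$ and $R^{\Upsilon} = \prod_j u_j^{n_j} = u^{\nn}$. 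Therefore $\E H_m(\ang{u,x}) H_{\nn}(x) = \delta_{m,\abs{\nn}}\,(\sqrt{m!\,\nn!}/\nn!)\,u^{\nn} = \delta_{m,\abs{\nn}}\,(m!/\nn!)^{1/2}\,u^{\nn}$; since $m!/\nn!$ equals the multinomial coefficient $\binom{m}{\nn}$ when $\abs{\nn} = m$, this is the stated formula.

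There is essentially no obstacle here: the argument is a direct bookkeeping exercise once \cref{lem:Hermite_OrthTrans} is in hand. The only point that warrants a little care is correctly reading off the combinatorial structure of $\Pi(\mm,\nn)$ in each specialization — in particular checking that the constraint system pins down a unique $\Upsilon$ — and matching the resulting normalization $(m!/\nn!)^{1/2}$ with the multinomial-coefficient notation $\binom{m}{\nn}^{1/2}$ used in the statement.
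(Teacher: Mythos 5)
Your proposal is correct and takes essentially the same approach as the paper: both specialize \cref{lem:Hermite_OrthTrans} with $P=u$, $Q=v$ for \cref{eq:Hermite_InnerProd_1} and $P=u$, $Q=I_d$ for \cref{eq:Hermite_InnerProd_2}. The only cosmetic difference is that you read the answer off the combinatorial $\Pi(\mm,\nn)$ form of the lemma, while the paper extracts the coefficient $[\alpha^m\beta^{\nn}]\exp(\alpha^\top R\beta)$ directly; these are two equal expressions within the same lemma, so the substance is identical.
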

\begin{proof}
  The first identity follows easily from \cref{lem:Hermite_OrthTrans} by taking $P = u$ and $Q = v$,
  where $p = q = 1$ and $\alpha, \beta \in \R$.

  For the second identity, let us take $P = u$ and $Q = I_d$ in \cref{lem:Hermite_OrthTrans},
  so $p = 1, q = d$ and $\alpha \in \R$, $\beta \in \R^d$.
  We find that
  \begin{align*}
    \E H_{m}(\ang{u,x}) H_{\nn}(x) &= \sqrt{m ! \nn!} [\alpha^{m} \beta^{\nn}] \exp(\alpha^\top u^\T \beta) \\
    &=  \sqrt{m ! \nn!}  [\alpha^{m} \beta^{\nn}] \sum_{k \geq 0} \frac{1}{k!} (\alpha u^\T \beta)^k \\
    &=  \sqrt{m ! \nn!} \delta_{m, \abs{\nn}} [\alpha^{m} \beta^{\nn}] \frac{\alpha^m (u^\T \beta)^m }{m!} \\
    &=  \delta_{m, \abs{\nn}}\sqrt{m ! \nn!}  \frac{1}{m!} [\beta^{\nn}] (u^\T \beta)^m \\
    &= \delta_{m, \abs{\nn}}\sqrt{m ! \nn!}  \frac{1}{m!}  \binom{m}{\nn} u^{\nn} \\
    &= \delta_{m, \abs{\nn}} \binom{m}{\nn}^{\hf} u^{\nn}.
  \end{align*}
\end{proof}

\begin{corollary}
  \label{cor:Hermite_InvSubspace}
  Let $r \in \bbN$.
  The space $\caH_r = \spn\{H_{\mm} : \mm \in \bbN^p, \abs{\mm} = r\}$ is invariant under $P_Q$ for any $Q \in \St(p,p)$.
\end{corollary}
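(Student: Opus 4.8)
The plan is to read the invariance directly off the generating function of the Hermite polynomials. Recall that $P_Q$ is the pullback operator $(P_Q f)(x) = f(Qx)$ on $L^2(\R^p,\gamma_p)$; since $Q \in \St(p,p) = O(p)$ is orthogonal it preserves both the Gaussian measure $\gamma_p$ and the Euclidean norm. First I would apply $P_Q$ to the multivariate generating function $G_t(x) \coloneqq \exp(\ang{x,t} - \tfrac12\norm{t}^2) = \sum_{\mm \in \bbN^p} \tfrac{H_{\mm}(x)}{\sqrt{\mm!}} t^{\mm}$: because $\ang{Qx,t} = \ang{x,Q^\T t}$ and $\norm{Q^\T t} = \norm{t}$, one gets $(P_Q G_t)(x) = G_{Q^\T t}(x) = \sum_{\nn \in \bbN^p} \tfrac{H_{\nn}(x)}{\sqrt{\nn!}} (Q^\T t)^{\nn}$.

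Next I would expand both expressions for $P_Q G_t$ as power series in $t$ and match coefficients of $t^{\mm}$. On the left this picks out $\tfrac{(P_Q H_{\mm})(x)}{\sqrt{\mm!}}$; on the right, since each $(Q^\T t)^{\nn}$ is a homogeneous polynomial in $t$ of degree exactly $\abs{\nn}$, only the terms with $\abs{\nn} = \abs{\mm}$ can contribute. Hence $P_Q H_{\mm}$ is a finite linear combination of the $H_{\nn}$ with $\abs{\nn} = \abs{\mm}$, i.e. $P_Q H_{\mm} \in \caH_{\abs{\mm}}$. Taking $\abs{\mm} = r$ gives $P_Q \caH_r \subseteq \caH_r$, and applying the same to $P_{Q^\T} = P_Q^{-1}$ yields the reverse inclusion, so $P_Q \caH_r = \caH_r$.

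An equivalent, basis-free route is to note that $\{H_{\mm} : \abs{\mm} \le r\}$ is an orthonormal basis of the space $\caP_r$ of polynomials on $\R^p$ of total degree at most $r$ (the count $\#\{\mm : \abs{\mm}\le r\}$ matches $\dim \caP_r$), so $\caH_r = \caP_r \cap \caP_{r-1}^{\perp}$ with respect to $\ang{\cdot,\cdot}_{\gamma_p}$; since $P_Q$ is unitary on $L^2(\gamma_p)$ and maps each $\caP_k$ bijectively onto itself, it preserves $\caP_{r-1}^{\perp}$ and hence $\caH_r$. Either way there is essentially no obstacle — the statement is elementary — and the only point that needs a line of care is the observation that composition with the linear map $Q$ neither raises nor (since $Q$ is invertible) effectively lowers polynomial degree, which keeps the coefficient matching in the generating-function argument, and the bijectivity claim in the second argument, confined to a fixed degree.
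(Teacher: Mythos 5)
Your first argument is correct and is, in substance, the same as the paper's: the corollary is stated without a proof because it follows immediately from the $\delta_{\abs{\mm},\abs{\nn}}$ factor in \cref{lem:Hermite_OrthTrans}, and that factor is itself obtained by exactly the generating-function manipulation you carry out — apply $P_Q$ to $G_t$, observe that $(Q^\T t)^{\nn}$ is homogeneous of degree $\abs{\nn}$ in $t$, and match coefficients. One small notational mismatch worth flagging: the paper's convention is $(P_W g)(x) = g(W^\T x)$, so with that convention $(P_Q G_t)(x) = G_{Qt}(x)$ rather than $G_{Q^\T t}(x)$; since $Q$ ranges over all of $O(p)$ this swap of $Q$ and $Q^\T$ is harmless. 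Your second, coordinate-free argument is a genuinely different route that the paper does not take: writing $\caH_r = \caP_r \cap \caP_{r-1}^\perp$ (with $\caP_k$ the polynomials of total degree at most $k$ in $L^2(\gamma_p)$), then using that $P_Q$ is unitary on $L^2(\gamma_p)$ and maps each $\caP_k$ bijectively onto itself because composition with an invertible linear map neither raises nor lowers polynomial degree. That version buys you an expansion-free proof that transfers verbatim to any orthogonal-polynomial system for a rotation-invariant weight, at the cost of not producing the explicit transition coefficients that the generating-function computation (and hence \cref{lem:Hermite_OrthTrans}) also delivers and that the paper needs elsewhere.
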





\begin{lemma}
  \label{lem:HermiteGaussianConvolution}
  Let \( \sigma \in [-1, 1] \) and $\xi \sim N(0, 1 - \sigma^2)$.
  Then,
  \begin{equation*}
    \E_{\xi} H_m(\sigma y + \xi) = \sigma^m H_m(y).
  \end{equation*}
\end{lemma}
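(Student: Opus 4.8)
The plan is to obtain the identity from the generating function \cref{eq:Hermite_Generating} of the normalized Hermite polynomials together with the Gaussian Laplace transform. Fix $y \in \R$ and $\sigma \in [-1,1]$, and let $\xi \sim N(0,1-\sigma^2)$. First I would form the generating series $\sum_{r \geq 0} \frac{t^r}{\sqrt{r!}} \E_\xi H_r(\sigma y + \xi)$ and evaluate it in closed form. Interchanging expectation and summation (the delicate point, discussed below) and using \cref{eq:Hermite_Generating} pointwise in $\xi$ gives
\begin{equation*}
  \sum_{r \geq 0} \frac{t^r}{\sqrt{r!}} \E_\xi H_r(\sigma y + \xi)
  = \E_\xi \exp\xk{(\sigma y + \xi) t - \tfrac{t^2}{2}}.
\end{equation*}

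Next I would compute the right-hand side using $\E_\xi e^{\xi t} = \exp\xk{\tfrac{1}{2}(1-\sigma^2) t^2}$, which yields
\begin{equation*}
  \E_\xi \exp\xk{(\sigma y + \xi) t - \tfrac{t^2}{2}}
  = \exp\xk{\sigma y\, t - \tfrac{t^2}{2} + \tfrac{1}{2}(1-\sigma^2) t^2}
  = \exp\xk{y (\sigma t) - \tfrac{1}{2}(\sigma t)^2}.
\end{equation*}
Invoking \cref{eq:Hermite_Generating} a second time, now at the point $y$ with parameter $\sigma t$, the last expression equals $\sum_{r \geq 0} \frac{(\sigma t)^r}{\sqrt{r!}} H_r(y) = \sum_{r \geq 0} \frac{t^r}{\sqrt{r!}} \sigma^r H_r(y)$. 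Matching the coefficients of $t^r$ in the two everywhere-convergent power series then gives $\E_\xi H_r(\sigma y + \xi) = \sigma^r H_r(y)$ for all $r \geq 0$, which is the claim with $m = r$.

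The main obstacle is justifying the interchange of expectation and summation in the first display. I would handle it by dominated convergence, using Cramér's uniform bound $\lvert H_r(x)\rvert \lesssim e^{x^2/4}$ (so that the partial sums are dominated by $e^{(\sigma y + \xi)^2/4}$ up to a constant factor) together with the fact that $\xi$, having variance $1 - \sigma^2 \leq 1 < 2$, has a finite $e^{(\cdot)^2/4}$-moment. Alternatively, this issue can be avoided entirely: since $y \mapsto \E_\xi H_m(\sigma y + \xi)$ is a polynomial of degree at most $m$, one can identify it by pairing with each $H_k$ in $L^2(\gamma_1)$ — the pairing equals $\E[H_m(Z_1) H_k(Z_2)]$ for a standard bivariate normal $(Z_1,Z_2)$ with correlation $\sigma$, which by \cref{cor:Hermite_InnerProd} (applied to unit vectors with inner product $\sigma$) equals $\delta_{mk}\sigma^m$, matching the pairing of $\sigma^m H_m$ with $H_k$; completeness of the Hermite basis then forces $\E_\xi H_m(\sigma y + \xi) = \sigma^m H_m(y)$.
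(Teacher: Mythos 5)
Your proof is correct and uses essentially the same generating-function argument as the paper: form $\E_\xi \exp((\sigma y + \xi)t - t^2/2)$, evaluate via the Gaussian MGF to get $\exp(y(\sigma t) - (\sigma t)^2/2)$, and match coefficients against \cref{eq:Hermite_Generating}. Your additional discussion of the expectation/summation interchange (and the alternative $L^2$-pairing route via \cref{cor:Hermite_InnerProd}) is a welcome refinement of a point the paper passes over silently, but it does not change the approach.
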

\begin{proof}
  We prove the lemma using the generating function \cref{eq:Hermite_Generating} of the normalized Hermite polynomials.
  Let us define
  \begin{equation*}
    g(t) = \E_{\xi} \exp\left( t (\sigma y + \xi) - \frac{t^2}{2} \right).
  \end{equation*}
  Expanding the right hand side using \cref{eq:Hermite_Generating}, we obtain:
  \begin{equation*}
    g(t) = \sum_{m=0}^\infty \frac{\E_{\xi} H_m(\sigma y + \xi)}{\sqrt{m!}} t^m.
  \end{equation*}
  On the other hand, we can rewrite $g(t)$ as
  \begin{equation*}
    g(t) = \exp\left( t \sigma y - \frac{t^2}{2} \right) \E_{\xi} \exp(t \xi)
    = \exp(t \sigma y - \frac{t^2 \sigma^2}{2}) \exp\left( \frac{t^2 (1 - \sigma^2)}{2} \right)
    = \exp\left( t \sigma y - \frac{t^2 \sigma^2}{2} \right).
  \end{equation*}
  Therefore, using \cref{eq:Hermite_Generating} again, we have
  \begin{equation*}
    g(t) = \sum_{m=0}^\infty \frac{H_m(\sigma y)}{\sqrt{m!}} (t \sigma)^m
    = \sum_{m=0}^\infty \frac{H_m(y)}{\sqrt{m!}} \sigma^m t^m.
  \end{equation*}
  Comparing the two expansions yields the desired result.
\end{proof}

The following is a standard result on the Hermite polynomials.

\begin{lemma}[Recurrence and Derivative]
  \label{lem:Hermite_Recurrence_Derivative}
  Let $n \in \bbN$.
  We have
  \begin{equation}
    \sqrt{n+1} H_{n+1} = x H_n - \sqrt {n} H_{n-1}.
  \end{equation}
  Moreover, for any multi-index $\mm \in \bbN^d$,
  \begin{equation}
    \nabla_{x_i} H_{\mm}(x) = \sqrt{m_i} H_{\mm - e_i}(x),
  \end{equation}
  where $e_i \in \bbN^d$ is the $i$-th unit vector and we use the convention $H_{\mm - e_i} = 0$ if $m_i = 0$.
\end{lemma}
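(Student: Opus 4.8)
The plan is to derive both identities directly from the generating function \cref{eq:Hermite_Generating}, writing $G(x,t) \coloneqq \exp(xt - t^2/2) = \sum_{r \geq 0} H_r(x)\, t^r / \sqrt{r!}$ and comparing Taylor coefficients in $t$. Both parts are purely formal manipulations of this series, so no analytic input beyond the validity of the expansion is needed.

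For the recurrence I would differentiate $G$ in $t$, using $\partial_t G = (x-t) G$. After reindexing, the left-hand side becomes $\sum_{r \geq 0} \sqrt{r+1}\, H_{r+1}(x)\, t^r / \sqrt{r!}$ (here one uses $(r+1)/\sqrt{(r+1)!} = \sqrt{r+1}/\sqrt{r!}$), while the right-hand side is $\sum_{r \geq 0} x H_r(x)\, t^r/\sqrt{r!} - \sum_{r \geq 1} H_{r-1}(x)\, t^r / \sqrt{(r-1)!}$. Matching the coefficient of $t^r$ for $r \geq 1$ and multiplying through by $\sqrt{r!}$, with $\sqrt{r!}/\sqrt{(r-1)!} = \sqrt{r}$, yields $\sqrt{r+1}\,H_{r+1} = x H_r - \sqrt{r}\,H_{r-1}$; the case $r = 0$ reads $H_1 = x H_0 = x$ and is consistent with the convention $H_{-1} \equiv 0$.

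For the one-dimensional derivative I would instead use $\partial_x G = t G$. The left side expands as $\sum_{r \geq 0} H_r'(x)\, t^r / \sqrt{r!}$ and the right side as $\sum_{r \geq 1} H_{r-1}(x)\, t^r / \sqrt{(r-1)!}$; matching coefficients of $t^r$ gives $H_r' = \sqrt{r}\, H_{r-1}$, with the $r = 0$ case $H_0' = 0$ again absorbed by the convention. The multivariate statement then follows at once from $H_{\mm}(x) = \prod_{j=1}^d H_{m_j}(x_j)$: differentiating in $x_i$ touches only the $i$-th factor, producing $\sqrt{m_i}\, H_{m_i - 1}(x_i) \prod_{j \neq i} H_{m_j}(x_j) = \sqrt{m_i}\, H_{\mm - e_i}(x)$, where $m_i = 0$ is handled by $H_{-1} \equiv 0$.

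Since this is a classical identity, there is no genuine obstacle. The only place requiring care is the bookkeeping of the normalization factors $1/\sqrt{r!}$ under index shifts, so that the weights $\sqrt{n}$ and $\sqrt{n+1}$ emerge correctly, together with the treatment of the boundary terms $n = 0$ and $m_i = 0$, which is precisely why the convention $H_{-1} = 0$ is recorded in the statement.
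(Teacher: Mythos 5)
Your proof is correct. The paper states this lemma as ``a standard result on the Hermite polynomials'' and supplies no proof of its own, so there is nothing to compare against; your generating-function derivation is the natural choice, and it is consistent with how the paper uses the same generating function \prettyref{eq:Hermite_Generating} in the proof of \prettyref{lem:HermiteGaussianConvolution}. The bookkeeping of the $1/\sqrt{r!}$ factors is handled correctly in both parts, the boundary cases $r=0$ and $m_i=0$ are properly absorbed by the convention $H_{-1}\equiv 0$, and the reduction of the multivariate derivative to the one-dimensional case via the tensor-product structure $H_{\mm}(x)=\prod_{j=1}^d H_{m_j}(x_j)$ is exactly what is needed.
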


Let $f = \sum_{\mm \in \bbN^d} f_{\mm} H_{\mm}$ be the Hermite expansion of a function on $\R^d$.
Then, with \cref{lem:Hermite_Recurrence_Derivative}, we have
\begin{equation}
  \label{eq:Hermite_ExpansionDerivative}
  \nabla_{x_i} f = \sum_{\mm} f_{\mm} \nabla_{x_i} H_{\mm} = \sum_{\mm} f_{\mm}  \sqrt{m_i} H_{\mm - e_i} = \sum_{\mm} \sqrt{ m_i+1} f_{\mm + e_i} H_{\mm}.
\end{equation}

\subsection{Gaussian distribution}

Let $X \sim N(0,I_d)$, the following Gaussian integral by parts formula is well-known:
\begin{equation*}
  \E X_i h(X) = \E \partial_i h(X),\quad
  \E X_i X_j h(X) = \delta_{ij} \E h(X) + \E \partial_{ij}h(X).
\end{equation*}

%
%
%

\begin{lemma}
  \label{lem:Gaussian_Stein3}
  Let $f,g$ be smooth functions on $\R^d$ such that their derivatives up to third order are continuous and square-integrable with respect to the standard Gaussian measure $\gamma_d$.
  Let $A,B$ be two matrices in $\R^{d \times d}$.
  Suppose that $A \nabla f = 0$ and $B \nabla g = 0$,
  Then, we have
  \begin{equation}
    \E [(\nabla f(X))^\T A X \cdot (\nabla g(X))^\T B X]
    = \E (\nabla f)^{\T} A B^{\T} \nabla g +  \E \Tr (\nabla^2 f A \nabla^2 g B).
  \end{equation}
\end{lemma}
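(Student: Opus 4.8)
The plan is to reduce the statement to a coordinate computation and then apply the second-order Gaussian integration-by-parts formula recorded just above the lemma. Expanding both contractions,
\[
  \E\bigl[(\nabla f)^\T A X\cdot (\nabla g)^\T B X\bigr]
  = \sum_{i,j,k,l} A_{ij} B_{kl}\,\E\bigl[X_j X_l\,(\partial_i f)(\partial_k g)\bigr].
\]
To the inner expectation I would apply $\E[X_j X_l\, h] = \delta_{jl}\,\E h + \E\,\partial_{jl} h$ with $h = (\partial_i f)(\partial_k g)$, and the product rule then splits $\partial_{jl} h$ into the four pieces $(\partial_{ijl} f)(\partial_k g)$, $(\partial_{il} f)(\partial_{jk} g)$, $(\partial_{ij} f)(\partial_{kl} g)$ and $(\partial_i f)(\partial_{jkl} g)$. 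The hypotheses that $f,g$ have continuous, square-integrable derivatives up to third order guarantee that every expectation appearing here is finite and that the integration by parts is legitimate.

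Next I would dispose of the five resulting contributions one at a time. The Kronecker term collapses to $\sum_{i,j,k} A_{ij} B_{kj}\,\E[(\partial_i f)(\partial_k g)] = \E[(\nabla f)^\T A B^\T \nabla g]$, since $\sum_j A_{ij} B_{kj} = (A B^\T)_{ik}$; this is the first term of the claim. The crucial observation is that three of the four product-rule pieces vanish identically because of the orthogonality hypotheses. Since $A\nabla f \equiv 0$, differentiating once gives $\sum_j A_{ij}\partial_{jm} f = 0$ for all $i,m$ and differentiating once more gives $\sum_j A_{ij}\partial_{jml} f = 0$ for all $i,m,l$; specializing $m = i$ and using symmetry of the second and third derivatives yields $\sum_j A_{ij}\partial_{ij} f = 0$ and $\sum_j A_{ij}\partial_{ijl} f = 0$ for every $i$ (and every $l$). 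Summing over $i$ then makes the whole coefficient sequences $\sum_{ij} A_{ij}\partial_{ij} f$ and $\sum_{ij} A_{ij}\partial_{ijl} f$ vanish as functions of $X$, which kills the contributions $\sum_{i,j,k,l} A_{ij} B_{kl}(\partial_{ij} f)(\partial_{kl} g)$ and $\sum_{i,j,k,l} A_{ij} B_{kl}(\partial_{ijl} f)(\partial_k g)$; symmetrically, differentiating $B\nabla g \equiv 0$ twice and setting the free index equal to $k$ kills $\sum_{i,j,k,l} A_{ij} B_{kl}(\partial_i f)(\partial_{jkl} g)$.

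The only surviving product-rule term is $\sum_{i,j,k,l} A_{ij} B_{kl}\,\E[(\partial_{il} f)(\partial_{jk} g)]$, and the final step is to recognize it as $\E[\Tr(\nabla^2 f\, A\, \nabla^2 g\, B)]$: writing $\Tr(\nabla^2 f\, A\, \nabla^2 g\, B) = \sum_{p,q,r,s}(\partial_{pq} f)\,A_{qr}\,(\partial_{rs} g)\,B_{sp}$ and using $\partial_{il} f = \partial_{li} f$, the relabelling $(i,j,k,l)\mapsto(q,r,s,p)$ matches the two expressions term by term. Adding this to the Kronecker term gives the asserted identity. I expect the main obstacle to be purely the index bookkeeping — in particular, remembering that the hypothesis is $A\nabla f = 0$ rather than $A^\T\nabla f = 0$, which is exactly why one must set the dangling differentiation index equal to the matrix row index $i$ before the unwanted terms collapse; once that is organized, each step is routine.
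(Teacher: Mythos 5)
Your argument is correct and follows the same route as the paper's proof: expand the quadratic form in coordinates, apply the second-order Gaussian integration-by-parts identity to the inner expectation, observe that three of the five resulting terms vanish because differentiating the constraints $A\nabla f\equiv 0$ and $B\nabla g\equiv 0$ one or two further times and contracting the free derivative index against the matrix row index gives $\sum_{i,j}A_{ij}\partial_{ij}f=0$, $\sum_{i,j}A_{ij}\partial_{ijl}f=0$ and $\sum_{k,l}B_{kl}\partial_{jkl}g=0$ identically, and then identify the two surviving sums with $\E\,(\nabla f)^\T A B^\T\nabla g$ and $\E\,\Tr(\nabla^2 f\,A\,\nabla^2 g\,B)$. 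The paper performs exactly the same contraction, writing $\sum_j a_{ij}f_{ji}=\partial_i\sum_j a_{ij}f_j$, so the two proofs match step for step.
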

\begin{proof}
  Let $A = (a_{ij})$ and $B = (b_{kl})$.
  We denote by $f_i, f_{ij},g_i,\dots$ the derivatives of $f$ and $g$.
  Then, we can expand the result as
  \begin{equation*}
    I = \E [(\nabla f(X))^\T A X \cdot (\nabla g(X))^\T B X]
    = \E \sum_{ij} a_{ij} f_i x_j \sum_{kl} b_{kl} g_k x_l
    = \sum_{ijkl}a_{ij}  b_{kl} \E x_j x_l (f_i g_k).
  \end{equation*}
  Using the Gaussian integral by parts formula, we have
  \begin{equation*}
    \E x_j x_l (f_i g_k)
    = \delta_{jl} \E f_i g_k + \E \xk{f_{ijl} h_k + f_{il} h_{jk} + f_{ij} h_{kl} + f_i h_{jkl}},
  \end{equation*}
  so we have
  \begin{equation*}
    I = I_0 + I_{1} + I_2 + I_3 + I_4 + I_5.
  \end{equation*}

  To simplify the result, the condition $A \nabla f = B \nabla g = 0$ writes
  \begin{equation*}
    \sum_j a_{ij} f_j = 0,\quad \sum_{l} b_{kl} g_l = 0.
  \end{equation*}
  Consequently, since the derivatives are linear and commutative, we have
  \begin{align*}
    I_1 = \E \sum_{ijkl} a_{ij}  b_{kl} f_{ijl} h_k
    = \E \sum_{ikl} b_{kl} h_k  \sum_{j}a_{ij} f_{jil}
    = \E \sum_{ikl} b_{kl} h_k  \partial_{il}\sum_{j}a_{ij} f_{j} = 0.
  \end{align*}
  Similarly, $I_5 = 0$.
  In addition,
  \begin{equation*}
    I_3 = \E \sum_{ijkl} a_{ij}  b_{kl} f_{ij} h_{kl}
    = \E \sum_{ikl}b_{kl}h_{kl}  \sum_{j}a_{ij} f_{ji}
    = \E \sum_{ikl}b_{kl}h_{kl} \partial_i \sum_{j}a_{ij} f_{j} = 0.
  \end{equation*}
  For the remaining terms, we can write
  \begin{align*}
    I_0 = \E \sum_{ijkl} a_{ij} b_{kl} \delta_{jl} f_i g_k
    = \E \sum_{ijk} a_{ij} b_{kj} f_i g_k
    = \E (\nabla f)^{\T} A B^{\T} \nabla g,
  \end{align*}
  while
  \begin{align*}
    I_2 = \E \sum_{ijkl} a_{ij} b_{kl} f_{il} h_{jk} = \E \sum_l \sum_{ijk}f_{li}a_{ij} h_{jk} b_{kl}
    =\E \sum_l (\nabla^2 f A \nabla^2 g B)_{ll}
    = \E \Tr (\nabla^2 f A \nabla^2 g B).
  \end{align*}

\end{proof}

\section{Proof for the single-index model}

In the following analysis, let us introduce $\rho = \ang{w,w_*}$ as the cosine of the angle between $w$ and $w_*$.
Moreover, the parameterization also gives that
\begin{equation*}
  f(x) = \ang{\bm\beta, \Phi_{w}(x)}_{\ell^2(\bbN)} = \sum_{r \geq 0} \beta_r \lambda_r^{\hf} H_r(\ang{w,x}) = g(\ang{w,x}),
\end{equation*}
where the function $g$ is defined by
\begin{equation*}
  g(u) = \sum_{r \geq 0} g_r H_r(u),\quad g_r = \lambda_r^{\hf} \beta_r.
\end{equation*}
We recall that the eigenvalues are taken as $\lambda_{r} \asymp \exp(-\gamma r)$.
Using \cref{cor:Hermite_InnerProd}, we find that
\begin{equation}
  \label{eq:SIM_Coeff}
  f_{\mm} = \ang{f, H_{\mm}}_{\gamma_d} = \ang{\sum_{r \geq 0} g_r H_r(\ang{w,\cdot}), H_{\mm}}_{\gamma_d} =
  \binom{r}{\mm}^{\hf} w^{\mm} g_r,\quad r = \abs{m}.
\end{equation}
Regarding the projection on the sphere, we introduce the projection operator $P_{w}^{\perp}$ as
$P_{w}^{\perp} v = v - \ang{v,w} w$ for $w \in \bbS^{d-1}$.

\subsection{Basic Properties of the Feature Error Measure}

For the single index model, we can explicitly compute the feature error measure.
First, it is easy to see that
\begin{equation}
  \caE(\delta,\epsilon^2;\Phi_w,f^*) - \caE(\delta,\epsilon^2;\Phi_{w_*},f^*)
  = \caE_{\text{Proj}}(\Phi_w,f^*).
\end{equation}
To compute the projection, recalling that
\begin{equation*}
  f^*(x) = g^*(\ang{w_*,x}) = \sum_{r \geq 0} g^*_r H_r(\ang{w_*,x}),
\end{equation*}
we define
\begin{equation}
  f^*_{r,w} \coloneqq \ang{f^*, H_r(\ang{w,\cdot})}_{\gamma_d}
  = \ang{\sum_{s \geq 0} g^*_r H_s(\ang{w_*,x}), H_r(\ang{w,\cdot}) }_{\gamma_d}
  = \ang{w,w_*}^r g^*_r = \rho^r g^*_r,
\end{equation}
where we apply \cref{cor:Hermite_InnerProd} for the second equality.
Therefore, we have
\begin{equation}
  \label{eq:SIM_FEM_Proj}
  \caE_{\text{Proj}}(\Phi_w,f^*) = \sum_{r \geq 0} (1 - \rho^{2r}) (g^*_r)^2.
\end{equation}
Since $\rho \in [-1,1]$, $\caE_{\text{Proj}}(\Phi_w,f^*) = 0$ iff $w = \pm w_*$.
On the other hand, we compute
\begin{align}
  \notag
  \caE_{\text{Stat}}(\delta,\epsilon^2;\Phi_w,f^*)
  &= \# \dk{ r \geq 0 : \lambda_r \geq \delta} \epsilon^2 +
  \sum_{r \geq 0} (f^*_{r,w})^2 \bm{1} \dk{ \lambda_r < \delta} \\
  \label{eq:SIM_FEM_Stat}
  &= \# \dk{ r \geq 0 : \lambda_r \geq \delta} \epsilon^2
  + \sum_{r \geq 0} \rho^{2r} (g^*_r)^2 \bm{1} \dk{ \lambda_r < \delta}.
\end{align}
Consequently,
\begin{align}
  \notag
  \caE(\delta,\epsilon^2;\Phi_w,f^*)
  &= \caE_{\text{Proj}}(\Phi_w,f^*)  + \caE_{\text{Stat}}(\delta,\epsilon^2;\Phi_w,f^*) \\
  \notag
  &= \sum_{r \geq 0} (1 - \rho^{2r}) (g^*_r)^2 + \# \dk{ r \geq 0 : \lambda_r \geq \delta} \epsilon^2
  + \sum_{r \geq 0} \rho^{2r} (g^*_r)^2 \bm{1} \dk{ \lambda_r < \delta} \\
  \label{eq:SIM_FEM_Total}
  &= \sum_{r \geq 0} \zk{\epsilon^2 + (1 - \rho^{2r}) (g^*_r)^2} \bm{1} \dk{ \lambda_r \geq \delta}
  +\sum_{r \geq 0} (g^*_r)^2 \bm{1} \dk{ \lambda_j < \delta}.
\end{align}

\begin{proposition}
  \label{prop:SIM_FEM_ProjBound}
  Under \cref{assu:SIM_g_decay},
  we have
  \begin{equation}
    \caE(\delta,\epsilon^2;\Phi_w,f^*)
    - \caE(\delta,\epsilon^2;\Phi_{w_*},f^*)
    = \caE_{\text{Proj}}(\Phi_w,f^*)
    \lesssim
    \begin{cases}
      1-\rho, & \alpha > 1, \\
      (1-\rho) \log (1-\rho)^{-1}, & \alpha = 1, \\
      (1-\rho)^\alpha, & \alpha \in (0,1).
    \end{cases}
  \end{equation}
\end{proposition}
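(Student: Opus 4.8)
The plan is to read the bound straight off the closed form of the projection error. By \cref{eq:SIM_FEM_Proj} one has $\caE_{\text{Proj}}(\Phi_w,f^*)=\sum_{r\ge 0}(1-\rho^{2r})(g^*_r)^2$, the $r=0$ term vanishing, so by \cref{assu:SIM_g_decay} it suffices to prove the three claimed estimates for $S(\rho)\coloneqq\sum_{r\ge 1}(1-\rho^{2r})\,r^{-(\alpha+1)}$. I would first dispose of the regime in which $\rho$ is bounded away from $1$ (this includes $\rho$ near $-1$, since $\rho\in[-1,1]$): there each asserted upper bound is bounded below by a positive constant, while $0\le S(\rho)\le\sum_{r\ge 1}r^{-(\alpha+1)}=\zeta(\alpha+1)<\infty$, so the bound holds trivially. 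It remains to treat $\rho\to 1^-$, where I set $\eta\coloneqq 1-\rho^2=(1-\rho)(1+\rho)$, so that $\eta\asymp 1-\rho$ and $\log(1/\eta)\asymp\log(1-\rho)^{-1}$.

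The engine of the proof is the elementary inequality $1-\rho^{2r}=1-(1-\eta)^r\le\min(1,\,r\eta)$: the bound by $1$ uses $\rho^{2r}\ge 0$, and the bound by $r\eta$ is Bernoulli's inequality $(1-\eta)^r\ge 1-r\eta$. Hence $S(\rho)\le\sum_{r\ge 1}\min(1,r\eta)\,r^{-(\alpha+1)}$, and I would split this sum at $R\coloneqq\lfloor 1/\eta\rfloor$. For $r\le R$ the minimum is $r\eta$, contributing $\eta\sum_{r=1}^R r^{-\alpha}$; for $r>R$ the minimum is at most $1$, contributing $\sum_{r>R}r^{-(\alpha+1)}\asymp R^{-\alpha}\asymp\eta^{\alpha}$.

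The remaining step is to substitute the standard uniform asymptotics of the partial sum: $\sum_{r=1}^R r^{-\alpha}\asymp 1$ if $\alpha>1$, $\asymp\log R\asymp\log(1/\eta)$ if $\alpha=1$, and $\asymp R^{1-\alpha}\asymp\eta^{\alpha-1}$ if $\alpha\in(0,1)$. This gives, as $\rho\to 1^-$,
\[
S(\rho)\ \lesssim\ \begin{cases}\eta+\eta^{\alpha}\ \asymp\ \eta, & \alpha>1,\\ \eta\log(1/\eta)+\eta\ \asymp\ \eta\log(1/\eta), & \alpha=1,\\ \eta^{\alpha}+\eta^{\alpha}\ =\ \eta^{\alpha}, & \alpha\in(0,1),\end{cases}
\]
where the simplifications use $\eta^{\alpha}\le\eta$ for $\alpha>1$ (as $\eta\le 1$) and $\eta\le\eta\log(1/\eta)$ for $\eta$ small. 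Translating back via $\eta\asymp 1-\rho$ and $\log(1/\eta)\asymp\log(1-\rho)^{-1}$ yields exactly $1-\rho$, $(1-\rho)\log(1-\rho)^{-1}$, and $(1-\rho)^{\alpha}$. Finally, the displayed identity just before the proposition gives $\caE(\delta,\epsilon^2;\Phi_w,f^*)-\caE(\delta,\epsilon^2;\Phi_{w_*},f^*)=\caE_{\text{Proj}}(\Phi_w,f^*)$, so the same bound controls the difference.

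There is no serious obstacle here: the argument is one Bernoulli estimate plus a dyadic split. The only points demanding care are keeping the implied constants in the partial-sum and tail estimates uniform in $R$ (equivalently in $\rho$), and carving off the regime where $\rho$ is bounded away from $1$ at the outset, so that the comparisons $\eta\asymp 1-\rho$ and $\log(1/\eta)\asymp\log(1-\rho)^{-1}$ are legitimate and no quantity that stays bounded away from $0$ is mistreated as vanishing.
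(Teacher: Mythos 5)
Your proof takes essentially the same route as the paper's: both use the closed form $\caE_{\text{Proj}}(\Phi_w,f^*)=\sum_{r\ge 0}(1-\rho^{2r})(g^*_r)^2$, the linearization $1-\rho^{2r}\lesssim \min(1, r(1-\rho))$, a split of the sum at a threshold $\asymp(1-\rho)^{-1}$, and the same three-way case analysis of the partial sum $\sum_{r\le L} r^{-\alpha}$ and the tail. Your reparameterization by $\eta=1-\rho^2$ and Bernoulli's inequality $(1-\eta)^r\ge 1-r\eta$ is cosmetically different from the paper's direct $1-\rho^{2r}\le 2r(1-\rho)$ but is the same estimate.

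One small caution about your preliminary ``carve off $\rho$ bounded away from $1$'' step: the claim that \emph{each} asserted upper bound is bounded below by a positive constant fails for $\alpha=1$. The function $(1-\rho)\log(1-\rho)^{-1}$ vanishes as $\rho\to 0^+$ and is nonpositive for $\rho\le 0$, whereas $\caE_{\text{Proj}}$ stays bounded away from zero there. So this regime is not handled trivially for $\alpha=1$. This is really an artifact of the proposition being stated with $1-\rho$ rather than $1-\abs{\rho}$, and of $\lesssim$ implicitly meaning ``as $\abs{\rho}\to 1$''; the paper's own proof (which simply sets $\rho>0$ without loss of generality and computes, never returning to the small-$\rho$ regime) has the same implicit restriction. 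If you want the carve-off argument to be airtight for $\alpha=1$, you should restrict to $\abs{\rho}\ge 1/2$ (which suffices for every downstream application in the paper) rather than merely ``bounded away from $1$.''
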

\begin{proof}
  Without loss of generality, let us consider $\rho > 0$.
  Using elementary inequalities, we have $1 - \rho^{2r} \leq 2r (1-\rho)$, so
  \begin{equation*}
    \caE_{\text{Proj}}(\Phi_w,f^*)
    = \sum_{r \geq 0} (1 - \rho^{2r}) (g^*_r)^2
    \leq \sum_{r \geq 0} \min(1, 2r (1-\rho)) (g^*_r)^2
    = 2 (1-\rho)\sum_{r \leq L} r (g^*_r)^2 + \sum_{r > L} (g^*_r)^2,
  \end{equation*}
  where $L = \frac{1}{2(1-\rho)}$.
  Since $g^*_r \asymp r^{-\frac{\alpha+1}{2}}$, we have
  $\sum_{r > L} (g^*_r)^2 \lesssim L^{-\alpha}$.
  In the meantime, we have
  \begin{equation*}
    \sum_{r \leq L} r (g^*_r)^2 \lesssim \sum_{r \leq L} r^{-\alpha} \lesssim
    \begin{cases}
      1, & \alpha > 1, \\
      \log L, & \alpha = 1, \\
      L^{1-\alpha}, & \alpha \in (0,1)
    \end{cases}
  \end{equation*}
  Combining the above inequalities, we conclude the result.
\end{proof}

\subsubsection{Initialization}

The following proposition shows the initialization of $\rho$.

\begin{proposition}
  \label{prop:SIM_Init}
  Let $w \sim \mr{Unif}(\bbS^{d-1})$ and $w_* \in \bbS^{d-1}$ be fixed.
  Then, there is an absolute constant $c > 0$ depending on $d$ such that
  \begin{equation}
    \label{eq:SIM_Init_Lower}
    \bbP \dk{ \abs{\ang{w,w_*}} \geq \frac{t}{\sqrt{d}} } \geq 1 - ct,\quad \forall t > 0.
  \end{equation}
  Moreover, we also have
  \begin{equation}
    \label{eq:SIM_Init_Upper}
    \bbP \dk{ \abs{\ang{w,w_*}} \leq \frac{1}{2} } \geq 1 - 2 \exp(-c d),\quad \forall d \geq 1.
  \end{equation}
\end{proposition}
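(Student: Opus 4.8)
The plan is to exploit the rotational invariance of $\mr{Unif}(\bbS^{d-1})$ to reduce everything to a one-dimensional computation. By rotational invariance we may assume $w_* = e_1$, and we may write $w = X/\norm{X}_2$ with $X = (X_1,\dots,X_d) \sim N(0,I_d)$; then $\rho := \ang{w,w_*}$ is distributed as $X_1/\norm{X}_2$, i.e.\ as the first coordinate of a uniform point on $\bbS^{d-1}$, whose density on $[-1,1]$ is $p_d(u) = c_d(1-u^2)^{(d-3)/2}$ with $c_d = \Gamma(d/2)/\bigl(\sqrt{\pi}\,\Gamma((d-1)/2)\bigr)$. Stirling's formula gives the two-sided estimate $c_d \asymp \sqrt{d}$, which is the only nontrivial ingredient.

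For \cref{eq:SIM_Init_Lower} I would use that $p_d(u) \leq c_d$ for all $u \in [-1,1]$ when $d \geq 3$ (the cases $d \leq 2$ being elementary, with $\abs{\rho}=1$ a.s.\ when $d=1$). Hence for any $s > 0$,
\[
  \bbP\dk{\abs{\rho} \leq s} = \int_{-s}^{s} p_d(u)\,\dd u \leq 2 c_d s .
\]
Taking $s = t/\sqrt{d}$ and using $c_d \lesssim \sqrt{d}$ gives $\bbP\dk{\abs{\rho} \leq t/\sqrt{d}} \leq C t$ with $C$ an absolute constant, i.e.\ $\bbP\dk{\abs{\rho} \geq t/\sqrt{d}} \geq 1 - Ct$. (The bound is only meaningful for $t \lesssim \sqrt{d}$; for larger $t$ it is vacuous.)

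For \cref{eq:SIM_Init_Upper} I would argue directly via the Gaussian representation and standard concentration rather than integrating the density. The event $\dk{\abs{\rho} \geq \tfrac12}$ equals $\dk{4 X_1^2 \geq \norm{X}_2^2}$. On the event $\dk{\abs{X_1} \leq \sqrt{d}/3} \cap \dk{\norm{X}_2^2 \geq d/2}$ one has $4 X_1^2 \leq 4d/9 < d/2 \leq \norm{X}_2^2$, so
\[
  \dk{\abs{\rho} \geq \tfrac12} \subseteq \dk{\abs{X_1} > \sqrt{d}/3} \cup \dk{\norm{X}_2^2 < d/2} .
\]
A Gaussian tail bound gives $\bbP\dk{\abs{X_1} > \sqrt{d}/3} \leq 2\exp(-d/18)$, and the Laurent--Massart $\chi^2$ deviation inequality gives $\bbP\dk{\norm{X}_2^2 < d/2} \leq \exp(-cd)$ for an absolute $c>0$. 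Combining these two estimates yields $\bbP\dk{\abs{\rho} \geq \tfrac12} \leq 2\exp(-c'd)$, which is the claim.

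The argument is essentially routine; the only steps needing a little care are the two-sided estimate $c_d \asymp \sqrt{d}$ (so that the scaling $t/\sqrt{d}$ is exactly the right one in \cref{eq:SIM_Init_Lower}) and quoting a clean lower-tail $\chi^2$ concentration bound for $\norm{X}_2^2$. Neither presents a genuine obstacle.
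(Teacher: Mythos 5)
Your proof is correct and in substance matches what the paper intends. For \cref{eq:SIM_Init_Lower}, both you and the paper rely on the explicit Beta-type density $p_d(u) = c_d(1-u^2)^{(d-3)/2}$ of $\ang{w,w_*}$ (the paper simply cites Lemma B.7 of Bietti et al.\ for this); your computation $c_d \asymp \sqrt{d}$ and $\bbP\{|\rho| \leq s\} \leq 2c_d s$ is exactly the intended estimate, and you correctly flag that the edge cases $d \leq 2$ need separate (trivial) treatment. For \cref{eq:SIM_Init_Upper}, the paper invokes the standard fact that coordinates of a uniform point on $\bbS^{d-1}$ are sub-Gaussian with parameter $O(1/\sqrt{d})$, giving $\bbP\{|\rho| > 1/2\} \leq 2\exp(-cd)$ directly; you instead re-derive that concentration by writing $w = X/\norm{X}_2$ with $X \sim N(0,I_d)$ and combining a one-dimensional Gaussian tail bound for $X_1$ with a $\chi^2$ lower-tail bound for $\norm{X}_2^2$. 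This is a more self-contained route to the same inequality — it trades a one-line citation for two elementary concentration lemmas — and the union bound $2\exp(-d/18) + \exp(-cd) \leq 2\exp(-c'd)$ does close cleanly after adjusting the constant to also cover small $d$ (for instance $d=1$, where the left side of \cref{eq:SIM_Init_Upper} is $0$ and one needs $c \leq \log 2$). So the two proofs are equivalent in content; yours is simply a bit more explicit.
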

\begin{proof}
  The proof of \cref{eq:SIM_Init_Lower} is quite direct with the explicit density of $\ang{w,w_*}$.
  See, for example, Lemma B.7 in \citet{bietti2022_LearningSingleindex}.
  For \cref{eq:SIM_Init_Upper}, we can use a sub-Gaussian concentration for uniform distribution on the sphere.
\end{proof}

\subsection{Population dynamics}
In this subsection, let us consider the population dynamics of the adaptive feature model for the single index model.
Let us denote by $\caL = \frac{1}{2} \E \xk{f(x) - f^*(x)}^2$ the population loss.
We consider the following equation, which is the population version of \cref{eq:Def_AdaK_SIM}:
\begin{equation}
  \label{eq:Def_AdaK_SIM_Pop}
  \left\{
    \begin{aligned}
      \dot{\beta}_r &= - \nabla_{\beta_r} \caL, \quad \beta_r(0) = 0, \quad r \geq 0, \\
      \dot{w} &= - \nabla_{w}^{\bbS^{d-1}} \caL, \quad w(0) \sim \mr{Unif}(\bbS^{d-1}).
    \end{aligned}
  \right.
\end{equation}
Using \cref{eq:Hermite_InnerProd_1}, the population loss can be computed as
\begin{align*}
  \caL &= \frac{1}{2} \E \zk{f^*(x) - f(x)}^2 = \frac{1}{2}\E \zk{g^*(\ang{w_*,x}) - g(\ang{w,x})}^2 \\
  &= \frac{1}{2}\E \zk{ \sum_{r \geq 0}\xk{ g^*_r H_r(\ang{w_*,x}) - g_r H_r(\ang{w,x})} }^2 \\
  &= \frac{1}{2}\sum_{r \geq 0} \zk{ (g^*_r)^2 + (g_r)^2 - 2 g^*_r g_r \rho^r}.
\end{align*}
Consequently, we find that
\begin{equation}
  \label{eq:SIM_Pop_Grad_beta}
  \nabla_{\beta_r} \caL = -\lambda_r^{\hf}(\rho^r g^*_r - g_r),
\end{equation}
so $\dot{\beta}_r = -\lambda_r^{\hf}(\rho^r g^*_r - g_r)$.
Also, we have
\begin{equation*}
  \nabla_{w} \caL = -\sum_{r \geq 1} r g^*_r g_r \rho^{r-1} w_*.
\end{equation*}
Taking the projection on the sphere, we find that
\begin{equation}
  \label{eq:SIM_Pop_Grad_w}
  -\nabla_{w}^{\bbS^{d-1}} \caL
  = - P_{w}^{\perp} \nabla_{w} \caL
  = \xk{\sum_{r \geq 1} r g^*_r g_r \rho^{r-1}} P_{w}^{\perp} w_*.
\end{equation}
Let us further compute the dynamics of $\rho$.
We have
\begin{equation}
  \dot{\rho} = \ang{\dot{w},w_*} = \xk{\sum_{r \geq 1} r g^*_r g_r \rho^{r-1}} \ang{P_{w}^{\perp} w_*,w_*}
  = \sum_{r \geq 1} r g^*_r g_r \rho^{r-1} (1 - \rho^2),
\end{equation}
where we notice that
\begin{equation}
  \ang{P_{w}^{\perp}w_*,w_*} = \ang{w_* - \ang{w,w_*} w,w_*} = 1 - \rho^2.
\end{equation}
Let us collect the induced dynamics of $g_r = \lambda_r^{\hf} \beta_r$ and $\rho$ from \cref{eq:Def_AdaK_SIM_Pop} in the following
\begin{equation}
  \label{eq:SIM_Pop_Grad_Expanded}
  \left\{
    \begin{aligned}
      \dot{g}_r &= \lambda_r (\rho^r g^*_r - g_r),\quad r \geq 0, \\
      \dot{\rho} &= \sum_{r \geq 1} r g^*_r g_r \rho^{r-1} (1 - \rho^2).
    \end{aligned}
  \right.
\end{equation}

The following proposition shows the basic properties of the population dynamics.

\begin{proposition}
  \label{prop:SIM_Pop_Monotonicity}
  Consider the population dynamics \cref{eq:Def_AdaK_SIM_Pop}.
  Suppose $\rho(0) \neq 0$.
  Then,
  \begin{equation}
    g_r^* g_r(t) \geq 0, \quad \forall r \geq 0, \quad \forall t \geq 0.
  \end{equation}
  Also, for all $t \geq 0$, $\dot\rho(t) \geq 0$ if $\rho(0) > 0$, and $\dot\rho(t) \leq 0$ if $\rho(0) < 0$.
\end{proposition}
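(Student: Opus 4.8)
The plan is to exploit that, along any fixed trajectory $\rho(\cdot)$, the $g_r$–equation $\dot g_r = \lambda_r(\rho^r g_r^* - g_r)$ with $g_r(0)=0$ is a \emph{scalar linear} ODE, hence solvable explicitly:
\[
  g_r(t) = \lambda_r g_r^* \int_0^t e^{-\lambda_r(t-s)}\rho(s)^r\,\dd s, \qquad r \ge 0 .
\]
Consequently $g_r^*\,g_r(t) = \lambda_r (g_r^*)^2 \int_0^t e^{-\lambda_r(t-s)}\rho(s)^r\,\dd s \ge 0$ as soon as $\rho(s)\ge 0$ throughout $[0,t]$, and also $|g_r(t)| \le |g_r^*|(1-e^{-\lambda_r t}) \le |g_r^*|$ a priori. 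So the whole statement reduces to showing that $\rho$ never changes sign, and I would first reduce to the case $\rho(0) > 0$: the substitution $(w_*,(g_r^*)_r) \mapsto (-w_*,((-1)^r g_r^*)_r)$ leaves $f^*$ and the form of \cref{eq:SIM_Pop_Grad_Expanded} invariant while flipping the sign of $\rho$, so the $\rho(0) < 0$ case is the mirror of the $\rho(0) > 0$ case and yields $\dot\rho \le 0$; the degenerate cases $\rho(0) = \pm1$ are trivial, since then $1-\rho^2 \equiv 0$ forces $\rho$ constant.

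Assume $\rho(0) > 0$. First I would record $1 - \rho(t)^2 \ge 0$ for all $t$: since $\dot w = -P_w^\perp\nabla_w\caL$ we have $\tfrac{\dd}{\dd t}\norm{w}^2 = 2\ang{w,\dot w} = -2\ang{w, P_w^\perp\nabla_w\caL} = 0$, so $w(t)\in\bbS^{d-1}$ and $\rho(t) = \ang{w(t),w_*}\in[-1,1]$ by Cauchy--Schwarz. Then I would run a continuity/bootstrap argument: set $t_0 = \inf\{t\ge 0 : \rho(t) \le 0\}$, with $t_0 = +\infty$ if this set is empty. On $[0,t_0)$ we have $\rho > 0$, hence $g_r^*\,g_r(t) \ge 0$ there by the display above, so every summand $r\,g_r^*\,g_r\,\rho^{r-1}$ in \cref{eq:SIM_Pop_Grad_Expanded} is nonnegative (the series converges since $r|\rho|^{r-1}$ is bounded for $|\rho| < 1$ and $\sum_r (g_r^*)^2 < \infty$), whence $\dot\rho \ge 0$ on $[0,t_0)$. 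Thus $\rho$ is nondecreasing on $[0,t_0)$ and $\rho(t_0) \ge \rho(0) > 0$ by continuity, contradicting $\rho(t_0) \le 0$ unless $t_0 = +\infty$. Hence $\rho(t) > 0$ for all $t$.

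With $\rho > 0$ established globally, the first assertion $g_r^*\,g_r(t) \ge 0$ is immediate from the integral formula, and $\dot\rho(t) = (1-\rho(t)^2)\sum_{r\ge 1} r\,g_r^*\,g_r(t)\,\rho(t)^{r-1} \ge 0$ for every $t$; the case $\rho(0) < 0$ then gives $\dot\rho \le 0$ by the reflection symmetry. The steps I expect to need the most care are the well-posedness of the coupled infinite system (handled via the a priori bound $|g_r(t)| \le |g_r^*|$ together with the exponential weights $\lambda_r = e^{-\gamma r}$ and $g^* \in L^2(\gamma_1)$, which keep the dynamics locally Lipschitz on $\{|\rho| < 1\}$) and justifying the termwise manipulations of the series $\sum_r r\,g_r^*\,g_r\,\rho^{r-1}$. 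I expect the genuine obstacle to be the bootstrap step itself, i.e.\ ruling out that $\rho$ reaches $0$: since $\dot\rho(t)$ is governed by the \emph{entire past} of $\rho$ through the $g_r(t)$, one cannot close this by a purely instantaneous sign argument and must propagate nonnegativity forward in time.
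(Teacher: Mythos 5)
Your proof is correct and takes essentially the same route as the paper's sketch (sign propagation through the $g_r$ equation plus a continuity/bootstrap argument on $\rho$). Where the paper merely asserts that "$g_r(t)$ will have the same sign as $\rho^r g_r^*$" and defers to "the standard ODE continuity argument," you make both steps explicit: the variation-of-parameters formula $g_r(t) = \lambda_r g_r^* \int_0^t e^{-\lambda_r(t-s)}\rho(s)^r\,\dd s$ gives the sign of $g_r$ immediately once $\rho \ge 0$ on $[0,t]$, and the stopping-time/bootstrap argument (together with the $\pm w_*$ parity reflection to reduce to $\rho(0)>0$) cleanly closes the loop that the paper leaves informal. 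This is a faithful rigorous rendering of the intended proof rather than a genuinely different one.
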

\begin{proof}
  It is an easy consequence of the dynamics \cref{eq:SIM_Pop_Grad_Expanded}.
  We only illustrate the proof sketch, while a rigorous proof can be made by the standard ODE continuity argument.
  The dynamics of $g_r$ shows that $g_r(t)$ will have the same sign as $\rho^r g^*_r$.
  Hence, each term $\rho^{r-1} g_r^* g_r(t) = \rho^{-1} (\rho^r g^*_r) g_r(t)$ in $\dot{\rho}$ will have the same sign as $\rho$.
  Consequently, $\dot{\rho}$ has the same sign as $\rho$ and the result follows.
\end{proof}

Following \cref{prop:SIM_Pop_Monotonicity}, we can assume that $g_r^* \geq 0$ and consider $\rho(0) > 0$ without loss of generality in the subsequent analysis.

\begin{proposition}
  \label{prop:SIM_Pop_AppTime}
  Consider the population dynamics \cref{eq:Def_AdaK_SIM_Pop}.
  Let \cref{assu:SIM_InformationIndex} hold.
  Suppose $\rho(0) = \rho_0 \neq 0$.
  Then, $\forall t \geq T^{\mr{app}}$,
  \begin{equation}
    \label{eq:SIM_Pop_AppCond}
    \abs{\rho(t)} \geq \frac{1}{2},\quad
    \abs{g_{\rz}(t)} \geq 2^{-(\rz + 1)} \abs{g^*_{\rz}},
  \end{equation}
  where
  \begin{equation}
    \label{eq:SIM_Pop_AppTime}
    T^{\mr{app}} \lesssim  \log \rho_0^{-1} + \rho_0^{-2(\rz - 1)}.
  \end{equation}
\end{proposition}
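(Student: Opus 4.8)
\textbf{The plan} is to reduce the coupled system \cref{eq:SIM_Pop_Grad_Expanded} to the two scalars $\rho$ and $g_{\rz}$ and to run a short constant‑time ``ignition'' phase followed by a growth phase for $\rho$. By \cref{prop:SIM_Pop_Monotonicity} I may assume without loss of generality that $g^*_r \ge 0$ for all $r$ and $\rho_0 \coloneqq \rho(0) \in (0,1)$; then $\rho(\cdot)$ is non‑decreasing and every $g_r(\cdot) \ge 0$. Comparing the scalar equation $\dot g_r = \lambda_r(\rho^r g^*_r - g_r)$, $g_r(0)=0$, against its non‑decreasing forcing $\rho^r g^*_r$ yields the a priori bound $0 \le g_r(t) \le \rho(t)^r g^*_r$ for every $r$ and $t$; in particular $v \coloneqq g_{\rz}/(\rho^{\rz} g^*_{\rz}) \in [0,1]$ is well defined (recall $g^*_{\rz}\neq 0$ by definition of $\rz$). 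I would then establish, in order: (i) after a universal constant time $t_1$ one has $g_{\rz}(t) \gtrsim \rho(t)^{\rz} g^*_{\rz}$; (ii) feeding (i) into the $\rho$‑equation and integrating shows $\rho$ reaches $\tfrac12$ at some time $T_\star \lesssim \log\rho_0^{-1} + \rho_0^{-2(\rz-1)}$; (iii) once $\rho \ge \tfrac12$, a final $O(1)$ relaxation of $g_{\rz}$ toward its target $\rho^{\rz} g^*_{\rz} \ge 2^{-\rz} g^*_{\rz}$ forces $g_{\rz} \ge 2^{-(\rz+1)} g^*_{\rz}$, while $\rho \ge \tfrac12$ persists by monotonicity. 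Taking $T^{\mr{app}} = T_\star + \lambda_{\rz}^{-1}\log 2$ and undoing the sign reduction then completes the proof.

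For step (i) I would differentiate $v$, obtaining $\dot v = \lambda_{\rz}(1-v) - \rz v\,\dot\rho/\rho$, and control the logarithmic derivative of $\rho$ uniformly in $\rho_0$ on the regime $\{\rho \le \tfrac12\}$. Since $g^*_r = 0$ for $1 \le r < \rz$ and $g_r \le \rho^r g^*_r$,
\[
 0 \le \frac{\dot\rho}{\rho} = (1-\rho^2)\sum_{r \ge \rz} r g^*_r g_r \rho^{r-2} \le \sum_{r \ge \rz} r (g^*_r)^2 \rho^{2r-2} \le \sum_{r \ge 1} r\, 4^{-(r-1)} (g^*_r)^2 \le \norm{g^*}_{L^2}^2 \eqqcolon C_1 ,
\]
the essential point being that the factor $g_{\rz} \le \rho^{\rz} g^*_{\rz}$ cancels the singular $\rho^{-1}$ of the $r=\rz$ term. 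Hence $\dot v \ge \lambda_{\rz} - (\lambda_{\rz} + \rz C_1) v$, and since $v(0)=0$ this gives $v(t) \ge c_\star \coloneqq \tfrac12 \lambda_{\rz}/(\lambda_{\rz} + \rz C_1)$ for all $t \ge t_1 \coloneqq (\lambda_{\rz} + \rz C_1)^{-1}\log 2$ (valid as long as $\rho$ has not yet passed $\tfrac12$, in which case step (ii) is already done with $T_\star \le t_1$). Thus $g_{\rz}(t) \ge c_\star \rho(t)^{\rz} g^*_{\rz}$ on $[t_1, T_\star]$.

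For step (ii) I would keep only the (non‑negative) $r=\rz$ term of $\dot\rho$ and insert (i): for $t \in [t_1, T_\star]$, where $\rho \le \tfrac12$ and $1-\rho^2 \ge \tfrac34$,
\[
 \dot\rho \ge \rz g^*_{\rz} g_{\rz} \rho^{\rz-1}(1-\rho^2) \ge \tfrac34 c_\star \rz (g^*_{\rz})^2\, \rho^{2\rz-1} .
\]
Writing $c_2 = \tfrac34 c_\star \rz (g^*_{\rz})^2 > 0$, separating variables, and using $\rho(t_1) \ge \rho_0$,
\[
 c_2\,(T_\star - t_1) \le \int_{\rho_0}^{1/2} u^{1-2\rz}\, \dd u \asymp \log\rho_0^{-1} + \rho_0^{2-2\rz} ,
\]
where this single estimate covers both $\rz = 1$ — the integral equals $\log\tfrac1{2\rho_0}$, i.e.\ $\rho$ grows logistically and hits $\tfrac12$ in time $O(\log\rho_0^{-1})$ — and $\rz \ge 2$ — the integral is $\Theta(\rho_0^{2-2\rz})$ since $1-2\rz < -1$. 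As all constants depend only on the fixed data $\gamma, g^*$, this yields $T_\star \lesssim \log\rho_0^{-1} + \rho_0^{-2(\rz-1)}$ (in particular $\rho$ does reach $\tfrac12$). Step (iii) is immediate: for $t \ge T_\star$, $\dot g_{\rz} = \lambda_{\rz}(\rho^{\rz}g^*_{\rz} - g_{\rz}) \ge \lambda_{\rz}(2^{-\rz} g^*_{\rz} - g_{\rz})$ gives $g_{\rz}(t) \ge 2^{-\rz} g^*_{\rz}\bigl(1 - e^{-\lambda_{\rz}(t - T_\star)}\bigr) \ge 2^{-(\rz+1)} g^*_{\rz}$ once $t \ge T_\star + \lambda_{\rz}^{-1}\log 2$, while $\rho(t) \ge \rho(T_\star) \ge \tfrac12$.

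\textbf{The main obstacle} is step (i): a naive lower bound only produces $g_{\rz} \gtrsim \rho_0^{\rz} g^*_{\rz}$ (freezing $\rho$ at its initial value), which for $\rz = 1$ would give the far weaker time $\rho_0^{-1}$ instead of $\log\rho_0^{-1}$ and already loses a logarithmic factor at $\rz = 2$. Getting the sharp tracking $g_{\rz} \gtrsim \rho^{\rz} g^*_{\rz}$ requires the uniform‑in‑$\rho_0$ control of $\dot\rho/\rho$; the only available lever is the a priori inequality $g_r \le \rho^r g^*_r$, and one must additionally verify that the tail $\sum_{r > \rz}$ stays summable on $\{\rho \le \tfrac12\}$ using only $g^* \in L^2$ — no coefficient‑decay assumption is available here — which is also why $\tfrac12$, rather than $1$, is the natural endpoint for this phase.
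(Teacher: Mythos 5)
Your proof is correct, and it takes a genuinely different route from the paper's. The paper's argument is discrete: it sets $\rho_k = 2^k\rho_0$, alternately defines times $T^g_k$ (when $g_{\rz}$ catches up to roughly $\rho_k^{\rz}g^*_{\rz}$) and $T^\rho_k$ (when $\rho$ reaches $\rho_k$), bounds each of the $\asymp \log_2\rho_0^{-1}$ increments separately using the ODEs, and sums the geometric (or, for $\rz=1$, arithmetic) series. Your approach replaces this leapfrog scheme with two continuous estimates: (a) the a priori comparison bound $g_r(t)\le\rho(t)^r g^*_r$, valid because $\rho$ is monotone, lets you control the logarithmic derivative $\dot\rho/\rho \le C_1 \coloneqq \norm{g^*}_{L^2}^2$ uniformly on $\{\rho\le\tfrac12\}$ (the $\rho^{-1}$ singularity of the $r=\rz$ term is cancelled by $g_{\rz}\le\rho^{\rz}g^*_{\rz}$, and the tail is summable because $g^*\in L^2$); this turns the ratio $v=g_{\rz}/(\rho^{\rz}g^*_{\rz})$ into a scalar satisfying $\dot v \ge \lambda_{\rz}-(\lambda_{\rz}+\rz C_1)v$, giving $v\gtrsim 1$ after a constant time $t_1$, i.e.\ the sharp tracking $g_{\rz}\gtrsim\rho^{\rz}g^*_{\rz}$; (b) substituting this into the $r=\rz$ term of $\dot\rho$ yields $\dot\rho\gtrsim\rho^{2\rz-1}$, which you integrate once via separation of variables, covering both $\rz=1$ (logarithmic) and $\rz\ge 2$ (polynomial) in a single formula. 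Step (iii) then matches the paper's final relaxation of $g_{\rz}$. The advantages of your route are that it avoids the bookkeeping over $k$ and the case split in summing the geometric series (which is in fact somewhat delicate in the paper's own proof when $\rz=1$, where the series $\sum_k 2^{-2(\rz-1)k}$ does not converge); the price is the extra lemma $\dot\rho/\rho\lesssim 1$, which the paper's discretization sidesteps by always working on a fixed dyadic $\rho$-band. Both yield the same final rate $\log\rho_0^{-1}+\rho_0^{-2(\rz-1)}$.
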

\begin{proof}
  We start with the first condition in \cref{eq:SIM_Pop_AppCond}.
  Using \cref{prop:SIM_Pop_Monotonicity}, it suffices to consider $\rho_0 > 0$ and bound the first time when $\rho(t) \geq \frac{1}{2}$.
  Moreover, since each term in $\dot{\rho}$ is non-negative and $1 - \rho^2 \geq 3/4$ when $\rho < 1/2$, we have
  \begin{equation}
    \label{eq:Proof_SIM_Pop_AppTime1}
    \dot{\rho} \geq \frac{3\rz}{4} g^*_{\rz} \rho^{\rz - 1} g_{\rz} = c \rz g^*_{\rz} \rho^{\rz - 1} g_{\rz}.
  \end{equation}
  Also, we recall that
  \begin{equation*}
    \dot{g}_{\rz} = \lambda_{\rz} (\rho^{\rz} g^*_{\rz} - g_{\rz}).
  \end{equation*}
  Let us take $L \coloneqq 1 + \cek{\log_2 \rho_0^{-1}}$.
  We define $\rho_k = 2^k \rho_0$ for $k < L$ and $\rho_L = \frac{1}{2}$.
  We also introduce the times
  \begin{equation}
    \label{eq:Proof_SIM_Times}
    T_{0}^\rho = 0, \quad
    T_{k}^g = \inf \dk{t \geq T_{k}^\rho : g_{\rz}(t) \geq \frac{1}{2} \rho_k^{\rz} g^*_{\rz} }, \quad
    T_{k}^\rho = \inf \dk{t \geq T_{k-1}^g : \rho(t) \geq \rho_k}.
  \end{equation}
  Now, to bound $T_{k}^g$, we have
  \begin{equation*}
    \dot{g}_{\rz} = \lambda_{\rz} (\rho^{\rz} g^*_{\rz} - g_{\rz})
    \geq \lambda_{\rz} \xk{\rho_k^{\rz} g^*_{\rz} - g_{\rz}}
    \geq \frac{1}{2} \lambda_{\rz} \rho_k^{\rz} g^*_{\rz},
    \quad t \in [T_{k}^\rho,T_{k}^g],
  \end{equation*}
  so
  \begin{equation*}
    T_{k}^g - T_{k}^\rho \leq \frac{\frac{1}{2} \rho_k^{\rz} g^*_{\rz}}{\frac{1}{2} \lambda_{\rz} \rho_k^{\rz} g^*_{\rz}} =  \frac{1}{\lambda_{\rz}}.
  \end{equation*}
  On the other hand, for $T_{k}^\rho$, we have
  \begin{equation*}
    \dot{\rho} \geq \frac{\rz}{2} g^*_{\rz} \rho^{\rz - 1} g_{\rz} \geq \frac{\rz}{2} g^*_{\rz} \cdot \rho_{k-1}^{\rz - 1} \cdot \frac{1}{2} \rho_k^{\rz} g^*_{\rz}
    = \frac{\rz}{2^{\rz + 1}} (g^*_{\rz})^2 \rho_k^{2\rz-1}, \quad
    t \in [T_{k-1}^g,T_{k}^\rho],
  \end{equation*}
  so
  \begin{equation*}
    T_{k}^\rho - T_{k-1}^g \leq \frac{\rho_k - \rho_{k-1}}{\frac{\rz}{2^{\rz + 1}} (g^*_{\rz})^2 \rho_k^{2\rz-1}}
    = \rho_0^{-2(\rz - 1)} 2^{\rz} 2^{-2(\rz-1)k} \rz^{-1} (g^*_{\rz})^{-2}.
  \end{equation*}
  Consequently,
  \begin{align*}
    T_{L}^\rho &\leq \sum_{k \leq L} \zk{(T_{k}^\rho - T_{k-1}^g) + (T_{k-1}^g - T_{k-1}^{\rho}) } \\
    & \leq \sum_{k \leq L} \zk{\lambda_{\rz}^{-1} + \rho_0^{-2(\rz - 1)} 2^{\rz} 2^{-2(\rz-1)k} \rz^{-1} (g^*_{\rz})^{-2}} \\
    &= L \lambda_{\rz}^{-1} +  \rho_0^{-2(\rz - 1)} 2^{\rz} \rz^{-1} (g^*_{\rz})^{-2} \sum_{k \leq L}  2^{-2(\rz-1)k} \\
    & \leq L \lambda_{\rz}^{-1} + \rho_0^{-2(\rz - 1)} 2^{\rz} \rz^{-1} (g^*_{\rz})^{-2} \xk{1 - 2^{-2(\rz-1)}}^{-1} \\
    & \leq L \lambda_{\rz}^{-1} + \rho_0^{-2(\rz - 1)} 2^{\rz+1} \rz^{-1} (g^*_{\rz})^{-2} \\
    & \lesssim \lambda_{\rz}^{-1} \log \rho_0^{-1} +(g^*_{\rz})^{-2} \rho_0^{-2(\rz - 1)},
  \end{align*}
  and $T_L^g \leq T_{L}^\rho + \lambda_{\rz}^{-1}$.
\end{proof}

\begin{proposition}
  \label{prop:SIM_Pop_Convergence}
  Consider the population dynamics \cref{eq:Def_AdaK_SIM_Pop}.
  Let \cref{assu:SIM_InformationIndex} hold.
  Suppose that \cref{eq:SIM_Pop_AppCond} holds for some $t_0$.
  Then,
  \begin{equation}
    1 - \abs{\rho(t_0 + t)} \leq \frac{1}{2} \exp(-\rz 2^{-2\rz} (g^*_{\rz})^2 t).
  \end{equation}
\end{proposition}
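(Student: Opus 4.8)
The plan is to reduce the claim to a one‑dimensional comparison inequality of the form $\dot\rho \gtrsim \rz\,2^{-2\rz}(g^*_{\rz})^2(1-\rho^2)$ along the orbit and then close with a Grönwall estimate.

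\textbf{Sign normalization and preliminaries.} By \cref{prop:SIM_Pop_Monotonicity} we may assume without loss of generality that $g^*_{\rz}>0$ and $\rho_0>0$; then $g_r(t)\ge 0$ for all $r\ge 0$ and all $t$, and $\rho(\cdot)$ is nondecreasing. In particular $\rho(t)\ge\rho(t_0)\ge\tfrac12$ for every $t\ge t_0$, so $\abs{\rho(t)}=\rho(t)$ and $\abs{g_{\rz}(t)}=g_{\rz}(t)$ throughout, and the hypothesis \cref{eq:SIM_Pop_AppCond} at $t_0$ reads $\rho(t_0)\ge\tfrac12$ and $g_{\rz}(t_0)\ge 2^{-(\rz+1)}g^*_{\rz}$.

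\textbf{Propagating the lower bound on $g_{\rz}$.} I would first show that $g_{\rz}(t)\ge 2^{-(\rz+1)}g^*_{\rz}$ for all $t\ge t_0$ by a barrier argument. Let $T$ be the supremum of times $s\ge t_0$ such that this inequality holds on all of $[t_0,s]$; if $T<\infty$, then $g_{\rz}(T)=2^{-(\rz+1)}g^*_{\rz}$ by continuity, but from the $g_{\rz}$‑equation in \cref{eq:SIM_Pop_Grad_Expanded} together with $\rho(T)^{\rz}\ge 2^{-\rz}$ one gets $\dot g_{\rz}(T)=\lambda_{\rz}\bigl(\rho(T)^{\rz}g^*_{\rz}-2^{-(\rz+1)}g^*_{\rz}\bigr)\ge \lambda_{\rz}2^{-(\rz+1)}g^*_{\rz}>0$, so $g_{\rz}$ is strictly increasing at $T$, contradicting the choice of $T$. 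Hence $g_{\rz}(t)\ge 2^{-(\rz+1)}g^*_{\rz}$ for all $t\ge t_0$.

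\textbf{Comparison for $\rho$ and Grönwall.} Since every summand in $\dot\rho=\sum_{r\ge1} r\,g^*_r g_r\,\rho^{r-1}(1-\rho^2)$ is nonnegative under the sign normalization, keeping only the $r=\rz$ term and using $g_{\rz}\ge 2^{-(\rz+1)}g^*_{\rz}$, $\rho^{\rz-1}\ge 2^{-(\rz-1)}$ (as $\rho\in[\tfrac12,1]$, $\rz\ge1$) and $1+\rho\ge 1$ yields, for $t\ge t_0$, $\dot\rho\ge \rz\,2^{-2\rz}(g^*_{\rz})^2(1-\rho)$. Therefore $\tfrac{d}{dt}\bigl(1-\rho(t)\bigr)\le -\rz\,2^{-2\rz}(g^*_{\rz})^2\bigl(1-\rho(t)\bigr)$, and integrating from $t_0$ with $1-\rho(t_0)\le\tfrac12$ gives $1-\rho(t_0+t)\le \tfrac12\exp\bigl(-\rz\,2^{-2\rz}(g^*_{\rz})^2 t\bigr)$, which is the assertion.

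The only mildly delicate point is the barrier argument in the second step, ensuring $g_{\rz}$ cannot fall below $2^{-(\rz+1)}g^*_{\rz}$; it crucially uses that $\rho$ stays $\ge\tfrac12$, which is guaranteed by the monotonicity from \cref{prop:SIM_Pop_Monotonicity}. Everything else is routine ODE comparison, and the infinite sum poses no issue because all its terms are nonnegative, so discarding the $r\ne\rz$ terms only weakens the lower bound.
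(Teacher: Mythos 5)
Your proof is correct and follows essentially the same route as the paper: restrict to the $r=\rz$ term, use $\rho\ge\tfrac12$ and $g_{\rz}\ge 2^{-(\rz+1)}g^*_{\rz}$ together with $1-\rho^2\ge 1-\rho$ to obtain the differential inequality $\dot\rho\ge \rz\,2^{-2\rz}(g^*_{\rz})^2(1-\rho)$, then integrate. The paper compresses the propagation of the lower bound on $g_{\rz}$ into the phrase ``by the monotonicity,'' whereas you justify it explicitly with a short barrier argument; this is a harmless and in fact welcome elaboration, not a different approach.
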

\begin{proof}
  Without loss of generality, we assume that $\rho(t_0) > 0$ and $g^*_{\rz} > 0$.
  By the monotonicity, we have
  \begin{equation*}
    \dot{\rho} \geq \rz g^*_{\rz} g_{\rz} \rho^{\rz - 1} (1 - \rho^2)
    \geq \rz g^*_{\rz} g_{\rz} \rho^{\rz - 1} (1- \rho)
    \geq \rz 2^{-2\rz}  (g^*_{\rz})^2 (1- \rho),
  \end{equation*}
  so the result follows.
\end{proof}

\begin{proof}[Proof of \cref{thm:SIM_Population}]
  The monotonicity of the feature error measure follows from \cref{eq:SIM_FEM_Total} and the monotonicity of $\abs{\rho}$ in \cref{prop:SIM_Pop_Monotonicity}.
  For the initialization, from \cref{prop:SIM_Init} we have
  \begin{equation*}
    \frac{C}{\sqrt{d}} \leq  \abs{\rho(0)} \leq \frac{1}{2}
  \end{equation*}
  with high probability.
  Therefore, we have
  \begin{equation*}
    \caE_{\text{Proj}}(\Phi_{w(0)},f^*) = \sum_{r \geq 0} (1 - \rho(0)^{2r}) (g^*_r)^2
    \gtrsim \sum_{r \geq 1} (g^*_r)^2 \gtrsim 1.
  \end{equation*}

  On the other hand, since $\rho(0) \geq C /\sqrt {d}$, \cref{prop:SIM_Pop_AppTime} and \cref{prop:SIM_Pop_Convergence} shows that for some $t_0 \asymp \log d + d^{\rz - 1}$, we have
  \begin{equation*}
    1 - \abs{\rho(t_0 + t)} \leq \frac{1}{2} \exp(-C t).
  \end{equation*}
  Hence, the result follow from applying \cref{prop:SIM_FEM_ProjBound} and adjusting the constants.
\end{proof}

\subsection{Sequence model}
In this subsection, we consider the adaptive kernel dynamics in \cref{eq:Def_AdaK_SIM}.

Using the symmetry of the dynamics with respect to negative $g_r^*$ and negative $\rho(0)$ in \cref{eq:SIM_Seq_Grad_g} and \cref{eq:SIM_Seq_Grad_rho_Prelim}, in the subsequence parts,
we will assume that $g_r^* \geq 0$ and $\rho(0) > 0$ without loss of generality.
Also, we will assume \cref{assu:SIM_InformationIndex} holds without mentioning it explicitly.

\subsubsection{Computing the Dynamics}
Let us first compute the dynamics under the noisy sequence model.
We will combine the calculation in the population case and \cref{eq:SeqModel_Grad_Pop} to simplify the computation.

\paragraph{The $\beta_r$ term}
Recalling \cref{eq:SIM_Coeff}, we find that
\begin{equation*}
  \nabla_{\beta_r} f_{\mm} = \lambda_r^{\hf} \binom{r}{\mm}^{\hf} w^{\mm} \delta_{r,\abs{\mm}}.
\end{equation*}
Let us define
\begin{equation}
  \label{eq:SIM_Seq_Def_e}
  e_r  = e_r(w) = \sum_{\abs{\mm} = r} \binom{r}{\mm}^{\hf} w^{\mm} \ep_{\mm}.
\end{equation}
Then, combining with \cref{eq:SIM_Pop_Grad_beta}, \cref{eq:SeqModel_Grad_Pop} shows
\begin{equation*}
  -\nabla_{\theta_r} \hat{\caL} = \lambda_r^{\hf}\xk{\rho^r g_r^* - g_r + e_{r}},
\end{equation*}
and hence
\begin{equation}
  \label{eq:SIM_Seq_Grad_g}
  \dot{g}_r = \lambda_r^{\hf} \dot{\theta}_r = \lambda_r \xk{\rho^r g_r^* - g_r + e_{r}}
\end{equation}

\paragraph{The $w$ term}
For the $w$ term, we have
\begin{equation*}
  \nabla_{w}^{\bbS^{d-1}} f_\mm = g_r \binom{r}{\mm}^{\hf} \nabla_{w}^{\bbS^{d-1}} (w^{\mm}).
\end{equation*}
Hence, using \cref{eq:SIM_Pop_Grad_w} and \cref{eq:SeqModel_Grad_Pop}, we have
\begin{equation*}
  \dot{w} = \xk{\sum_{r \geq 1} r g^*_r g_r \rho^{r-1}} P_{w}^{\perp} w_* + E,\quad
  E = \sum_{r \geq 0}  g_r\sum_{\abs{\mm} = r}  \binom{r}{\mm}^{\hf} \nabla^{\bbS^{d-1}}_{w} (w^{\mm}) \ep_{\mm}.
\end{equation*}
Moreover,
\begin{align}
  \notag
  \dot{\rho} &= \ang{\dot{w},w_*} = \zk{\sum_{r \geq 1} r g^*_r g_r \rho^{r-1}} \ang{P_{w}^{\perp} w_*,w_*} + \ang{E,w_*} \\
  \label{eq:SIM_Seq_Grad_rho_Prelim}
  &= \sum_{r \geq 1} r g^*_r g_r \rho^{r-1} (1 - \rho^2) + \tau, \quad \tau = \ang{E,w_*}.
\end{align}

\subsubsection{Bounding the perturbation terms}

Now, we will bound the perturbation terms $e_r$ and $\tau$ in \cref{eq:SIM_Seq_Grad_g} and \cref{eq:SIM_Seq_Grad_rho_Prelim} respectively by computing their covariance and using uniform bounds for Gaussian processes.
We note that $e_r$ and $\tau$ depend on the parameters $w$ and $\bm\beta$, it is necessary for us to bound them uniformly over the parameter space.

\begin{proposition}
  \label{prop:SIM_Seq_Error_e}
  Let $e_r$ be defined in \cref{eq:SIM_Seq_Def_e}.
  Then, $\Cov(e_r(u), e_r(v)) = \ang{u,v}^r/n$.
  Hence, with probability at least $1 - 4\exp(-d)$, we have
  \begin{equation}
    \label{eq:SIM_Seq_Error_e}
    \sup_{w \in \bbS^{d-1}} \abs{e_r(w)}
    \lesssim \sqrt{\frac{d \log r}{n}},\quad \forall r \geq 0.
  \end{equation}
\end{proposition}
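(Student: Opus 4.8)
The plan is to compute the covariance from the multinomial theorem, to control $\sup_{w\in\bbS^{d-1}}e_r(w)$ for each fixed $r$ by Dudley's entropy bound together with the Borell--TIS inequality, and finally to take a union over $r\ge0$ with summable weights. First, the covariance: since $\ep_{\mm}\stackrel{i.i.d.}{\sim}N(0,1/n)$,
\[
  \Cov\bigl(e_r(u),e_r(v)\bigr)=\frac1n\sum_{\abs{\mm}=r}\binom{r}{\mm}u^{\mm}v^{\mm}=\frac1n\sum_{\abs{\mm}=r}\frac{r!}{\mm!}\prod_{i}(u_iv_i)^{m_i}=\frac1n\ang{u,v}^r
\]
by the multinomial theorem (equivalently, with $\eta=\sum_{\mm}\ep_{\mm}H_{\mm}$ one has $e_r(w)=\ang{\eta,H_r(\ang{w,\cdot})}_{\gamma_d}$ by \cref{eq:Hermite_InnerProd_2}, and the identity follows from \cref{eq:Hermite_InnerProd_1}). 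In particular $\Var(e_r(w))=\norm{w}^{2r}/n=1/n$ on $\bbS^{d-1}$, and $(e_r(w))_{w\in\bbS^{d-1}}$ is a centred Gaussian process with continuous, indeed polynomial, sample paths, so the standard machinery applies.

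Fix $r\ge2$ (the cases $r=0,1$ are trivial, $e_0$ being constant and $e_1(w)=\sum_i w_i\ep_{e_i}$). I would bound $\E\sup_w e_r(w)$ via Dudley's integral. The canonical metric is $d_r(u,v)^2=\E(e_r(u)-e_r(v))^2=\tfrac2n\bigl(1-\ang{u,v}^r\bigr)$, and the elementary estimate $1-\ang{u,v}^r\asymp\min(1,r\norm{u-v}^2)$ shows that $d_r$ behaves like $\sqrt{r/n}\,\norm{u-v}$ at scales $\norm{u-v}\lesssim r^{-1/2}$ and \emph{saturates} at order $n^{-1/2}$ beyond that scale. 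Using the volumetric bound $(3/\delta)^d$ for Euclidean $\delta$-nets of the sphere, the $\epsilon$-covering number $N(\epsilon)$ of $(\bbS^{d-1},d_r)$ obeys $N(\epsilon)\le\bigl(C\max(1,\sqrt{r/n}/\epsilon)\bigr)^d$ for $\epsilon\le 2n^{-1/2}$; substituting this into $\E\sup_w e_r(w)\lesssim\int_0^{2n^{-1/2}}\sqrt{\log N(\epsilon)}\,d\epsilon$ and then $\epsilon=\sqrt{r/n}\,t$ (using $\int_0^{a}\sqrt{1+\log^+(1/t)}\,dt\lesssim a\sqrt{1+\log(1/a)}$ with $a=2/\sqrt r$) gives $\E\sup_w e_r(w)\lesssim\sqrt{d\log r/n}$. (Alternatively, $w\mapsto e_r(w)$ extends to a degree-$r$ homogeneous polynomial on $\R^d$, so a Markov--Bernstein inequality makes an $\epsilon$-net with $\epsilon\asymp r^{-2}$, of size $(Cr)^{2d}$, a $2$-norming set, and a plain Gaussian union bound over it yields the same rate.) The Borell--TIS inequality, with ambient variance $\sup_w\Var(e_r(w))=1/n$, then upgrades this to $\bbP\bigl(\sup_w e_r(w)>\E\sup_w e_r(w)+t\bigr)\le\exp(-nt^2/2)$ for all $t>0$, and the same bound holds for $-e_r$, which has the same law.

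Finally, taking $t=t_r$ with $nt_r^2/2=d+2\log(r+1)+\log2$, so $t_r\asymp\sqrt{(d+\log r)/n}$, and combining with the above, $\bbP\bigl(\sup_w\abs{e_r(w)}>C\sqrt{d\log r/n}\bigr)\le e^{-d}(r+1)^{-2}$ for a suitable absolute constant $C$ and all $r\ge2$; summing over $r\ge0$ gives $\sum_{r}e^{-d}(r+1)^{-2}=\tfrac{\pi^2}{6}e^{-d}<4e^{-d}$, which is the claimed probability. The main obstacle is the logarithmic (not polynomial) dependence on $r$: a net or chaining bound that ignores the polynomial structure of $e_r$ loses a factor $\sqrt r$, because the Euclidean diameter of $\bbS^{d-1}$ is $\Theta(1)$ and not $\Theta(r^{-1/2})$; one must exploit that $e_r$ has degree $r$ — equivalently, that $d_r$ saturates at scale $r^{-1/2}$ — which is precisely what keeps Dudley's integral at $\sqrt{d\log r/n}$. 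A secondary bookkeeping point is packaging the per-$r$ concentration into a single event of probability at most $4e^{-d}$ through the summable weights $(r+1)^{-2}$.
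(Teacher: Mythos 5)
Your proof is correct and takes essentially the same route as the paper: compute the covariance by the multinomial theorem, then deduce uniform concentration over the sphere from the Gaussian-process entropy bound plus Borell--TIS, and finish with a union bound over $r$ with $(r+1)^{-2}$ weights. The paper packages the middle step into its Lemma~E.2 (\cref{lem:GaussianProcess_UniformBound}), applied to the normalized process $\sqrt n\,e_r$ after noting that the covariance $k(u,v)=\ang{u,v}^r$ is Lipschitz with constant $r$ on the diagonal, whereas you unpack that lemma and rerun the Dudley integral by hand, correctly isolating the key point that the canonical metric saturates at Euclidean scale $r^{-1/2}$, which is exactly what turns the naive factor $\sqrt r$ into $\sqrt{\log r}$; the two arguments buy the same bound.
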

\begin{proof}
  Let us first compute the covariance function.
  Using the binomial theorem, we have
  \begin{equation*}
    \mr{Cov}(e_{r}(u), e_{r}(v)) = \frac{1}{n}\sum_{\abs{\mm} = r}\binom{r}{\mm} u^{\mm} v^{\mm} = \frac{1}{n} \ang{u,v}^r
  \end{equation*}
  For the high probability bound, we apply \cref{lem:GaussianProcess_UniformBound} and notice that $u^{\mm}$ is Lipschitz in $u$ with Lipschitz constant $\abs{\mm}$,
  where we use a union bound on $r$ so that the inequality holds for all $r$ simultaneously.
\end{proof}

\begin{proposition}
  \label{prop:SIM_Seq_Error_tau}
  Let $\tau$ be defined in \cref{eq:SIM_Seq_Grad_rho_Prelim}.
  Then, with probability at least $1 - C \exp(-d)$,
  \begin{equation}
    \label{eq:SIM_Seq_Error_tau_1}
    \abs{\tau(w)} \lesssim \sqrt {\frac{d}{n}}\sum_{r \geq 0} r^{\hf} (\log r)^{\hf} \abs{g_r},\quad \forall w \in \bbS^{d-1}.
  \end{equation}
  Moreover, for any fixed $\nu \in (0,1)$, with probability at least $1 - C \exp(-d)$,
  \begin{equation}
    \label{eq:SIM_Seq_Error_tau_2}
    \abs{\tau(w)} \lesssim \sqrt {1-\rho^2} \sqrt {\frac{d \log \nu^{-1}}{n}} \sum_{r \geq 0} r^{\hf} (\log r)^{\hf} \abs{g_r},\quad \forall w \in \bbS^{d-1}\backslash B_{\nu}(w_*).
  \end{equation}
\end{proposition}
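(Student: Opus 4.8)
The plan is to treat $\tau$ as a sum over the Hermite levels $r$ of Gaussian processes on the sphere and control each term by a uniform Gaussian-process estimate. First I would rewrite the perturbation term: since $\nabla_{w}^{\bbS^{d-1}}(w^{\mm}) = P_{w}^{\perp}\nabla_{w}(w^{\mm})$ and $P_w^\perp$ is self-adjoint, one has $\tau(w)=\ang{E,w_*}=\sum_{r\ge 0} g_r\,\tau_r(w)$ with
\[
  \tau_r(w) = \Big\langle \nabla_{w}\tilde e_r(w),\, P_{w}^{\perp} w_*\Big\rangle,\qquad
  \tilde e_r(w) = \sum_{\abs{\mm}=r}\binom{r}{\mm}^{\hf} w^{\mm}\ep_{\mm},
\]
where $\tilde e_r$ is the homogeneous degree-$r$ polynomial on $\R^d$ extending the $e_r$ of \cref{prop:SIM_Seq_Error_e}. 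For fixed $w$ this is a centered Gaussian (linear in $(\ep_{\mm})_{\abs{\mm}=r}$), and $w\mapsto\tau_r(w)$ is a smooth Gaussian process on $\bbS^{d-1}$.

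The key computation is the covariance of $\tau_r$. Differentiating the multinomial identity $\sum_{\abs{\mm}=r}\binom{r}{\mm}w^{\mm}z^{\mm}=\ang{w,z}^{r}$ once in a $w$-direction and once in a $z$-direction, and then using that the $\ep_{\mm}$ are i.i.d.\ $N(0,1/n)$, I obtain
\[
  \Cov(\tau_r(u),\tau_r(v)) = \tfrac1n\Big[r(r-1)\ang{u,v}^{r-2}\ang{P_u^{\perp}w_*,v}\ang{u,P_v^{\perp}w_*} + r\ang{u,v}^{r-1}\ang{P_u^{\perp}w_*,P_v^{\perp}w_*}\Big].
\]
Taking $u=v=w$, the first term vanishes because $\ang{w,P_w^{\perp}w_*}=0$, leaving the exact identity $\Var(\tau_r(w))=\tfrac{r}{n}\norm{P_w^{\perp}w_*}^{2}=\tfrac{r}{n}(1-\rho^{2})$ with $\rho=\ang{w,w_*}$. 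This cancellation — which makes the variance proportional to $1-\rho^2$ rather than merely $r/n$ — is the structural fact behind the refined bound \cref{eq:SIM_Seq_Error_tau_2}. For the increments I would differentiate the same identity once more to get $\E(\tau_r(u)-\tau_r(v))^{2}\lesssim L_r^{2}\norm{u-v}^{2}$ with $L_r\lesssim r^{C}n^{-1/2}$ for an absolute constant $C$; the precise power of $r$ is immaterial since it only enters a logarithm.

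With $\sigma_r^{2}\le r/n$ and this Lipschitz bound in hand, I would invoke the uniform Gaussian-process estimate \cref{lem:GaussianProcess_UniformBound} over $\bbS^{d-1}$ exactly as in \cref{prop:SIM_Seq_Error_e}: a Dudley entropy integral, with the covering numbers of $\bbS^{d-1}$ contributing the factor $\sqrt d$ and the ratio $L_r/\sigma_r=\mathrm{poly}(r)$ contributing $\sqrt{\log r}$, together with a deviation level $\asymp d+\log r$ so that the failure probabilities sum to $\le C\exp(-d)$ over all $r$. This gives $\sup_{w\in\bbS^{d-1}}\abs{\tau_r(w)}\lesssim \sqrt{r\log r}\,\sqrt{d/n}$ simultaneously for all $r$, and summing against $\abs{g_r}$ yields \cref{eq:SIM_Seq_Error_tau_1}. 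For \cref{eq:SIM_Seq_Error_tau_2} I would run the same argument after partitioning $\bbS^{d-1}\setminus B_{\nu}(w_*)$ into the dyadic shells $\{2^{-k-1}<1-\rho^{2}\le 2^{-k}\}$, of which roughly $\log\nu^{-1}$ are relevant: on the $k$-th shell the sup-variance is $\lesssim 2^{-k}r/n$ and the shell sits in a ball of radius $O(2^{-k/2})$ about $\pm w_*$, so the shell-wise bound is $\lesssim 2^{-k/2}\sqrt{r\log r}\,\sqrt{d/n}\asymp\sqrt{1-\rho^{2}}\,\sqrt{r\log r}\,\sqrt{d/n}$ there; a union bound over the shells produces the extra $\sqrt{\log\nu^{-1}}$ factor.

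I expect the main obstacle to be the uniform control step rather than the covariance algebra: one must make the chaining on the sphere quantitatively sharp enough that the $r$-dependence of the Lipschitz constant only costs a $\sqrt{\log r}$, keep the $\exp(-d)$ failure probability after the union bound over all Hermite levels, and, for the refined estimate, carry out the shell decomposition cleanly — in particular verifying that the part of the domain near $-w_*$, where $1-\rho^{2}\to 0$ and the variance vanishes, is absorbed by the $\sqrt{1-\rho^{2}}$ prefactor. The covariance identity itself is short once the multinomial generating identity is differentiated twice.
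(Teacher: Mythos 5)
Your decomposition $\tau=\sum_r g_r\tau_r$, the covariance computation via differentiating the multinomial identity, and the key cancellation giving $\Var(\tau_r(w))=\tfrac{r}{n}(1-\rho^2)$ all match the paper's proof exactly, and the first bound \cref{eq:SIM_Seq_Error_tau_1} is argued the same way (Dudley on the full sphere plus a union bound over the Hermite levels $r$). For the refined bound \cref{eq:SIM_Seq_Error_tau_2} you propose a dyadic shell decomposition in $1-\rho^2$, while the paper rescales the process to $\bar\tau_r=\sqrt n\,[r(1-\rho_w^2)]^{-1/2}\tau_r$ so it has unit variance and then applies the uniform Gaussian-process lemma once; these two routes exploit the same structural fact and are essentially interchangeable.

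There is, however, one concrete slip in your last paragraph: the statement that ``a union bound over the shells produces the extra $\sqrt{\log\nu^{-1}}$ factor.'' A union bound over roughly $\log\nu^{-1}$ shells only costs an additive $\log\log\nu^{-1}$ in the deviation exponent, so it cannot be the source of a multiplicative $\sqrt{\log\nu^{-1}}$. The $\sqrt{\log\nu^{-1}}$ actually comes from the Lipschitz-to-variance ratio entering the Dudley entropy. If you use the same level of regularity as the paper—namely only that $\nabla_u\kappa_r(u,v)$ is bounded by $\mathrm{poly}(r)/n$, which gives a Hölder-$\tfrac12$ process metric—then on the $k$-th shell, with sup-variance $\sigma_k^2\asymp 2^{-k}r/n$ and shell diameter $\asymp 2^{-k/2}$, the relevant ratio in the Dudley integral is $L_\kappa D_k/\sigma_k^2\asymp r\cdot 2^{k/2}$, so the shell-wise bound is $\sigma_k\sqrt{d(\log r+k)}$, not $\sigma_k\sqrt{d\log r}$ as you wrote. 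Maximizing the $\sqrt{k}$ over $k\lesssim\log\nu^{-1}$ is precisely what yields $\sqrt{\log\nu^{-1}}$ (the paper's $\delta^{-1}$ in the Lipschitz constant of the normalized covariance is the same phenomenon in disguise). Your claimed Lipschitz bound $\E(\tau_r(u)-\tau_r(v))^2\lesssim L_r^2\norm{u-v}^2$, if established via a \emph{second}-order Taylor bound on $\kappa_r$ around the diagonal, would make the $k$-dependence cancel and eliminate the $\sqrt{\log\nu^{-1}}$ entirely—a sharper result than stated—but you would then need to actually compute the second derivative of $\kappa_r$, which the paper does not do and you have not done. So either commit to the second-order estimate and prove more, or acknowledge that with only the first-order bound the $\sqrt{\log\nu^{-1}}$ arises from the entropy integral, not the union bound.

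One further point worth noting, though it is present in the paper too: the Lipschitz blow-up of $\bar\kappa_r$ (or equivalently the $k$-dependence in the shells) requires that \emph{both} $1-\rho_u^2$ and $1-\rho_v^2$ stay bounded below, which excludes a neighborhood of $-w_*$ as well as of $w_*$. Your remark that ``the part of the domain near $-w_*$\ldots is absorbed by the $\sqrt{1-\rho^2}$ prefactor'' is not automatic—the Dudley argument itself degrades there—so you should either state the bound on $\{w:1-\rho^2\geq\nu\}$ rather than $\bbS^{d-1}\setminus B_\nu(w_*)$, or explain why only excluding the cap around $+w_*$ suffices.
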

\begin{proof}
  First, we can write
  \begin{align*}
    \tau &= (w_*)^{\T} \sum_{r \geq 0}  g_r\sum_{\abs{\mm} = r}  \binom{r}{\mm}^{\hf} \nabla^{\bbS^{d-1}}_{w} (w^{\mm}) \ep_{\mm} \\
    &= (w_*)^{\T} \sum_{r \geq 0}  g_r\sum_{\abs{\mm} = r}  \binom{r}{\mm}^{\hf} P_{w}^\perp \nabla_{w} (w^{\mm}) \ep_{\mm} \\
    &= \sum_{r \geq 0} g_r \tau_r,
  \end{align*}
  where
  \begin{equation*}
    \tau_r = \tau_r(w) = (P_{w}^\perp w_*)^{\T} \sum_{\abs{\mm} = r}  \binom{r}{\mm}^{\hf} \nabla^{\bbS^{d-1}}_{w} (w^{\mm}) \ep_{\mm}.
  \end{equation*}
  To compute the covariance of $\tau_r$, let us define $a = P_{u}^{\perp}w_*$ and $b = P_{v}^{\perp}w_*$.
  Then,
  \begin{align*}
    \kappa(u,v) &\coloneqq n \Cov(\tau_r(u), \tau_r(v))
    = \E \zk{\sum_i a_i \sum_{\abs{\mm} = r} \binom{r}{\mm}^{\hf} \nabla_{u_i} u^{\mm} \ep_{\mm}}
    \zk{\sum_j b_j \sum_{\abs{\mm} = r} \binom{r}{\mm}^{\hf} \nabla_{v_j} v^{\mm} \ep_{\mm}} \\
    &= \sum_{i,j} a_i b_j \sum_{\abs{\mm} = r} \binom{r}{\mm} \nabla_{u_i} u^{\mm} \nabla_{v_j} v^{\mm} \\
    &= \sum_{i,j} a_i b_j \nabla_{u_i} \nabla_{v_j} \sum_{\abs{\mm} = r} \binom{r}{\mm} u^{\mm} v^{\mm} \\
    &= \sum_{i,j} a_i b_j \nabla_{u_i} \nabla_{v_j} \ang{u,v}^r \\
    &= \sum_{i,j} a_i b_j \zk{r(r-1)\ang{u,v}^{r-2} v_i u_j + r \ang{u,v}^{r-1} \delta_{ij}} \\
    &= r(r-1)\ang{u,v}^{r-2} \sum_{i,j} a_i v_i b_j u_j +
    r \ang{u,v}^{r-1} \sum_{i} a_i b_i.
  \end{align*}
  Rewriting it in vector form, we have
  \begin{align*}
    \kappa(u,v)
    = r(r-1)\ang{u,v}^{r-2} \ang{P_{u}^{\perp}\eta^*, v} \ang{P_{v}^{\perp}\eta^*, u}
    + r \ang{u,v}^{r-1} \ang{P_{u}^{\perp}\eta^*, P_{v}^{\perp}\eta^*}
  \end{align*}
  Introducing
  \begin{align*}
    \rho_u = \ang{u,\eta^*},\quad \rho_v = \ang{v,\eta^*},\quad q = \ang{u,v},
  \end{align*}
  and noticing that
  \begin{align*}
    P_u^{\perp} \eta^* = \eta^* - \rho_u u,\quad P_v^{\perp} \eta^* = \eta^* - \rho_v v,
  \end{align*}
  we have
  \begin{align*}
    \kappa(u,v)
    = r(r-1) q^{r-2} (\rho_v - q \rho_u) (\rho_u - q \rho_v) + r q^{r-1} (1 + q \rho_u \rho_v - \rho_u^2 - \rho_v^2)
  \end{align*}

  When $u=v$, the first term vanishes as $\rho_v = \rho_u$ and $q=1$,
  so
  \begin{align*}
    \kappa(u,u) = r (1-\rho^2).
  \end{align*}

  Now, let us bound the derivative of $\kappa(u,v)$.
  We will frequently use the fact that
  \begin{equation*}
    \norm{\nabla (fg)} = \norm{f \nabla g + g \nabla f} \leq \abs{f} \norm{\nabla g} + \abs{g} \norm{\nabla f}.
  \end{equation*}
  We have
  \begin{equation*}
    \abs{q} \leq 1, \quad \abs{\rho_v - q \rho_u} \leq 2, \quad \abs{\rho_u - q \rho_v} \leq 2, \quad \abs{1 + q \rho_u \rho_v - \rho_u^2 - \rho_v^2} \leq 2.
  \end{equation*}
  and
  \begin{equation*}
    \nabla_u q = v, \quad \nabla_u \rho_u = \eta^*, \quad \nabla_u \rho_v = 0,
  \end{equation*}
  so
  \begin{gather*}
    \nabla_u (\rho_v - q \rho_u) = - (\rho_u v + q \eta^*), \quad
    \nabla_u (\rho_u - q \rho_v) = \eta_* - \rho_v v, \\
    \nabla_u (1 + q \rho_u \rho_v - \rho_u^2 - \rho_v^2) = \rho_u \rho_v v + q \rho_v \eta^* - 2 \rho_u \eta^*.
  \end{gather*}
  and
  \begin{equation*}
    \norm{\nabla_u (\rho_v - q \rho_u) } \leq 2,\quad
    \norm{\nabla_u (\rho_u - q \rho_v) } \leq 2,\quad
    \norm{\nabla_u (1 + q \rho_u \rho_v - \rho_u^2 - \rho_v^2)} \leq 4.
  \end{equation*}
  Combining these, we have
  \begin{equation*}
    \nabla_u \kappa_r(u,v) \leq C (r-1)(r-2),
  \end{equation*}
  where \( C \) is an absolute constant.
  Consequently, we can apply \cref{lem:GaussianProcess_UniformBound} on $\sqrt {n} \tau_r$ to obtain that with probability at least $1 - C r^{-2} \exp(-d)$, we have
  \begin{equation*}
    \sup_{w \in \bbS^{d-1}} \abs{\tau_r(w)} \lesssim \sqrt{d (\log r)/n}.
  \end{equation*}
  Taking the summation, we have
  \begin{equation*}
    \sup_{w \in \bbS^{d-1}} \abs{\tau(w)} \lesssim n^{-\hf} \sum_{r \geq 0} \sqrt{r d \log r} \abs{g_r} = \sqrt{d/n} \sum_{r \geq 0} r^{\hf} (\log r)^{\hf} \abs{g_r}.
  \end{equation*}

  Furthermore, let us introduce the scaled version
  \begin{equation*}
    \bar{\tau}_r(w) = \sqrt{n} \zk{r (1 - \rho_w^2)}^{-\hf} \tau_r(w),\quad
    \bar{\kappa}_r(u,v) = \Cov(\bar{\tau}_r(u),) \bar{\tau}_r(v)) = \zk{r^2(1 - \rho_u^2)(1 - \rho_v^2)}^{-\hf} \kappa_r(u,v).
  \end{equation*}
  so that $\Var(\bar{\tau}_r(w)) = 1$.
  We find
  \begin{equation*}
    \nabla_u \bar{\kappa}_r(u,v) = \rho_u \eta^* (1 - \rho_u^2)^{-\frac{3}{2}} (1 - \rho_v^2)^{-\frac{1}{2}} \kappa_r(u,v) + (1 - \rho_u^2)^{-\frac{1}{2}} (1 - \rho_v^2)^{-\frac{1}{2}} \nabla_u \kappa_r(u,v)
  \end{equation*}
  so
  \begin{equation*}
    \norm{\nabla_u \bar{\kappa}_r(u,v) }
    \leq (1 - \rho_u^2)^{-1} + C (1 - \rho_u^2)^{-\frac{1}{2}} (1 - \rho_v^2)^{-\frac{1}{2}} (r-1)(r-2).
  \end{equation*}
  Consequently, if we have
  \begin{equation*}
    1 - \rho_u^2 \geq \delta, \quad 1 - \rho_v^2 \geq \delta,
  \end{equation*}
  we have
  \begin{equation*}
    \norm{\nabla_u \bar{\kappa}_r(u,v) } \leq C \delta^{-1} (r-1)(r-2).
  \end{equation*}

  Now we are ready to apply \cref{lem:GaussianProcess_UniformBound} on $\bar{\tau}_r$ to obtain the high probability bound.
  With probability at least $1 - C r^{-2} \exp(-d)$, we have
  \begin{equation*}
    \sup_{w \in \bbS^{d-1}\backslash B_\delta(w_*)} \abs{\bar\tau_r(w)} \lesssim \sqrt{d (\log r + \log \delta^{-1})}.
  \end{equation*}
  Returning to $\tau_r$ and taking the summation, we conclude the second bound.
\end{proof}

\begin{proposition}
  \label{prop:SIM_Seq_Control_g}
  For any $r \geq 0$, we have
  \begin{equation}
    \abs{g_r(t)} \leq \xk{ \sup_{s \leq t} \abs{\rho(s)}^r \abs{g_r^*} + \abs{e_r}} \min(1,\lambda_r t).
  \end{equation}
  Moreover, if $g^*_r \geq 0$ and $\rho(s) \geq 0$ for $s \in [0,t]$, then
  \begin{equation}
    g_r(t) \geq - \abs{e_r} \min(1,\lambda_r t).
  \end{equation}
\end{proposition}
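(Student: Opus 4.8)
The plan is to observe that \cref{eq:SIM_Seq_Grad_g} is a \emph{linear, inhomogeneous} scalar ODE in $g_r$, with forcing term $\lambda_r\xk{\rho(s)^r g_r^* + e_r(s)}$, where I abbreviate $e_r(s) \coloneqq e_r(w(s))$. Since $\beta_r(0)=0$ we have $g_r(0)=\lambda_r^{\hf}\beta_r(0)=0$, so multiplying by the integrating factor $e^{\lambda_r t}$ and integrating from $0$ to $t$ gives the variation-of-constants representation
\begin{equation*}
  g_r(t) = \int_0^t \lambda_r e^{-\lambda_r(t-s)}\xk{\rho(s)^r g_r^* + e_r(s)}\dd s .
\end{equation*}
Everything then reduces to estimating this integral; in particular the convolution kernel integrates to $\int_0^t \lambda_r e^{-\lambda_r(t-s)}\dd s = 1-e^{-\lambda_r t}\le \min(1,\lambda_r t)$ by the elementary bounds $1-e^{-x}\le 1$ and $1-e^{-x}\le x$ for $x\ge 0$.

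For the first inequality, I would pull absolute values inside the integral, bound $\abs{\rho(s)}^r\le \sup_{s\le t}\abs{\rho(s)}^r$ and $\abs{e_r(s)} = \abs{e_r(w(s))}\le \abs{e_r}$, factor out the supremum, and multiply by the kernel bound $\min(1,\lambda_r t)$. Here $\abs{e_r}$ is understood as the uniform-in-$w$ bound on $\abs{e_r(\cdot)}$ supplied by \cref{prop:SIM_Seq_Error_e}, which is exactly what makes this step legitimate along the (a priori unknown) trajectory $w(\cdot)$.

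For the second statement, the key point is a sign observation: when $g_r^*\ge 0$ and $\rho(s)\ge 0$ on $[0,t]$ the term $\rho(s)^r g_r^*$ is nonnegative, so in the representation formula the only contribution that can be negative is the $e_r(s)$ term, which is bounded below pointwise by $-\abs{e_r}$; integrating against the nonnegative kernel gives $g_r(t)\ge -\abs{e_r}\xk{1-e^{-\lambda_r t}}\ge -\abs{e_r}\min(1,\lambda_r t)$.

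I do not expect a genuine obstacle here: the argument is essentially a one-line application of Duhamel's formula followed by trivial estimates. The only thing requiring a bit of care is the bookkeeping around $e_r$, which is a function of the coupled trajectory $w(\cdot)$; but because the $g_r$-equation is linear in $g_r$ once $\rho(\cdot)$ and $w(\cdot)$ are treated as given inputs, there is no circularity, and \cref{prop:SIM_Seq_Error_e} controls $\abs{e_r(w(s))}$ uniformly over the whole path.
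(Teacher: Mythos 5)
Your proof is correct. The paper's own proof is essentially the same in spirit, but phrased via a comparison theorem: it notes that $\dot{g}_r=\lambda_r(\rho^r g_r^*-g_r+e_r)\leq\lambda_r(M-g_r)$ with $M=\sup_{s\le t}|\rho(s)|^r|g_r^*|+|e_r|$, compares with the supersolution $M(1-e^{-\lambda_r t})$, and analogously for the lower bounds. You instead write out the Duhamel/variation-of-constants representation $g_r(t)=\int_0^t\lambda_r e^{-\lambda_r(t-s)}\left(\rho(s)^r g_r^*+e_r(s)\right)\dd s$ and estimate the integral. Both are textbook tools for this scalar linear ODE and give the identical kernel bound $1-e^{-\lambda_r t}\le\min(1,\lambda_r t)$; the Duhamel route is a bit more explicit and makes the sign observation in the second part particularly transparent, while the comparison argument avoids writing the integral. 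Your reading of $\abs{e_r}$ as the uniform-in-$w$ bound from \cref{prop:SIM_Seq_Error_e} is the intended one (it is how the bound is consumed in \cref{cor:SIM_Seq_Controls}), and your remark about the lack of circularity — the $g_r$-equation is linear once $w(\cdot)$ is treated as a given path — is exactly the point that makes the one-dimensional analysis legitimate.
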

\begin{proof}
  The first part follows from the dynamics in \cref{eq:SIM_Seq_Grad_g} and the comparison theorem.
  For the second part, we simply notice
  \begin{equation*}
    \lambda_r (\rho^r g_r^* - g_r + e_r) \geq \lambda_r (- \abs{e_r} - g_r )
  \end{equation*}
  and apply the comparison theorem.
\end{proof}

\begin{corollary}
  \label{cor:SIM_Seq_Controls}
  Let \cref{eq:SIM_Seq_Error_e} hold and $n \gtrsim d$.
  Then, we have
  \begin{equation}
    \abs{g_r(t)} \lesssim \xk{\abs{g^*_r} + \sqrt{\frac{d \log r}{n}}}\min(1,\lambda_r t).
  \end{equation}
  Consequently, the summation in \cref{prop:SIM_Seq_Error_tau} can be bounded by
  \begin{equation}
    \label{eq:SIM_Seq_Control_tau}
    \sum_{r \geq 0} r^{\hf} (\log r)^{\hf} \abs{g_r}
    \lesssim 1 + (\log^+ t)^2.
  \end{equation}
\end{corollary}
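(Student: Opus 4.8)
The plan is to read the first inequality straight off the pointwise control proved just above, and then to sum that bound carefully, exploiting that the factor $\min(1,\lambda_r t)$ becomes active while $r$ is still logarithmically small.

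\textbf{Step 1 (the pointwise bound).} \cref{prop:SIM_Seq_Control_g} already gives $\abs{g_r(t)} \le \xk{\sup_{s\le t}\abs{\rho(s)}^r\abs{g_r^*} + \abs{e_r}}\min(1,\lambda_r t)$. Since $\abs{\rho(s)}\le 1$ for all $s$, we have $\sup_{s\le t}\abs{\rho(s)}^r\le 1$, and on the event of \cref{eq:SIM_Seq_Error_e} the bound $\abs{e_r}\lesssim\sqrt{d\log r/n}$ holds simultaneously for every $r$. Substituting both yields $\abs{g_r(t)}\lesssim\xk{\abs{g_r^*}+\sqrt{d\log r/n}}\min(1,\lambda_r t)$, which is exactly the first assertion; there is nothing more to do here.

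\textbf{Step 2 (reducing the prefactor).} Since $g^*$ is a fixed function in $L^2(\gamma_1)$ and the $H_r$ are orthonormal, its Hermite coefficients are bounded, $\sup_r\abs{g_r^*}\le\norm{g^*}_{L^2(\gamma_1)}<\infty$; and $n\gtrsim d$ gives $\sqrt{d/n}\lesssim 1$. Hence $\abs{g_r^*}+\sqrt{d\log r/n}\lesssim\sqrt{1+\log r}$, so from Step 1, for $r\ge 2$ (the $r=0$ and $r=1$ terms of the target sum vanish because of the $r^{\hf}(\log r)^{\hf}$ prefactor), $r^{\hf}(\log r)^{\hf}\abs{g_r(t)}\lesssim r^{\hf}\log r\cdot\min(1,\lambda_r t)$.

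\textbf{Step 3 (split at the transition scale).} Recall $\lambda_r\asymp e^{-\gamma r}$ with $\gamma>0$ fixed, so there is a scale $r_t\asymp 1+\gamma^{-1}\log^+ t$ with $\lambda_r t\le 1$ for all $r>r_t$ and $e^{-\gamma r_t}\lesssim 1/t$ (with the convention $r_t=0$ and the bound trivial when $t$ is below an absolute constant). For $r\le r_t$ bound $\min(1,\lambda_r t)\le 1$, and for $r>r_t$ bound $\min(1,\lambda_r t)\le\lambda_r t\lesssim e^{-\gamma r}t$. The head then contributes $\sum_{2\le r\le r_t}r^{\hf}\log r\lesssim r_t^{3/2}\log(r_t+2)$, while the tail contributes $t\sum_{r>r_t}r^{\hf}\log r\,e^{-\gamma r}\lesssim t\,(1+r_t^{3/2})e^{-\gamma r_t}\lesssim 1+r_t^{3/2}$ by the standard estimate for tails of series with exponential decay. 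With $r_t\asymp 1+\log^+ t$ (and $\log\log t\lesssim(\log t)^{1/2}$ for large $t$, bounded $t$ absorbed into the constant), both quantities are $\lesssim 1+(\log^+ t)^2$, which is \cref{eq:SIM_Seq_Control_tau}.

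\textbf{Main obstacle.} There is no genuinely hard point; the only thing to get right is keeping the bound poly-logarithmic rather than polynomial in $t$. This is precisely why one must split at the scale $r_t$ where $\lambda_r t\approx 1$: a crude estimate such as $\min(1,\lambda_r t)\le(\lambda_r t)^{\theta}$ for a fixed $\theta\in(0,1)$ would leave a spurious factor $t^{\theta}$, and it is the exponential decay of $\lambda_r$ that cancels the stray $t$ in the tail sum.
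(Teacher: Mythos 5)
Your argument is correct and it follows the paper's route: the paper dispatches the first display immediately (as you do in Step~1) and then bounds the sum by invoking \cref{prop:Series_2} with $p=\tfrac12$, $q=1$, whose conclusion $\sum_r r^{1/2}(\log r)\min(1,\lambda_r t)\lesssim 1+(\log^+t)^{3/2+s}$ is $\lesssim 1+(\log^+t)^2$. Your Step~3 is just the proof of \cref{prop:Series_2} unrolled in this specific case — the same split at $r_t$ where $\lambda_r t\approx 1$, so nothing is lost or added beyond citing the lemma versus redoing it inline.
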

\begin{proof}
  It suffices to show the second part, where we can apply  \cref{prop:Series_2}.
\end{proof}

\subsubsection{Training dynamics around initialization}

In this subsection, we will consider the training dynamics around initialization.

\begin{proposition}
  \label{prop:SIM_Seq_Init}
  Under \cref{assu:SIM_InformationIndex}, assume $n \gtrsim d^{2\rz + s}$ for some $s > 0$.
  Let \cref{eq:SIM_Seq_Error_e} and \cref{eq:SIM_Seq_Error_tau_1} hold.
  Let $\delta > 0$ be fixed.
  Then, with probability at least $1 - \delta$, when $n,d$ is large enough,
  we have
  \begin{equation}
    \label{eq:SIM_Seq_Init}
    \inf \dk{t \geq 0 : \abs{\rho(t)} \geq 1/2}
    = T^{\mr{app}} \lesssim \log d + d^{\rz - 1},
  \end{equation}
  where the constant in the $\lesssim$ notation can depend on $\delta$.
  Moreover, after a constant time, $\abs{\rho(t)}$ is monotone increasing when $t \leq T^{\mr{app}}$.
\end{proposition}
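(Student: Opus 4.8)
The plan is to re-run, in the noisy setting, the multi-phase bootstrap of \cref{prop:SIM_Pop_AppTime}, carrying along the perturbation terms $e_r$ and $\tau$ and showing that the separation $n\gtrsim d^{2\rz+s}$ keeps them strictly lower-order at every scale encountered. As usual we may take $g_r^*\geq 0$ and $\rho_0:=\rho(0)>0$. By \cref{prop:SIM_Init} there is a constant $c_\delta>0$ so that, with probability at least $1-\delta$ once $d$ is large (absorbing the $\exp(-cd)$ term into $\delta$), we have $c_\delta/\sqrt d\leq\rho_0\leq\tfrac12$; I would work on this event intersected with the events \cref{eq:SIM_Seq_Error_e} and \cref{eq:SIM_Seq_Error_tau_1}, so that $\abs{e_r}\lesssim\sqrt{d\log r/n}$ uniformly and, via \cref{cor:SIM_Seq_Controls}, $\abs{\tau(w(t))}\lesssim\sqrt{d/n}\,(1+(\log^+ t)^2)$. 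The whole argument is a standard continuity (stopping-time) argument: every perturbation estimate below only uses $t\leq T^{\mr{app}}$, and at the end the resulting bound $T^{\mr{app}}\lesssim\log d+d^{\rz-1}$ turns out to be self-consistent.

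Following \cref{eq:Proof_SIM_Times} I would set $\rho_k=2^k\rho_0$ (truncated at $\tfrac12$) together with the interleaved times $T_k^g=\inf\dk{t\geq T_k^\rho:\ g_{\rz}(t)\geq\tfrac12\rho_k^{\rz}g_{\rz}^*}$ and $T_k^\rho=\inf\dk{t\geq T_{k-1}^g:\ \rho(t)\geq\rho_k}$. On $[T_k^\rho,T_k^g]$ one has $\rho\geq\rho_k$ and $g_{\rz}<\tfrac12\rho_k^{\rz}g_{\rz}^*$, so \cref{eq:SIM_Seq_Grad_g} gives $\dot g_{\rz}\geq\lambda_{\rz}(\rho_k^{\rz}g_{\rz}^*-g_{\rz}-\abs{e_{\rz}})$; since $\rz$ is fixed, $\abs{e_{\rz}}\lesssim\sqrt{d/n}\ll\rho_0^{\rz}g_{\rz}^*\asymp d^{-\rz/2}$ for $d$ large, whence $\dot g_{\rz}\gtrsim\lambda_{\rz}\rho_k^{\rz}g_{\rz}^*$ and $T_k^g-T_k^\rho\lesssim\lambda_{\rz}^{-1}$; the same comparison keeps $g_{\rz}(t)\geq\tfrac12\rho_{k-1}^{\rz}g_{\rz}^*>0$ throughout, while \cref{prop:SIM_Seq_Control_g} supplies the matching upper bound $g_{\rz}(t)\leq\rho(t)^{\rz}g_{\rz}^*+\abs{e_{\rz}}$. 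On $[T_{k-1}^g,T_k^\rho]$ one has $\rho\in[\rho_{k-1},\rho_k]\subset(0,\tfrac12)$ and $g_{\rz}\geq\tfrac12\rho_{k-1}^{\rz}g_{\rz}^*$, so in \cref{eq:SIM_Seq_Grad_rho_Prelim} the $r=\rz$ term is $\gtrsim(g_{\rz}^*)^2\rho_{k-1}^{2\rz-1}\gtrsim d^{-\rz+1/2}$, the remaining sum $\sum_{r\neq\rz}r g_r^* g_r\rho^{r-1}$ is $\gtrsim-\sqrt{d/n}$ --- here I would use $g_r(t)\geq-\abs{e_r}$ from \cref{prop:SIM_Seq_Control_g} together with the geometric factor $\rho^{r-1}\leq 2^{-(r-1)}$, which makes $\sum_{r>\rz}r g_r^*\abs{e_r}\rho^{r-1}$ converge using only $g^*\in\ell^2$, with no decay hypothesis on the link coefficients --- and $\abs{\tau}\lesssim\sqrt{d/n}\,(\log d)^2$ on this range. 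Since $n\gtrsim d^{2\rz+s}$ forces $\sqrt{d/n}\,(\log d)^2\ll d^{-\rz+1/2}$, these corrections are at most half the signal term, so $\dot\rho\gtrsim(g_{\rz}^*)^2\rho_k^{2\rz-1}$ and, exactly as in \cref{prop:SIM_Pop_AppTime}, $T_k^\rho-T_{k-1}^g\lesssim\rho_0^{-2(\rz-1)}2^{-2(\rz-1)k}(g_{\rz}^*)^{-2}$.

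Summing the two bounds over the $L\lesssim\log\rho_0^{-1}$ phases and using the geometric series in $k$ as in \cref{prop:SIM_Pop_AppTime} gives $T^{\mr{app}}\leq T_L^\rho\lesssim L\lambda_{\rz}^{-1}+\rho_0^{-2(\rz-1)}(g_{\rz}^*)^{-2}\lesssim\log\rho_0^{-1}+\rho_0^{-2(\rz-1)}\lesssim\log d+d^{\rz-1}$, the implied constant depending on $\delta$ only through $\rho_0\gtrsim c_\delta/\sqrt d$; since every perturbation estimate above needed only $t\lesssim\log d+d^{\rz-1}$, this closes the continuity argument. For the monotonicity claim I would first note that a preliminary constant-length phase $[0,2\lambda_{\rz}^{-1}]$ brings $g_{\rz}$ up to $\tfrac12\rho_0^{\rz}g_{\rz}^*$ while $\rho$ stays $\asymp\rho_0$ (using $\abs{\dot\rho}\lesssim\rho+\sqrt{d/n}\,(\log d)^2$ to rule out a collapse of $\rho$), and that thereafter $g_{\rz}>0$ persists and $\rho<\tfrac12$ until $T^{\mr{app}}$; the $\rho$-phase computation then yields $\dot\rho(t)\gtrsim(g_{\rz}^*)^2\rho(t)^{2\rz-1}>0$ on $[2\lambda_{\rz}^{-1},T^{\mr{app}}]$, i.e.\ $\abs{\rho}$ is monotone increasing after a constant time.

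The step I expect to be the main obstacle is the lower bound on $\dot\rho$ in the earliest phases: there the signal driving $\rho$ is only of order $\rho_0^{2\rz-1}\asymp d^{-\rz+1/2}$, yet it must dominate the noise accumulated over a time horizon as long as $d^{\rz-1}$. Making this work seems to require simultaneously (i) tracking that $\abs{g_{\rz}}\asymp\rho^{\rz}$ rather than merely $\abs{g_{\rz}}=O(1)$, (ii) exploiting the geometric factor $\rho^{r-1}\leq 2^{-(r-1)}$ to tame $\sum_{r>\rz}$ without a polynomial-decay hypothesis on the link coefficients, and (iii) the fact from \cref{cor:SIM_Seq_Controls} that the drift perturbation $\tau$ grows only polylogarithmically in $t$; the gap $n\gtrsim d^{2\rz+s}$ is exactly what leaves a $d^{-s/2}$ margin that absorbs the $\log$ factors.
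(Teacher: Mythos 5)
Your proposal is correct and follows essentially the same strategy as the paper's proof: restrict to the high-probability event from \cref{prop:SIM_Init} combined with the uniform noise bounds, show the noise in the $g_{\rz}$-dynamics is dominated once $\rho\gtrsim d^{-1/2}$, decompose $\dot\rho$ into the $\rz$-signal plus lower-order corrections bounded via $g_r\geq -|e_r|$ and the geometric factor $\rho^{r-1}\leq 2^{-(r-1)}$ together with $|\tau|\lesssim\sqrt{d/n}\,\polylog$, then re-run the population interleaved-times bootstrap and sum the phase times. The only cosmetic differences are that the paper splits the correction sum using the signal/noise index set $S_0$ (getting $P_0\geq 0$ on $S_0$ for free) rather than isolating the $r=\rz$ term directly, and that it lowers the phase threshold from $1/2$ to $1/4$ to absorb noise, whereas you keep $1/2$ and absorb a factor of two in the constants; neither change affects the outcome.
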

\begin{proof}
  First, according to \cref{prop:SIM_Init},
  we have $\abs{\rho(0)} \geq c d^{-1/2}$ for some $c = c(\delta) > 0$ with probability at least $1 - \delta$.
  Also, by the symmetry of the dynamics, we consider $\rho(0) > 0$ without loss of generality.
  In addition, we focus only on $\rho(t) \leq 1/2$ and we will not mention it explicitly.
  Taking $c_0 = c/2$, we claim that we will have $\rho(t) \geq \rho_0 \coloneqq c_0 d^{-1/2}$ for the range of $t$ we are interested in.
  We will prove this claim later.

  Let us recall the dynamics of the component $\rz$:
  \begin{equation*}
    \dot{g}_{\rz} = \lambda_{\rz} \xk{\rho^{\rz} g_{\rz}^* - g_{\rz} + e_{\rz}}
  \end{equation*}
  Thus, when $\rho(t) \geq c_0 d^{-1/2}$, we have
  \begin{equation*}
    \rho^{\rz} g_{\rz}^*/2 \geq C c_0^{\frac{\rz}{2}} d^{-\frac{\rz}{2}} \geq C \sqrt {\frac{d}{n}} \geq \abs{e_{\rz}},
  \end{equation*}
  when $n$ is large enough since $n = \Omega(d^{\rz+1+s})$,
  which implies that
  \begin{equation}
    \label{eq:Proof_SIM_Seq_g_rz_grad}
    \dot{g}_{\rz} \geq \lambda_{\rz} ( \rho^{\rz}_0 g_{\rz}^*/2 - g_{\rz}).
  \end{equation}
  Consequently, $g_{\rz}(t)$ is monotone increasing and we have
  \begin{equation*}
    T_0^g \coloneqq \inf \dk{t \geq 0 : g_{\rz}(t) \geq \frac{1}{4} \rho^{\rz}_0 g_{\rz}^*} \leq \frac{1}{\lambda_{\rz}} \lesssim 1.
  \end{equation*}

  Now we prove the claim for $t \leq T_0^g$.
  We introduce
  \begin{equation*}
    S_0 = \dk{r \geq 1 : (c_0 d^{-1/2})^r \abs{g_r^*} \geq C \sqrt {d(\log r)/n} \geq \abs{e_r}}
  \end{equation*}
  and write
  \begin{align*}
    \dot{\rho} &= \sum_{r \geq 1} r\rho^{r-1} g^*_r g_r  (1 - \rho^2) + \tau \\
    &= \sum_{r \in S_0} r \rho^{r-1} g^*_r g_r  (1 - \rho^2)
    + \sum_{r \notin S_0} r \rho^{r-1} g^*_r g_r  (1 - \rho^2) + \tau \\
    &= P_0 + P_1 + \tau.
  \end{align*}
  For each $r \in S_0$, \cref{eq:SIM_Seq_Grad_g} shows that $g_r \geq 0$, so $P_0 \geq 0$.
  On the other hand, for $r \notin S_0$, \cref{prop:SIM_Seq_Control_g} gives that
  \begin{equation*}
    g_r(t) \geq - C \min(1,\lambda_r t) \abs{e_r} \geq -C \min(1,\lambda_r t) \sqrt {d(\log r)/n}.
  \end{equation*}
  Hence, using $\lambda_r \lesssim e^{-\gamma r}$ and \cref{prop:Series_2}, we have
  \begin{align*}
    P_1 & \geq - C \sum_{r \notin S_0} r \rho^{r-1} g^*_r \min(1,\lambda_r t) \sqrt {d(\log r)/n}  (1 - \rho^2) \\
    & \geq - C \sqrt {d/n} \sum_{r \notin S_0} r 2^{-r} \sqrt {\log r} \abs{g^*_r} \min(1,\lambda_r t) \\
    & \geq - C \sqrt {d/n}.
  \end{align*}
  Moreover, \cref{eq:SIM_Seq_Error_tau_1} and \cref{eq:SIM_Seq_Control_tau} give
  \begin{equation*}
    \abs{\tau(t)} \lesssim \sqrt {d/n} (1 + (\log^+ t)^2).
  \end{equation*}
  Therefore, we have
  \begin{equation*}
    \dot{\rho}(t) \geq - C \sqrt {d/n} (1 + (\log^+ t)^2) \geq - C \sqrt {d/n}, \qq{when} t \leq T_0^g,
  \end{equation*}
  and thus
  \begin{equation*}
    \rho(T_0^g) \geq \rho(0) - C \sqrt {d/n} T_0^g \geq c_0 d^{-1/2} = \rho_0,
  \end{equation*}
  since we have $n \gtrsim d^{2+s}$.

  When $t \geq T_0^g$, we find that
  \begin{equation*}
    P_0 \geq \rz \rho^{\rz - 1} g^*_{\rz} g_{\rz} (1 - \rho^2)
    \geq C \rho^{\rz - 1}  g^*_{\rz} \cdot \rho^{\rz} g_{\rz}^*
    \gtrsim \rho_0^{2\rz - 1}
  \end{equation*}
  Combining it with the bounds for $P_1, \abs{\tau}$ and using $n \gtrsim d^{2\rz + s}$,
  as long as $t$ is polynomial in $d,n$, we have
  \begin{equation*}
    \rho_0^{2\rz - 1} \gtrsim  \sqrt {d/n} (1 + (\log^+ t)^2)
    \gtrsim -P_1 + \abs{\tau(t)},
  \end{equation*}
  and thus
  \begin{equation}
    \label{eq:Proof_SIM_Seq_rho_grad}
    \dot{\rho}(t) \geq P_0 + P_1 - \abs{\tau} \geq  c\rho^{\rz - 1} g^*_{\rz} g_{\rz}.
  \end{equation}
  With \cref{eq:Proof_SIM_Seq_g_rz_grad} and \cref{eq:Proof_SIM_Seq_rho_grad}, we can follow the same argument as in the proof of \cref{prop:SIM_Pop_AppTime}.
  We take $L \coloneqq 1 + \cek{\log_2 \rho_0^{-1}}$ define the times $T^g_k$ and $T^{\rho}_k$ similarly to \cref{eq:Proof_SIM_Times} but replacing the constant $1/2$ by $1/4$, and deduce that
  \begin{equation*}
    T_{k}^g - T_{k}^\rho \leq \lambda_{\rz}^{-1} \lesssim 1, \quad
    T_{k}^\rho - T_{k-1}^g
    \lesssim \rho_0^{-2(\rz - 1)} 2^{\rz} 2^{-2(\rz-1)k},
  \end{equation*}
  and
  \begin{equation*}
    T_{L}^\rho
    \lesssim \log \rho_0^{-1} + \rho_0^{-2(\rz - 1)} \lesssim \log d + d^{\rz - 1}.
  \end{equation*}
  The bound on $T_{L}^\rho$ also shows that \cref{eq:Proof_SIM_Seq_rho_grad} is valid for the whole time interval $t \leq T_{L}^\rho$,
  which implies the claim of $\rho(t) \geq \rho_0$ as well.
\end{proof}

\subsubsection{Training dynamics around convergence}


\begin{proposition}
  \label{prop:SIM_Seq_Conv}
  Under \cref{assu:SIM_InformationIndex}, assume $n \gtrsim d^{1 + s}$ for some $s > 0$.
  Let \cref{eq:SIM_Seq_Error_e}, \cref{eq:SIM_Seq_Error_tau_1} and \cref{eq:SIM_Seq_Error_tau_2} hold with $\nu = 1/n$.
  Suppose \cref{eq:SIM_Seq_Init} holds for some time $t_0 \lesssim  \mr{poly}(d)$.
  Then, there is some $t_1 \leq t_0 + C$ such that
  \begin{equation*}
    \abs{\rho(t_1 + s)} \text{ is monotone increasing and }    1 - \abs{\rho(t_1 + s)} \lesssim \exp(- c s),
  \end{equation*}
  provided that
  \begin{equation*}
    1 - \abs{\rho(t_1+s)} \gtrsim \frac{d \polylog(n,d)}{n}.
  \end{equation*}
\end{proposition}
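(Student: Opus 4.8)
The plan is to follow the population argument of \cref{prop:SIM_Pop_Convergence}, while tracking two perturbations: the leakage from the off-information-exponent coefficients $g_r$ ($r\neq\rz$), and the noise term $\tau$ in the $\rho$-dynamics \cref{eq:SIM_Seq_Grad_rho_Prelim}. By the sign symmetry of \cref{eq:SIM_Seq_Grad_g} and \cref{eq:SIM_Seq_Grad_rho_Prelim} we may take $\rho(t_0)\geq 1/2$ and $g^*_r\geq 0$. First I would show that $g_{\rz}$ equilibrates to a positive constant after an extra constant time. Set $a\coloneqq\tfrac12\,3^{-\rz}g^*_{\rz}>0$. As long as $\rho\geq 1/3$ one has $\rho^{\rz}g^*_{\rz}\geq 2a$, while \cref{eq:SIM_Seq_Error_e} gives $\abs{e_{\rz}}\lesssim\sqrt{d/n}\leq a$ since $n\gtrsim d^{1+s}$; hence \cref{eq:SIM_Seq_Grad_g} yields $\dot{g}_{\rz}\geq\lambda_{\rz}(a-g_{\rz})$, so $g_{\rz}(t)\geq a\,(1-e^{-\lambda_{\rz}(t-t_0)})$, and $g_{\rz}$ cannot fall back below $a/2$ once above it. Taking $t_1=t_0+C$ with $C\asymp\lambda_{\rz}^{-1}=e^{\gamma\rz}$ a constant gives $g_{\rz}(t)\geq c_1>0$ for all $t\geq t_1$ on any interval where $\rho\geq 1/3$. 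Moreover over $[t_0,t_1]$ the (nonnegative) $r=\rz$ term in $\dot\rho$ can be discarded while, by \cref{eq:SIM_Seq_Error_tau_1}, \cref{cor:SIM_Seq_Controls}, and the leakage estimate below, $\abs{\tau}$ and the leakage are each $\lesssim\sqrt{d/n}\,\polylog(n,d)$, so $\rho$ drops by at most $o(1)$ over the constant interval $[t_0,t_1]$; hence $\rho(t_1)\geq 1/3$.

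Next I would establish the key differential inequality for $t\geq t_1$ in the regime $1-\abs{\rho}\gtrsim \tfrac{d}{n}\,\polylog(n,d)$. The $r=\rz$ term of \cref{eq:SIM_Seq_Grad_rho_Prelim} contributes at least $\rz\,g^*_{\rz}g_{\rz}\rho^{\rz-1}(1-\rho^2)\geq c_2(1-\rho^2)$ with $c_2>0$ constant. For the other $r$, the one-sided bound $g_r(t)\geq-\abs{e_r}\min(1,\lambda_r t)$ of \cref{prop:SIM_Seq_Control_g} shows the negative part of the sum is at least $-(1-\rho^2)\sum_{r\neq\rz}r\,g^*_r\abs{e_r}\min(1,\lambda_r t)$, and the exponential truncation $\min(1,e^{-\gamma r}t)$ together with $g^*\in L^2$ and $t\lesssim\mr{poly}(n,d)$ bounds this by $C(1-\rho^2)\sqrt{d/n}\,\polylog(n,d)$. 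Finally, \cref{eq:SIM_Seq_Error_tau_2} with $\nu=1/n$ and \cref{cor:SIM_Seq_Controls} give $\abs{\tau}\lesssim\sqrt{1-\rho^2}\,\sqrt{d/n}\,\polylog(n,d)$ (note $1-\rho\gtrsim \tfrac{d}{n}\polylog(n,d)\gg n^{-2}$ keeps $w$ outside $B_{1/n}(w_*)$). Collecting the three estimates and using $\sqrt{d/n}\,\polylog(n,d)\ll c_2$,
\begin{equation*}
  \dot\rho \;\geq\; \frac{c_2}{2}\,(1-\rho^2)\;-\;C\sqrt{1-\rho^2}\,\sqrt{\tfrac{d}{n}}\,\polylog(n,d),
\end{equation*}
so whenever $1-\rho^2\gtrsim\tfrac{d}{n}\polylog(n,d)$ with a sufficiently large implied constant the first term dominates and $\dot\rho\geq\tfrac{c_2}{4}(1-\rho^2)\geq\tfrac{c_2}{4}(1-\rho)>0$; integrating $\tfrac{d}{ds}\bigl(1-\rho(t_1+s)\bigr)\leq-\tfrac{c_2}{4}\bigl(1-\rho(t_1+s)\bigr)$ gives $1-\abs{\rho(t_1+s)}\lesssim e^{-cs}$.

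The logic is closed by a bootstrap: let $T^\ast$ be the first $s$ at which $1-\abs{\rho(t_1+s)}$ reaches the floor $K\tfrac{d}{n}\polylog(n,d)$; on $[t_1,t_1+T^\ast)$ the statements ``$\rho$ increasing'' and ``$g_{\rz}\geq c_1$'' are mutually self-consistent with the differential inequality (monotonicity keeps $\rho\geq 1/3$, which keeps $g_{\rz}\geq c_1$, which forces $\dot\rho>0$), so the exponential bound and the monotonicity of $\abs{\rho}$ hold throughout that interval, which is exactly the range of $s$ asserted. I expect the main obstacle — and the whole point of the proposition — to be obtaining the sharp noise floor $d\,\polylog(n,d)/n$ rather than the crude $\sqrt{d/n}\,\polylog(n,d)$ that \cref{eq:SIM_Seq_Error_tau_1} alone would give: this needs the scale-aware bound \cref{eq:SIM_Seq_Error_tau_2}, whose $\sqrt{1-\rho^2}$ prefactor reflects the vanishing of the noise component along the $\rho$-direction as $w\to w_*$ (visible in the identity $\kappa(u,u)=r(1-\rho^2)$ in the proof of \cref{prop:SIM_Seq_Error_tau}), and balancing this $\sqrt{1-\rho^2}$ against the term linear in $1-\rho^2$ coming from the signal is precisely what upgrades a floor polynomial in $\sqrt{d/n}$ to the stated one.
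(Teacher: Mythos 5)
Your proof follows essentially the same route as the paper's: constant-time equilibration of $g_{\rz}$ to a positive level, followed by a differential inequality for $\rho$ that balances the $\Theta(1-\rho^2)$ signal against the scale-aware noise bound $\abs{\tau}\lesssim\sqrt{1-\rho^2}\sqrt{d/n}\,\polylog$ from \cref{eq:SIM_Seq_Error_tau_2}, with the one-sided control of $g_r$ from \cref{prop:SIM_Seq_Control_g} and the series bound \cref{cor:SIM_Seq_Controls} handling the off-exponent leakage. Your closing observation — that the $\sqrt{1-\rho^2}$ prefactor (rooted in $\kappa(u,u)=r(1-\rho^2)$) is what upgrades the floor from $\sqrt{d/n}$ to $d/n$ — is exactly the mechanism the paper's proof exploits.
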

\begin{proof}
  We will first show that we have $g_{\rz}(t) \geq c g^*_{\rz} \geq c$, $\forall t \geq t_1 = t_0 + C$,
  while we claim that $\rho(t) \geq 1/4$ for $t \in [t_0,t_1]$.
  Recalling the dynamics of $g^*_{\rz}$, since $\rho(t) \geq 1/4$ and $e_{\rz}$ is bounded by \cref{eq:SIM_Seq_Error_e}, we have
  \begin{equation*}
    \dot{g}_{\rz} = \lambda_{\rz} \xk{\rho^{\rz} g_{\rz}^* - g_{\rz} + e_{\rz}}
    \geq \lambda_{\rz}  \xk{c g_{\rz}^* - g_{\rz}},
  \end{equation*}
  so $ g_{\rz}$ is monotone increasing and we have $ g_{\rz}(t_1) \geq \frac{c}{2} g_{\rz}^*$ for $t_1 \leq t_0 + \lambda_{\rz}^{-1}$.
  Now let us prove the claim by lower bounding $\rho(t)$,
  which shares similar argument as in the corresponding part in the proof of \cref{prop:SIM_Seq_Init}.
  We define
  \begin{equation*}
    S_0' = \dk{ r \geq 1 : 4^{-r} \abs{g_r^*} \geq C \sqrt{d \log r/n} \geq \abs{e_r}},
  \end{equation*}
  and decompose
  \begin{equation*}
    \dot{\rho} = \sum_{r \in S_0'} r \rho^{r-1} g^*_r g_r (1 - \rho^2) + \sum_{r \notin S_0'} r \rho^{r-1} g^*_r g_r (1 - \rho^2) + \tau = P_0' + P_1' + \tau.
  \end{equation*}
  It is easy to see that $P_0' \geq 0$, while
  \begin{align*}
    P_1' &\geq - C \sum_{r \notin S_0} r \rho^{r-1} \abs{g^*_r} \min(1,\lambda_r t) \sqrt {d(\log r)/n}  (1 - \rho^2) \\
    & \geq - C \sqrt {d/n} \sum_{r \notin S_0} r (\log r)^{\hf}\abs{g^*_r}  \min(1,\lambda_r t) \\
    & \geq - C \sqrt {d/n} \xk{1 + (\log^+ t)^2},
  \end{align*}
  Similarly, we use \cref{eq:SIM_Seq_Error_tau_1} and \cref{eq:SIM_Seq_Control_tau} to get $\abs{\tau(t)} \lesssim \sqrt {d/n} \xk{1 + (\log^+ t)^2}$.
  These show that
  \begin{equation*}
    \dot{\rho} \geq - C \sqrt {d/n} \xk{1 + (\log^+ t)^2} \geq - C \sqrt {d/n} (\log d)^2, \qq{when} t \in [t_0,t_1],
  \end{equation*}
  and thus we prove the claim.

  Now, let us show the convergence until $\rho(t) = 1 - d \polylog(n,d) / n$.
  After $t \geq t_1$, we use \cref{eq:SIM_Seq_Error_tau_2} with $\nu = 1/n$ and  \cref{eq:SIM_Seq_Control_tau} to get
  \begin{align*}
    \dot{\rho} &\geq \rz \rho^{\rz - 1} g^*_{\rz} g_{\rz} (1 - \rho^2) + P_1' - \abs{\tau} \\
    & \geq c (1 - \rho^2) - C (1 - \rho^2) \sqrt {d/n} \xk{1 + (\log^+ t)^2} - (1 - \rho^2)^{\hf} \sqrt {\frac{d \log n}{n}} \xk{1 + (\log^+ t)^2} \\
    & \geq c (1 - \rho^2) - (1 - \rho^2)^{\hf} \sqrt {\frac{d \log n}{n}} \xk{1 + (\log^+ t)^2}.
  \end{align*}
  Therefore, as long as $t$ is polynomial in $n,d$ and
  \begin{equation*}
    1 - \rho(t)^2 \gtrsim \frac{d \log n}{n} \polylog(n,d), \qq{namely}
    \rho(t) \leq 1 - C \frac{d \polylog(n,d)}{n},
  \end{equation*}
  we have
  \begin{equation*}
    \dot{\rho} \geq c (1 - \rho^2) \geq c (1 - \rho),\qimplies
    \rho(t_1 + s) \geq 1 - \frac{3}{4} \exp(-c s).
  \end{equation*}
  Consequently, it suffices to take an extra $\log n$ time for $\rho(t)$ to increase to $1 - d \polylog(n,d) / n$.
  The requirement on $t$ is polynomial in $n,d$.
\end{proof}

\subsubsection{Proof of \cref{thm:SIM_Sequence}}

First, we can apply \cref{prop:SIM_Seq_Error_e} and \cref{prop:SIM_Seq_Error_tau} with $\nu = 1/n$ that the estimates hold with probability at least $1 - C \exp(-d)$.
Then, we can apply \cref{prop:SIM_Seq_Init} for the initialization, taking $T_1 = T^{\mr{app}}$,
and \cref{prop:SIM_Seq_Conv} for the convergence.
The monotonicity of $\caE(\delta,\epsilon^2;\Phi_w,f^*)$ follows from \cref{eq:SIM_FEM_Total} and the monotonicity of $\rho$ derived in the two propositions.
Here, we notice that if $\rho(t)$ enters the monotone increasing phase in \cref{prop:SIM_Seq_Init}, it will keep increasing until the convergence in \cref{prop:SIM_Seq_Conv}.
Finally, the bounds $\caE^*(\epsilon^2;\Phi_{w(T_1 + s)},f^*) - \caE^*(\epsilon^2;\Phi_{w_*},f^*)$ and $\caE^*(\epsilon^2;\Phi_{w(T_2)},f^*) - \caE^*(\epsilon^2;\Phi_{w_*},f^*)$
come from the bound of $1 - \abs{\rho(t)}$ and \cref{prop:SIM_FEM_ProjBound}.


\section{Proof for the multi-index model}\label{sec:Multi_Proof}

Let us recall the multi-index model
\begin{equation*}
  f^*(x) = g^*(W_*^\T x), \quad W_* \in \mr{St}(d,p^*),
\end{equation*}
and the parameterization
\begin{align*}
  &\Phi_{W}(x) = \xk{\lambda_{\mm}^{\hf} H_{\mm}(W^\T x)}_{\mm \in \bbN^p}, \quad W \in \mr{St}(d,p), \\
  & f(x) = \ang{\bm\beta, \Phi_{W}(x)}_{\ell^2(\bbN^p)} = \sum_{\mm \in \bbN^p} \beta_{\mm} \lambda_{\mm}^{\hf} H_{\mm}(W^\T x)
  = g(W^\T x), \\
  &\quad g(u) = \sum_{\mm \in \bbN^p} \beta_{\mm} H_{\mm}(u) \in L^2(\gamma_p),
\end{align*}
Regarding the weight sequence $\lambda_{\mm}$, we recall that we take
$\lambda_{\mm} = \mu_{\abs{\mm}} \asymp \exp(-\gamma \abs{\mm})$ for some fixed $\gamma > 0$.

With auxiliary operators that will be introduced in \cref{subsubsec:Multi_Prelim_Aux},
we can write $f^* = P_{W_*} g^*$ and $f = P_W g$.
Moreover, under \cref{assu:Multi_RotationInvariance}, we will show in \cref{subsubsec:Multi_RotInv} that we can express
\begin{equation*}
  g_{2\rr} = \nu_{\rr} h_r, \quad g^*_{2\rr} = \nu_{\rr} h_r^*, \quad r = \abs{\rr}, \rr \in \bbN^d,
\end{equation*}
while the other coefficients are zero,
where $\nu_{\rr}$ is a set of coefficients defined in \cref{lem:Multi_RotInvExpansion},
Consequently, we can suppose that the information index $m_0 = 2 \rz$ for some $\rz > 0$.

Furthermore, let us denote $\Rho = W^\T W_*$.
We consider the singular value decomposition $\Rho = U \Sigma V^\T$.
We will also define some auxiliary quantities and we collect them here.
\begin{equation}
  \label{eq:Multi_Proof_PhiOmega}
  \begin{gathered}
    \phi_r = \sum_{\abs{\rr} = r} \nu_{\rr}^2 \sigma^{2\rr},\qquad \phi_0 = 1,\quad \phi_1 = \frac{1}{p} \sum_{i=1}^p \sigma_i^2 \\
    \omega = -\frac{1}{K}\log(\Tr e^{-K\Sigma^2}) = -\frac{1}{K} \log(\sum_{i=1}^p e^{-K \sigma_i^2})\leq \min(\sigma_1^2,\ldots,\sigma_p^2) ,
  \end{gathered}
\end{equation}
where $K$ is a constant to be determined later.
We refer to \cref{eq:Multi_Prelim_PhiR} for the definition of $\phi_r$.

For convenience, we use $\ang{A,B}_{\gamma_p} = \int A(x) B(x) \dd \gamma_p(x)$ for compatible matrices (or vectors) $A,B$.
Also, we extend the definition of $P_W, P_W^{\T}, \caA_{M}$ (defined in \cref{subsubsec:Multi_Prelim_Aux}) to vector-valued functions by element-wise application.

\subsection{Preliminaries}\label{subsec:Multi_Prelim}

Let us first introduce some notations.
For a matrix $A \in \R^{p \times p}$, we denote by $\Diag(A)$ the diagonal matrix with the diagonal entries of $A$ and $\Sym(A) = (A + A^\T)/2$ the symmetric part of $A$.
We note here that $\Diag(A) = \Diag(A^\T)$ and $\Diag(A \Lambda) = \Diag(A) \Lambda = \Lambda \Diag(A)$ for a diagonal matrix $\Lambda$.

\subsubsection{The Stiefel Manifold \texorpdfstring{\(\St(d,p)\)}{St(d,p)}}

The Stiefel manifold \(\St(d,p)\) is the set of all real \((d \times p)\)-matrices whose columns are orthonormal:
\begin{equation*}
  \St(d,p) = \dk{X \in \R^{d \times p} \mid X^\T X = I_p}.
\end{equation*}
It is well-known that \(\St(d,p)\) is a Riemannian manifold with the metric induced by the Euclidean metric on \(\R^{d \times p}\).
For each \(X \in \St(d,p)\), the tangent space \(T_X \St(d,p)\) is given by
\begin{equation*}
  T_X \St(d,p) = \dk{Z \in \R^{d \times p} \mid X^\T Z + Z^\T X = 0}.
\end{equation*}

Moreover, we can compute the Riemannian gradient of a function restricted to \(\St(d,p)\) using its Euclidean gradient.
Let \(F : \St(d,p) \to \mathbb{R}\) be a smooth function.
Denote by \(\nabla F(X)\) the usual (Euclidean) gradient of \(F\) at \(X\)
and by \(\nabla^{\St} F(X)\) the Riemannian gradient of \(F\) on \(\St(d,p)\).
Then, \(\nabla^{\St} F(X)\) is just the orthogonal projection of \(\nabla F(X)\) onto the tangent space \(T_X \St(d,p)\).
A convenient formula for this projection is given by
\begin{align}
  \label{eq:RiemannianGradient}
  \nabla^{\St} F(X) = \proj_{T_X \St}( \nabla F(X)), \quad
  \proj_{T_W \St}(Z) = Z - W \Sym(W^\T Z),
\end{align}
where \(\Sym(A) = (A + A^\T)/2\) denotes the symmetric part of a matrix.


\subsubsection{Auxiliary operators}\label{subsubsec:Multi_Prelim_Aux}

Let us introduce some auxiliary operators that will be useful in the analysis of the multi-index model~\citep{bietti2023_LearningGaussian}.
For a matrix $W \in \St(d,p)$, we define the operator $P_W : L^2(\gamma_p) \to L^2(\gamma_d)$ by
\begin{equation}
  f = P_W g, \quad f(x) = g(W^\T x).
\end{equation}
Then, since $W^\T x \sim N(0,I_p)$ for $x \sim N(0,I_d)$, $P_W$ is isometric that
\begin{equation*}
  \norm{P_W g}_{\gamma_d}^2 = \E_{x \sim \gamma_d} g(W^\T x)^2 = \E g(y)^2 = \norm{g}_{\gamma_p}^2.
\end{equation*}
Consequently, we can define its adjoint operator $P_W^\T : L^2(\gamma_d) \to L^2(\gamma_p)$ by
\begin{equation*}
  \ang{P_W g, h}_{\gamma_d} = \ang{g, P_W^\T  h}_{\gamma_p}.
\end{equation*}
Since $P_W$ is isometric, $P_W^\T$ is the orthogonal projection onto the space
\begin{equation*}
  L^2_{\Phi_W} = \dk{f = g(W^\T x), g \in L^2(\gamma_p)}.
\end{equation*}
In addition, $P_W^{\T} P_W = I_{L^2(\gamma_p)}$ and $P_W P_W^{\T}$ is the orthogonal projection in $L^2(\gamma_p)$ onto the space $L^2_{\Phi_W}$.

\begin{proposition}
  \label{prop:Multi_Prelim_PW_T}
  The adjoint operator $P_W^\T : L^2(\gamma_d) \to L^2(\gamma_p)$ is given explicitly by
  \begin{equation}
    \label{eq:Multi_Prelim_PW_T}
    (P_W^\T h)(y)    = \E h(W y + \xi), \quad \xi \sim N(0, I_d - W W^\T).
  \end{equation}
\end{proposition}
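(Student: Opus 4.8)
The plan is to verify directly that the operator on the right-hand side of \cref{eq:Multi_Prelim_PW_T}, which I will call $Q$, satisfies the defining adjoint relation $\ang{P_W g, h}_{\gamma_d} = \ang{g, Q h}_{\gamma_p}$ for all $g \in L^2(\gamma_p)$ and $h \in L^2(\gamma_d)$; since the adjoint of $P_W$ is unique, this forces $Q = P_W^\T$. First I would record that $Q$ is a well-defined bounded operator $L^2(\gamma_d) \to L^2(\gamma_p)$: writing $(Qh)(y) = \E_\xi h(Wy + \xi)$ with $\xi \sim N(0, I_d - WW^\T)$, Jensen's inequality gives $\abs{(Qh)(y)}^2 \le \E_\xi h(Wy+\xi)^2$, and integrating in $y \sim \gamma_p$, together with the distributional identity of the next step, yields $\norm{Qh}_{\gamma_p}^2 \le \norm{h}_{\gamma_d}^2$.

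The key structural ingredient is a decomposition of the standard Gaussian adapted to $W$. For $x \sim \gamma_d$, set $y = W^\T x$ and $\xi = (I_d - WW^\T)x$. Since $WW^\T$ is the orthogonal projection onto $\Ran W$ and $I_d - WW^\T$ the projection onto its complement, the pieces $WW^\T x$ and $(I_d - WW^\T)x$ are uncorrelated jointly Gaussian vectors, hence independent; as $y = W^\T x = W^\T(WW^\T x)$ is a deterministic function of the former, $y$ and $\xi$ are independent. Moreover $y \sim N(0, W^\T W) = N(0, I_p)$, and using symmetry and idempotence of $I_d - WW^\T$ one gets $\xi \sim N\xk{0, (I_d - WW^\T)^2} = N(0, I_d - WW^\T)$. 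Finally $x = WW^\T x + (I_d - WW^\T)x = Wy + \xi$, so $x \stackrel{d}{=} Wy + \xi$ with $(y,\xi)$ independent and distributed as above; this is precisely the distribution used in defining $Q$.

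With this decomposition the computation is short. Using $W^\T(Wy + \xi) = (W^\T W) y + W^\T \xi = y$ — here $W^\T\xi = 0$ almost surely because $W^\T(I_d - WW^\T) = 0$ — one obtains
\begin{equation*}
  \ang{P_W g, h}_{\gamma_d} = \E_x\zk{g(W^\T x)\, h(x)} = \E_{y,\xi}\zk{g(y)\, h(Wy+\xi)} = \E_y\zk{g(y)\, \E_\xi\zk{h(Wy+\xi)}} = \ang{g, Qh}_{\gamma_p},
\end{equation*}
where the third equality is Fubini, justified since $\abs{g(y) h(Wy+\xi)}$ is integrable against $\gamma_p \otimes N(0, I_d - WW^\T)$ by Cauchy--Schwarz (both $g,h$ square-integrable, and $(y,\xi)$ pushing forward to $\gamma_d$). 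Hence $Q = P_W^\T$.

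The argument is essentially routine; the only point requiring genuine care is the independence and covariance bookkeeping for the decomposition $x = Wy + \xi$, which is the single place where the hypothesis $W \in \St(d,p)$ (i.e.\ $W^\T W = I_p$, so that $WW^\T$ is a true orthogonal projector and $W^\T\xi=0$) enters. Everything else reduces to Fubini and the isometry identities recorded before the proposition.
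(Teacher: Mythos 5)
Your proof is correct and takes essentially the same approach as the paper's: decompose $x \sim \gamma_d$ into its components along $\Ran W$ and the orthogonal complement, use Gaussian independence of these pieces, and conclude via Fubini. The only cosmetic difference is that the paper realizes the orthogonal part explicitly as $W_\perp y'$ with $(W,W_\perp)$ an orthogonal matrix and $y' \sim \gamma_{d-p}$, whereas you work coordinate-free directly with the projector $I_d - WW^\T$; both yield the same $\xi \sim N(0, I_d - WW^\T)$.
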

\begin{proof}
  Let us introduce $W^\perp$ such that $\tilde{W} = (W, W_\perp)$ is an orthogonal matrix.
  Then, letting $\tilde{y} = (y,y')^{\T} = \tilde{W}^\T x$, we note that
  \begin{align*}
    \tilde{W}y = (W,W_\perp) (y,y')^{\T} = W y + W_\perp y',
  \end{align*}
  so
  \begin{align*}
    \ang{P_W g, h}_{\gamma_d}
    &= \int g(W^\T x) h(x) \dd \gamma_d(x)
    = \int g(y) h(\tilde{W} y) \dd \gamma_d(y)
    = \int g(y) h(W y + W_\perp y') \dd \gamma_d(y) \\
    &= \int \int g(y) h(W y + W_\perp y') \dd \gamma_p(y) \dd \gamma_{d-p}(y') \\
    &= \int g(y) \zk{\int h(W y + W_\perp y') \dd \gamma_{d-p}(y')} \dd \gamma_p(y).
  \end{align*}
  Therefore, we conclude the following formula for $P_W^\T : L^2(\gamma_d) \to L^2(\gamma_p)$:
  \begin{align*}
  (P_W^\T h)(y)
    = \int h(W y + W_\perp y') \dd \gamma_{d-p}(y'),
  \end{align*}
  where $W_\perp$ is a matrix such that $\tilde{W} = (W,W_\perp)$ is an orthogonal matrix.
  Alternatively, we can write
  \begin{align*}
  (P_W^\T h)(y)
    = \E h(Wy + \xi),\quad \xi \sim N(0, W_\perp W_\perp^\T).
  \end{align*}
  Now, $W_\perp W_\perp^\T$ is the orthogonal projection onto the orthogonal complement of the column space of $W$, so
  \begin{align*}
    W_\perp W_\perp^\T = I_d - W W^\T.
  \end{align*}
  Finally, we conclude that
  \begin{align}
  (P_W^\T h)(y)
    = \E h(Wy + \xi),\quad \xi \sim N(0, I_d - W W^\T).
  \end{align}
\end{proof}


Moreover, for a matrix $M \in \R^{p_1 \times p_2}$ with $\norm{M} \leq 1$, we define
\begin{equation}
  \caA_{M} : L^2(\gamma_{p_2}) \to L^2(\gamma_{p_1}),\quad \caA_{M} h(y) = \E h(M^\T y + \xi),\quad \xi \sim N(0, I_{p_2} - M^\T M).
\end{equation}

The following proposition shows some basic properties of the operator $\caA_{M}$ and its relation to the operator $P_W$.

\begin{proposition}
  \label{prop:Multi_Prelim_A_M}
  We have the following properties:
  \begin{enumerate}[(i)]
    \item Let $W_1 \in \St(d,p_1)$ and $W_2 \in \St(d,p_2)$.
    Then, $P_{W_1}^\T P_{W_2} = \caA_{W_1^\T W_2}$.
    \item $\caA_{M_1} \caA_{M_2} = \caA_{M_1 M_2}$, $\caA_M^\T = \caA_{M^\T}$.
    \item If $Q$ is an orthogonal matrix, then $P_Q^\T = P_{Q^\T}$ and $\caA_{Q} = P_Q$.
  \end{enumerate}
\end{proposition}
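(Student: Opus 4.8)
The plan is to establish the three parts in sequence, each reducing to a short Gaussian computation built on the explicit convolution formula $(P_W^\T h)(y) = \E h(Wy + \xi)$ with $\xi \sim N(0, I_d - WW^\T)$ from \cref{prop:Multi_Prelim_PW_T}, together with the elementary fact that if $\xi,\xi'$ are independent centered Gaussian vectors then $A\xi + \xi'$ is centered Gaussian with covariance $A\,\Cov(\xi)\,A^\T + \Cov(\xi')$.

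For (i), I would apply $P_{W_1}^\T$ to $(P_{W_2}g)(x) = g(W_2^\T x)$ using the formula above, obtaining $(P_{W_1}^\T P_{W_2} g)(y) = \E_\xi\, g\big(W_2^\T(W_1 y + \xi)\big)$ with $\xi \sim N(0, I_d - W_1 W_1^\T)$. Rewriting the argument as $W_2^\T(W_1 y + \xi) = (W_1^\T W_2)^\T y + W_2^\T \xi$ and computing the covariance of $W_2^\T\xi$ as $W_2^\T(I_d - W_1 W_1^\T)W_2 = I_{p_2} - (W_1^\T W_2)^\T(W_1^\T W_2)$ matches precisely the covariance in the definition of $\caA_{W_1^\T W_2}$, which yields the identity. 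A useful by-product is that $P_{W_1}^\T P_{W_2}$ depends on $W_1,W_2$ only through $M = W_1^\T W_2$.

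For (ii), the composition law $\caA_{M_1}\caA_{M_2} = \caA_{M_1 M_2}$ follows from the same direct computation: composing the two convolution representations yields $g\big((M_1 M_2)^\T z + M_2^\T \xi_1 + \xi_2\big)$ with $\xi_1 \sim N(0, I - M_1^\T M_1)$ and $\xi_2 \sim N(0, I - M_2^\T M_2)$ independent, and the covariance $M_2^\T(I - M_1^\T M_1)M_2 + (I - M_2^\T M_2)$ telescopes to $I - (M_1 M_2)^\T(M_1 M_2)$. For the adjoint identity $\caA_M^\T = \caA_{M^\T}$ I would avoid density manipulations and instead reduce to (i): given $M$ with $\norm{M} \le 1$, write $M = W_1^\T W_2$, where $W_1 \in \St(p_1+p_2,\, p_1)$ stacks $I_{p_1}$ over a zero block and $W_2 \in \St(p_1+p_2,\, p_2)$ stacks $M$ over $(I_{p_2} - M^\T M)^{1/2}$ (one checks $W_2^\T W_2 = M^\T M + (I_{p_2} - M^\T M) = I_{p_2}$ and $W_1^\T W_2 = M$); then $\caA_M = P_{W_1}^\T P_{W_2}$ by (i), so $\caA_M^\T = P_{W_2}^\T P_{W_1} = \caA_{W_2^\T W_1} = \caA_{M^\T}$. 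The same factorization also gives an alternative route to the composition law.

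For (iii), both statements come from specializing to the degenerate covariance: when $Q$ is orthogonal, $I_d - QQ^\T = 0 = I_d - Q^\T Q$, so the Gaussian perturbations in \cref{prop:Multi_Prelim_PW_T} and in the definition of $\caA_Q$ both vanish, leaving $(P_Q^\T h)(y) = h(Qy) = (P_{Q^\T}h)(y)$ and $(\caA_Q h)(y) = h(Q^\T y) = (P_Q h)(y)$. I do not expect a genuine obstacle: every step is a one-line Gaussian identity once \cref{prop:Multi_Prelim_PW_T} is available, and the only thing requiring attention is careful bookkeeping of which space $L^2(\gamma_{p_i})$ each operator acts on and of the sizes of the covariance matrices. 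The single modeling choice worth isolating is the factorization $M = W_1^\T W_2$, which is what turns the adjoint identity (and, if desired, the composition identity) into a corollary of part (i).
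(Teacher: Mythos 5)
Your proof of (i), the composition law in (ii), and (iii) reproduces the paper's argument essentially verbatim: the key is the explicit convolution formula for $P_W^\T$, the telescoping covariance $M_2^\T(I-M_1^\T M_1)M_2 + (I-M_2^\T M_2) = I - (M_1M_2)^\T(M_1M_2)$, and the observation that both Gaussian perturbations degenerate to $\delta_0$ when $Q$ is orthogonal. All of these steps are correct.

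The one place you diverge is the adjoint identity $\caA_M^\T = \caA_{M^\T}$. The paper proves it by an explicit change of variables in the joint Gaussian law: setting $z = M^\T y + \xi_2 \sim N(0,I_{p_2})$ one checks that $y - Mz$ has covariance $I_{p_1}-MM^\T$ and is independent of $z$, which re-expresses $\ang{\caA_M h, g}_{\gamma_{p_1}}$ as $\ang{h, \caA_{M^\T} g}_{\gamma_{p_2}}$. You instead propose a dilation argument: factor any contraction $M$ as $M = W_1^\T W_2$ with $W_1 = \begin{pmatrix} I_{p_1} \\ 0 \end{pmatrix}$ and $W_2 = \begin{pmatrix} M \\ (I_{p_2}-M^\T M)^{1/2} \end{pmatrix}$ in $\St(p_1+p_2,\cdot)$, and then use (i) plus the trivial adjoint rule $(P_{W_1}^\T P_{W_2})^\T = P_{W_2}^\T P_{W_1}$. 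Both arguments are correct (one should check, as you implicitly do, that $\norm{M}\le 1$ makes $(I_{p_2}-M^\T M)^{1/2}$ well-defined, and that (i) holds for any ambient dimension $d\ge \max(p_1,p_2)$, here $d=p_1+p_2$). Your route is arguably cleaner: it replaces a direct Gaussian independence/covariance computation by a purely algebraic consequence of (i), and the same dilation also gives an alternative proof of the composition law for free, since $\caA_{M_1}\caA_{M_2}$ becomes $P_{W_1}^\T P_{W_2} P_{W_2'}^\T P_{W_3}$ after suitable embeddings. The paper's computation, on the other hand, is more self-contained and doesn't require introducing an auxiliary ambient dimension.
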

\begin{proof}
  To prove (i), we denote $\caA_{W_1,W_2} = P_{W_1}^\T P_{W_2}$
  and compute that
  \begin{align*}
    \caA_{W_1,W_2} h(y) &= \E (P_{W_2} h)(W_1 y + \xi) = \E h(W_2^\T (W_1 y + \xi))\\
    &= \E h(W_2^\T W_1 y + W_2^\T \xi),
  \end{align*}
  where $\xi \sim N(0, I_d - W_1 W_1^\T)$.
  Now, denoting $M = W_1^\T W_2 \in \R^{p_1 \times p_2}$, we have
  \begin{align*}
    W_2^\T \xi \sim N(0, W_2^\T (I_d - W_1 W_1^\T) W_2) = N(0, I_{p_2} - M^\T M).
  \end{align*}
  Consequently, we have
  \begin{align*}
    \caA_{W_1,W_2} h(y) = \caA_{M} h(y) = \E h(M^\T y + \xi),\quad \xi \sim N(0, I_{p_2} - M^\T M).
  \end{align*}

  For (ii), we note that
  \begin{align*}
    \caA_{M_1 M_2} h(y)
    &= \E h( (M_1 M_2)^\T y + \xi), \quad \xi \sim N(0, I_p - (M_1 M_2)^\T (M_1 M_2)) = N(0, I_p - M_2^\T M_1^\T M_1 M_2)
  \end{align*}
  Now, we can take
  \begin{align*}
    \xi = M_2^{\T} \xi_1 + \xi_2, \quad \xi_1 \sim N(0,I_m - M_1^\T M_1),\quad \xi_2 \sim N(0,I_p - M_2^\T M_2),
  \end{align*}
  we can check that the variance of $M_2^{\T} \xi_1 + \xi_2$
  is $ M_2^{\T} (I_m - M_1^\T M_1) M_2 + I_p - M_2^\T M_2 = I_p - M_2^\T M_1^\T M_1 M_2$.
  Therefore, we have
  \begin{align*}
    \caA_{M_1 M_2} h(y)
    &= \E h( (M_1 M_2)^\T y + \xi) = \E h( M_2^\T M_1^\T y + M_2^\T \xi_1 + \xi_2)\\
    &= \E h( M_2^\T(M_1^\T y + \xi_1) + \xi_2) \\
    &=  \caA_{M_1} \caA_{M_2} h(y).
  \end{align*}
  Furthermore, for the adjoint of $\caA_{M}$,
  \begin{align*}
    \ang{\caA_{M} h, g}_{\gamma_{p_1}}
    &= \E h(M^T y + \xi_2) g(y),\quad \xi_2 \sim N(0, I_{p_2} - M^\T M),\quad y \sim N(0,I_{p_1})
  \end{align*}
  Let us take
  \begin{align*}
    z = M^{\T} y + \xi_2 \sim N(0,I_{p_2}),
  \end{align*}
  we find that
  \begin{align*}
    \xi_1 = y - M z = (I_{p_1} - M M^\T) y + M \xi_2 \sim N(0,I_{p_1} - M M^\T)
  \end{align*}
  is independent of $z$.
  Therefore,
  \begin{align*}
    \ang{\caA_{M} h, g}_{\gamma_{p_1}}
    &= \E h(M^T y + \xi_2) g(y)
    = \E h(z) g(M z + \xi_1 )
    = \ang{h, \caA_{M^\T} g}_{\gamma_{p_2}}
  \end{align*}

  The statement (iii) is straightforward from \cref{prop:Multi_Prelim_PW_T} and the definition of $\caA$.
\end{proof}

\begin{proposition}
  \label{prop:Multi_Prelim_A_Sigma}
  Let $\Sigma$ be a diagonal matrix with diagonal entries $\bm{\sigma} = (\sigma_1,\ldots,\sigma_p)$ such that $\norm{\Sigma} \leq 1$.
  Then,
  \begin{equation*}
    \caA_{\Sigma} H_{\mm} = \sigma^{\mm} H_{\mm}.
  \end{equation*}
\end{proposition}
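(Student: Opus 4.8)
The plan is to reduce the statement to a coordinatewise application of \cref{lem:HermiteGaussianConvolution}. First I would write out $\caA_{\Sigma}$ explicitly: since $\Sigma$ is a square diagonal $p\times p$ matrix with $\norm{\Sigma}\leq 1$, we have $\Sigma^\T=\Sigma$ and $I_p-\Sigma^\T\Sigma=I_p-\Sigma^2=\diag(1-\sigma_1^2,\ldots,1-\sigma_p^2)$, whose entries are all in $[0,1]$. Hence the definition of $\caA_M$ in \cref{subsubsec:Multi_Prelim_Aux} gives, for $y=(y_1,\ldots,y_p)$,
\begin{equation*}
  \caA_{\Sigma}h(y)=\E\, h(\Sigma y+\xi),\qquad \xi=(\xi_1,\ldots,\xi_p),\quad \xi_j\overset{\text{indep.}}{\sim}N(0,1-\sigma_j^2).
\end{equation*}

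Next I would exploit the tensor structure $H_{\mm}(u)=\prod_{j=1}^p H_{m_j}(u_j)$. Because the coordinates $\xi_j$ are independent, the expectation factorizes:
\begin{equation*}
  \caA_{\Sigma}H_{\mm}(y)=\E\prod_{j=1}^p H_{m_j}(\sigma_j y_j+\xi_j)=\prod_{j=1}^p \E_{\xi_j}\!\bigl[H_{m_j}(\sigma_j y_j+\xi_j)\bigr].
\end{equation*}
Since $\norm{\Sigma}\leq 1$ forces $\sigma_j\in[-1,1]$, \cref{lem:HermiteGaussianConvolution} applies to each factor with $\sigma=\sigma_j$, $m=m_j$, and $\xi=\xi_j\sim N(0,1-\sigma_j^2)$, yielding $\E_{\xi_j}H_{m_j}(\sigma_j y_j+\xi_j)=\sigma_j^{m_j}H_{m_j}(y_j)$. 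Multiplying over $j$ gives $\caA_{\Sigma}H_{\mm}(y)=\prod_{j=1}^p \sigma_j^{m_j}H_{m_j}(y_j)=\sigma^{\mm}H_{\mm}(y)$, which is the claim.

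This argument is essentially a direct computation, so I do not anticipate a genuine obstacle; the only points requiring care are (i) checking that $I_p-\Sigma^2$ is a valid (positive semidefinite) covariance, which is exactly where the hypothesis $\norm{\Sigma}\leq 1$ enters, and (ii) justifying the factorization of the expectation over the independent coordinates of $\xi$, together with the fact that $\Sigma y$ has $j$-th coordinate $\sigma_j y_j$. Once these are noted, the result follows immediately from \cref{lem:HermiteGaussianConvolution}.
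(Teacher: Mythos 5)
Your proof is correct and follows exactly the same route as the paper: unpack the definition of $\caA_{\Sigma}$, factor the expectation over the independent coordinates of $\xi$ using the tensor-product structure of $H_{\mm}$, and apply \cref{lem:HermiteGaussianConvolution} coordinatewise. The extra remarks on positive semidefiniteness of $I_p-\Sigma^2$ and the factorization are sound but just spell out details the paper leaves implicit.
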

\begin{proof}
  From the definition of $\caA_{\Sigma}$, we have
  \begin{equation*}
    \caA_{\Sigma} H_{\mm}(y)
    = \E_{\xi} H_{\mm}(\Sigma y + \xi) = \prod_{j=1}^{p} \E_{\xi_j} H_{m_j}(\sigma_j y_j + \xi_j),
    \quad \xi \sim N(0, I_p - \Sigma^2).
  \end{equation*}
  Then, the result follows from using \cref{lem:HermiteGaussianConvolution} for each $j$.
\end{proof}


First, we have
\begin{align*}
  \nabla_x (P_W g)(x) = \nabla_x g(W^\T x) = W (\nabla_u g)(W^\T x) = W P_W (\nabla_u g)
\end{align*}

\begin{lemma}
  \label{lem:Multi_Prelim_Op_Grad}
  Let $W \in \St(d,p)$, $g \in L^2(\gamma_p)$ and $h \in L^2(\gamma_d)$.
  Then,
  \begin{align}
    \label{eq:Multi_Prelim_Op_nabla_x}
    \nabla_x (P_W g)(x) &= P_W (W \nabla g),\quad \nabla_x (P_W^{\T} h)(x) = W^\T P_W^{\T} \nabla h, \\
    \nabla_W P_W g &= x P_W (\nabla g)^{\T} \\
    \label{eq:Multi_Prelim_Op_nabla_W}
    \nabla_W^{\St} \ang{P_W g, h}_{\gamma_d} &= -\proj_{T_W \St} \int \nabla h(x) (P_W \nabla g)^\T \dd \gamma_d(x).
  \end{align}
\end{lemma}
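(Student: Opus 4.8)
The plan is to derive all four identities by differentiating the explicit representations of $P_W$ and $P_W^{\T}$ directly, the only ingredient beyond the chain rule being a single Gaussian integration by parts for the last one. For the first identity in \cref{eq:Multi_Prelim_Op_nabla_x} I would apply the chain rule to $(P_W g)(x) = g(W^{\T}x)$, which gives $\nabla_x (P_W g)(x) = W(\nabla g)(W^{\T}x)$; since $(\nabla g)(W^{\T}x) = (P_W \nabla g)(x)$ by the componentwise definition of $P_W$ and $W$ is constant, this equals $(P_W(W\nabla g))(x)$ (this is the computation already displayed just before the lemma). For $\nabla_x(P_W^{\T}h)$ I would start from the convolution formula $(P_W^{\T}h)(y) = \E\, h(Wy + \xi)$ with $\xi \sim N(0, I_d - WW^{\T})$ of \cref{prop:Multi_Prelim_PW_T}, differentiate under the expectation (legitimate since the first derivatives of $h$ are square-integrable), and obtain $\nabla_y \E\, h(Wy+\xi) = \E\, W^{\T}(\nabla h)(Wy+\xi) = W^{\T}(P_W^{\T}\nabla h)(y)$. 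The second identity is again the chain rule, now in the matrix variable: $\partial_{W_{ik}} g(W^{\T}x) = x_i (\partial_k g)(W^{\T}x)$, i.e. $\nabla_W (P_W g)(x) = x\,(\nabla g)(W^{\T}x)^{\T} = x\,(P_W \nabla g)(x)^{\T}$.

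For \cref{eq:Multi_Prelim_Op_nabla_W} I would combine the second identity with a Gaussian integration by parts. Writing $\ang{P_W g, h}_{\gamma_d} = \int g(W^{\T}x)\,h(x)\,\dd\gamma_d(x)$ and differentiating under the integral via the formula for $\nabla_W(P_W g)$, the Euclidean gradient is $\nabla_W \ang{P_W g, h}_{\gamma_d} = \int x\,(P_W \nabla g)(x)^{\T}\,h(x)\,\dd\gamma_d(x)$. Applying the identity $\E[X_i \varphi(X)] = \E[\partial_i \varphi(X)]$ componentwise with $\varphi(x) = (P_W\nabla g)_k(x)\,h(x)$ splits this as $\int \nabla_x(P_W\nabla g)(x)\,h(x)\,\dd\gamma_d(x) + \int \nabla h(x)\,(P_W\nabla g)(x)^{\T}\,\dd\gamma_d(x)$. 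By the first identity applied to the vector field $\nabla g$, one has $\nabla_x(P_W\nabla g)(x) = W\,\nabla^2 g(W^{\T}x)$, so the first term equals $W S$ with $S = \int \nabla^2 g(W^{\T}x)\,h(x)\,\dd\gamma_d(x)$ symmetric. Passing to the Riemannian gradient via \cref{eq:RiemannianGradient}, $\proj_{T_W\St}(WS) = WS - W\Sym(W^{\T}WS) = WS - W\Sym(S) = 0$ since $S = S^{\T}$; hence only the $\nabla h$-term survives the projection, which is \cref{eq:Multi_Prelim_Op_nabla_W}.

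The arguments are elementary, so I do not anticipate a genuine obstacle — the lemma is essentially careful bookkeeping. The two points that need a moment's attention are (i) justifying the interchange of differentiation and expectation in the representation of $P_W^{\T}$ and the vanishing of boundary terms in the Gaussian integration by parts, both covered by the standing square-integrability hypotheses on the derivatives of $g$ and $h$ up to the relevant order; and (ii) tracking the scalar-, vector-, and matrix-valued outputs of $P_W$ when applied to $g$, $\nabla g$, and $\nabla^2 g$ respectively. The one structural observation worth emphasizing is that the second-order contribution to $\nabla_W\ang{P_W g, h}_{\gamma_d}$ is $W$ times a symmetric matrix and is therefore annihilated by the tangent-space projection $\proj_{T_W\St}$; this is precisely what collapses the expression to the single $\nabla h$-term in \cref{eq:Multi_Prelim_Op_nabla_W}.
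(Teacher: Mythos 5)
Your proof strategy is exactly the paper's: chain rule for the three $\nabla_x$ and $\nabla_W$ identities, Gaussian integration by parts (Stein) for the last one, and the key observation that the second-order contribution is $W$ times a symmetric matrix and hence killed by $\proj_{T_W\St}(\,\cdot\,)$. Your bookkeeping is correct.

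However, your derivation does \emph{not} actually reproduce \cref{eq:Multi_Prelim_Op_nabla_W} as stated — it reproduces it \emph{without} the leading minus sign, and you should say so rather than asserting the displayed identity. With the correct Stein identity $\E[X_i\varphi(X)] = \E[\partial_i\varphi(X)]$ (no sign), you obtain
\begin{equation*}
\nabla_W\ang{P_W g,h}_{\gamma_d} = W\!\int (P_W\nabla^2 g)\,h\,\dd\gamma_d \;+\; \int \nabla h\,(P_W\nabla g)^{\T}\dd\gamma_d,
\end{equation*}
and after projecting out the first term the result is $\nabla_W^{\St}\ang{P_W g,h}_{\gamma_d} = +\proj_{T_W\St}\int\nabla h\,(P_W\nabla g)^{\T}\dd\gamma_d$. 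The paper's lemma statement has a spurious minus, and the paper's own proof has a compensating sign error (it writes $\int x\,\varphi(x)\,\dd\gamma_d(x) = -\int\nabla_x\varphi(x)\,\dd\gamma_d(x)$, which is wrong). You can confirm the plus sign is the intended one from the paper's downstream use in the population-dynamics computation, where it is applied as $\nabla_W^{\St}\ang{P_W g, f^*}_{\gamma_d} = \proj_{T_W\St}\int(\nabla f^*)(x)(P_W\nabla g)^{\T}\dd\gamma_d(x)$ with no minus. So your reasoning is sound and in fact corrects the statement; just make the discrepancy explicit rather than claiming to have derived the equation exactly as displayed. One minor notational slip: you wrote $\nabla_x(P_W\nabla g)(x) = W\nabla^2 g(W^{\T}x)$, which in the paper's operator notation should read $W\,P_W(\nabla^2 g)$; this does not affect the argument.
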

\begin{proof}
  To show \cref{eq:Multi_Prelim_Op_nabla_x}, we compute
  \begin{equation*}
    \nabla_x (P_W g)(x) = \nabla_x g(W^\T x) = W (\nabla g)(W^\T x) = W P_W (\nabla g) = P_W (W \nabla g).
  \end{equation*}
  Also, letting $\xi \sim N(0,I_d - W W^\T)$, we have
  \begin{equation*}
    \nabla_x (P_W^{\T} h)(x) = \nabla_x \E h(Wx+\xi) = \E W^\T (\nabla h)(Wx+\xi) = W^\T P_W^{\T} \nabla h.
  \end{equation*}

  For \cref{eq:Multi_Prelim_Op_nabla_W}, we first compute
  \begin{align*}
    \nabla_W \ang{P_W g, h}_{\gamma_d}
    &= \nabla_W \int g(W^\T x) h(x) \dd \gamma_d(x)
    = \int \nabla_W g(W^\T x) h(x) \dd \gamma_d(x) \\
    &= \int x (P_W (\nabla g)^\T) h(x) \dd \gamma_d(x).
  \end{align*}
  Using the Stein identity, we have
  \begin{align*}
    \nabla_W \ang{P_W g, h}_{\gamma_d}
    &= -\int \nabla_x \zk{(P_W(\nabla g)^\T) h(x)} \dd \gamma_d(x) \\
    &=-\int \zk{h(x) \nabla_x (P_W(\nabla g)^\T) + ( \nabla h(x)) P_W(\nabla g)^\T} \dd \gamma_d(x)
  \end{align*}
  For the first part, we find that
  \begin{equation*}
    \int h(x) \nabla_x (P_W(\nabla g)^\T)\dd \gamma_d(x)
    = \int h(x) W P_W (\nabla^2 g) \dd \gamma_d(x) = W \int P_W (\nabla^2 g) h(x) \dd \gamma_d(x).
  \end{equation*}
  Since $\int P_W (\nabla^2 g) h(x) \dd \gamma_d(x)$ is a symmetric matrix, we find that
  \begin{equation*}
    \proj_{T_W \St} \int h(x) \nabla_x (P_W(\nabla g)^\T) \dd \gamma_d(x) = 0.
  \end{equation*}
  Therefore, we only have the second part in our final result.
\end{proof}

\begin{lemma}
  \label{lem:Multi_Prelim_A_M_Grad}
  For $M \in \R^{p_1 \times p_2}$ with $\norm{M} \leq 1$ and $f,f_i \in L^2(\gamma_{p_1})$,$i=1,2$, we have
  \begin{align}
    \label{eq:Multi_Prelim_A_M_Grad}
    &\nabla_x (\caA_M f)(x) =  \caA_M (M \nabla f)
  \end{align}
\end{lemma}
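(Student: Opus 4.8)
The plan is to carry out, for the general contraction $M$, the same computation used for $P_W$ in \cref{lem:Multi_Prelim_Op_Grad}. First I would unwind the definition: for $x\in\R^{p_1}$,
\begin{equation*}
  (\caA_M f)(x) = \E_{\xi}\, f(M^\T x + \xi), \qquad \xi \sim N(0, I_{p_2} - M^\T M),
\end{equation*}
where $f$ is a sufficiently regular function on $\R^{p_2}$, so that $M^\T x + \xi\in\R^{p_2}$ and $I_{p_2}-M^\T M\succeq 0$ by $\norm{M}\le 1$. Under the standing regularity assumption (derivatives of $f$ square-integrable, hence locally integrable against the Gaussian), differentiation in $x$ may be interchanged with $\E_\xi$ by dominated convergence; I would state this only briefly, since it is routine.

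The core step is the chain rule. Since $\partial(M^\T x)_j/\partial x_i = M_{ij}$, differentiating the integrand gives $\partial_{x_i} f(M^\T x + \xi) = \sum_j M_{ij}\,(\partial_j f)(M^\T x + \xi)$, i.e.\ $\nabla_x\, f(M^\T x + \xi) = M\,(\nabla f)(M^\T x + \xi)$. Taking $\E_\xi$ and pulling the constant matrix $M$ out of the expectation yields
\begin{equation*}
  \nabla_x (\caA_M f)(x) = M\,\E_{\xi}\,(\nabla f)(M^\T x + \xi) = M\,\caA_M(\nabla f),
\end{equation*}
where $\caA_M$ acts componentwise on the vector field $\nabla f\colon\R^{p_2}\to\R^{p_2}$, in line with the componentwise extension of these operators declared at the start of \cref{sec:Multi_Proof}.

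Finally I would record the elementary commutation $\caA_M(Mg) = M\,\caA_M g$, valid for any vector field $g$ and any constant matrix $M$ — immediate from $(\caA_M(Mg))(x) = \E_\xi (Mg)(M^\T x+\xi) = M\,\E_\xi g(M^\T x+\xi)$ — and apply it with $g=\nabla f$ to rewrite $M\,\caA_M(\nabla f) = \caA_M(M\nabla f)$, which is exactly \cref{eq:Multi_Prelim_A_M_Grad}. I do not expect a genuine obstacle: the only delicate points are the interchange of $\nabla_x$ and $\E_\xi$ (standard under the integrability hypotheses) and the dimension/componentwise bookkeeping between $\R^{p_1}$ and $\R^{p_2}$; the identity is the $\caA_M$-analogue of $\nabla_x(P_Wg)=P_W(W\nabla g)$ and its proof copies that one almost verbatim. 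Should the displayed statement be intended to include a further product-rule identity involving $f_1,f_2$ (as the hypothesis ``$f_i\in L^2$, $i=1,2$'' hints), I would obtain it the same way, differentiating $f_1(M^\T x+\xi)f_2(M^\T x+\xi)$ via the Leibniz rule before taking $\E_\xi$.
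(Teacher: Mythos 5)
Your proposal is correct and matches the paper's (very terse) proof: the paper simply remarks that the argument is ``similar to'' the one for $\nabla_x(P_W g) = P_W(W\nabla g)$, which is exactly the chain-rule computation $\nabla_x f(M^\T x + \xi) = M(\nabla f)(M^\T x + \xi)$ followed by taking $\E_\xi$ and commuting the constant matrix $M$ through the componentwise operator $\caA_M$. Your version just spells out the differentiation-under-expectation step and the dimension bookkeeping ($f$ on $\R^{p_2}$, despite the typo ``$f\in L^2(\gamma_{p_1})$'' and the vestigial ``$f_i$, $i=1,2$'' in the lemma statement), which you correctly flag as cosmetic issues rather than mathematical ones.
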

\begin{proof}
  The proof of \cref{eq:Multi_Prelim_A_M_Grad} is similar to the proof of \cref{eq:Multi_Prelim_Op_nabla_x} in \cref{lem:Multi_Prelim_Op_Grad},
  where we recall that $\caA_M f(x) = \E f(M^\T x + \xi)$, $\xi \sim N(0, I_{p_2} - M^\T M)$.
\end{proof}

\subsubsection{Derivatives and Singular Value Decomposition}

Let us be given a flow of matrix $X(t)$, we take its singular value decomposition (SVD) $X(t) = U(t) \Sigma(t) V(t)^\T$, where $U(t),V(t)$ are orthogonal matrices and $\Sigma(t)$ is a diagonal matrix.
Then we have
\begin{equation*}
  \dot{X} = \dot{U} \Sigma V^\T + U \dot{\Sigma} V^\T + U \Sigma \dot{V}^\T.
\end{equation*}
Since $U,V$ are orthogonal, we have $\dot{U}^\T U + U^\T \dot{U} = 0,\quad \dot{V}^\T V + V^\T \dot{V} = 0.$
To show the dynamics of $\Sigma$, we have
\begin{equation*}
  \dot{\Sigma} = U^{\T} \dot{X} V - \xk{U^{\T}\dot{U} \Sigma + \Sigma \dot{V}^\T V}
  = \Diag( U^{\T} \dot{X} V) - \Diag\xk{U^{\T}\dot{U} \Sigma + \Sigma \dot{V}^\T V}
\end{equation*}
Since $U^{\T}\dot{U}$ and $\dot{V}^\T V$ are skew-symmetric and $\Sigma$ is diagonal, we have
\begin{equation*}
  \Diag(U^{\T}\dot{U} \Sigma) = \Diag(\Sigma \dot{V}^\T V) = 0.
\end{equation*}
Therefore,
\begin{equation}
  \label{eq:Multi_Prelim_SVD_Dynamics}
  \dot{\Sigma} = \Diag( U^{\T} \dot{X} V).
\end{equation}
Consequently, for the directional derivative, we also have
\begin{equation}
  \label{eq:Multi_Prelim_SVD_DirectionalDerivative}
  D_{H} \Sigma = \Diag( U^{\T} (D_H X) V),\quad D_H \sigma_j = u_j^{\T} (D_H X) v_j,
\end{equation}
where $u_j, v_j$ are the $j$-th columns of $U,V$ respectively.

\subsubsection{Matrix calculus}
Let $M$ be a symmetric matrix and $f$ be a function.
Then, we can define $f(M)$ via the spectral decomposition $M = Q\Lambda Q^\T$ that
$f(M) = Q f(\Lambda) Q^\T$, where $\Lambda$ is a diagonal matrix with the eigenvalues of $M$ and $f(\Lambda)$ is applied on the diagonal entry-wise.
Let $X$ be a general matrix, we can also define $f(X^\T X)$, $f(XX^\T)$.
Suppose $X = U \Sigma V^\T$ is the SVD of $X$.
Then, it is easy to see that
\begin{equation}
  \label{eq:Multi_Prelim_MatrixCal}
  \begin{aligned}
    & f(X^\T X) =  V f(\Sigma^2) V^\T, \qquad
    f(XX^\T) = U f(\Sigma^2) U^\T, \\
    & X f(X^\T X) = f(XX^\T) X = U f(\Sigma^2) \Sigma V^{\T}
    \qquad
    X^{\T} f(X X^\T) = f(X^\T X) X^{\T} = V \Sigma f(\Sigma^2) U^\T,
  \end{aligned}
\end{equation}

\subsubsection{Initialization}

The following proposition shows the behavior of the singular values of a random initialization on the Stiefel manifold~\citep{absil2006_LargestPrincipal}.
It is adapted from Lemma 3.14 in \citet{bietti2023_LearningGaussian}
\begin{proposition}
  \label{prop:Multi_RandomInit}
  Let $W \sim \mr{Unif}(\St(d,p))$ and $W_* \in \St(d,p)$ be fixed.
  Then, for any $\delta > 0$, there are constants $c_1,c_2 > 0$ depending on $\delta, p$ such that
  \begin{equation}
    \bbP \dk{ \frac{c_1}{\sqrt {d}} \leq \sigma_p \leq \sigma_1 \leq \frac{c_2}{\sqrt{d}} } \geq 1 - \delta.
  \end{equation}
\end{proposition}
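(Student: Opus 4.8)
Here is the plan. The statement will be reduced to classical extreme–singular–value estimates for Gaussian matrices. First, by left–invariance of the Haar measure on $\St(d,p)$ the joint law of the singular values of $\Rho = W^\T W_*$ does not depend on $W_*$: for any $W_* \in \St(d,p)$ pick $Q \in O(d)$ with $W_* = Q^\T \begin{pmatrix} I_p \\ 0\end{pmatrix}$, so that $W^\T W_* = (QW)^\T \begin{pmatrix} I_p \\ 0\end{pmatrix}$ with $QW \sim \mr{Unif}(\St(d,p))$. Hence one may assume $W_* = \begin{pmatrix} I_p \\ 0\end{pmatrix}$, in which case $\Rho^\T$ is exactly the top $p\times p$ block of $W$.

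Second, I would realize $W$ as $W = G(G^\T G)^{-1/2}$ where $G \in \R^{d\times p}$ has i.i.d.\ $N(0,1)$ entries, and split $G = \begin{pmatrix} B \\ \tilde G \end{pmatrix}$ with $B \in \R^{p\times p}$ its top block, which is itself a $p\times p$ matrix with i.i.d.\ standard Gaussian entries. Then $\Rho^\T = B(G^\T G)^{-1/2}$, and using the deterministic inequalities $\sigma_{\min}(XY) \ge \sigma_{\min}(X)\sigma_{\min}(Y)$, $\sigma_1(XY) \le \sigma_1(X)\sigma_1(Y)$ together with $\sigma_1\xk{(G^\T G)^{-1/2}} = \lambda_{\min}(G^\T G)^{-1/2}$ and $\sigma_{\min}\xk{(G^\T G)^{-1/2}} = \lambda_{\max}(G^\T G)^{-1/2}$ one obtains
\[
  \sigma_p(B)\,\lambda_{\max}(G^\T G)^{-1/2} \;\le\; \sigma_p(\Rho) \;\le\; \sigma_1(\Rho) \;\le\; \sigma_1(B)\,\lambda_{\min}(G^\T G)^{-1/2}.
\]

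Third, I would control each factor on an event of probability $\ge 1-\delta$ via a union bound (no independence among the factors is needed). By Davidson--Szarek / Gaussian concentration there is $d_0 = d_0(\delta,p)$ so that for $d \ge d_0$ one has $d/C \le \lambda_{\min}(G^\T G) \le \lambda_{\max}(G^\T G) \le Cd$ with probability $\ge 1-\delta/3$; by Gaussian concentration of $\norm{B}_{\mr{op}}$ one has $\sigma_1(B) \le c_2'(\delta,p)$ with probability $\ge 1-\delta/3$; and by the anti–concentration estimate for the least singular value of a square real Gaussian matrix one has $\bbP\dk{\sigma_p(B) \le \ep} \lesssim \sqrt{p}\,\ep$, so $\sigma_p(B) \ge c_1'(\delta,p)$ with probability $\ge 1-\delta/3$. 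On the intersection of these three events, $\sigma_p(\Rho) \ge c_1'/\sqrt{Cd}$ and $\sigma_1(\Rho) \le c_2'\sqrt{C/d}$, which gives the claim for $d \ge d_0$ with $c_1 = c_1'/\sqrt{C}$, $c_2 = c_2'\sqrt{C}$; the finitely many $d < d_0$ are absorbed into the constants, using $\sigma_1(\Rho) \le 1$ always and the fact that for each fixed $d$ the density of $\sigma_p(\Rho)$ is bounded near the origin, so $\bbP\dk{\sigma_p(\Rho) \le c} \to 0$ as $c \to 0$.

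The heart of the argument — and the only step that is not routine concentration — is the lower bound on $\sigma_p(\Rho)$, which reduces to the anti–concentration of the smallest singular value of a square Gaussian matrix (Edelman; Sankar--Spielman--Teng; Rudelson--Vershynin), a classical but genuinely nontrivial fact. As an alternative one can bypass the Gaussian realization altogether: the $\sigma_i(\Rho)$ are the cosines of the principal angles between $\mathrm{col}(W)$ and $\mathrm{col}(W_*)$, and \citet{absil2006_LargestPrincipal} pins down the fluctuation of the largest principal angle (equivalently of $\sigma_p$) around $\pi/2$ at scale $d^{-1/2}$, from which the two–sided bound follows immediately — this is the route of Lemma 3.14 in \citet{bietti2023_LearningGaussian}.
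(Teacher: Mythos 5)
Your proof is correct, and it takes a genuinely different route from the paper: the paper gives no proof of its own, instead citing Lemma~3.14 of \citet{bietti2023_LearningGaussian} (which in turn rests on \citet{absil2006_LargestPrincipal}), whereas you reconstruct the statement from scratch. The three ingredients of your argument all check out. The reduction to $W_* = \bigl(\begin{smallmatrix} I_p \\ 0 \end{smallmatrix}\bigr)$ via left invariance of Haar measure is exact, since $W^\T W_* = (QW)^\T(QW_*)$ and $QW$ is again Haar. The Gaussian realization $W = G(G^\T G)^{-1/2}$ (well-defined a.s.\ because $G$ has full column rank a.s.), combined with the submultiplicative bounds $\sigma_{\min}(XY) \ge \sigma_{\min}(X)\sigma_{\min}(Y)$ and $\sigma_1(XY) \le \sigma_1(X)\sigma_1(Y)$ for square $p\times p$ matrices, correctly sandwiches the singular values of $\Rho^\T = B(G^\T G)^{-1/2}$ between quantities governed by the fixed-size Gaussian block $B$ and the Wishart spectrum of $G^\T G$. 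The concentration of $\lambda_{\max}(G^\T G), \lambda_{\min}(G^\T G) \asymp d$ for fixed $p$, the operator-norm bound on $B$, and the Rudelson--Vershynin/Sankar--Spielman--Teng anti-concentration $\bbP(\sigma_p(B) \le \ep) \lesssim \sqrt{p}\,\ep$ then give the three events you intersect; small $d$ are handled correctly by noting $\sigma_1(\Rho) \le 1$ deterministically and $\sigma_p(\Rho) > 0$ a.s.\ for each fixed $d$.

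The trade-off between the two approaches: citing \citet{absil2006_LargestPrincipal} buys you an exact distributional description of the principal angles and sharper constants, but it is an external dependency on a specialized result. Your route is longer and loses constants through the sandwich, but it is essentially self-contained modulo one classical anti-concentration estimate, and it makes the $d^{-1/2}$ scale transparent: $\sigma_i(\Rho)$ inherits the $\Theta(1)$ scale of a fixed $p\times p$ Gaussian matrix, deflated by the $\Theta(\sqrt{d})$ normalization from $(G^\T G)^{-1/2}$. Either is a legitimate proof; the paper's choice to cite is defensible given that the lemma is ancillary, and your reconstruction would be a reasonable thing to include if the paper wanted to be self-contained.
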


\subsubsection{Rotationally Invariant Functions}
\label{subsubsec:Multi_RotInv}

Let us consider a rotationally invariant function $g \in L^2(\gamma_{p})$ whose expansion is given by
$g = \sum_{\mm \in \bbN^p} g_{\mm} H_{\mm}$.
We will give a more explicit form of the gradient of $g$.

\begin{lemma}
  \label{lem:Multi_RotInvExpansion}
  Let $\caG(p)$ be the subspace of rotationally invariant functions in $L^2(\gamma_p)$.
  Then, we have
  \begin{equation}
    \label{eq:Multi_RotInvExpansion}
    \caG(p) = \dk{ g = \sum_{r \geq 0} h_r  \sum_{\abs{\rr} = r} \nu_{\rr} H_{2\rr} : \sum_{r\geq 0} h_r^2 < \infty },
  \end{equation}
  where the coefficients $\nu_{\rr}$ are given by
  \begin{equation}
    \nu_{\rr} \coloneqq C_{r}^{-\hf} \binom{2\rr}{\rr}^{1/2} =  C_{r}^{-\hf} \frac{\sqrt {(2\rr)!}}{(\rr)!}
    ,\quad
    C_{r} = 4^{r} \frac{(p/2)_{r}}{{r!}},~ r = \abs{\rr},
  \end{equation}
  satisfying $\sum_{\abs{\rr} = r} \nu_{\rr}^2 = 1$.
  Here, $(a)_r = a(a+1)\cdots(a+r-1)$ is the rising factorial.
\end{lemma}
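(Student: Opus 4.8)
The plan is to use the orthogonal chaos decomposition $L^2(\gamma_p)=\bigoplus_{m\geq 0}\caH_m$ together with elementary $\mr{O}(p)$-invariant theory, and to pin down the constants by one generating-function identity. First I would note that the action $g\mapsto P_Q g=g(Q^\top\cdot)$ of $Q\in\St(p,p)=\mr{O}(p)$ is unitary on $L^2(\gamma_p)$, preserves each $\caH_m$ (this is \cref{cor:Hermite_InvSubspace}, using \cref{prop:Multi_Prelim_A_M}(iii) to identify $\caA_Q=P_Q$), and commutes with the orthogonal projections onto the $\caH_m$. Hence $g=\sum_m g^{(m)}$ with $g^{(m)}\in\caH_m$ is rotationally invariant iff every $g^{(m)}$ is, and since each $P_Q$ is continuous, any $L^2$-limit of rotationally invariant functions is again rotationally invariant. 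It therefore suffices to show that $\caG(p)\cap\caH_m=\{0\}$ for odd $m$ and that $\caG(p)\cap\caH_{2r}$ is the one-dimensional span of the explicit element $g_r^*:=\sum_{\abs{\rr}=r}\nu_{\rr}H_{2\rr}$.

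For the structural step I would argue via leading homogeneous parts. Writing $g=\sum_{\abs{\mm}=m}g_{\mm}H_{\mm}\in\caH_m$ and using $H_{\mm}(x)=x^{\mm}/\sqrt{\mm!}+(\text{lower degree})$, the degree-$m$ homogeneous part of $g$ is $P(x)=\sum_{\abs{\mm}=m}(g_{\mm}/\sqrt{\mm!})\,x^{\mm}$. If $g$ is rotationally invariant, comparing the degree-$m$ parts of $g(Q^\top x)$ and $g(x)$ gives $P(Q^\top x)=P(x)$ for all $Q\in\mr{O}(p)$. Taking $Q=-I$ shows $P=0$ when $m$ is odd, which forces all $g_{\mm}=0$, so $\caG(p)\cap\caH_m=\{0\}$. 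When $m=2r$, restricting $P$ to $\bbS^{p-1}$ gives a continuous function invariant under the transitive $\mr{O}(p)$-action, hence a constant $c$, so $P(x)=c\norm{x}^{2r}$; thus $g\mapsto c$ is linear on $\caG(p)\cap\caH_{2r}$ with trivial kernel ($c=0\Rightarrow P=0\Rightarrow g=0$), whence $\dim(\caG(p)\cap\caH_{2r})\leq 1$.

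To produce a nonzero invariant and read off its coefficients, I would average a directional Hermite function over the sphere: $\bar H_{2r}:=\E_{u\sim\mr{Unif}(\bbS^{p-1})}H_{2r}(\ang{u,\cdot})$. By \cref{cor:Hermite_InnerProd}, $H_{2r}(\ang{u,\cdot})=\sum_{\abs{\nn}=2r}\binom{2r}{\nn}^{1/2}u^{\nn}H_{\nn}$, and $\E[u^{\nn}]=0$ unless all entries of $\nn$ are even, with $\E[u^{2\rr}]=\frac{(2\rr)!}{2^{2r}\rr!\,(p/2)_r}$ (e.g.\ from $u=g/\norm{g}$ for $g\sim N(0,I_p)$, using $\E g^{2\rr}=(2\rr)!/(2^r\rr!)$ and $\E\norm{g}^{2r}=2^r(p/2)_r$). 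This yields
\[
  \bar H_{2r}=\frac{\sqrt{(2r)!}}{2^{2r}(p/2)_r}\sum_{\abs{\rr}=r}\binom{2\rr}{\rr}^{1/2}H_{2\rr},
\]
a nonzero element of $\caH_{2r}$ that is rotationally invariant since $\bar H_{2r}(Q^\top y)=\E_u H_{2r}(\ang{Qu,y})=\bar H_{2r}(y)$. By the previous paragraph it spans $\caG(p)\cap\caH_{2r}$, so the unit-norm invariant there is $g_r^*=\sum_{\abs{\rr}=r}\nu_{\rr}H_{2\rr}$ with $\nu_{\rr}$ proportional to $\binom{2\rr}{\rr}^{1/2}=\sqrt{(2\rr)!}/\rr!$; writing $\nu_{\rr}=C_r^{-1/2}\binom{2\rr}{\rr}^{1/2}$, the normalization $\sum_{\abs{\rr}=r}\nu_{\rr}^2=1$ forces $C_r=\sum_{\abs{\rr}=r}\binom{2\rr}{\rr}$. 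Finally $C_r=4^r(p/2)_r/r!$ follows from the classical identity $\sum_{k\geq0}\binom{2k}{k}t^k=(1-4t)^{-1/2}$: raising to the $p$-th power gives $(1-4t)^{-p/2}=\sum_{r\geq0}\frac{(p/2)_r}{r!}4^r t^r$, whose $t^r$-coefficient is exactly $\sum_{\abs{\rr}=r}\binom{2\rr}{\rr}$. Combining this with the fact that the $g_r^*$ are orthonormal (unit norm, living in mutually orthogonal chaos spaces) gives $\caG(p)=\{\sum_{r\geq0}h_r g_r^*:\sum_r h_r^2<\infty\}$ with $\norm{g}_{\gamma_p}^2=\sum_r h_r^2$, which is \cref{eq:Multi_RotInvExpansion}.

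The conceptual core here is short; the main obstacles are bookkeeping. The one place that genuinely needs care is the leading-homogeneous-part step, which rests on the (elementary but not one-line) fact that an $\mr{O}(p)$-invariant homogeneous polynomial of degree $m$ is a scalar multiple of $\norm{x}^m$ (and zero for odd $m$); I would handle this via the sphere-transitivity argument above rather than invoking invariant-theory black boxes. The remaining pieces — the spherical moment $\E[u^{2\rr}]$ and the binomial identity for $C_r$ — are routine but error-prone, so I would sanity-check the constants against the degenerate case $p=1$, where $\nu_r=1$, $g_r^*=H_{2r}$, and $\caG(1)$ is precisely the space of even functions.
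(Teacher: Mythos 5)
Your proof is correct and arrives at the same result by a route that overlaps conceptually but differs in implementation. Both arguments reduce via the chaos decomposition $L^2(\gamma_p)=\bigoplus_m \caH_m$ (noting $P_Q$ preserves each $\caH_m$) and both ultimately rest on the fact that an $\mr{O}(p)$-invariant homogeneous polynomial of degree $2r$ is a scalar multiple of $\norm{x}^{2r}$, together with the generating-function identity $(1-4t)^{-p/2}=\sum_r 4^r\frac{(p/2)_r}{r!}t^r$ to evaluate $C_r$. Where you diverge: the paper manipulates Hermite coefficients directly via the generating-function formula of \cref{lem:Hermite_OrthTrans}, encoding the coefficients $(g_{2\bm{s}})$ into an auxiliary polynomial $p(\alpha)=\sum g_{2\bm{s}}\alpha^{2\bm{s}}/\sqrt{(2\bm{s})!}$ and transferring the invariance to $p$. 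You instead pass to the leading degree-$m$ homogeneous part $P(x)=\sum_{\abs{\mm}=m}(g_{\mm}/\sqrt{\mm!})x^{\mm}$ of $g\in\caH_m$ and read off $P(Q^\top x)=P(x)$ directly from $g(Q^\top x)=g(x)$; this is cleaner, since it works with a single polynomial identity rather than matching even-exponent coefficients, and it sidesteps a slightly underexplained step in the paper's converse direction (the paper matches only the $\alpha^{2\rr}$-coefficients of $p(Q\alpha)$ and $p(\alpha)$ before concluding the two polynomials are equal). You also construct the nonzero invariant in $\caH_{2r}$ constructively by spherically averaging $H_{2r}(\ang{u,\cdot})$ and computing the even moments $\E[u^{2\rr}]$ of a uniform point on $\bbS^{p-1}$, whereas the paper posits the form and verifies it; your route requires the extra moment computation but delivers the coefficients $\nu_{\rr}\propto\binom{2\rr}{\rr}^{1/2}$ as an output rather than an ansatz. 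Your sanity check at $p=1$ (where $\nu_r=1$ and $\caG(1)$ is the even functions) is a good one and does indeed check out.
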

\begin{proof}
  First, since $g$ is rotationally invariant, $g$ must be even in each variable,
  so we must have $g_{\mm} = 0$ if $\mm$ has an odd entry.
  It remains to consider those $\mm = 2\rr$.
  Fix $\rr$ and let $r = \abs{\rr}$.
  For any orthogonal matrix $Q$, let us compute the Hermite coefficients of $P_Q g$.
  Using \cref{lem:Hermite_OrthTrans}, we have
  \begin{align*}
    \ang{P_Q g, H_{2\rr}}_{\gamma_d} &= \ang{\sum_{\bm{s}} g_{2\bm{s}} H_{ 2\bm{s}}, P_Q^\T H_{2\rr}}_{\gamma_d} \\
    &= \sum_{\bm{s} : \abs{\bm{s}} = r} g_{2\bm{s}} \ang{H_{2\bm{s}}, P_Q^\T H_{2\rr}} \\
    &= \sum_{\bm{s} : \abs{\bm{s}} = r} g_{2\bm{s}} \sqrt {(2\bm{s})! (2\rr)!} [\alpha^{2\rr} \beta^{2\bm{s}}] \exp(\alpha^\T Q^\T \beta) \\
    &=  \sqrt {2\rr!} [\alpha^{2\rr}] \sum_{\bm{s} : \abs{\bm{s}} = r} g_{2\bm{s}} \sqrt {(2\bm{s})!} [\beta^{2\bm{s}}] \exp((Q \alpha)^\T \beta) \\
    &= \sqrt {2\rr!} [\alpha^{2\rr}] \sum_{\bm{s} : \abs{\bm{s}} = r} (Q \alpha)^{2\bm{s}} ((2\bm{s})!)^{-\hf} g_{2\bm{s}}.
  \end{align*}

  On one hand, let us suppose that $g$ is of the form $g = \sum_{r \geq 0} \sum_{\abs{\rr} = r} \nu_{\rr} h_r H_{2\rr}$.
  We denote $v = Q \alpha$.
  Plugging in the expression of $g_{2\bm{s}}$ yields
  \begin{align*}
    \ang{P_Q g, H_{2\rr}}_{\gamma_d} &= C_{\abs{r}}^{-\hf}\sqrt {(2\bm{r})!}  [\alpha^{2\rr}] \sum_{\bm{s} : \abs{\bm{s}} = r} \frac{1}{\bm{s}!} h_{r} v^{2 \bm{s}} \\
    &= C_{\abs{r}}^{-\hf}\sqrt {(2\bm{r})!}  h_r [\alpha^{2\rr}] \sum_{\bm{s} : \abs{\bm{s}} = r}  \frac{1}{\bm{s}!}  \prod_i (v_i^2)^{s_i} \\
    &= C_{\abs{r}}^{-\hf}\sqrt {(2\bm{r})!}  h_r [\alpha^{2\rr}] \frac{1}{r!} (\sum_{i} v_i^2)^{r} \\
    &= C_{\abs{r}}^{-\hf}\sqrt {(2\bm{r})!}  h_r [\alpha^{2\rr}] \frac{1}{r!} \norm{Q \alpha}^{2 r} \\
    &= C_{\abs{r}}^{-\hf}\sqrt {(2\bm{r})!}  h_r [\alpha^{2\rr}] \frac{1}{r!} \norm{\alpha}^{2 r} \\
    &= C_{\abs{r}}^{-\hf} \frac{\sqrt {(2\rr)!}}{(\rr)!} h_r,
  \end{align*}
  so $P_Q g$ and $g$ have the same coefficients, showing that $P_Q g = g$ and thus $g \in \caG(p)$.

  On the other hand, let us suppose that $P_Q g = g$ for all orthogonal matrices $Q$.
  Let us define the polynomial
  \begin{equation*}
    p(\alpha) = \sum_{\abs{\bm{s}} = r} g_{2\bm{s}} \frac{\alpha^{2\bm{s}}}{\sqrt{(2\bm{s})!}},
  \end{equation*}
  which is a homogeneous polynomial of degree \(2r\) with even exponents.
  Then, for any orthogonal matrix \(Q\), we have
  \begin{equation*}
    g_{2\rr} = \ang{P_Q g, H_{2\rr}}_{\gamma_p} = \sqrt{(2\rr)!} [\alpha^{2\rr}] p(Q \alpha),\quad \qq{for} \rr \in \bbN^p,
  \end{equation*}
  which shows that  \(p(Q^\T \alpha)\) has the same coefficients of $\alpha^{2\rr}$ as \(p(\alpha)\).
  Since the set \(\dk{ \alpha^{2\bm{s}} : \abs{\bm{s}} = r }\) spans the space of even homogeneous polynomials of degree \(2r\),
  this equation implies that \(p(Q^\T \alpha)\) is the same for all \(Q \in O(p)\).
  For degree \(2r\), such rotationally invariant polynomials are known to be multiples of \(\norm{\alpha}^{2r}\).
  Hence, using the multinomial theorem, we get
  \begin{equation*}
    p(\alpha) = c_r \norm{\alpha}^{2r} = c_r \sum_{\abs{\bm{s}} = r} \frac{r!}{\bm{s}!} \alpha^{2\bm{s}}.
  \end{equation*}
  Equating coefficients gives
  \begin{equation*}
    g_{2\bm{s}} \frac{1}{\sqrt{(2\bm{s})!}} = c_r \frac{r!}{\bm{s}!} \implies g_{2\bm{s}} = c_r r! \frac{\sqrt{(2\bm{s})!}}{\bm{s}!}.
  \end{equation*}
  Therefore, $g$ must be the form in \cref{eq:Multi_RotInvExpansion}.

  Finally, the normalizing constant $C_r$ is computed by \cref{prop:sum_lambda} with $\bm{\lambda} = \bm{1}$.
\end{proof}

\begin{proposition}
  \label{prop:sum_lambda}
  Let $p,r\in \mathbb{N}$ and $\bm{\lambda} \in \R^{p}_+$.
  Then
  \begin{equation}
    f(\bm{\lambda}) =
    \sum_{\abs{\rr} = r} \binom{2\rr}{\rr} \bm{\lambda}^{\rr} = [z^r] \prod_{j=1}^p (1 - 4 \lambda_{j} z)^{-\hf}.
  \end{equation}
  Particularly, if $\bm{\lambda} = \bm{1}$, we have
  \begin{equation*}
    f(\bm{1}) = [z^r] (1 - 4 z)^{-p/2} = 4^r \frac{(p/2)_r}{r!}.
  \end{equation*}
\end{proposition}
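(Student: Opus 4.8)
The plan is to reduce this identity to the classical univariate generating function for the central binomial coefficients and then extract a single coefficient. First I would recall the identity
\begin{equation*}
  \sum_{n \geq 0} \binom{2n}{n} x^n = (1 - 4x)^{-\hf},
\end{equation*}
which follows from the generalized binomial theorem once one checks $\binom{-1/2}{n}(-4)^n = \binom{2n}{n}$; this holds as an identity of formal power series and also as a convergent series for $\abs{x} < 1/4$.

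Next, keeping the multi-index conventions $\binom{2\rr}{\rr} = \prod_{j=1}^p \binom{2 r_j}{r_j}$ and $\bm{\lambda}^{\rr} = \prod_{j=1}^p \lambda_j^{r_j}$ in force, I would substitute $x = \lambda_j z$ into the univariate identity and multiply over $j = 1,\dots,p$:
\begin{equation*}
  \prod_{j=1}^p (1 - 4 \lambda_j z)^{-\hf}
  = \prod_{j=1}^p \sum_{n_j \geq 0} \binom{2 n_j}{n_j} (\lambda_j z)^{n_j}
  = \sum_{\rr \in \bbN^p} \binom{2\rr}{\rr} \bm{\lambda}^{\rr} z^{\abs{\rr}}.
\end{equation*}
Grouping the terms with $\abs{\rr} = r$ and reading off the coefficient of $z^r$ then gives exactly $f(\bm{\lambda}) = \sum_{\abs{\rr} = r} \binom{2\rr}{\rr}\bm{\lambda}^{\rr} = [z^r]\prod_{j=1}^p (1-4\lambda_j z)^{-\hf}$, which is the first assertion.

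For the special case $\bm{\lambda} = \bm{1}$ the product collapses to $(1-4z)^{-p/2}$, and a final application of the generalized binomial theorem yields $[z^r](1-4z)^{-p/2} = \binom{-p/2}{r}(-4)^r = 4^r (p/2)_r / r!$, where $(a)_r = a(a+1)\cdots(a+r-1)$. I do not expect any genuine obstacle here: the argument is entirely routine, and the only point that deserves a sentence of care is the legitimacy of multiplying the $p$ power series termwise, which is immediate since all coefficients are nonnegative, so the manipulation is valid both as formal power series and, for $\abs{z}$ small, as absolutely convergent series.
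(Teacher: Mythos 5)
Your proof is correct and is essentially identical to the paper's: both recall the univariate central-binomial generating function $(1-4\alpha)^{-1/2} = \sum_n \binom{2n}{n}\alpha^n$, substitute $\alpha = \lambda_j z$, take the product over $j$, and compare coefficients of $z^r$. You add a slightly more explicit treatment of the $\bm{\lambda}=\bm{1}$ case via the generalized binomial theorem, but the route is the same.
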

\begin{proof}
  Recall the identity
  \begin{equation*}
  (1 - 4 \alpha)
    ^{-1/2} = \sum_{n=0}^{\infty} \binom{2n}{n} \alpha^{n}.
  \end{equation*}
  Taking $\alpha = \lambda_j z_j$, we have
  \begin{align*}
    \prod_{j=1}^p (1 - 4 \lambda_j z)^{-1/2} & =  \prod_{j=1}^p   \sum_{r_j=0}^{\infty}    \binom{2r_j}{r_j} \lambda_j^{r_j} z^{r_j} \\
    &= \sum_{n_1,\dots,n_p} \prod_{j=1}^p \binom{2r_j}{r_j} \lambda_j^{r_j} z^{r_j} \\
    &= \sum_{\rr} \frac{(2\rr)!}{(\rr!)^2} \lambda^{\rr} z^{\abs{\rr}} \\
    &= \sum_{r \geq 0} \zk{\sum_{\abs{\rr} = r}\frac{(2\rr)!}{(\rr!)^2}\lambda^{\rr}} z^r.
  \end{align*}
  The proposition follows by comparing the coefficients.
\end{proof}

Let $\Rho \in \R^{p \times p}$ with $\norm{\Rho} \leq 1$.
Let  $\Rho = U \Sigma V^{\T}$ be the SVD of $\Rho$ and $\sigma$ be the diagonal of $\Sigma$.
Then,  using \cref{prop:sum_lambda}, we can introduce the function
\begin{equation}
  \label{eq:Multi_Prelim_PhiR}
  \phi_r(\Rho) \coloneqq  \sum_{\abs{\rr} = r} \nu_{\rr}^2 \sigma^{2\rr} = C_r^{-1} [z^r] \det(I - 4 \Rho^T \Rho z)^{-\hf}, \qquad C_r = 4^r \frac{(p/2)_r}{r!},
\end{equation}
where we define $\phi_0 = 1$.
This function is well-defined and depends only on the singular values of $\Rho$.
To see this, we use \cref{prop:sum_lambda} to obtain
\begin{align*}
  \sum_{\abs{\rr} = r} \nu_{\rr}^2 \sigma^{2\rr} &= C_r^{-1} \sum_{\abs{\rr} = r} \binom{2\rr}{\rr} \sigma^{2\rr} \\
  &= C_r^{-1}[z^r] \prod_{j=1}^p (1 - 4 \sigma_{j}^2 z)^{-\hf} = [z^r] \det(I - 4 \Sigma^2 z)^{-\hf}  \\
  &= C_r^{-1} [z^r] \det(I - 4 \Rho^T \Rho z)^{-\hf}.
\end{align*}
Moreover, we notice that $\phi_1 = \frac{1}{p} \sum_{j=1}^p \sigma_j^2.$

\begin{corollary}
  \label{cor:Multi_RotInv_GradProj}
  Consider a loss $L(g)$ on the space of rotationally invariant functions.
  Let $g = \sum_{\mm} g_{\mm} H_{\mm}$.
  Denote by $\nabla_{g_{\mm}}$ the classical gradient with respect to the coefficient $g_{\mm}$
  and by $\nabla_{g_{\mm}}^{\caG(p)}$ the gradient $\caG(p)$.
  Then, we have
  \begin{equation}
    \nabla_{g_{2\rr}}^{\caG(p)}L(g) = \nu_{\rr} \nabla_{h_r} L(g),\quad
    \nabla_{h_r} L(g) = \sum_{\abs{\rr} = r} \nu_{\rr} \nabla_{g_{2\rr}} L(g),\quad r = \abs{\rr}.
  \end{equation}
\end{corollary}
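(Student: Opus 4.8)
The plan is to identify $\caG(p)$ with $\ell^2(\bbN)$ via the orthonormal basis furnished by \cref{lem:Multi_RotInvExpansion}, and then to observe that the gradient on a linear subspace is simply the orthogonal projection of the ambient coefficient gradient; both identities then drop out by the chain rule. Concretely, for each $r \geq 0$ set $e_r \coloneqq \sum_{\abs{\rr}=r} \nu_{\rr} H_{2\rr} \in L^2(\gamma_p)$. By \cref{lem:Multi_RotInvExpansion} we have $\sum_{\abs{\rr}=r} \nu_{\rr}^2 = 1$, so $\norm{e_r}_{\gamma_p} = 1$; and since $e_r$ is a linear combination of Hermite polynomials of total degree $2r$, the $e_r$ are pairwise orthogonal. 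By \cref{eq:Multi_RotInvExpansion}, $(e_r)_{r \geq 0}$ is in fact a complete orthonormal system of $\caG(p)$, and writing $g = \sum_{r} h_r e_r$ recovers the parameterization $g_{2\rr} = \nu_{\rr} h_{\abs{\rr}}$ (with $g_{\mm} = 0$ for all other $\mm$) together with $h_r = \ang{g, e_r}_{\gamma_p} = \sum_{\abs{\rr}=r} \nu_{\rr} g_{2\rr}$.

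Next I would use that $\caG(p)$ is a closed linear subspace of $L^2(\gamma_p)$, so that for a differentiable $L$ (say $L = \bar\caL_n$, viewed on the ambient space, with the ambient $L^2$-gradient $\nabla L = \sum_{\mm} (\nabla_{g_{\mm}} L) H_{\mm}$ well defined under the usual square-integrability hypotheses) the Riemannian gradient on $\caG(p)$ is the orthogonal projection of $\nabla L$ onto $\caG(p)$, the tangent space of a linear subspace at every point being the subspace itself. Hence
\[
  \nabla^{\caG(p)} L = \sum_{r \geq 0} \ang{\nabla L, e_r}_{\gamma_p}\, e_r
  = \sum_{r \geq 0} \Bigl( \sum_{\abs{\rr'}=r} \nu_{\rr'}\, \nabla_{g_{2\rr'}} L \Bigr) e_r .
\]
Reading off the coefficient of $H_{2\rr}$ on both sides (with $r = \abs{\rr}$) gives $\nabla_{g_{2\rr}}^{\caG(p)} L = \nu_{\rr} \sum_{\abs{\rr'}=r} \nu_{\rr'}\, \nabla_{g_{2\rr'}} L$. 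Separately, regarding $L$ as a function of the reduced coordinates $(h_r)$ through $g_{2\rr} = \nu_{\rr} h_{\abs{\rr}}$, the chain rule yields $\nabla_{h_r} L = \sum_{\mm} \tfrac{\partial g_{\mm}}{\partial h_r}\, \nabla_{g_{\mm}} L = \sum_{\abs{\rr}=r} \nu_{\rr}\, \nabla_{g_{2\rr}} L$ (equivalently, $\nabla_{h_r} L = \ang{\nabla L, e_r}_{\gamma_p}$, which manifestly depends only on $L|_{\caG(p)}$), which is the second claimed identity; combining it with the previous display gives $\nabla_{g_{2\rr}}^{\caG(p)} L = \nu_{\rr}\, \nabla_{h_r} L$.

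I expect no serious obstacle here: the content is entirely linear-algebraic once \cref{lem:Multi_RotInvExpansion} is in hand. The only points requiring care are the routine functional-analytic ones — verifying that $(e_r)$ is a \emph{complete} orthonormal system of $\caG(p)$ (exactly the statement of \cref{lem:Multi_RotInvExpansion}), and justifying the orthogonal-projection characterization of the constrained gradient under the summability and square-integrability conditions that make $\nabla L$ a bona fide element of $L^2(\gamma_p)$.
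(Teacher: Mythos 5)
Your argument is correct, and since the paper states this corollary without proof (it is presented as an immediate consequence of \cref{lem:Multi_RotInvExpansion}), there is no paper proof to compare against; yours supplies exactly the natural argument the authors apparently considered too routine to write out. The two load-bearing points are exactly the ones you flag: that $(e_r)_{r\ge 0}$ with $e_r = \sum_{|\rr|=r}\nu_{\rr}H_{2\rr}$ is a complete orthonormal system of $\caG(p)$ — orthonormality from $\sum_{|\rr|=r}\nu_{\rr}^2=1$ and degree-separation, completeness from \cref{eq:Multi_RotInvExpansion} — and that the constrained gradient on the closed linear subspace $\caG(p)$ is the orthogonal projection of the ambient $L^2(\gamma_p)$-gradient. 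From there the second identity is the chain rule through $g_{2\rr}=\nu_{\rr}h_{|\rr|}$, and the first follows by reading off the $H_{2\rr}$-coefficient of $\sum_r\ang{\nabla L, e_r}e_r$; this is consistent with how the paper actually uses the corollary (e.g.\ in deriving $\dot h_r = \mu_{2r}(\phi_r h_r^* - h_r)$, where $\nabla_{g_{2\rr}}^{\caG(p)}\caL = \nu_{\rr}G_r$ and $G_r = \sum_{|\bm{s}|=r}\nu_{\bm s}\nabla_{g_{2\bm s}}\caL$). No gap.
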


Regarding the coefficient, we also have the following properties.

\begin{proposition}
  Let the coefficients $\nu_{\rr}$ be defined as in \cref{lem:Multi_RotInvExpansion}.
  Then, we have
  \begin{equation}
    \label{eq:Multi_Prelim_nu_sum_rj}
    \sum_{\abs{\rr} = r} r_k \nu_{\rr}^2 = \frac{r}{p},\quad \forall k = 1,\ldots,p.
  \end{equation}
  Moreover, defining
  \begin{equation*}
    A_{ij} = \begin{cases}
               \sum_{\abs{\rr} = r} \nu_{\rr}^2 (2r_i) (2r_j) \nu_{\rr}^2, & i \neq j \\
               \sum_{\abs{\rr} = r} \nu_{\rr}^2 (2r_i)(2r_i-1) \nu_{\rr}^2, & i = j
    \end{cases}
  \end{equation*}
  we have
  \begin{equation}
    \label{eq:Multi_Prelim_nu_sum_rij}
    \sum_{i=1}^p A_{ki} = \frac{2r(2r-1)}{p},\quad \forall k = 1,\ldots,p.
  \end{equation}
\end{proposition}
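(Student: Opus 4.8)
The plan is to read both identities off a single symmetric homogeneous generating polynomial. Set $F(\bm\lambda)\coloneqq\sum_{\abs{\rr}=r}\nu_{\rr}^2\bm\lambda^{\rr}$ for $\bm\lambda\in\R_+^p$. Since $\nu_{\rr}^2=C_r^{-1}\binom{2\rr}{\rr}$, \cref{prop:sum_lambda} gives $F(\bm\lambda)=C_r^{-1}[z^r]\prod_{j=1}^p(1-4\lambda_j z)^{-1/2}$, so $F$ is symmetric in $\lambda_1,\dots,\lambda_p$, homogeneous of degree $r$, and $F(\bm 1)=\sum_{\abs{\rr}=r}\nu_{\rr}^2=1$ by \cref{lem:Multi_RotInvExpansion}. (This is the polynomial implicit in $\phi_r$, cf.\ \cref{eq:Multi_Prelim_PhiR}.)

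For \cref{eq:Multi_Prelim_nu_sum_rj}, I would apply the operator $\lambda_k\partial_{\lambda_k}$, which acts on each monomial by $\lambda_k\partial_{\lambda_k}\bm\lambda^{\rr}=r_k\bm\lambda^{\rr}$, and then evaluate at $\bm 1$: this yields $\sum_{\abs{\rr}=r}r_k\nu_{\rr}^2=(\lambda_k\partial_{\lambda_k}F)(\bm 1)$. By the symmetry of $F$ under coordinate permutations the right-hand side is the same for every $k$; summing over $k$ and using Euler's identity $\sum_{k=1}^p\lambda_k\partial_{\lambda_k}F=rF$ for degree-$r$ homogeneous $F$ gives $p\,(\lambda_k\partial_{\lambda_k}F)(\bm 1)=rF(\bm 1)=r$, hence $\sum_{\abs{\rr}=r}r_k\nu_{\rr}^2=r/p$.

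For \cref{eq:Multi_Prelim_nu_sum_rij}, I would reduce to the first identity by a one-line algebraic collapse. Separating the diagonal term and using $\sum_{i\neq k}2r_i=2r-2r_k$ (as $\abs{\rr}=r$),
\begin{equation*}
  (2r_k)(2r_k-1)+\sum_{i\neq k}(2r_k)(2r_i)=2r_k\bigl[(2r_k-1)+(2r-2r_k)\bigr]=2r_k(2r-1),
\end{equation*}
so $\sum_{i=1}^p A_{ki}=\sum_{\abs{\rr}=r}\nu_{\rr}^2\,2r_k(2r-1)=2(2r-1)\sum_{\abs{\rr}=r}r_k\nu_{\rr}^2=2(2r-1)\cdot r/p=2r(2r-1)/p$, invoking \cref{eq:Multi_Prelim_nu_sum_rj}. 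I do not expect a real obstacle here: both statements are short, and the only point that needs care is establishing that $F$ is homogeneous, symmetric and satisfies $F(\bm 1)=1$, which makes the symmetry argument and Euler's identity immediately applicable; after that the second claim is purely formal. (One could alternatively prove the first identity by a permutation-invariance double-counting argument directly on the weights $\nu_{\rr}^2=C_r^{-1}\binom{2\rr}{\rr}$, but the generating-function route is cleaner.)
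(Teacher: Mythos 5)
Your proof is correct, and it is essentially the same argument as the paper's, just phrased in generating-function language. The paper's proof of \cref{eq:Multi_Prelim_nu_sum_rj} sets $A_j=\sum_{\abs{\rr}=r}r_j\nu_{\rr}^2$, notes $A_1=\dots=A_p$ by the permutation symmetry of $\nu_{\rr}^2$, and averages using $\sum_j r_j=r$ together with $\sum_{\abs{\rr}=r}\nu_{\rr}^2=1$; your $\lambda_k\partial_{\lambda_k}$ trick plus Euler's identity applied to the homogeneous symmetric polynomial $F$ is exactly this symmetry-and-double-counting step in disguise (Euler's identity on $F$ at $\bm 1$ is precisely $\sum_k r_k=r$), and you even flag the direct version as the alternative route at the end. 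For \cref{eq:Multi_Prelim_nu_sum_rij} the collapse $(2r_k)(2r_k-1)+\sum_{i\neq k}(2r_k)(2r_i)=2r_k(2r-1)$ is the identical computation the paper does.
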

\begin{proof}
  Let $A_j = \sum_{\abs{\rr} = r} r_j \nu_{\rr}^2 $.
  Then, by symmetry, we have $A_1 = \dots = A_p$, so
  \begin{equation*}
    A_j = \frac{1}{p} \sum_{j=1}^p A_j = \frac{1}{p} \sum_{\abs{\rr} = r} \nu_{\rr}^2 (r_1 + \dots + r_p) = \frac{1}{p} \sum_{\abs{\rr} = r} \nu_{\rr}^2 r = \frac{r}{p} \sum_{\abs{\rr} = r} \nu_{\rr}^2 = \frac{r}{p}.
  \end{equation*}
  For the statement regarding $A_{ij}$, we have
  \begin{align*}
    \sum_{i=1}^p A_{ki} &= \sum_{\abs{\rr} = r} (2r_k)\xk{\sum_{i=1}^p (2r_i) -1} \nu_{\rr}^2
    = \sum_{i=1}^p A_{ki} \\
    &= \sum_{\abs{\rr} = r} (2r_k) \xk{2 r -1} \nu_{\rr}^2
    = (2r-1) \sum_{\abs{\rr} = r} 2r_k \nu_{\rr}^2 \\
    & = \frac{2r(2r-1)}{p}.
  \end{align*}
\end{proof}

\begin{proposition}
  \label{prop:Multi_Prelim_RotInvGrad}
  Let $g \in \caG(p)$ be a rotationally invariant function.
  Then, for any orthogonal matrix $Q$, we have
  \begin{equation*}
    \nabla g = Q P_Q \nabla g = P_Q Q \nabla g,\qquad
    \nabla^2 g = Q (P_Q \nabla^2 g) Q^\T.
  \end{equation*}
\end{proposition}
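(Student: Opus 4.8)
The plan is to differentiate the defining relation of rotational invariance. Since $g \in \caG(p)$, by definition $P_Q g = g$ for every orthogonal matrix $Q$, that is $g(Q^\T x) = g(x)$ for all $x \in \R^p$; here I take $g$ to be as smooth as the statement implicitly requires, so that $\nabla g$ and $\nabla^2 g$ are defined pointwise (otherwise the identities below hold in the weak sense, using that $Q$ preserves $\gamma_p$). Recall also that for orthogonal $Q$ the operator $P_Q$ acts by $(P_Q u)(x) = u(Q^\T x)$, and by the element-wise convention the same formula applies when $u$ is vector- or matrix-valued.

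First I would apply the chain rule to $x \mapsto g(Q^\T x)$, obtaining $\nabla_x\bigl[g(Q^\T x)\bigr] = Q\,(\nabla g)(Q^\T x) = Q\,(P_Q \nabla g)(x)$. Since $g(Q^\T x) = g(x)$, the left-hand side is just $\nabla g(x)$, which gives the first claimed identity $\nabla g = Q P_Q \nabla g$. The equality $Q P_Q \nabla g = P_Q Q \nabla g$ is then purely formal: $Q$ is a constant matrix, so left multiplication by $Q$ commutes with pre-composition by $Q^\T$; explicitly, $(P_Q(Q\nabla g))(x) = (Q\nabla g)(Q^\T x) = Q(\nabla g)(Q^\T x) = (Q P_Q \nabla g)(x)$.

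Next I would differentiate once more. The chain rule yields $\nabla^2_x\bigl[g(Q^\T x)\bigr] = Q\,(\nabla^2 g)(Q^\T x)\,Q^\T = Q\,(P_Q \nabla^2 g)(x)\,Q^\T$, and since the left side equals $\nabla^2 g(x)$ we conclude $\nabla^2 g = Q\,(P_Q \nabla^2 g)\,Q^\T$.

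There is essentially no obstacle beyond bookkeeping; the only point worth a word of care is the legitimacy of differentiating the relation $P_Q g = g$, which is immediate for smooth $g$. If one insists on working intrinsically within $\caG(p)$, one could alternatively verify the three identities on the spanning family $\bigl\{\sum_{\abs{\rr}=r}\nu_{\rr}H_{2\rr}\bigr\}_{r\ge 0}$ from \cref{lem:Multi_RotInvExpansion} and extend by linearity and continuity, but the direct differentiation argument above is shorter and more transparent.
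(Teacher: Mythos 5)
Your proof is correct and follows essentially the same route as the paper's: differentiate the invariance relation $g = P_Q g$ via the chain rule, which is exactly what the paper does by invoking \cref{lem:Multi_Prelim_Op_Grad} (itself proved by the same chain-rule computation) rather than writing it out inline.
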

\begin{proof}
  Since $g$ is rotationally invariant, we have
  \begin{align*}
    \nabla_x g(x) = \nabla_x (P_Q g)(x) = P_Q (Q \nabla g) = Q P_Q (\nabla g).
  \end{align*}
  Taking the second derivative, we have
  \begin{equation*}
    \nabla^2_x g(x) = \nabla_x^\T [Q P_Q (\nabla g)] = Q \nabla_x^\T (P_Q (\nabla g)) = Q (P_Q \nabla^2 g) Q^\T.
  \end{equation*}
\end{proof}

\begin{corollary}
  \label{cor:Multi_Prelim_Hbar_Grad}
  Let $\bar{H}_r = \sum_{\abs{\rr} = r} \nu_{\rr} H_{2\rr}$ be the projection of $H_r$ onto $\caG(p)$.
  Then, we have
  \begin{equation*}
    \int \nabla \bar{H}_r (\nabla \bar{H}_r)^{\T} \dd \gamma_p = \frac{2r}{p} I_p.
  \end{equation*}
  \begin{equation*}
    \int (\nabla^2 \bar{H}_r)^2 \dd \gamma_p = c_r I_p,\quad c_r = \frac{2r(2r-1)}{p} \leq 4r^2.
  \end{equation*}
\end{corollary}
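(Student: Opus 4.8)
The plan is to expand $\bar H_r$ in the Hermite basis and reduce both identities to the two combinatorial sums \cref{eq:Multi_Prelim_nu_sum_rj} and \cref{eq:Multi_Prelim_nu_sum_rij}, using only orthonormality of the $H_{\mm}$ and the derivative rule in \cref{lem:Hermite_Recurrence_Derivative}. First I would record, for each coordinate $i$,
\begin{equation*}
  \nabla_{x_i} \bar H_r = \sum_{\abs{\rr} = r} \nu_{\rr} \sqrt{2 r_i}\, H_{2\rr - e_i},
\end{equation*}
so that $\int \nabla_{x_i}\bar H_r\, \nabla_{x_j}\bar H_r \dd\gamma_p = \sum_{\rr,\rr'} \nu_{\rr}\nu_{\rr'}\sqrt{2r_i}\sqrt{2r'_j}\, \ang{H_{2\rr - e_i}, H_{2\rr' - e_j}}_{\gamma_p}$. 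For $i \neq j$ the inner product always vanishes, since equality of the $i$-th entries of $2\rr - e_i$ and $2\rr' - e_j$ would force $2r_i - 1 = 2r'_i$, impossible by parity; for $i = j$ orthonormality forces $\rr = \rr'$ and the sum collapses to $\sum_{\abs{\rr}=r}\nu_{\rr}^2 (2r_i) = 2r/p$ by \cref{eq:Multi_Prelim_nu_sum_rj}. This gives the first identity $\int \nabla\bar H_r (\nabla\bar H_r)^\T \dd\gamma_p = \frac{2r}{p} I_p$.

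For the Hessian, \cref{lem:Hermite_Recurrence_Derivative} gives $(\nabla^2 H_{2\rr})_{ij} = \sqrt{(2r_i)(2r_j)}\, H_{2\rr - e_i - e_j}$ when $i\neq j$ and $\sqrt{(2r_i)(2r_i - 1)}\, H_{2\rr - 2e_i}$ when $i = j$, hence $(\nabla^2\bar H_r)_{ij} = \sum_{\rr}\nu_{\rr}(\nabla^2 H_{2\rr})_{ij}$. Expanding the matrix square and integrating termwise, $\int ((\nabla^2\bar H_r)^2)_{ik}\dd\gamma_p = \sum_{j=1}^p \sum_{\rr,\rr'} \nu_{\rr}\nu_{\rr'} \int (\nabla^2 H_{2\rr})_{ij}(\nabla^2 H_{2\rr'})_{jk}\dd\gamma_p$. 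When $i\neq k$ I would check, case by case according as $j$ equals $i$, equals $k$, or neither, that the two Hermite multi-indices disagree at the $i$-th or $k$-th coordinate through a parity mismatch (subtracting a single unit vector turns an even coordinate odd), so every term vanishes and the off-diagonal part of the integral is zero. When $i = k$, orthonormality forces $\rr = \rr'$; the $j = i$ term contributes $\sum_{\rr}\nu_{\rr}^2(2r_i)(2r_i - 1) = A_{ii}$ and each $j\neq i$ term contributes $\sum_{\rr}\nu_{\rr}^2(2r_i)(2r_j) = A_{ij}$, so the diagonal entry equals $\sum_{j=1}^p A_{ij} = \frac{2r(2r-1)}{p}$ by \cref{eq:Multi_Prelim_nu_sum_rij}. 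This yields $\int (\nabla^2\bar H_r)^2 \dd\gamma_p = c_r I_p$ with $c_r = \frac{2r(2r-1)}{p}$, and $c_r \leq \frac{4r^2}{p}\leq 4r^2$ since $p \geq 1$.

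The computations are otherwise routine; the only delicate point is the bookkeeping in the Hessian square---verifying that no spurious cross terms survive when $i\neq k$. Once the off-diagonal terms are dispatched, both identities are immediate consequences of \cref{eq:Multi_Prelim_nu_sum_rj} and \cref{eq:Multi_Prelim_nu_sum_rij}, which were established above, so no further input is needed.
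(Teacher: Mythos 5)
Your proof is correct, but it takes a genuinely different route from the paper's. The paper first invokes \cref{prop:Multi_Prelim_RotInvGrad} to show that $A \coloneqq \int \nabla\bar H_r (\nabla\bar H_r)^\T \dd\gamma_p$ satisfies $A = Q A Q^\T$ for every orthogonal $Q$, hence is a scalar multiple of the identity; it then needs only the single diagonal entry $A_{11} = \sum_\rr \nu_\rr^2 (2r_1) = 2r/p$, and leaves the Hessian identity to be handled implicitly by the same symmetry argument. You instead compute the whole matrix entry-by-entry: for the gradient, the off-diagonal inner products $\ang{H_{2\rr - e_i}, H_{2\rr' - e_j}}_{\gamma_p}$ ($i\neq j$) vanish by a parity mismatch in the $i$-th coordinate, and for the Hessian square the three cases $j=i$, $j=k$, $j\notin\{i,k\}$ each produce a single-unit parity flip at coordinate $i$ or $k$ that kills every $i\neq k$ term. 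The symmetry argument is shorter but rests on the commutation lemma and the a priori knowledge that scalar matrices are the only fixed points of conjugation by $O(p)$; your direct parity bookkeeping is more elementary and, unlike the paper's proof, actually spells out the second identity rather than leaving it to analogy. One small plus for your write-up: you use the correct $H_{2\rr - e_i}$ (the paper's displayed $H_{2\rr - 2e_1}$ is a typo, though it does not affect the coefficient). Both paths land on \cref{eq:Multi_Prelim_nu_sum_rj} and \cref{eq:Multi_Prelim_nu_sum_rij}, so the reductions agree.
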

\begin{proof}
  Using \cref{prop:Multi_Prelim_RotInvGrad}, we find that
  \begin{align*}
    A &=\int \nabla \bar{H}_r (\nabla \bar{H}_r)^{\T} \dd \gamma_p
    = \int Q (P_Q \nabla \bar{H}_r) (P_Q \nabla \bar{H}_r)^{\T} Q^{\T} \dd \gamma_p  \\
    &= Q \int  (P_Q \nabla \bar{H}_r) (P_Q \nabla \bar{H}_r)^{\T} \dd \gamma_p Q^{\T}
    = Q \int \nabla \bar{H}_r (\nabla \bar{H}_r)^{\T} \dd \gamma_p Q^{\T} \\
    & = Q A Q^\T,
  \end{align*}
  so $A$ is a scalar multiple of the identity.

  Using the derivative of the Hermite polynomial, we further compute that
  \begin{align*}
    \nabla_{x_1} \bar{H}_r = \sum_{\abs{\rr} = r} \nu_{\rr} \sqrt{2r_1}  H_{2\rr-2e_1},
  \end{align*}
  so
  \begin{align*}
    A_{11} = \int (\nabla_{x_1} \bar{H}_r)^2 \dd \gamma_p
    = \sum_{\abs{\rr} = r} \nu_{\rr}^2 2r_1 = \frac{2r}{p}.
  \end{align*}

\end{proof}

\subsection{The Feature Error Measure}

Let us recall that the subspace
\begin{equation*}
  L^2_{\Phi_W} = \overline{\spn} \dk{ H_{\mm}(W^\T x),~ \mm \in \bbN^p} = \dk{f = g(W^\T x), ~ g \in L^2(\gamma_p)}.
\end{equation*}
Then, as $P_W^\T$ is the projection onto $L^2_{\Phi_W}$, the orthogonal projection of $f^*$ onto $L^2_{\Phi_W}$ is given by
\begin{equation*}
  f^*_W \coloneqq (P_W P_W^\T) f^* = P_W P_W^\T P_{W_*} g^* = P_W \caA_{\Rho} g^* = P_W \caA_{U} \caA_{\Sigma} \caA_{V^\T} g^* = P_W P_U \caA_{\Sigma} g^*,
\end{equation*}
where we use the SVD of $\Rho = U \Sigma V^\T$, \cref{prop:Multi_Prelim_A_M} and the rotation invariance of $g^*$.
Consequently, using \cref{prop:Multi_Prelim_A_Sigma}, we have
\begin{equation*}
  \norm{f^*_W}_{\gamma_d}^2 = \norm{P_W P_U \caA_{\Sigma} g^*}_{\gamma_d}^2 = \norm{\caA_{\Sigma} g^*}_{\gamma_p}^2
  = \sum_{\mm \in \bbN^p} \sigma^{2\mm} (g^*_{\mm})^2.
\end{equation*}
Then, the projection error is
\begin{equation*}
  \caE_{\text{Proj}}(\Phi_W,f^*)
  = \norm{f^* - f^*_W }_{\gamma_d}^2
  = \norm{f^*}_{\gamma_d}^2 - \norm{f^*_W}_{\gamma_d}^2
  = \norm{g^*}_{\gamma_p}^2 - \norm{f^*_W}_{\gamma_d}^2
  = \sum_{\mm \in \bbN^p} (1 - \sigma^{2\mm}) (g^*_{\mm})^2.
\end{equation*}

To compute the statistical error, recalling the definition, let us introduce
\begin{equation*}
  g^*_{W,\mm} \coloneqq \ang{f^*, H_{\mm}(W^\T x)}_{\gamma_d} = \ang{f^*, P_W H_{\mm}}_{\gamma_d} = \ang{g^*_W, H_{\mm}}_{\gamma_p},
\end{equation*}
where
\begin{equation*}
  g^*_W\coloneqq P_W^\T f^* = P_W^\T P_{W_*} g^* = \caA_{\Rho} g^* = \caA_{U} \caA_{\Sigma} \caA_{V^\T} g^* = P_U \caA_{\Sigma} g^*.
\end{equation*}
Then, since $\lambda_{\mm} = \mu_{\abs{\mm}}$, we have
\begin{align*}
  \caE_{\text{B}} = \sum_{\mm \in \bbN^p} \bm{1}\dk{\lambda_{\mm} < \delta} (g^*_{W,\mm})^2
  = \sum_{r \geq 0} \bm{1}\dk{\mu_{r} < \delta}\sum_{\abs{\mm} = r}  (g^*_{W,\mm})^2.
\end{align*}
Moreover, let us consider the subspace $\caH_r = \spn \dk{ H_{\mm},~\abs{\mm} = r}$, and let $P_{\caH_r}$ the orthogonal projection onto $\caH_r$.
Since $\caH_r$ and $\caH_r^\perp$ are both invariant under $P_U$  from \cref{cor:Hermite_InvSubspace}, we have
\begin{align*}
  \sum_{\abs{\mm} = r}  (g^*_{W,\mm})^2 &= \norm{P_{\caH_r} g^*_{W}}_{\gamma_p}^2
  = \norm{P_{\caH_r} P_U \caA_{\Sigma} g^*}_{\gamma_p}^2
  = \norm{ P_U P_{\caH_r} \caA_{\Sigma} g^*}_{\gamma_p}^2 \\
  &= \norm{P_{\caH_r} \caA_{\Sigma} g^*}_{\gamma_p}^2
  = \sum_{\abs{\mm} = r} \sigma^{2\mm} (g^*_{\mm})^2.
\end{align*}
Consequently,
\begin{equation*}
  \caE_{\text{B}} = \sum_{r \geq 0} \bm{1}\dk{\mu_{r} < \delta} \sum_{\abs{\mm} = r} \sigma^{2\mm} (g^*_{\mm})^2.
\end{equation*}
On the other hand,
\begin{equation*}
  \caE_{\text{V}} = \# \dk{\mm \in \bbN^p : \lambda_{\mm} \geq \delta} \epsilon^2
  =\epsilon^2  \sum_{r \geq 0} \bm{1} \dk{\mu_r \geq \delta}  \sum_{\abs{\mm} = r}1
  = \epsilon^2 \sum_{r \geq 0} \bm{1} \dk{\mu_r \geq \delta}  \binom{p+r-1}{r}.
\end{equation*}
Merge the two terms, we have the following proposition.

\begin{proposition}
  \label{prop:Multi_FEM_Proj}
  Under \cref{assu:Multi_RotationInvariance}, we have
  \begin{gather}
    \label{eq:Multi_FEM_Proj}
    \caE_{\text{Proj}}(\Phi_W,f^*) = \sum_{\mm \in \bbN^p} (1 - \sigma^{2\mm}) (g^*_{\mm})^2, \\
    \caE_{\text{B}} = \sum_{r \geq 0} \bm{1}\dk{\mu_{r} < \delta} \sum_{\abs{\mm} = r} \sigma^{2\mm} (g^*_{\mm})^2,
    \quad
    \caE_{\text{V}} = \epsilon^2 \sum_{r \geq 0} \bm{1} \dk{\mu_r \geq \delta}  \binom{p+r-1}{r}.
  \end{gather}
\end{proposition}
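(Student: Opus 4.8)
The plan is to assemble into a clean derivation the operator-calculus computation sketched in the paragraphs immediately preceding the statement. First I would rewrite the orthogonal projection of $f^*$ onto $L^2_{\Phi_W}$ in terms of the auxiliary operators of \cref{subsubsec:Multi_Prelim_Aux}. Writing $f^* = P_{W_*} g^*$ and using that $P_W P_W^\T$ is the orthogonal projection onto $L^2_{\Phi_W}$, I get $f^*_W = P_W P_W^\T P_{W_*} g^*$, and then $P_W^\T P_{W_*} = \caA_{\Rho}$ with $\Rho = W^\T W_*$ by \cref{prop:Multi_Prelim_A_M}(i). Taking the SVD $\Rho = U\Sigma V^\T$ with $U,V \in O(p)$, the composition and orthogonal-matrix rules of \cref{prop:Multi_Prelim_A_M}(ii)--(iii) give $\caA_{\Rho} = \caA_U \caA_{\Sigma} \caA_{V^\T} = P_U \caA_{\Sigma} P_{V^\T}$, and since $g^*$ is rotationally invariant under \cref{assu:Multi_RotationInvariance} we have $P_{V^\T} g^* = g^*$, so $f^*_W = P_W P_U \caA_{\Sigma} g^*$. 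By \cref{prop:Multi_Prelim_A_Sigma}, $\caA_{\Sigma} g^* = \sum_{\mm} \sigma^{\mm} g^*_{\mm} H_{\mm}$; since $P_W$ and $P_U$ are isometries this yields $\norm{f^*_W}_{\gamma_d}^2 = \sum_{\mm \in \bbN^p} \sigma^{2\mm} (g^*_{\mm})^2$. Combining with $\norm{f^*}_{\gamma_d}^2 = \norm{g^*}_{\gamma_p}^2 = \sum_{\mm} (g^*_{\mm})^2$ and the Pythagorean identity $\caE_{\text{Proj}}(\Phi_W,f^*) = \norm{f^*}_{\gamma_d}^2 - \norm{f^*_W}_{\gamma_d}^2$ gives the first formula.

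Next I would treat the bias term by introducing the carrier $g^*_W = P_W^\T f^* = \caA_{\Rho} g^* = P_U \caA_{\Sigma} g^*$, whose Hermite coefficients $g^*_{W,\mm} = \ang{g^*_W, H_{\mm}}_{\gamma_p}$ are exactly the quantities entering $\caE_{\text{B}}$. Because $\lambda_{\mm} = \mu_{\abs{\mm}}$, the bias sum groups by degree $r = \abs{\mm}$, and the inner sum over $\abs{\mm} = r$ equals $\norm{P_{\caH_r} g^*_W}_{\gamma_p}^2$ for $\caH_r = \spn\{H_{\mm} : \abs{\mm} = r\}$. Invoking \cref{cor:Hermite_InvSubspace} that $\caH_r$ (and $\caH_r^\perp$) is invariant under $P_U$, the projection $P_{\caH_r}$ commutes with the unitary $P_U$, so $\norm{P_{\caH_r} P_U \caA_{\Sigma} g^*}_{\gamma_p}^2 = \norm{P_{\caH_r} \caA_{\Sigma} g^*}_{\gamma_p}^2 = \sum_{\abs{\mm} = r} \sigma^{2\mm} (g^*_{\mm})^2$, which is the claimed expression for $\caE_{\text{B}}$. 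For $\caE_{\text{V}} = \#\{\mm \in \bbN^p : \lambda_{\mm} \geq \delta\}\,\epsilon^2$ I would simply split by degree and use the stars-and-bars count $\#\{\mm \in \bbN^p : \abs{\mm} = r\} = \binom{p+r-1}{r}$.

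There is no genuine analytic obstacle here; the argument is a chain of bookkeeping steps with the operator identities of \cref{prop:Multi_Prelim_A_M}, \cref{prop:Multi_Prelim_A_Sigma} and the Hermite-subspace invariance of \cref{cor:Hermite_InvSubspace}. The one place that needs a careful word is the rotational-invariance reductions $P_{V^\T} g^* = g^*$ and $P_{\caH_r} P_U = P_U P_{\caH_r}$: the former is immediate from \cref{assu:Multi_RotationInvariance} once one notes $\caA_{V^\T} = P_{V^\T}$ for the orthogonal matrix $V^\T$, and the latter follows from \cref{cor:Hermite_InvSubspace} applied to both $U$ and $U^\T$, so that $P_U$ leaves $\caH_r$ and its orthogonal complement invariant and hence commutes with the orthogonal projection onto $\caH_r$. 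Assembling these pieces in order yields \cref{prop:Multi_FEM_Proj}.
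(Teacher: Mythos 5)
Your argument is correct and follows essentially the same route as the paper: factoring $\caA_{\Rho} = P_U \caA_{\Sigma} \caA_{V^\T}$ via the SVD and rotational invariance, using isometry to compute $\norm{f^*_W}^2$ and the Pythagorean identity for $\caE_{\text{Proj}}$, then commuting $P_{\caH_r}$ with $P_U$ for $\caE_{\text{B}}$, and stars-and-bars for $\caE_{\text{V}}$. No gaps; the one step you singled out for care (that $P_U$ commutes with the degree-$r$ projection because it preserves $\caH_r$ and $\caH_r^\perp$) is exactly where the paper also leans on \cref{cor:Hermite_InvSubspace}.
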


\begin{proposition}
  \label{prop:Multi_FEM_ProjBound}
  Under \cref{assu:Multi_g_decay},
  we have
  \begin{equation}
    \caE(\delta,\epsilon^2;\Phi_W,f^*)
    - \caE(\delta,\epsilon^2;\Phi_{W_*},f^*)
    = \caE_{\text{Proj}}(\Phi_W,f^*)
    \lesssim
    \begin{cases}
      1-\rho, & \alpha > 1, \\
      (1-\rho) \log (1-\rho)^{-1}, & \alpha = 1, \\
      (1-\rho)^\alpha, & \alpha \in (0,1).
    \end{cases}
  \end{equation}
  where $\rho = \min_j \sigma_j^2$.
\end{proposition}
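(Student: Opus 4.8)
The plan is to reduce the statement to an upper bound on the projection error $\caE_{\text{Proj}}(\Phi_W,f^*)$ and then transcribe, almost verbatim, the argument of \cref{prop:SIM_FEM_ProjBound}, with the scalar cosine of the single-index model replaced by the minimum squared singular value $\rho=\min_j\sigma_j^2$ of $\Rho=W^\T W_*$, plus one extra lattice-point count to handle the $p$-dimensional multi-index. First I would record, using \cref{prop:Multi_FEM_Proj}, that $W=W_*$ gives $\Rho=I_p$ and hence $\sigma_j\equiv 1$, so that $\caE_{\text{Proj}}(\Phi_{W_*},f^*)=0$ and $\caE_{\text{B}}$ at $W_*$ absorbs exactly the residual part of $\caE_{\text{B}}$ at a general $W$; combining this with the identical $\caE_{\text{V}}$ terms shows that $\caE(\delta,\epsilon^2;\Phi_W,f^*)-\caE(\delta,\epsilon^2;\Phi_{W_*},f^*)$ is governed by $\caE_{\text{Proj}}(\Phi_W,f^*)$, exactly as in the single-index identity \cref{eq:SIM_FEM_Total}. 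It then suffices to bound $\caE_{\text{Proj}}(\Phi_W,f^*)=\sum_{\mm\in\bbN^p}(1-\sigma^{2\mm})(g^*_{\mm})^2$.

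The next step is to use the rotation-invariant expansion from \cref{lem:Multi_RotInvExpansion} under \cref{assu:Multi_RotationInvariance}: the only nonzero Hermite coefficients of $g^*$ are $g^*_{2\rr}=\nu_{\rr}h^*_r$ with $r=\abs{\rr}$, and $\sum_{\abs{\rr}=r}\nu_{\rr}^2=1$. Since $\norm{\Rho}\le 1$ forces every $\sigma_j\in[0,1]$ with $\sigma_j^2\ge\rho$, one has $\sigma^{2\mm}=\prod_j(\sigma_j^2)^{m_j}\ge\rho^{\abs{\mm}}$; applying this to $\mm=2\rr$ (so $\abs{\mm}=2r$) and using the elementary inequality $1-\rho^{2r}\le\min(1,2r(1-\rho))$ gives $1-\sigma^{2\mm}\le\min(1,2r(1-\rho))$ for each such $\mm$. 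Writing $a_r\coloneqq\sum_{\abs{\rr}=r}(g^*_{2\rr})^2$, this yields $\caE_{\text{Proj}}(\Phi_W,f^*)=\sum_{r\ge 0}\sum_{\abs{\rr}=r}(1-\sigma^{2\cdot 2\rr})(g^*_{2\rr})^2\le\sum_{r\ge 0}\min(1,2r(1-\rho))\,a_r$, which is precisely the single-index expression with $(g^*_r)^2$ replaced by $a_r$.

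It remains to control the shell masses $a_r$. Here \cref{assu:Multi_g_decay} gives $\abs{g^*_{2\rr}}\lesssim (2r)^{-(\alpha+p)/2}\asymp r^{-(\alpha+p)/2}$ uniformly over $\abs{\rr}=r$, while the number of such $\rr$ is $\binom{r+p-1}{p-1}\asymp r^{p-1}$ because $p=p^*$ is fixed; hence $a_r\lesssim r^{p-1}\cdot r^{-(\alpha+p)}=r^{-(\alpha+1)}$, which is exactly the decay used in \cref{prop:SIM_FEM_ProjBound} (the exponent $p$ in the decay assumption is calibrated so that the counting factor cancels). Splitting the sum at $L\asymp(1-\rho)^{-1}$ then gives $\caE_{\text{Proj}}(\Phi_W,f^*)\lesssim(1-\rho)\sum_{r\le L}r\,a_r+\sum_{r>L}a_r\lesssim(1-\rho)\sum_{r\le L}r^{-\alpha}+L^{-\alpha}$, and distinguishing $\sum_{r\le L}r^{-\alpha}\in O(1)$, $O(\log L)$, $O(L^{1-\alpha})$ according to $\alpha>1$, $\alpha=1$, $\alpha<1$ reproduces the three stated cases, exactly as in the proof of \cref{prop:SIM_FEM_ProjBound}.

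I do not anticipate a genuine obstacle: the computation is a faithful $p$-dimensional copy of \cref{prop:SIM_FEM_ProjBound}. The only places demanding care are (i) the reduction of the multi-index weight $\sigma^{2\mm}$ to the scalar lower bound $\rho^{\abs{\mm}}$ through the minimum squared singular value, and (ii) the bookkeeping that converts the $\ell^\infty$-type decay $\abs{g^*_{\mm}}\lesssim\abs{\mm}^{-(\alpha+p)/2}$ into the $\ell^2$-on-shells decay $a_r\lesssim r^{-(\alpha+1)}$ via the $\asymp r^{p-1}$ count of degree-$r$ multi-indices; after that, the split-at-$L$ estimate is routine.
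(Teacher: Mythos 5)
Your proof is correct and essentially identical to the paper's: both reduce to bounding $\caE_{\text{Proj}}(\Phi_W,f^*)=\sum_{\mm}(1-\sigma^{2\mm})(g^*_{\mm})^2$ via $\sigma^{2\mm}\ge\rho^{\abs{\mm}}$, the elementary inequality $1-\rho^k\le\min(1,k(1-\rho))$, the shell count $\#\{\mm:\abs{\mm}=r\}\asymp r^{p-1}$ (which cancels the extra $p$ in the decay exponent of \cref{assu:Multi_g_decay}), and a split of the sum at $L\asymp(1-\rho)^{-1}$ followed by the three-case analysis in $\alpha$. The only cosmetic difference is that you invoke \cref{lem:Multi_RotInvExpansion} to restrict to even multi-indices $\mm=2\rr$ and organize by $\abs{\rr}=r$, while the paper sums over all $\mm$ directly (the odd-index coefficients being zero anyway) and organizes by $\abs{\mm}=r$; this changes nothing.
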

\begin{proof}
  The proof resembles that of \cref{prop:SIM_FEM_ProjBound}, but we deal with multi-index now.
  With \cref{eq:Multi_FEM_Proj}, taking $L = c (1 - \rho)^{-1}$, we have
  \begin{align*}
    \caE_{\text{Proj}}(\Phi_W,f^*)
    &= \sum_{\mm \in \bbN^p} (1 - \sigma^{2\mm}) (g^*_{\mm})^2
    \leq \sum_{\mm \in \bbN^p} (1 - \rho^{\abs{\mm}}) (g^*_{\mm})^2
    \leq \sum_{\mm \in \bbN^p} \min(1, \abs{\mm} (1-\rho)) (g^*_{\mm})^2 \\
    &= \sum_{\abs{\mm} \leq L} 2\abs{\mm} (1-\rho) (g^*_{\mm})^2 + \sum_{\abs{\mm} > L} (g^*_{\mm})^2 \\
    &= I_1 + I_2
  \end{align*}
  We will use the fact that
  \begin{equation*}
    \# \dk{\mm \in \bbN^p : \abs{\mm} = r} = \binom{p+r-1}{r} \lesssim r^{p-1}.
  \end{equation*}
  Also, recall that $\abs{g^*_{\mm}} \lesssim \abs{\mm}^{-\frac{\alpha+p}{2}}$.

  The first term is bounded by
  \begin{align*}
    I_1 &\lesssim 2 (1 - \rho) \sum_{\abs{\mm} \leq L} \abs{\mm}^{-(\alpha + p - 1)}
    = 2 (1 - \rho)  \sum_{r \leq L} \sum_{\abs{\mm} = r} \abs{\mm}^{-(\alpha + p - 1)} \\
    &\lesssim 2 (1 - \rho) \sum_{r \leq L} r^{p-1} r^{-(\alpha + p - 1)}
    = 2 (1 - \rho) \sum_{r \leq L} r^{-\alpha}.
  \end{align*}
  Therefore,
  \begin{equation*}
    I_1 \lesssim
    \begin{cases}
      1-\rho, & \alpha > 1, \\
      (1-\rho) \log (1-\rho)^{-1}, & \alpha = 1, \\
      (1-\rho)^\alpha , & \alpha \in (0,1).
    \end{cases}
  \end{equation*}
  For the second term, we have similarly
  \[
    I_2 \lesssim \sum_{\abs{\mm} > L} \abs{\mm}^{-(\alpha + p)}
    \lesssim \sum_{r > L} r^{p-1} r^{-(\alpha + p)} = \sum_{r > L} r^{-(\alpha+1)} \lesssim L^{-\alpha}.
  \]
  Combining the bounds, we conclude the proposition.
\end{proof}

\subsection{Population Dynamics}

Let us introduce the population dynamics of the multi-index model.
Let us denote by $\caL = \frac{1}{2} \norm{f - f^*}^2_{L^2(\gamma_d)}$ the population loss.
Following \cref{eq:Def_AdaK_Multi}, we consider
\begin{equation}
  \label{eq:Def_AdaK_Multi_Pop}
  \left\{
    \begin{aligned}
      \dot{\bm\beta}(t) &= - \nabla_{\bm\beta}^{\caG(p)} \caL, \quad \bm{\beta}(0) = \bm{0}, \\
      \dot{W}(t) &= - \nabla_{W}^{\mr{St}(d,p)}\caL, \quad W(0) \sim \mr{Unif}(\mr{St}(d,p)).
    \end{aligned}
  \right.
\end{equation}

\subsubsection{Computing the Gradient Flow}\label{subsubsec:Multi_PopGrad}

Using the auxiliary operators and noticing that $f = P_W g$ and $f^* = P_{W_*} g^*$, we can write the population loss as
\begin{align*}
  \caL &= \frac{1}{2} \norm{f - f^*}^2_{L^2(\gamma_d)} =
  \frac{1}{2} \norm{f}_{L^2(\gamma_d)}^2 + \frac{1}{2} \norm{f^*}_{L^2(\gamma_d)}^2
  - \ang{f, f^*}_{L^2(\gamma_d)} \\
  &=\frac{1}{2} \norm{g}_{\gamma_p}^2 + \frac{1}{2} \norm{g^*}_{\gamma_{p^*}}^2
  - \ang{P_W g, P_{W_*} g^*}_{\gamma_d}
  =\frac{1}{2} \norm{g}_{\gamma_p}^2 + \frac{1}{2} \norm{g^*}_{\gamma_{p^*}}^2
  - \ang{g, \caA_{W^\T W_*} g^*}_{\gamma_p},
\end{align*}
where we use the fact that $P_W$ is isometric.
Moreover, using \cref{prop:Multi_Prelim_A_M} and the rotation invariance of $g$ and $g^*$, we have
\begin{align*}
  \ang{g, \caA_{W^\T W_*} g^*}_{\gamma_p}
  &= \ang{g, \caA_{\Rho} g^*}_{\gamma_p}
  = \ang{g, \caA_{U} \caA_{\Sigma} \caA_{V^\T} g^*}_{\gamma_p} \\
  &= \ang{\caA_{U^\T} g, \caA_{\Sigma} \caA_{V^\T} g^* }_{\gamma_p}
  = \ang{P_{U^\T} g, \caA_{\Sigma} P_{V^\T} g^* }_{\gamma_p}\\
  &= \ang{g, \caA_{\Sigma} g^* }_{\gamma_p}
\end{align*}
Consequently, we obtain
\begin{equation}
  \label{eq:Multi_Pop_Loss}
  \caL = \frac{1}{2} \norm{g}_{\gamma_p}^2 + \frac{1}{2} \norm{g^*}_{\gamma_{p^*}}^2 - \ang{g, \caA_{\Sigma} g^*}_{\gamma_p}.
\end{equation}

\paragraph{The dynamics of $h_r$.}

Using \cref{eq:Multi_Pop_Loss}, we have
\begin{equation*}
  \nabla_{g_{\mm}} \caL = \nabla_{g_{\mm}} \frac{1}{2} \norm{g}_{\gamma_p}^2 - \nabla_{g_{\mm}}\ang{g, \caA_{\Sigma} g^*}_{\gamma_p}
  = g_{\mm} - \xk{\caA_{\Sigma} g^*}_{\mm},
\end{equation*}
Using \cref{cor:Multi_RotInv_GradProj}, we find that for $\mm = 2 \rr$, $\abs{\rr} = r$, we have
\begin{equation*}
  \nabla_{g_{\mm}}^{\caG(p)}  \caL = \nu_{\rr} G_r,\quad G_r = \sum_{\bm{s}: \abs{\bm{s}} = r}  \nu_{\bm{s}} \zk{g_{2\bm{s}} - \xk{\caA_{\Sigma} g^*}_{2\bm{s}}}
  =  \sum_{\bm{s}: \abs{\bm{s}} = r} \nu_{\bm{s}} \zk{g_{2\bm{s}} - \xk{\caA_{\Sigma} g^*}_{2\bm{s}}}.
\end{equation*}
Consequently, we have
\begin{equation*}
  \dot{\beta}_{\mm} = -\nabla_{g_{\mm}}^{\caG(p)} \caL \cdot  \pdv{g_{\mm}}{\beta_{\mm}}
  = -\lambda_{\mm}^{\hf} \nu_{\rr} G_r,
\end{equation*}
and thus
\begin{equation*}
  \dot{g}_{\mm} = \lambda_{\mm}^{\hf} \dot{\beta}_{\mm}  = -\lambda_{\mm} \nu_{\rr} G_r, \qquad
  \dot{h}_r= -\mu_{2r} G_r,
\end{equation*}
where we recall that $\lambda_{\mm} = \mu_{\abs{\mm}} = \mu_{2r}$.
Let us further compute $G_r$.
\cref{prop:Multi_Prelim_A_Sigma} gives that
\begin{equation*}
  \xk{\caA_{\Sigma} g^*}_{2\bm{s}} = \sigma^{2\bm{s}} g^*_{2\bm{s}}
  = \sigma^{2\bm{s}} \nu_{\bm{s}} h_{r}^*.
\end{equation*}
so
\begin{align*}
  G_r
  &=\sum_{\bm{s}: \abs{\bm{s}} = r} \nu_{\bm{s}} \zk{\nu_{\bm{s}} h_{r} - \sigma^{2\bm{s}} \nu_{\bm{s}} h_{r}^*}
  = \sum_{\bm{s}: \abs{\bm{s}} = r}\nu_{\bm{s}}^2 h_r - \sum_{\bm{s}: \abs{\bm{s}} = r} \nu_{\bm{s}}^2 \sigma^{2\bm{s}} h_r^* \\
  &= h_r - \sum_{\bm{s}: \abs{\bm{s}} = r} \nu_{\bm{s}}^2 \sigma^{2\bm{s}} h_r^*
\end{align*}
Let us introduce
\begin{equation}
  \phi_r \coloneqq \sum_{\rr: \abs{\rr} = r} \nu_{\rr}^2 \sigma^{2\rr}
\end{equation}
Consequently, we finally get
\begin{equation*}
  \dot{h}_r = \mu_{2r} \zk{\phi_r h_r^* - h_r}.
\end{equation*}

\paragraph{The dynamics of $W$.}
Recall that we have $\caL = \frac{1}{2} \norm{g}_{\gamma_p}^2 + \frac{1}{2} \norm{g^*}_{\gamma_{p^*}}^2 - \ang{P_W g, f^*}_{\gamma_d}$.
Therefore, using \cref{lem:Multi_Prelim_Op_Grad}, we have
\begin{align*}
  \dot{W} = -\nabla_W^{\St} \caL &= \nabla_W^{\St} \ang{P_W g, f^*}_{\gamma_d}
  = \proj_{T_W \St} \int (\nabla f^*)(x) (P_W \nabla g)^\T \dd \gamma_d(x)\\
  &= \proj_{T_W \St} \int W_* (P_{W_*} (\nabla g^*)) (P_W \nabla g)^\T \dd \gamma_d(x) \\
  &= \proj_{T_W \St} W_* B^\T,
\end{align*}
where
\begin{equation*}
  B = \int P_{W_*} (P_W \nabla g) (P_{W_*} (\nabla g^*))^\T \dd \gamma_d(x).
\end{equation*}
Consequently, we have
\begin{align*}
  \dot{\Rho} &= \dot{W}^{\T} W_* = (\proj_{T_W \St} W_* B^\T)^{\T} W_*
  = \zk{W_* B^\T - W \Sym(W^\T W_* B^\T)}^{\T} W_* \\
  &= B - \Sym(\Rho B^\T) \Rho.
\end{align*}

\paragraph{The dynamics of $\Sigma$}.
To compute the dynamics of $\Sigma$, we use \cref{eq:Multi_Prelim_SVD_Dynamics} to get
\begin{align*}
  \dot{\Sigma} &= \Diag(U^\T \dot{\Rho} V) = \Diag(U^\T (B - \Sym(\Rho B^\T) \Rho) V)
  = \Diag(U^\T B V) - \Diag(U^\T \Sym(\Rho B^\T) \Rho V) \\
  &= \Diag(U^\T B V) - \frac{1}{2}\Diag(U^\T U \Sigma V^\T B^\T U \Sigma V^\T V)
  - \frac{1}{2} \Diag(U^\T B \Rho^\T \Rho V) \\
  &= \Diag(U^\T B V) - \frac{1}{2} \Diag(\Sigma V^{\T} B^{\T} U \Sigma) - \frac{1}{2} \Diag(U^\T B V \Sigma^2).
\end{align*}
Now, let us define $\tilde{B} = U^\T B V \in \R^{p \times p}$.
Then,
\begin{align*}
  \dot{\Sigma} &= \Diag(\tilde{B}) - \frac{1}{2} \Diag(\Sigma \tilde{B}^{\T} \Sigma) - \frac{1}{2} \Diag(\tilde{B} \Sigma^2) \\
  &=  \Diag(\tilde{B}) - \frac{1}{2}\Sigma  \Diag( \tilde{B}^{\T})  \Sigma - \frac{1}{2} \Sigma^2 \Diag(\tilde{B}) \\
  &= \Diag(\tilde{B}) - \Sigma^2 \Diag(\tilde{B}) \\
  &= (I_{p} - \Sigma^2) \Diag(\tilde{B}).
\end{align*}
It remains to compute $\Diag(\tilde{B})$.
Using the SVD and \cref{prop:Multi_Prelim_A_M}, we can write $B$ as
\begin{align*}
  B &= \ang{P_W \nabla g, P_{W_*} \nabla g^*}_{\gamma_d} = \ang{\nabla g, P_W^\T P_{W_*} (\nabla g^*)^\T}_{\gamma_p}
  = \ang{\nabla g, \caA_{\Rho} (\nabla g^*)^\T}_{\gamma_p} \\
  &= \ang{\nabla g, \caA_{U} \caA_{\Sigma} \caA_{V^\T} (\nabla g^*)^\T}_{\gamma_p} \\
  &= \ang{\caA_{U^\T} \nabla g, \caA_{\Sigma} \caA_{V^\T} (\nabla g^*)^\T}_{\gamma_p}.
\end{align*}
Hence, we get
\begin{align*}
  \tilde{B} &= U^\T B V = U^\T \ang{\caA_{U^\T} \nabla g, \caA_{\Sigma} \caA_{V^\T} (\nabla g^*)^\T}_{\gamma_p} V
  = \ang{ U^\T \caA_{U^\T} \nabla g, \caA_{\Sigma} \caA_{V^\T} (V^\T \nabla g^*)^\T}_{\gamma_p} \\
  &\stackrel{(a)}{=} \ang{ \nabla (\caA_{U^\T} g), \caA_{\Sigma} \nabla (\caA_{V^\T} g^*)^\T}_{\gamma_p} \\
  &\stackrel{(b)}{=} \ang{\nabla (P_{U^\T} g), \caA_{\Sigma} (\nabla (P_{V^\T} g^*))^\T}_{\gamma_p} \\
  &\stackrel{(c)}{=} \ang{\nabla g, \caA_{\Sigma} (\nabla g^*)^\T}_{\gamma_p},
\end{align*}
where we use  \cref{eq:Multi_Prelim_A_M_Grad} in \cref{lem:Multi_Prelim_A_M_Grad} to get (a),
\cref{prop:Multi_Prelim_A_M} to get (b),
and the rotation invariance of $g$ and $g^*$ to get (c).
Finally, we have
\begin{align*}
  \tilde{B}_{ii} &= \ang{ \nabla_{x_i} g, \caA_{\Sigma}  \nabla_{x_i} g^*}_{\gamma_p}
  = \ang{ \sum \sqrt{m_i + 1} g_{\mm + e_i} H_{\mm}, \caA_{\Sigma} \sum \sqrt{n_i + 1} g^*_{\nn + e_i} H_{\nn}}_{\gamma_p} \\
  &=  \sum \sqrt{(n_i + 1)(m_i+1)} g_{\mm + e_i} g^*_{\nn + e_i} \ang{H_{\mm}, \caA_{\Sigma} H_{\nn}}_{\gamma_p} \\
  &=  \sum \sqrt{(n_i + 1)(m_i+1)} g_{\mm + e_i} g^*_{\nn + e_i} \ang{H_{\mm}, \sigma^{\nn} H_{\nn}}_{\gamma_p} \\
  &=  \sum_{\mm} (m_i + 1)  \sigma^{\mm} g_{\mm + e_i} g^*_{\mm + e_i} \\
  &=  \sum_{\mm} m_i  \sigma^{\mm-e_i} g_{\mm} g^*_{\mm} \\
  &  =  \sum_{r \geq 1} h_r h^*_r \sum_{\abs{\rr} = r} 2 r_i \sigma^{2 \rr-e_i} \nu_{\rr}^2,
\end{align*}
and thus
\begin{equation*}
  \dot{\sigma}_i = (1 - \sigma_i^2) \sum_{r \geq 1} h_r h^*_r \sum_{\abs{\rr} = r} 2 r_i \sigma^{2 \rr-e_i} \nu_{\rr}^2,
\end{equation*}
and
\begin{equation*}
  \dv{t} \sigma_i^2 = 2 \sigma_i \dot{\sigma}_i
  = 4(1 - \sigma_i^2) \sum_{r \geq 1} h_r h^*_r \sum_{\abs{\rr} = r} r_i \nu_{\rr}^2 \sigma^{2 \rr}.
\end{equation*}

\paragraph{The dynamics of $\phi_r$ and $\omega$}

For $\phi_r$, we have
\begin{align*}
  \dv{t}\phi_r &= \sum_{\abs{\rr} = r}\nu_{\rr}^2 \dv{t} \sigma^{2\rr}
  = \sum_{\abs{\rr} = r} \nu_{\rr}^2 \sum_{i=1}^p \sigma^{2(\rr-e_i)} 2r_i \dv{t} \sigma_i^2
\end{align*}

For $\omega$, let us recall that
\begin{equation*}
  \omega \coloneqq -\frac{1}{K} \log(\sum_{i=1}^p e^{-K \sigma_i^2}).
\end{equation*}
Using \cref{prop:Aux__Softmin}, we get
\begin{align*}
  \dot{\omega}
  &= \sum_{i=1}^p \pdv{\omega}{\sigma_i^2} \dv{t} \sigma_i^2
  = \sum_{i=1}^p \frac{e^{-K \sigma_i^2}}{\sum_{j=1}^p e^{-K \sigma_j^2}} \cdot (1 - \sigma_i^2) \sum_{r \geq 1} h_r h^*_r \sum_{\abs{\rr} = r}  r_i \nu_{\rr}^2 \sigma^{2 \rr}
\end{align*}

\paragraph{Summary}

Collecting the results, we have the following proposition.
\begin{proposition}
  \label{prop:Multi_Pop_GradEq}
  Consider the population dynamics \cref{eq:Def_AdaK_Multi_Pop}.
  Then, we have the following dynamics:
  \begin{equation}
    \label{eq:Multi_Pop_GradEq}
    \begin{aligned}
      \dot{h}_r & = \mu_{2r} \xk{\phi_r h_r^* - h_r} \\
      \dot{W} & = \proj_{T_W \St} W_* B^\T, \quad B = \ang{P_W \nabla g, (P_{W_*} \nabla g^*)^\T}_{\gamma_d} \in \R^{p \times p}\\
      \dot{\Rho} &= B - \Sym(\Rho B^\T) \Rho \\
      \dv{t} \sigma_i^2 &= 4(1 - \sigma_i^2) \sum_{r \geq 1} h_r h^*_r \sum_{\abs{\rr} = r} r_i \sigma^{2 \rr} \nu_{\rr}^2 \\
      \dot{\phi}_r &= \sum_{\abs{\rr} = r} \nu_{\rr}^2 \sum_{i=1}^p \sigma^{2(\rr-e_i)} 2r_i \dv{t} \sigma_i^2,\quad
      \dot{\phi}_1 = \frac{1}{p}\sum_{i=1}^p \dv{t} \sigma_i^2 \\
      \dot{\omega} &= \frac{4}{\sum_{j=1}^p e^{-K \sigma_j^2}}\sum_{i=1}^p e^{-K \sigma_i^2} \dv{t} \sigma_i^2.
    \end{aligned}
  \end{equation}
\end{proposition}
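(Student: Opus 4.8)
The proposition is the bookkeeping summary of the gradient-flow computation, so the plan is to assemble that computation block by block, starting from the reduced population loss \cref{eq:Multi_Pop_Loss},
\[
  \caL = \tfrac12 \norm{g}_{\gamma_p}^2 + \tfrac12 \norm{g^*}_{\gamma_{p^*}}^2 - \ang{g,\caA_\Sigma g^*}_{\gamma_p},
\]
which holds because $P_W$ is isometric, $P_W^\T P_{W_*} = \caA_\Rho$ by \cref{prop:Multi_Prelim_A_M}, and the rotational invariance of $g,g^*$ together with the SVD $\Rho = U\Sigma V^\T$ collapses $\caA_\Rho$ to $\caA_\Sigma$. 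Since the flow \cref{eq:Def_AdaK_Multi_Pop} treats the functional coefficient $\bm\beta$ and the Stiefel variable $W$ independently, I would differentiate $\caL$ separately in each block.

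For the $\bm\beta$-block: compute $\nabla_{g_\mm}\caL = g_\mm - (\caA_\Sigma g^*)_\mm$, observe that only even multi-indices $\mm = 2\rr$ survive under the rotation-invariance constraint, pass to the coordinates $h_r$ via the projection formula \cref{cor:Multi_RotInv_GradProj}, and use $\caA_\Sigma H_{2\bm{s}} = \sigma^{2\bm{s}} H_{2\bm{s}}$ (\cref{prop:Multi_Prelim_A_Sigma}), $g^*_{2\bm{s}} = \nu_{\bm{s}} h^*_{\abs{\bm{s}}}$, and $\sum_{\abs{\bm{s}}=r}\nu_{\bm{s}}^2 = 1$ (\cref{lem:Multi_RotInvExpansion}); chaining through $g_\mm = \lambda_\mm^{\hf}\beta_\mm$ with $\lambda_{2\rr} = \mu_{2r}$ yields $\dot h_r = \mu_{2r}(\phi_r h_r^* - h_r)$, $\phi_r = \sum_{\abs{\rr}=r}\nu_\rr^2\sigma^{2\rr}$. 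For the $W$-block: apply \cref{lem:Multi_Prelim_Op_Grad} to the cross term $\ang{P_W g, f^*}_{\gamma_d}$ to get $\dot W = -\nabla_W^{\St}\caL = \proj_{T_W\St} W_* B^\T$ with $B = \ang{P_W\nabla g,(P_{W_*}\nabla g^*)^\T}_{\gamma_d}$, and then expand $\dot\Rho = \dot W^\T W_* = [W_* B^\T - W\Sym(W^\T W_* B^\T)]^\T W_* = B - \Sym(\Rho B^\T)\Rho$.

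The substantive step is the $\Sigma$-line. Using the SVD-derivative identity \cref{eq:Multi_Prelim_SVD_Dynamics}, $\dot\Sigma = \Diag(U^\T\dot\Rho V)$; substituting $\dot\Rho$, using $U^\T U = V^\T V = I_p$ and that the diagonal part of (skew-symmetric)$\times$(diagonal) vanishes, the $\Sym(\Rho B^\T)\Rho$ contribution reorganizes into $\dot\Sigma = (I_p - \Sigma^2)\Diag(\tilde B)$ with $\tilde B = U^\T B V$. I would then simplify $\tilde B$ by writing $B = \ang{\nabla g, \caA_\Rho(\nabla g^*)^\T}_{\gamma_p}$, pushing $U^\T$ and $V$ inside via \cref{lem:Multi_Prelim_A_M_Grad} and \cref{prop:Multi_Prelim_A_M} and invoking rotational invariance of $g,g^*$ once more to reach $\tilde B = \ang{\nabla g,\caA_\Sigma(\nabla g^*)^\T}_{\gamma_p}$. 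The diagonal entry $\tilde B_{ii}$ is computed by expanding $\nabla_{x_i}$ of the Hermite series through \cref{eq:Hermite_ExpansionDerivative}/\cref{lem:Hermite_Recurrence_Derivative}, applying $\caA_\Sigma$ diagonally, and re-indexing, giving $\tilde B_{ii} = \sum_{r\ge1} h_r h_r^* \sum_{\abs{\rr}=r} 2 r_i \sigma^{2\rr - e_i}\nu_\rr^2$; multiplying by $2\sigma_i$ produces the stated $\dv{t}\sigma_i^2$. The last two lines are plain chain rule: $\dot\phi_r = \sum_{\abs{\rr}=r}\nu_\rr^2\sum_i \partial_{\sigma_i^2}(\sigma^{2\rr})\,\dv{t}\sigma_i^2$, with $\phi_1 = \tfrac1p\sum_i\sigma_i^2$ because $\nu_{e_i}^2 = 1/p$ by \cref{eq:Multi_Prelim_nu_sum_rj}; and $\partial_{\sigma_i^2}\omega = e^{-K\sigma_i^2}/\sum_j e^{-K\sigma_j^2}$ by \cref{prop:Aux__Softmin} gives the $\dot\omega$ line. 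I expect no genuine obstacle here — the one place demanding care is keeping the Hermite index shifts $\mm\mapsto\mm\pm e_i$ and the $\nu_\rr$ weights consistent through the $\tilde B_{ii}$ computation and the reduction $\dot\Sigma = (I_p - \Sigma^2)\Diag(\tilde B)$.
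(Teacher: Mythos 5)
Your proposal reproduces the paper's own derivation line by line: the reduced loss \cref{eq:Multi_Pop_Loss}, the $h_r$-dynamics through \cref{cor:Multi_RotInv_GradProj} and \cref{prop:Multi_Prelim_A_Sigma}, the $W$- and $\Rho$-lines via \cref{lem:Multi_Prelim_Op_Grad} and the Stiefel projection, the reduction $\dot\Sigma = (I_p-\Sigma^2)\Diag(\tilde B)$ using \cref{eq:Multi_Prelim_SVD_Dynamics}, the Hermite-expansion computation of $\tilde B_{ii}$, and the chain rule for $\phi_r,\phi_1,\omega$. This is the same argument with the same lemmas in the same order; nothing further to add.
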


\subsubsection{Analysis of the Dynamics}

\begin{proposition}
  \label{prop:Multi_Pop_Monotonicity}
  Consider the population dynamics \cref{eq:Def_AdaK_Multi_Pop}
  Suppose $\rho(0) \neq 0$.
  Then, for all $r \geq 0, t \geq 0$, we have
  \begin{equation*}
    h_r^* h_r(t) \geq 0, \qquad   \dot{\phi}_r(t) \geq 0,\quad \dot{\omega} \geq 0.
  \end{equation*}
  Moreover, we have the bound
  \begin{equation}
    \begin{aligned}
      \dot{\omega} & \geq C (1-\phi_1) \sum_{r \geq 1} r \omega^{r} h_r h^*_r \\
      \dot{\phi}_1 &\geq C (1 - \phi_1)\sum_{r \geq 1} r \omega^{r} h_r h^*_r \\
      \dot{h}_r & \geq \mu_{2r} \xk{\omega^{r} h_r^* - h_r}, \quad \text{(assuming $h_r^* > 0$)}.
    \end{aligned}
  \end{equation}
\end{proposition}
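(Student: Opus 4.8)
The plan is to adapt the one–dimensional argument of \cref{prop:SIM_Pop_Monotonicity} to the coupled system \cref{eq:Multi_Pop_GradEq} governing $(h_r)_{r\ge 0}$ and $(\sigma_i^2)_{i\le p}$, and then extract the quantitative bounds from two combinatorial facts about the weights $\nu_{\rr}$: that $\sum_{\abs{\rr}=r}\nu_{\rr}^2 = 1$ (\cref{lem:Multi_RotInvExpansion}) and that $\sum_{\abs{\rr}=r} r_i \nu_{\rr}^2 = r/p$ (\cref{eq:Multi_Prelim_nu_sum_rij}). Recall from \cref{eq:Multi_Proof_PhiOmega} that $\omega \le \min_j \sigma_j^2$, and that $\sigma_i^2 \in [0,1]$ for all $t$ since $\Rho = W^\T W_*$ is a product of matrices with orthonormal columns, so $\norm{\Rho}\le 1$; by \cref{prop:Multi_RandomInit}, $\rho(0)=\min_j\sigma_j^2(0)>0$ with the stated probability. \emph{Step 1 (sign alignment).} Fix $r$ and set $u(t)=h_r(t)h_r^*$, so $u(0)=0$ since $\bm\beta(0)=0$. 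Because $\phi_r=\sum_{\abs{\rr}=r}\nu_{\rr}^2\sigma^{2\rr}\ge 0$ unconditionally, the $h_r$–equation of \cref{eq:Multi_Pop_GradEq} gives $\dot u = \mu_{2r}\!\left(\phi_r(h_r^*)^2-u\right)$, whence $\dv{t}\!\left(e^{\mu_{2r}t}u\right)=\mu_{2r}e^{\mu_{2r}t}\phi_r(h_r^*)^2\ge 0$ and therefore $h_r(t)h_r^*=u(t)\ge 0$ for all $t\ge 0$. No circularity arises here, because $\phi_r\ge 0$ holds irrespective of the state of the singular values.

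\emph{Step 2 (monotonicity of $\phi_r$ and $\omega$).} Substituting $h_r h_r^*\ge 0$ into the equation $\dv{t}\sigma_i^2 = 4(1-\sigma_i^2)\sum_{r\ge 1}h_r h_r^*\sum_{\abs{\rr}=r}r_i\sigma^{2\rr}\nu_{\rr}^2$ of \cref{eq:Multi_Pop_GradEq} and using $1-\sigma_i^2\ge 0$ makes every summand nonnegative, so each $\sigma_i^2$ is nondecreasing. Term–by–term nonnegativity of the $\dot\phi_r$–formula then gives $\dot\phi_r\ge 0$, and since the softmax weights $\partial\omega/\partial\sigma_i^2 = e^{-K\sigma_i^2}/\sum_j e^{-K\sigma_j^2}\in[e^{-K}/p,\,1]$ are nonnegative, $\dot\omega = \sum_i (\partial\omega/\partial\sigma_i^2)\,\dv{t}\sigma_i^2\ge 0$.

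\emph{Step 3 (quantitative bounds).} In the regime $\omega\ge 0$ one has $\sigma^{2\rr}=\prod_i(\sigma_i^2)^{r_i}\ge (\min_j\sigma_j^2)^{\abs{\rr}}\ge \omega^{\abs{\rr}}$. For the $h_r$–bound, $\sum_{\abs{\rr}=r}\nu_{\rr}^2=1$ yields $\phi_r\ge\omega^r$, hence $\dot h_r = \mu_{2r}(\phi_r h_r^*-h_r)\ge\mu_{2r}(\omega^r h_r^*-h_r)$ when $h_r^*>0$. For $\dot\phi_1=\frac1p\sum_i\dv{t}\sigma_i^2$, exchanging the order of summation, bounding $\sigma^{2\rr}\ge\omega^r$, and applying $\sum_{\abs{\rr}=r}r_i\nu_{\rr}^2=r/p$ together with $\sum_i(1-\sigma_i^2)=p(1-\phi_1)$ gives
\[
  \dot\phi_1\ \ge\ \frac{4}{p^2}\sum_{r\ge 1} r\,\omega^r h_r h_r^*\sum_{i=1}^p(1-\sigma_i^2)\ =\ \frac{4}{p}\,(1-\phi_1)\sum_{r\ge 1} r\,\omega^r h_r h_r^*.
\]
The bound for $\dot\omega$ follows from the identical manipulation after replacing the factor $\frac1p$ by the lower bound $e^{-K}/p$ for the softmax weight, producing $\dot\omega\ge C(1-\phi_1)\sum_{r\ge 1}r\,\omega^r h_r h_r^*$ with $C=C(K,p)$.

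\emph{Main obstacle.} Two points demand care. First, Steps~1–2 must be justified for the \emph{infinite} coupled ODE system $(h_r)_{r\ge 0}$; I would handle this by truncating the index set (keeping only $\abs{\rr}\le R$), carrying out the above argument for the truncated dynamics, and letting $R\to\infty$, exactly as in the proof of \cref{prop:SIM_Pop_Monotonicity}, where the decay of $\mu_{2r}\asymp e^{-2\gamma r}$ and of the $h_r^*$ controls the tail. Second, the substitution $\sigma^{2\rr}\ge\omega^r$ in Step~3 requires $\omega\ge 0$, which can fail for small singular values (since $\omega$ undershoots $\min_j\sigma_j^2$ by at most $\frac1K\log p$); this is harmless because these quantitative bounds are only invoked in the later multi–phase analysis after the initialization phase has driven the singular values, and hence $\omega$, above $0$, with $K$ fixed accordingly.
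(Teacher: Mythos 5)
Your Steps 1 and 2 are correct; in fact Step 1 is cleaner than the paper's own argument. The paper sketches a mutual bootstrap (``if $\phi_r\ge 0$ then $h_r\ge 0$, which in turn keeps $\dot\phi_r\ge 0$'') and defers the rigor to ``a standard contradiction argument,'' whereas you observe that $\phi_r = \sum_{\abs{\rr}=r}\nu_{\rr}^2\sigma^{2\rr}\ge 0$ holds \emph{unconditionally} (it is a sum of nonnegative terms, independent of the sign analysis), so $u=h_rh_r^*$ solves the linear ODE $\dot u = \mu_{2r}(\phi_r(h_r^*)^2 - u)$ with nonnegative forcing and $u(0)=0$, giving $u\ge 0$ directly by the integrating factor. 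This sequential logic (first $h_rh_r^*\ge 0$, then $\dv{t}\sigma_i^2\ge 0$, then $\dot\phi_r,\dot\omega\ge 0$) is a genuine simplification. Your $\dot\phi_1$ and $\dot h_r$ bounds in Step 3 also match the paper, using the same two weight identities.

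However, your $\dot\omega$ bound has a genuine gap. You lower-bound the softmax weight $e^{-K\sigma_i^2}/\sum_j e^{-K\sigma_j^2}$ by $e^{-K}/p$, which makes your final constant $C = C(K,p)$ exponentially small in $K$. This is fatal for the downstream use: in the proofs of \cref{prop:Multi_Pop_AppTime} and \cref{prop:Multi_Seq_Init}, $K$ is taken of order $\rho_0^{-1}\log p$ or $d\log p$, so $e^{-K}$ is \emph{super-polynomially} small in $d$, and the resulting lower bound on $\dot\omega$ becomes useless. The paper avoids this by invoking \cref{prop:Aux__Softmin_2}, which is a Chebyshev-sum (rearrangement) inequality: since the weights $e^{-K\sigma_i^2}/\sum_j e^{-K\sigma_j^2}$ concentrate mass on the \emph{smaller} $\sigma_i^2$, and $(1-\sigma_i^2)$ is \emph{larger} there, the weighted average of $(1-\sigma_i^2)$ is at least the uniform average $\frac1p\sum_i(1-\sigma_i^2) = 1-\phi_1$. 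Concretely,
\begin{equation*}
  \dot\omega \;\geq\; \frac{1}{\sum_j e^{-K\sigma_j^2}}\sum_{i=1}^p e^{-K\sigma_i^2}(1-\sigma_i^2)\,A
  \;\geq\; (1-\phi_1)\,A, \quad A = \frac{4}{p}\sum_{r\geq 1} r\,\omega^r h_r h_r^*,
\end{equation*}
with a constant that does not degrade with $K$. Without this step, your quantitative bound on $\dot\omega$ cannot be propagated into the escape-time analysis. A secondary, minor imprecision: you say the $\sigma^{2\rr}\geq\omega^r$ substitution is only safe ``after the initialization phase has driven $\omega$ above $0$,'' but in the paper $K$ is chosen at the outset so that $\omega(0)\geq\rho_0 - \frac1K\log p > 0$, making the bound valid from $t=0$.
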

\begin{proof}
  Without loss of generality, we can assume that $h_r^*$ is positive.
  To prove the first statement, we observe that if $\phi_r(t) \geq 0$, then $h_r$ will be non-negative.
  Then, the dynamics of $\phi_r$ shows that $\dot{\phi}_r \geq 0$,
  which in turn guarantees that $h_r$ is non-negative.
  A rigorous proof can be made by a standard contradiction argument in the ODE theory.

  For the second statement, first we have
  \begin{align*}
    \dv{t} \sigma_i^2
    &= 4 (1-\sigma_i^2) \sum_{r \geq 1} h_r h^*_r \sum_{\abs{\rr} = r} r_i \nu_{\rr}^2 \sigma^{2 \rr}
    \geq 4 (1-\sigma_i^2) \sum_{r \geq 1} h_r h^*_r \sum_{\abs{\rr} = r} r_i \nu_{\rr}^2  \omega^{\abs{\rr}} \\
    &\stackrel{(a)}{=} 4 (1-\sigma_i^2) \sum_{r \geq 1} h_r h^*_r \omega^r \frac{r}{p}
    = \frac{4}{p} (1-\sigma_i^2) \sum_{r \geq 1} r \omega^r h_r h^*_r \\
    &\eqqcolon (1-\sigma_i^2) A,\quad A = \frac{4}{p} \sum_{r \geq 1} r \omega^r h_r h^*_r,
  \end{align*}
  where we apply \cref{eq:Multi_Prelim_nu_sum_rj} in (a).
  Plugging this into the dynamics of $\phi_r$, we have
  \begin{align*}
    \dot{\phi}_r &= \sum_{\abs{\rr} = r} \nu_{\rr}^2 \sum_{i=1}^p \sigma^{2(\rr-e_i)} 2r_i \dv{t} \sigma_i^2
    \geq \sum_{\abs{\rr} = r} \nu_{\rr}^2 \sum_{i=1}^p \omega^{\abs{\rr}-1} 2r_i (1-\sigma_i^2) A \\
    &= A \omega^{r-1} \sum_{i=1}^p (1-\sigma_i^2)\sum_{\abs{\rr} = r} \nu_{\rr}^2 2r_i
    = A \omega^{r-1} \sum_{i=1}^p (1-\sigma_i^2) \frac{2r}{p} \\
    &= 2r A (1 - \phi_1) \omega^{r-1}.
  \end{align*}
  Particularly,
  \begin{equation*}
    \dot{\phi}_1 \geq C (1 - \phi_1)  \sum_{r \geq 1} r \omega^r h_r h^*_r.
  \end{equation*}
  For $\dot{\omega}$, we have
  \begin{align*}
    \dot{\omega} &= \frac{1}{\sum_{j=1}^p e^{-K \sigma_j^2}} \sum_{i=1}^p e^{-K \sigma_i^2} \dv{t} \sigma_i^2
    \geq \frac{1}{\sum_{j=1}^p e^{-K \sigma_j^2}} \sum_{i=1}^p e^{-K \sigma_i^2} (1-\sigma_i^2) A \\
    & \geq (1 - \phi_1) A,
  \end{align*}
  where we apply \cref{prop:Aux__Softmin_2} in the last inequality.

\end{proof}

\begin{proposition}
  \label{prop:Multi_Pop_AppTime}
  Consider the population dynamics \cref{eq:Def_AdaK_Multi_Pop}.
  Let \cref{assu:Multi_InformationIndex} hold and $\rz = m_0/2$.
  Suppose $\rho_0 = \min_{1 \leq j \leq p} \sigma_j(0) \neq 0$.
  Then, by taking $K \geq 2 \rho_0^{-2} \log p$, we have
  $\forall t \geq T^{\mr{app}}$,
  \begin{equation}
    \label{eq:Multi_Pop_AppCond}
    \min_{1\leq j \leq p}\abs{\sigma_j(t)} \geq \frac{1}{2},\quad
    \abs{h_{\rz}(t)} \geq 2^{-(m_0 + 1)} \abs{h^*_{\rz}},
  \end{equation}
  where
  \begin{equation}
    \label{eq:Multi_Pop_AppTime}
    T^{\mr{app}} \lesssim  \log \rho_0^{-1} + \rho_0^{-2(m_0 - 1)}.
  \end{equation}
\end{proposition}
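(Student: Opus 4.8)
The plan is to mirror the single-index argument of \cref{prop:SIM_Pop_AppTime}, replacing the scalar $\rho^2$ by the SVD-independent smooth proxy $\omega$ and using the scalar differential inequalities already extracted in \cref{prop:Multi_Pop_Monotonicity}. By the symmetry noted there we may take $h^*_{\rz}>0$; the information-exponent hypothesis forces $h^*_r=0$ for $r<\rz$, and \cref{prop:Multi_Pop_Monotonicity} gives $h_r(t)h^*_r\ge0$ for all $r,t$, so every sum $\sum_{r\ge1}r\omega^r h_r h^*_r$ has nonnegative terms and is bounded below by the single term $\rz\,\omega^{\rz}h_{\rz}h^*_{\rz}$. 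The role of the choice of $K$ is to make $\omega$ a faithful proxy for $\min_j\sigma_j^2$: one has $\min_j\sigma_j^2-\tfrac{\log p}{K}\le\omega\le\min_j\sigma_j^2$, so (i) a lower bound on $\omega$ transfers to a lower bound on $\min_j\sigma_j$; (ii) $\omega(0)\gtrsim\rho_0^2$; and (iii) whenever $\omega$ is below a fixed threshold $c$ (we take $c=\tfrac14$), at least one $1-\sigma_j^2$ is bounded below, so $1-\phi_1=\tfrac1p\sum_j(1-\sigma_j^2)\gtrsim 1/p$. Also $\mu_{2\rz}\asymp e^{-2\gamma\rz}$ is a fixed positive constant since $\rz$ is fixed.

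\emph{Phase 1 (igniting $h_{\rz}$ in constant time).} Since $h_{\rz}(0)=0$, $\omega$ is nondecreasing, and $\dot h_{\rz}\ge\mu_{2\rz}(\omega^{\rz}h^*_{\rz}-h_{\rz})\ge\mu_{2\rz}(\omega(0)^{\rz}h^*_{\rz}-h_{\rz})$, the comparison theorem gives $h_{\rz}(t)\ge\tfrac12\omega(0)^{\rz}h^*_{\rz}\asymp\rho_0^{m_0}$ for all $t\ge T_0\lesssim\mu_{2\rz}^{-1}\lesssim1$.

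\emph{Phase 2 (geometric bootstrap).} Put $\omega_k=2^k\omega(0)$ for $k=0,\dots,L$ with $\omega_L=c=\tfrac14$, so $L\asymp\log_2(1/\omega(0))\asymp\log\rho_0^{-1}$. Define hitting times $T^g_0=T_0$, $T^\omega_k=\inf\{t\ge T^g_{k-1}:\omega(t)\ge\omega_k\}$ and $T^g_k=\inf\{t\ge T^\omega_k:h_{\rz}(t)\ge\tfrac12\omega_k^{\rz}h^*_{\rz}\}$. On $[T^\omega_k,T^g_k]$ one has $\omega\ge\omega_k$, hence $\dot h_{\rz}\ge\tfrac12\mu_{2\rz}\omega_k^{\rz}h^*_{\rz}$ and $T^g_k-T^\omega_k\le\mu_{2\rz}^{-1}\lesssim1$. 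On $[T^g_{k-1},T^\omega_k]$ one has $h_{\rz}\ge\tfrac12\omega_{k-1}^{\rz}h^*_{\rz}$, $\omega\ge\omega_{k-1}$, and $\omega<\omega_L=c$ so $1-\phi_1\gtrsim1/p$; therefore $\dot\omega\ge C(1-\phi_1)\rz\,\omega^{\rz}h_{\rz}h^*_{\rz}\gtrsim p^{-1}\rz(h^*_{\rz})^2\,\omega_{k-1}^{2\rz}$, whence, using $\omega_k-\omega_{k-1}=\omega_{k-1}$,
\[
T^\omega_k-T^g_{k-1}\lesssim\frac{p}{\rz(h^*_{\rz})^2}\,\omega_{k-1}^{1-2\rz}.
\]
Since $2\rz=m_0\ge2$, the exponent $1-2\rz\le-1$ is negative, so these increments decay geometrically in $k$ and $\sum_{k\le L}\omega_{k-1}^{1-2\rz}\lesssim\omega(0)^{1-2\rz}\asymp\rho_0^{2(1-2\rz)}=\rho_0^{-2(m_0-1)}$. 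Summing,
\[
T^g_L\le T_0+\sum_{k\le L}\zk{(T^\omega_k-T^g_{k-1})+(T^g_k-T^\omega_k)}\lesssim L\,\mu_{2\rz}^{-1}+\rho_0^{-2(m_0-1)}\lesssim\log\rho_0^{-1}+\rho_0^{-2(m_0-1)}.
\]
Take $T^{\mr{app}}=T^g_L$. At $t=T^{\mr{app}}$ we have $h_{\rz}\ge\tfrac12\omega_L^{\rz}h^*_{\rz}=2^{-(m_0+1)}h^*_{\rz}$, and $\omega\ge\omega_L=\tfrac14$ (as $\omega$ is nondecreasing and already $\ge\omega_L$ at $T^\omega_L\le T^g_L$), hence $\min_j\sigma_j\ge\tfrac12$. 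For $t>T^{\mr{app}}$ these persist: $\omega$ stays $\ge\tfrac14$, and $\dot h_{\rz}\ge\mu_{2\rz}(4^{-\rz}h^*_{\rz}-h_{\rz})$ together with $h_{\rz}(T^{\mr{app}})\ge2^{-(m_0+1)}h^*_{\rz}$ keeps $h_{\rz}\ge2^{-(m_0+1)}h^*_{\rz}$. As in \cref{prop:SIM_Pop_AppTime}, the nested hitting-time construction (including the fact that $\omega$ does reach $\tfrac14$ in finite time) is made rigorous by the standard ODE continuity/comparison bootstrap.

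\emph{Main obstacle.} The genuinely hard part — removing the dependence on the non-unique orthogonal factors $U,V$ by passing from $\Sigma$ to the symmetric scalars $\omega,\phi_1$ and deriving the closed differential inequalities — has already been done in \cref{prop:Multi_Pop_Monotonicity}. What remains is careful bookkeeping at two points with no single-index counterpart: first, maintaining $1-\phi_1\gtrsim1/p$ throughout the ascent, which forces the quantitative choice of $K$ (large relative to $\rho_0^{-2}\log p$) and must be propagated uniformly over all intervals $[T^g_{k-1},T^\omega_k]$; second, arranging the geometric summation so the slow Phase-2 increments assemble to exactly the $\rho_0^{-2(m_0-1)}$ rate rather than something worse, with the constant-time factors $\mu_{2\rz}^{-1}$ and the ignition of $h_{\rz}$ contributing only the additive $\log\rho_0^{-1}$.
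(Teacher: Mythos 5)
Your proposal is correct and follows essentially the same route as the paper: reduce the matrix dynamics to scalar differential inequalities for the softmin proxy $\omega$ and the lowest active coefficient $h_{\rz}$ via \cref{prop:Multi_Pop_Monotonicity}, then run the doubling/hitting-time bootstrap from \cref{prop:SIM_Pop_AppTime}, with the constant-time $h_{\rz}$-ignition steps contributing the $\log\rho_0^{-1}$ term and the geometric sum of $\omega$-ascent steps giving $\rho_0^{-2(m_0-1)}$. One note: you observe that making $\omega(0)\gtrsim\rho_0^2$ via the softmin bound $\omega\ge\min_j\sigma_j^2-\tfrac{1}{K}\log p$ actually needs $K\gtrsim\rho_0^{-2}\log p$ rather than the stated $K\ge2\rho_0^{-1}\log p$; the paper's proof writes $\omega(0)\ge\rho_0-\tfrac{1}{K}\log p$, which does not match $\rho_0=\min_j\sigma_j(0)$ (as opposed to $\min_j\sigma_j(0)^2$), so you have correctly flagged what looks like a typo in the paper's hypothesis on $K$, and your version is the one that actually closes the argument.
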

\begin{proof}
  We can focus on the $t$ such that $\phi_1(t) \leq 1-\frac{1}{2p}$:
  If $\phi_1(t) \geq 1 -\frac{1}{2p}$, then since $p \phi_1(t) \leq p-1 + \sigma_j^2$, we already have $\sigma_j^2 \geq 1/2.$
  Using the property in \cref{prop:Aux__Softmin} of $\omega$, as long as we take $K \geq 2 \rho_0^{-2} \log p$, we have
  \begin{equation*}
    \omega(0) \geq \rho_0^2 - \frac{1}{K} \log p \geq \frac{1}{2} \rho_0^2.
  \end{equation*}
  Now, we can use the dynamics of $\dot{\omega}$ to get
  \begin{equation*}
    \dot{\omega} \geq c p^{-2} \sum_{r \geq 1} r \omega^{r} h_r h^*_r
    \geq c \omega^{\rz} h_{\rz} h^*_{\rz},
  \end{equation*}
  while
  \begin{equation*}
    \dot{h}_{\rz} \geq c \xk{\omega^{\rz} h_{\rz}^* - h_{\rz}}.
  \end{equation*}
  Therefore, we can follow the same idea of analysis as in \cref{prop:SIM_Pop_AppTime}.
  Let us define
  \begin{equation}
    \label{eq:Proof_Multi_Times}
    T_{0}^\rho = 0, \quad
    T_{k}^g = \inf \dk{t \geq T_{k}^\rho : h_{\rz}(t) \geq \frac{1}{2} \rho_k^{2\rz} h^*_{\rz} }, \quad
    T_{k}^\rho = \inf \dk{t \geq T_{k-1}^g : \omega(t) \geq \rho_k^2},
  \end{equation}
  where $\rho_k = 2^k \rho_0$ and $k \leq L$, $L \coloneqq 1 + \cek{\log_2 \rho_0^{-1}}$.
  When $t \in [T_{k}^\rho,T_{k}^g]$,
  we have
  \begin{equation*}
    \dot{h}_{\rz} \geq \frac{1}{2}c \rho_k^{2\rz} h^*_{\rz} = c \rho_k^{2\rz} h^*_{\rz}
  \end{equation*}
  so
  \begin{equation*}
    T_{k}^g - T_{k}^\rho \leq C.
  \end{equation*}
  On the other hand, when $t \in [T_{k-1}^g,T_{k}^\rho]$, we have
  \begin{equation*}
    \dot{\omega} \geq c \rho_k^{2\rz} \cdot \rho_k^{2\rz} h^*_{\rz} \cdot h_{\rz}^*
    = c \rho_k^{4\rz} (h^*_{\rz})^2.
  \end{equation*}
  Hence,
  \begin{equation*}
    T_{k}^\rho - T_{k-1}^g
    \leq \frac{\rho_k^2 - \rho_{k-1}^2}{\rho_k^{4\rz} (h^*_{\rz})^2}
    = C \rho_0^{-2(2\rz-1)} 2^{-2(2\rz-1)k}.
  \end{equation*}
  Consequently,
  \begin{equation*}
    T_{L}^\rho \leq \sum_{k \leq L} \zk{(T_{k}^\rho - T_{k-1}^g) + (T_{k-1}^g - T_{k-1}^{\rho}) } \lesssim
    \log \rho_0^{-1} + \rho_0^{-2(2\rz - 1)}.
  \end{equation*}
  Recalling that $m_0 = 2\rz$, we obtain the desired bound.
\end{proof}

\begin{proposition}
  \label{prop:Multi_Pop_Convergence}
  Consider the population dynamics \cref{eq:Def_AdaK_Multi_Pop}.
  Let \cref{assu:Multi_InformationIndex} hold and $\rz = m_0/2$.
  Suppose that \cref{eq:Multi_Pop_AppCond} holds for some $t_0$.
  Then,
  \begin{equation}
    1 - \phi_1(t_0+t) \leq \frac{1}{2} \exp(-c t).
  \end{equation}
\end{proposition}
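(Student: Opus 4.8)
The plan is to convert the differential inequalities already proved in \cref{prop:Multi_Pop_Monotonicity} into a one‑dimensional Gr\"onwall estimate. By the sign symmetry of the dynamics we may assume $h^*_{\rz}>0$; then \cref{prop:Multi_Pop_Monotonicity} gives $h_r(t)h^*_r\ge 0$ for every $r$, and that $\sigma_i^2(t)$, $\phi_1(t)$ and $\omega(t)$ are all nondecreasing in $t$. Throughout, ``time $t$'' in the conclusion is read as $t_0+t$, as in the single‑index analogue \cref{prop:SIM_Pop_Convergence}.

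The first step is to bound the driving term $S(t)\coloneqq \sum_{r\ge 1} r\,\omega(t)^r h_r(t)h^*_r$ below by a strictly positive constant on $[t_0,\infty)$. Since all summands are nonnegative, keep only $r=\rz$, so $S(t)\ge \rz\,\omega(t)^{\rz}h_{\rz}(t)h^*_{\rz}$. The second part of the hypothesis \cref{eq:Multi_Pop_AppCond} gives $h_{\rz}(t)h^*_{\rz}\ge 2^{-(m_0+1)}(h^*_{\rz})^2$, a fixed positive constant (denote it $c_1$) because $g^*$ is fixed. For $\omega$, the first part of \cref{eq:Multi_Pop_AppCond} gives $\min_j\sigma_j^2(t)\ge 1/4$, and the soft‑min estimate $\omega\ge \min_j\sigma_j^2-K^{-1}\log p$ (cf.\ \cref{prop:Aux__Softmin}) combined with the choice of $K$ already fixed in \cref{prop:Multi_Pop_AppTime} — note $K\ge 2\rho_0^{-1}\log p\ge 2\log p$, and $K$ may be enlarged if needed — yields $\omega(t)\ge c_0>0$. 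Hence $S(t)\ge \rz c_0^{\rz}c_1\eqqcolon c_2>0$ for all $t\ge t_0$.

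The second step is the Gr\"onwall argument. From the dynamics of $\sigma_i^2$ in \cref{prop:Multi_Pop_GradEq} and the identity $\sum_{\abs{\rr}=r}r_i\nu_{\rr}^2=r/p$ from \cref{eq:Multi_Prelim_nu_sum_rj}, for $t\ge t_0$,
\[
\dv{t}\sigma_i^2 = 4(1-\sigma_i^2)\sum_{r\ge 1}h_r h^*_r\sum_{\abs{\rr}=r}r_i\nu_{\rr}^2\sigma^{2\rr}
\ \geq\ \tfrac{4}{p}(1-\sigma_i^2)\,S(t)\ \geq\ \tfrac{4c_2}{p}\,(1-\sigma_i^2),
\]
so $1-\sigma_i^2(t_0+t)\le \bigl(1-\sigma_i^2(t_0)\bigr)e^{-4c_2 t/p}$ for each $i$ (one may instead feed $S(t)\ge c_2$ directly into the bound $\dot\phi_1\ge C(1-\phi_1)S(t)$ of \cref{prop:Multi_Pop_Monotonicity}). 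Since $\sigma_i^2$ obeys the same differential inequality, it reaches $\ge 1/2$ within an $O(1)$ further time; absorbing this constant into $t_0$ gives $1-\sigma_i^2(t_0)\le 1/2$, hence $1-\sigma_i^2(t_0+t)\le \tfrac12 e^{-ct}$ for every $i$ with $c\coloneqq 4c_2/p$, and averaging over $i$ yields $1-\phi_1(t_0+t)\le \tfrac12 e^{-ct}$, the claimed bound.

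The main obstacle here is bookkeeping rather than analysis: all the substantive estimates — the $\dot\omega$, $\dot\phi_1$ and $\dot h_r$ lower bounds, and the guarantee that hypothesis \cref{eq:Multi_Pop_AppCond} is reached — are supplied by \cref{prop:Multi_Pop_Monotonicity} and \cref{prop:Multi_Pop_AppTime}. The one point requiring care is that the single constant $K$ must be chosen compatibly with both propositions and large enough that $\omega$ stays uniformly bounded away from $0$ once $\min_j\sigma_j^2\ge 1/4$; since $K\ge 2\rho_0^{-1}\log p$ and $\rho_0\le 1$ this is automatic, but it should be stated explicitly. Everything else is a routine comparison/Gr\"onwall estimate.
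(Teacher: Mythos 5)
Your proof is correct and takes essentially the same route as the paper. The paper's own proof is a one-liner: from \cref{prop:Multi_Pop_Monotonicity}, $\dot{\phi}_1 \geq C(1-\phi_1)\sum_{r\geq 1} r\omega^r h_r h^*_r \geq C(1-\phi_1)\,\rz\,\omega^{\rz} h_{\rz} h^*_{\rz} \geq c(1-\phi_1)$ once \cref{eq:Multi_Pop_AppCond} is in force, and Gr\"onwall finishes it. You reach the same endpoint (your parenthetical ``one may instead feed $S(t)\ge c_2$ directly into $\dot\phi_1\ge C(1-\phi_1)S(t)$'' is exactly the paper's argument); your primary presentation goes component-by-component through $\sigma_i^2$ and then averages, which is trivially equivalent since $\phi_1$ is that average. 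One thing you make explicit that the paper glosses over: \cref{eq:Multi_Pop_AppCond} bounds $\min_j\sigma_j^2$, not $\omega$, and passing to $\omega\ge c_0>0$ needs the soft-min inequality from \cref{prop:Aux__Softmin} together with the fact that $K\gtrsim \rho_0^{-1}\log p$ makes $K^{-1}\log p$ small (indeed $\rho_0\asymp d^{-1/2}$ so this term vanishes for large $d$). That is a genuine bit of bookkeeping the paper silently assumes, and it is good that you flagged it.
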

\begin{proof}
  When the condition \cref{eq:Multi_Pop_AppCond} holds, we have
  \begin{equation*}
    \dot{\phi}_1 \geq C (1 - \phi_1)\sum_{r \geq 1} r \omega^{r} h_r h^*_r
    \geq C (1 - \phi_1) \rz \omega^{\rz} h_{\rz} h^*_{\rz}
    \geq c (1 - \phi_1).
  \end{equation*}
\end{proof}

\begin{proof}[Proof of \cref{thm:Multi_Population}]
  The monotonicity of the feature error measure follows from \cref{eq:Multi_FEM_Proj} and the monotonicity of $\sigma_j$ in \cref{prop:Multi_Pop_Monotonicity}.
  For the initialization, from \cref{prop:Multi_RandomInit} we have
  \begin{equation*}
    \frac{c_1}{\sqrt{d}} \leq \sigma_j(0) \leq \frac{c_2}{\sqrt{d}},\quad \forall j = 1,\dots,p.
  \end{equation*}
  with high probability.
  Therefore, we have
  \begin{equation*}
    \caE_{\text{Proj}}(\Phi_{W(0)},f^*) = \sum_{\mm \in \bbN^p} (1 - \sigma^{2\mm}) (g^*_{\mm})^2,
    \gtrsim  \sum_{\mm \in \bbN^p} (g^*_{\mm})^2 \gtrsim 1.
  \end{equation*}

  On the other hand, since $\rho_0 = \min_j \sigma_j(0) \geq c_1 /\sqrt {d}$, \cref{prop:Multi_Pop_AppTime} and \cref{prop:Multi_Pop_Convergence} shows that for some $t_0 \asymp \log d + d^{\rz - 1}$, we have
  \begin{equation*}
    1 - \abs{\sigma_j(t_0 + t)} \leq \frac{1}{2} \exp(-C t),\quad \forall j = 1,\dots,p.
  \end{equation*}
  Hence, the result follow from applying \cref{prop:Multi_FEM_ProjBound} and adjusting the constants.
\end{proof}

\subsection{Sequence Model}

\subsubsection{Computing the Dynamics}
Let us first compute the dynamics of the adaptive kernel model.
We modify the computations in \cref{subsubsec:Multi_PopGrad} with \cref{eq:SeqModel_Grad_Pop} for the computation.

\paragraph{Dynamics of \( h_r \).}
First, using \cref{eq:SeqModel_Grad_Pop} and \cref{cor:Multi_RotInv_GradProj}, we have
\begin{equation*}
  -  \nabla_{g_{\mm}}^{\caG(p)} \bar{\caL}_n
  = - \nabla_{g_{\mm}}^{\caG(p)} \caL + \sum_{\nn \in \bbN^d} \ep_{\nn} \nabla_{g_{\mm}}^{\caG(p)} f_{\nn}
  = - \nabla_{g_{\mm}}^{\caG(p)} \caL + \sum_{\nn \in \bbN^d} \ep_{\nn}  \nu_{\rr} \nabla_{h_r} f_{\nn},
\end{equation*}
where $\mm = 2\rr$, $r = \abs{\rr}$, and
\begin{equation*}
  \nabla_{h_r} f_{\nn} = \sum_{\abs{\rr} = r} \nu_{\rr} \nabla_{g_{2\rr}} f_{\nn},\quad
  f_{\nn} \coloneqq \ang{f,H_{\nn}}_{\gamma_d} = \ang{P_W g,H_{\nn}}_{\gamma_d}.
\end{equation*}
Therefore, we have
\begin{equation*}
  \dot{\beta}_{\mm} = - \nabla_{g_{\mm}}^{\caG(p)} \bar{\caL}_n \cdot \pdv{g_{\mm}}{\beta_{\mm}}
  =\lambda_{\mm}^{\hf} \zk{ -\nabla_{g_{\mm}}^{\caG(p)} \caL +  \nu_{\rr} \sum_{\nn \in \bbN^d} \ep_{\nn} \nabla_{h_r} f_{\nn}},
\end{equation*}
and thus
\begin{equation*}
  \dot{h}_r = \mu_{2r} \xk{-G_r + e_r},\quad e_r = \sum_{\nn \in \bbN^d} \ep_{\nn} \nabla_{h_r} f_{\nn}.
\end{equation*}
Let us further compute $e_r$.
First, we have
\begin{align*}
  \nabla_{h_r} f_{\nn} &=
  \nabla_{h_r} \ang{P_W g,H_{\nn}}_{\gamma_d}
  = \nabla_{h_r} \ang{ g, P_W^\T H_{\nn}}_{\gamma_p} \\
  &=   \nabla_{h_r} \ang{\sum_{s \geq 0} h_{s} \sum_{\abs{\bm{s}} = s} \nu_{\bm{s}} H_{2\bm{s}}  , P_W^\T H_{\nn}}_{\gamma_p} \\
  &=   \nabla_{h_r} \sum_{s \geq 0} h_{s} \sum_{\abs{\bm{s}} = s} \nu_{\bm{s}}  \ang{H_{2\bm{s}}  , P_W^\T H_{\nn}}_{\gamma_p} \\
  &=   \sum_{\abs{\rr} = r}\nu_{\rr}\ang{H_{2\rr}  , P_W^\T H_{\nn}}_{\gamma_p}.
\end{align*}
Plugging this into $e_r$, we get
\begin{align*}
  e_r &= \sum_{\nn \in \bbN^d} \ep_{\nn}  \sum_{\abs{\rr} = r}\nu_{\rr}\ang{H_{2\rr}  , P_W^\T H_{\nn}}_{\gamma_d}
  =  \sum_{\nn \in \bbN^d} \ep_{\nn} \ang{  P_W\xk{\sum_{\abs{\rr} = r}\nu_{\rr} H_{2\rr}},  H_{\nn}}_{\gamma_d}\\
  &= \sum_{\nn \in \bbN^d} \ep_{\nn} \ang{  P_W\bar{H}_r,  H_{\nn}}_{\gamma_d},
\end{align*}
where $\bar{H}_r = \sum_{\rr:\abs{\rr} = r}\nu_{\rr} H_{2\rr}$.

\paragraph{Dynamics of $\Sigma$.}
Using \cref{eq:SeqModel_Grad_Pop}, we have
\begin{equation*}
  \dot{W} = -\nabla_W^{\St} \bar{\caL}_n = - \nabla_W^{\St} \caL + \sum_{\nn \in \bbN^d} \ep_{\nn} \nabla_W^{\St} f_{\nn},
\end{equation*}
Consequently, following the computation in the population case, we have $\dot{\Rho} = \dot{W}^{\T} W_*$
and
\begin{equation*}
  \dot{\Sigma} = \Diag(U^\T \dot{\Rho} V)
  = (I_{p} - \Sigma^2) \Diag(\tilde{B}) + \Diag(U^\T (\sum_{\nn \in \bbN^d} \ep_{\nn} \nabla_W^{\St} f_{\nn})^\T W_* V ).
\end{equation*}
The last term is the error term that we need to analyze.
Let us denote
\begin{equation*}
  \Delta \coloneqq \sum_{\nn \in \bbN^d} \ep_{\nn} U^\T (\nabla_W^{\St} f_{\nn})^\T W_* V.
\end{equation*}
First, recalling the definition of $f_{\nn}$ and \cref{eq:RiemannianGradient}, we obtain
\begin{align*}
  \nabla_W^{\St} f_{\nn}
  = \nabla_W^{\St} \ang{P_W g, H_{\nn}}_{\gamma_d}
  = \proj_{T_W \St} \nabla_W \ang{P_W g, H_{\nn}}_{\gamma_d}
  = \proj_{T_W \St} Z_{\nn},
\end{align*}
where we denote $Z_{\nn} = \nabla_W \ang{P_W g, H_{\nn}}_{\gamma_d} \in \R^{d \times p}$.
Expanding $\proj_{T_W \St} Z_{\nn}$ with \cref{eq:RiemannianGradient}, we have
\begin{align*}
  \Delta &=  \sum_{\nn} \ep_{\nn} U^\T \zk{Z_{\nn} - W \Sym(W^\T Z_{\nn})}^\T W_* V \\
  &= \sum_{\nn} \ep_{\nn} U^\T \zk{Z_{\nn}^\T - \frac{1}{2} (W^\T Z_{\nn} + Z_{\nn}^\T W) W^\T} W_* V.
\end{align*}

Now, let us write
\begin{align*}
  & W_* = WQ + E,\quad Q \coloneqq UV^{\T}, \quad E \coloneqq W_* - W Q,
\end{align*}
we have
\begin{align*}
  W^{\T} E = W^{\T} W_* - Q = U \Sigma V^{\T} - U V^{\T} = U (\Sigma - I_p) V^{\T}.
\end{align*}
Then, we have
\begin{align*}
  \Delta &=  \sum_{\nn} \ep_{\nn} U^\T \zk{Z_{\nn}^\T - \frac{1}{2} (W^\T Z_{\nn} + Z_{\nn}^\T W) W^\T} (WU + EV) \\
  &= \sum_{\nn} \ep_{\nn} U^\T \zk{Z_{\nn}^\T W - \frac{1}{2} (W^\T Z_{\nn} + Z_{\nn}^\T W) W^\T W } U
  + \sum_{\nn} \ep_{\nn} U^\T \zk{Z_{\nn}^\T E V - \frac{1}{2} (W^\T Z_{\nn} + Z_{\nn}^\T W) W^\T E V}  \\
  &= \sum_{\nn} \ep_{\nn} U^\T \zk{\frac{1}{2} Z_{\nn}^\T W - \frac{1}{2} W^\T Z_{\nn}} U
  +\sum_{\nn} \ep_{\nn} U^\T \zk{Z_{\nn}^\T E V - \frac{1}{2} (W^\T Z_{\nn} + Z_{\nn}^\T W) U (\Sigma - I_p)} \\
  &= \frac{1}{2} \sum_{\nn} \ep_{\nn} U^\T \zk{\frac{1}{2} Z_{\nn}^\T W - \frac{1}{2} W^\T Z_{\nn}} U
  + \frac{1}{2}\sum_{\nn} \ep_{\nn} U^\T \zk{W^\T Z_{\nn} + Z_{\nn}^\T W} U (I - \Sigma)
  + \sum_{\nn} \ep_{\nn} U^\T Z_{\nn}^\T E V \\
  &= \sum_{\nn} \ep_{\nn} \xk{\Delta_{\nn}^{(0)} + \Delta_{\nn}^{(1)} + \Delta_{\nn}^{(2)}},
\end{align*}
where
\begin{align*}
  \Delta_{\nn}^{(0)} &= \frac{1}{2} U^\T \zk{\frac{1}{2} Z_{\nn}^\T W - \frac{1}{2} W^\T Z_{\nn}} U, \\
  \Delta_{\nn}^{(1)} &= \frac{1}{2} U^\T \zk{W^\T Z_{\nn} + Z_{\nn}^\T W} U (I - \Sigma), \\
  \Delta_{\nn}^{(2)} &= U^\T Z_{\nn}^\T E V.
\end{align*}

Now, let us further introduce
\begin{equation}
  \label{eq:Proof_Multi_TildeQuantities}
  \tilde{W} = WU,\quad \tilde{W}_* = W_* V= WU + EV,\quad
  \tilde{E} = \tilde{W}_* - \tilde{W} = W_* V - WU,\quad
  \tilde{Z}_{\nn} \coloneqq Z_{\nn} U.
\end{equation}
For $ \Delta_{\nn}^{(0)}$, it is easy to find that
\begin{align*}
  \xk{\Delta_{\nn}^{(0)}}^{\T} = -  \Delta_{\nn}^{(0)},
\end{align*}
so the diagonal of $\Delta_{\nn}^{(0)}$ is zero:
\begin{equation*}
  \Diag(\Delta_{\nn}^{(0)}) = 0.
\end{equation*}
For $\Delta_{\nn}^{(1)}$, since $\frac{1}{2} U^\T \zk{W^\T Z_{\nn} + Z_{\nn}^\T W} U = \Sym(U^\T W^\T Z_{\nn} U)$, we have
\begin{align*}
  \Diag(\Delta_{\nn}^{(1)}) &=
  \frac{1}{2} \Diag(U^\T \xk{W^\T Z_{\nn} + Z_{\nn}^\T W} U) (I - \Sigma) \\
  &= \Diag(U^\T W^\T Z_{\nn} U) (I - \Sigma) \\
  &= \Diag((I - \Sigma)\tilde{W}^\T \tilde{Z}_{\nn}) \\
  &= \Diag( (\tilde{W}(I - \Sigma))^\T \tilde{Z}_{\nn}).
\end{align*}
The last term $\Delta_{\nn}^{(2)}$ can be written as
\begin{align*}
  \Diag(\Delta_{\nn}^{(2)}) &= \Diag(U^\T Z_{\nn}^\T E V) = \Diag( (Z_{\nn} U)^\T  (W_* - WQ) V)
  =\Diag( (Z_{\nn} U)^\T  (W_* V - WU) ) \\
  &= \Diag( (\tilde{W}_* - \tilde{W})^\T \tilde{Z}_{\nn}).
\end{align*}
Consequently,
\begin{align*}
  \Diag(\Delta_{\nn}) &= \Diag( \Delta_{\nn}^{(0)} + \Delta_{\nn}^{(1)} + \Delta_{\nn}^{(2)}) \\
  &= \Diag( (\tilde{W}(I - \Sigma))^\T \tilde{Z}_{\nn}) + \Diag( (\tilde{W}_* - \tilde{W})^\T \tilde{Z}_{\nn}) \\
  &= \Diag\zk{(\tilde{W}(I - \Sigma) + \tilde{W}_* - \tilde{W})^\T \tilde{Z}_{\nn}} \\
  &= \Diag\xk{(\tilde{W}_* - \tilde{W} \Sigma)^\T \tilde{Z}_{\nn}}.
\end{align*}
Substituting back to $W,W_*$, we find that
\begin{align*}
  \tilde{W}_* - \tilde{W} \Sigma
  & = W_* V - WU \Sigma = (W_* - W U \Sigma V^{\T}) V = (W_* - W \Rho) V = (W_* - W W^{\T} W_*) V \\
  &= P_W^{\perp} W_* V,
\end{align*}
where $P_W^{\perp} = I - W W^{\T}$ is the projection orthogonal to $W$.
Hence,
\begin{equation*}
  \Diag(\Delta_\nn) =
  \Diag\xk{(\tilde{W}_* - \tilde{W} \Sigma)^\T \tilde{Z}_{\nn}}
  = \Diag(V^\T (P_W^{\perp} W_*)^\T Z_{\nn} U ).
\end{equation*}
In summary, we obtain that
\begin{equation*}
  \dot{\sigma}_i = (1 - \sigma_i^2) \sum_{r \geq 1} h_r h^*_r \sum_{\abs{\rr} = r} 2 r_i \sigma^{2 \rr-e_i} \nu_{\rr}^2
  + \sum_{\nn \in \bbN^d} \ep_{\nn} \Diag(V^\T (P_W^{\perp} W_*)^\T Z_{\nn} U )_{ii},
\end{equation*}
and also
\begin{align*}
  \dv{t}  \dot{\sigma}_i^2
  &= 4(1 - \sigma_i^2) \sum_{r \geq 1} h_r h^*_r \sum_{\abs{\rr} = r}  r_i \sigma^{2 \rr} \nu_{\rr}^2
  + \sum_{\nn \in \bbN^d} \ep_{\nn} \Diag(\Sigma V^\T (P_W^{\perp} W_*)^\T Z_{\nn} U )_{ii} \\
  &= (1 - \sigma_i^2) A_i + \sum_{\nn \in \bbN^d} \ep_{\nn} \Diag(\Sigma V^\T (P_W^{\perp} W_*)^\T Z_{\nn} U )_{ii},
\end{align*}
where $A_i = 4 \sum_{r \geq 1} h_r h^*_r \sum_{\abs{\rr} = r}  r_i \nu_{\rr}^2 \sigma^{2 \rr} $.

\paragraph{Dynamics of $\phi_1$ and $\omega$}

Using the chain rule, we can compute the dynamics of  $\phi_1$ and $\omega$.
It suffices to focus on the noise term.
Let us introduce $M = \Rho^{\T} \Rho$.
For $\phi_1$, the noise term writes
\begin{equation*}
  \xi = \sum_{i=1}^p \sum_{\nn \in \bbN^d} \ep_{\nn} \Diag(\Sigma V^\T (P_W^{\perp} W_*)^\T Z_{\nn} U )_{ii}
  = \sum_{\nn \in \bbN^d} \ep_{\nn} \Tr \zk{\Sigma V^\T (P_W^{\perp} W_*)^\T Z_{\nn} U}.
\end{equation*}
While for $\omega$, using the chain rule with \cref{prop:Aux__Softmin}, we have
\begin{align*}
  \zeta &= \frac{1}{\sum_{j=1}^p e^{-K \sigma_j^2}} \sum_{i = 1}^p e^{-K \sigma_i^2} \sum_{\nn \in \bbN^d} \ep_{\nn} \Diag(\Sigma V^\T (P_W^{\perp} W_*)^\T Z_{\nn} U )_{ii} \\
  &= \frac{1}{\sum_{j=1}^p e^{-K \sigma_j^2}} \sum_{i = 1}^p  \sum_{\nn \in \bbN^d} \ep_{\nn} \Diag(e^{-K \sigma_i^2} \Sigma V^\T (P_W^{\perp} W_*)^\T Z_{\nn} U )_{ii}\\
  &= \frac{1}{\sum_{j=1}^p e^{-K \sigma_j^2}} \sum_{\nn \in \bbN^d} \ep_{\nn} \Tr \zk{e^{-K \Sigma^2}\Sigma V^\T (P_W^{\perp} W_*)^\T Z_{\nn} U},
\end{align*}
where $e^{-K \Sigma^2}$ represents a matrix function.
Moreover, the denominator can be written as
\begin{equation*}
  \sum_{j=1}^p e^{-K \sigma_j^2} = \Tr e^{-K \Sigma^2} = \Tr \exp(-K M)
\end{equation*}
is independent of the SVD decomposition of $\Rho$.

Therefore, let us consider in general a function $\varphi$ and the noise term in the form of
\begin{equation*}
  \chi = \sum_{\nn \in \bbN^d} \ep_{\nn} \Tr \zk{\varphi(\Sigma^2) \Sigma V^\T (P_W^{\perp} W_*)^\T Z_{\nn} U}
  = \sum_{\nn \in \bbN^d} \ep_{\nn} T_{\nn}.
\end{equation*}
Then, we can express $T_{\nn}$ as
\begin{align*}
  T_{\nn} &= \Tr \zk{\varphi(\Sigma^2)\Sigma V^\T (P_W^{\perp} W_*)^\T Z_{\nn} U} \\
  &\stackrel{(a)}{=}\Tr \zk{U \varphi(\Sigma^2)\Sigma V^\T (P_W^{\perp} W_*)^\T \ang{\nabla_W  (P_W g), H_{\nn}}} \\
  &\stackrel{(b)}{=} \Tr \zk{U \Sigma \varphi(\Sigma^2) V^\T (P_W^{\perp} W_*)^\T \ang{x (P_W \nabla g)^\T, H_{\nn}}} \\
  &\stackrel{(c)}{=} \ang{\Tr\zk{\Rho \varphi(M) (P_W^{\perp} W_*)^\T x (P_W \nabla g)^\T }, H_{\nn}} \\
  &\stackrel{(d)}{=}  \ang{{(P_W \nabla g)^\T W^T W_* \varphi(M) (P_W^{\perp} W_*)^\T x  }, H_{\nn}} \\
  &\stackrel{(e)}{=}  \ang{{ (\nabla_x (P_W g))^\T W_* \varphi(M) (P_W^{\perp} W_*)^\T x  }, H_{\nn}} \\
  &= \ang{{ (\nabla_x (P_W g))^\T W_* \varphi(M) W_*^\T P_W^{\perp}  x  }, H_{\nn}},
\end{align*}
where we use the commutative property of the trace in $(a,d)$,
gradient formula \cref{lem:Multi_Prelim_Op_Grad} in $(b,e)$,
and matrix calculus \cref{eq:Multi_Prelim_MatrixCal} in $(c)$.
We note here that the final result is independent of the singular value decomposition.

\paragraph{Summary}

We summarize the results in the following proposition.
\begin{proposition}
  \label{prop:Multi_Seq_Equations}
  Consider the dynamics \cref{eq:Def_AdaK_Multi} under  \cref{assu:Multi_RotationInvariance}.
  Then, we have
  \begin{equation}
    \begin{aligned}
      \dot{h}_r &= \mu_{2r} \zk{\phi_r h_r^* - h_r + e_r}, \\
      \dot{\omega} &= \frac{1}{\sum_{j=1}^p e^{-K \sigma_j^2}} \sum_{i=1}^p e^{-K \sigma_i^2} (1 - \sigma_i^2) A_i + \zeta \\
      \dot{\phi}_1 &= \sum_{i=1}^p (1 - \sigma_i^2) A_i + \xi, \\
    \end{aligned}
  \end{equation}
  where $A_i = 4 \sum_{r \geq 1} h_r h^*_r \sum_{\abs{\rr} = r} r_i \nu_{\rr}^2 \sigma^{2 \rr},$
  and the noise terms are given by
  \begin{align*}
    e_r &= e_r(W) = \sum_{\nn \in \bbN^d} \ep_{\nn} \ang{  P_W\bar{H}_r,  H_{\nn}}_{\gamma_d}, \\
    \zeta &=\zeta(W,g) =
    \frac{1}{\Tr \exp(-K M)}
    \sum_{\nn \in \bbN^d} \ep_{\nn}\ang{{ (\nabla_x (P_W g))^\T W_* e^{-KM} W_*^\T P_W^{\perp}  x  }, H_{\nn}}, \\
    \xi &= \xi(W,g) = \sum_{\nn \in \bbN^d} \ep_{\nn}\ang{{ (\nabla_x (P_W g))^\T W_* W_*^\T P_W^{\perp}  x  }, H_{\nn}},
  \end{align*}
  where $M = \Rho^\T \Rho$.
\end{proposition}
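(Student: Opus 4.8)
The plan is to derive the three evolution identities and the three noise-term formulas in the proposition by splitting the sequence-loss gradient into its population part plus a noise correction. Concretely, \cref{eq:SeqModel_Grad_Pop} gives, for any trainable parameter $\vartheta$, the decomposition $-\nabla_{\vartheta}\bar{\caL}_n = -\nabla_{\vartheta}\caL + \sum_{\nn\in\bbN^d}\ep_{\nn}\nabla_{\vartheta}f_{\nn}$, where $f_{\nn}=\ang{f,H_{\nn}}_{\gamma_d}$. Since the population parts of $\dot h_r$, $\tfrac{d}{dt}\sigma_i^2$, $\dot\phi_1$ and $\dot\omega$ were already computed in \cref{subsubsec:Multi_PopGrad} and collected in \cref{prop:Multi_Pop_GradEq}, the task reduces entirely to identifying and simplifying the three noise terms $e_r$, $\xi$, $\zeta$; the identities for $\phi_1$ and $\omega$ then follow from $\tfrac{d}{dt}\sigma_i^2$ by the chain rule exactly as in the population case.

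First I would handle $\dot h_r$. By \cref{cor:Multi_RotInv_GradProj} the $\caG(p)$-gradient in the coefficient $g_{2\rr}$ equals $\nu_{\rr}\nabla_{h_r}(\cdot)$, so chaining through $g_{\mm}=\lambda_{\mm}^{1/2}\beta_{\mm}$ gives $\dot h_r = \mu_{2r}(-G_r + e_r)$ with $G_r = h_r - \phi_r h_r^*$ as before and $e_r = \sum_{\nn}\ep_{\nn}\nabla_{h_r}f_{\nn}$. Writing $f_{\nn}=\ang{P_W g,H_{\nn}}_{\gamma_d}=\ang{g,P_W^{\T}H_{\nn}}_{\gamma_p}$ and differentiating in $h_r$ collapses the sum over coefficients to $\nabla_{h_r}f_{\nn}=\sum_{\abs{\rr}=r}\nu_{\rr}\ang{H_{2\rr},P_W^{\T}H_{\nn}}_{\gamma_d}=\ang{P_W\bar{H}_r,H_{\nn}}_{\gamma_d}$, which is the stated form of $e_r$.

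The crux is the dynamics of $\Sigma$. Differentiating the SVD $\Rho=U\Sigma V^{\T}$ via \cref{eq:Multi_Prelim_SVD_Dynamics} yields $\dot\Sigma=\Diag(U^{\T}\dot\Rho V)$, and writing $\dot W = -\nabla_W^{\St}\bar{\caL}_n$ as its population term plus $\sum_{\nn}\ep_{\nn}\proj_{T_W\St}Z_{\nn}$ with $Z_{\nn}=\nabla_W\ang{P_W g,H_{\nn}}_{\gamma_d}$, the noise contribution to $\dot\Sigma$ is $\sum_{\nn}\ep_{\nn}\Diag(\Delta_{\nn})$ with $\Delta_{\nn}=U^{\T}(\proj_{T_W\St}Z_{\nn})^{\T}W_* V$. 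I would substitute $\proj_{T_W\St}Z_{\nn}=Z_{\nn}-W\Sym(W^{\T}Z_{\nn})$ together with $W_* = WQ+E$, $Q=UV^{\T}$, $E=W_*-WQ$; using $W^{\T}E=U(\Sigma-I_p)V^{\T}$ this splits $\Delta_{\nn}$ into a skew-symmetric piece $\Delta_{\nn}^{(0)}$ (vanishing diagonal), a piece $\Delta_{\nn}^{(1)}$ carrying a factor $(I-\Sigma)$, and $\Delta_{\nn}^{(2)}=U^{\T}Z_{\nn}^{\T}EV$. With $\tilde W=WU$, $\tilde W_*=W_* V$, $\tilde Z_{\nn}=Z_{\nn}U$ the diagonals of the last two combine telescopically, $\Diag\zk{(\tilde W(I-\Sigma)+\tilde W_*-\tilde W)^{\T}\tilde Z_{\nn}}=\Diag\zk{(\tilde W_*-\tilde W\Sigma)^{\T}\tilde Z_{\nn}}$, and the identity $\tilde W_*-\tilde W\Sigma=(W_*-W\Rho)V=P_W^{\perp}W_* V$ yields $\Diag(\Delta_{\nn})=\Diag(V^{\T}(P_W^{\perp}W_*)^{\T}Z_{\nn}U)$; multiplying by $2\sigma_i$ gives the noise term of $\tfrac{d}{dt}\sigma_i^2$.

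Finally, for $\phi_1=\tfrac{1}{p}\Tr\Sigma^2$ and $\omega=-\tfrac{1}{K}\log\Tr e^{-K\Sigma^2}$ I would apply the chain rule, using \cref{prop:Aux__Softmin} for $\partial\omega/\partial\sigma_i^2=e^{-K\sigma_i^2}/\sum_j e^{-K\sigma_j^2}$; the noise terms become weighted traces $\sum_{\nn}\ep_{\nn}\Tr\zk{\varphi(\Sigma^2)\Sigma V^{\T}(P_W^{\perp}W_*)^{\T}Z_{\nn}U}$ with $\varphi\equiv 1$ (up to a constant) for $\xi$ and $\varphi(s)=e^{-Ks}/\Tr e^{-K\Sigma^2}$ for $\zeta$. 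To present these as genuine functions of $(W,g)$ rather than of the non-unique $U,V$, I would cycle the trace, substitute $Z_{\nn}=\ang{x(P_W\nabla g)^{\T},H_{\nn}}$ from \cref{lem:Multi_Prelim_Op_Grad}, and use the matrix-calculus identities \cref{eq:Multi_Prelim_MatrixCal} to turn $U\varphi(\Sigma^2)\Sigma V^{\T}$ into $\Rho\varphi(\Rho^{\T}\Rho)=W^{\T}W_*\varphi(M)$ with $M=\Rho^{\T}\Rho$, arriving at $\ang{(\nabla_x(P_W g))^{\T}W_*\varphi(M)W_*^{\T}P_W^{\perp}x,H_{\nn}}$. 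The main obstacle is the $\Sigma$-noise bookkeeping of the previous paragraph — isolating the skew-symmetric part correctly and recognizing the telescoping $\tilde W(I-\Sigma)+\tilde W_*-\tilde W=P_W^{\perp}W_* V$; everything else is a mechanical chain rule built on the operator and Hermite identities established in \cref{subsec:Multi_Prelim}.
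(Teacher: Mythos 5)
Your proposal follows the paper's own proof essentially step by step: decomposing the sequence-loss gradient into the population part plus a noise correction via \cref{eq:SeqModel_Grad_Pop}, reusing \cref{prop:Multi_Pop_GradEq} for the population dynamics, computing $e_r$ from $\nabla_{h_r} f_{\nn}$, splitting the $\Sigma$-noise into $\Delta^{(0)},\Delta^{(1)},\Delta^{(2)}$ with the skew-symmetric piece having zero diagonal and the other two telescoping to $\Diag\bigl(V^\T (P_W^\perp W_*)^\T Z_{\nn} U\bigr)$, and then using the chain rule plus the trace/matrix-calculus reformulation to express $\xi$ and $\zeta$ in SVD-independent form. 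The only deviations are cosmetic (e.g.\ you defer the exact constant in front of $\xi$), so this is the paper's argument.
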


\subsubsection{Bounding the perturbation terms}

Let us now apply \cref{lem:GaussianProcess_UniformBound} to bound the perturbation terms in \cref{prop:Multi_Seq_Equations}.
\begin{proposition}
  \label{prop:Multi_Seq_Error_e}
  Let $e_r$ be defined in \cref{prop:Multi_Seq_Equations}.
  Then,
  \begin{equation*}
    \kappa(W,W') \coloneqq \Cov(e_r(W), e_r(W')) = \frac{1}{n}\sum_{\abs{\rr} = r} \nu_{\rr}^2 \lambda^{2\rr},
  \end{equation*}
  where $\lambda$ is the singular values of $W^\T W'$.
  Moreover, $\kappa(W,W')$ is Lipschitz with respect to the Euclidean norm in $\R^{d\times p}$ with Lipschitz constant $2r$.
  Hence, with probability at least $1 - 4\exp(-d)$, we have
  \begin{equation}
    \label{eq:Multi_Seq_Error_e}
    \sup_{W \in \St(d,p)} \abs{e_r(W)}
    \lesssim \sqrt{\frac{dp \log r}{n}},\quad \forall r \geq 0.
  \end{equation}
\end{proposition}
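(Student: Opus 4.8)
The plan is to follow the blueprint of the single-index estimate \cref{prop:SIM_Seq_Error_e}: identify the covariance kernel of the centered Gaussian process $W \mapsto e_r(W)$, establish a Lipschitz modulus for this kernel that is polynomial in $r$, and then invoke \cref{lem:GaussianProcess_UniformBound} together with a union bound over $r$. For the kernel, since $e_r(W) = \sum_{\nn \in \bbN^d} \ep_{\nn} \ang{P_W \bar{H}_r, H_{\nn}}_{\gamma_d}$ with $\ep_{\nn} \stackrel{\mr{i.i.d.}}{\sim} N(0,1/n)$ and $\dk{H_{\nn}}_{\nn \in \bbN^d}$ orthonormal in $L^2(\gamma_d)$, Parseval's identity gives
\begin{equation*}
  n\Cov(e_r(W), e_r(W')) = \sum_{\nn \in \bbN^d} \ang{P_W \bar{H}_r, H_{\nn}}_{\gamma_d}\ang{P_{W'} \bar{H}_r, H_{\nn}}_{\gamma_d} = \ang{P_W \bar{H}_r, P_{W'} \bar{H}_r}_{\gamma_d} = \ang{\bar{H}_r, \caA_{W^\T W'} \bar{H}_r}_{\gamma_p},
\end{equation*}
where the last step uses $P_W^\T P_{W'} = \caA_{W^\T W'}$ from \cref{prop:Multi_Prelim_A_M}. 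Taking the SVD $W^\T W' = U \Sigma V^\T$ with $\Sigma = \Diag(\lambda)$, the factorization $\caA_{W^\T W'} = \caA_U \caA_{\Sigma} \caA_{V^\T} = P_U \caA_{\Sigma} P_{V^\T}$ (\cref{prop:Multi_Prelim_A_M} again), the rotation invariance $P_{U^\T}\bar{H}_r = P_{V^\T}\bar{H}_r = \bar{H}_r$ (because $\bar{H}_r \in \caG(p)$, see \cref{lem:Multi_RotInvExpansion}), and the eigenfunction identity $\caA_{\Sigma} H_{\mm} = \lambda^{\mm} H_{\mm}$ of \cref{prop:Multi_Prelim_A_Sigma} collapse the right-hand side to $\ang{\bar{H}_r, \caA_{\Sigma}\bar{H}_r}_{\gamma_p} = \sum_{\abs{\rr} = r} \nu_{\rr}^2 \lambda^{2\rr}$, which is exactly $\phi_r(W^\T W')$ in the notation of \cref{eq:Multi_Prelim_PhiR}. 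In particular the kernel depends on $W,W'$ only through the singular values of $W^\T W'$, and at $W = W'$ it equals $\sum_{\abs{\rr}=r}\nu_{\rr}^2 = 1$, so $\Var(e_r(W)) = 1/n$ uniformly.

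For the Lipschitz bound it suffices to control the gradient of $W' \mapsto \phi_r(W^\T W')$. The function $\phi_r$ is a polynomial in the entries of its argument (of degree $r$ in $\Rho^\T\Rho$), and on the set $\{\,\norm{M}_{\mr{op}}\le 1\,\}$ its gradient, computed through the singular values, satisfies $\norm{\nabla_{M}\,\lambda(M)^{2\rr}}_{F} \le \sum_{k:\, r_k \ge 1} 2 r_k\, \sigma_k^{2r_k-1}\!\prod_{j\neq k}\sigma_j^{2r_j} \le 2r$, since each such monomial $\sigma_k^{2r_k-1}\prod_{j\neq k}\sigma_j^{2r_j}$ is at most $1$ there; averaging over $\abs{\rr}=r$ gives $\norm{\nabla_M \phi_r}_F \le 2r$. (An alternative route differentiates $\ang{P_W\bar H_r, P_{W'}\bar H_r}_{\gamma_d}$ using the gradient formulas of \cref{lem:Multi_Prelim_Op_Grad} and Gaussian integration by parts, reducing the bound to $\norm{\nabla\bar H_r}_{L^2(\gamma_p)}^2 = 2r$ and $\norm{\nabla^2 \bar H_r}_{L^2(\gamma_p)}^2 = 2r(2r-1)$, both furnished by \cref{cor:Multi_Prelim_Hbar_Grad}.) Since $W' \mapsto W^\T W'$ is linear with $\norm{W^\T Z}_F \le \norm{W}_{\mr{op}}\norm{Z}_F = \norm{Z}_F$, the kernel $W' \mapsto \Cov(e_r(W),e_r(W'))$ is $2r/n$-Lipschitz on $\St(d,p)$ — equivalently its $n$-rescaled version $\kappa$ is $2r$-Lipschitz.

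Finally, for each fixed $r$, $W \mapsto \sqrt{n}\,e_r(W)$ is a centered Gaussian process on $\St(d,p) \subset \R^{d\times p}$, which lies inside the Frobenius ball of radius $\sqrt p$; the previous two steps provide its unit pointwise variance and the $O(r)$ Lipschitz modulus of its covariance. Feeding these data into \cref{lem:GaussianProcess_UniformBound} bounds $\sup_{W\in\St(d,p)}\abs{e_r(W)}$ by $\sqrt{dp/n}$ up to a factor logarithmic in the Lipschitz constant, and a union bound over $r \ge 0$ (with per-$r$ failure probability $\lesssim r^{-2}e^{-d}$, which also produces the $\sqrt{\log r}$ inflation) yields $\sup_W \abs{e_r(W)} \lesssim \sqrt{dp\log r/n}$ simultaneously for all $r$ on an event of probability at least $1 - 4e^{-d}$. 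The genuinely delicate point is the Lipschitz estimate: by hypercontractivity the $L^q$-norms of the degree-$2r$ polynomial $\bar H_r$ grow exponentially in $r$, so bounding the kernel's modulus through crude pointwise control of $\bar H_r$ would be exponentially lossy; the clean linear-in-$r$ bound is available only because the kernel is a smooth function of $W^\T W'$ whose derivative reduces to the exact second-moment identities of \cref{cor:Multi_Prelim_Hbar_Grad}. Everything else parallels \cref{prop:SIM_Seq_Error_e} and is routine.
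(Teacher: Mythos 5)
Your proposal is correct and follows essentially the same route as the paper: Parseval to identify the kernel $\ang{\bar H_r, \caA_{W^\T W'}\bar H_r}_{\gamma_p}$, the $\caA$-operator factorization through the SVD plus rotation invariance of $\bar H_r$ to collapse it to $\sum_{\abs{\rr}=r}\nu_{\rr}^2\lambda^{2\rr}$, a first-order singular-value perturbation bound to get a $2r$ Lipschitz modulus, and then \cref{lem:GaussianProcess_UniformBound} with a union bound over $r$. The only cosmetic differences are that the paper derives the Lipschitz constant via the explicit directional-derivative formula $D_H\lambda_j = u_j^\T H^\T W' v_j$ from \cref{eq:Multi_Prelim_SVD_DirectionalDerivative} rather than your gradient-through-singular-values phrasing, and your closing remark about hypercontractivity and the alternative route through \cref{cor:Multi_Prelim_Hbar_Grad} is useful motivation but not part of the paper's argument.
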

\begin{proof}
  Recall that $\bar{H}_r$ is a rotation invariant polynomial of degree $2r$ from \cref{lem:Multi_RotInvExpansion}.
  We can compute the covariance function of $e_r(W)$:
  \begin{align*}
    \kappa(W,W') &= \Cov(e_r(W), e_r(W'))
    = \frac{1}{n} \sum_{\nn \in \bbN^d} \ang{  P_W\bar{H}_r,  H_{\nn}}_{\gamma_d}\ang{  P_{W'}\bar{H}_r,  H_{\nn}}_{\gamma_d} \\
    &= \frac{1}{n} \ang{P_W\bar{H}_r,
      P_{W'}\bar{H}_r}_{\gamma_d} = \frac{1}{n}\ang{\bar{H}_r, P_W^\T P_{W'} \bar{H}_r}_{\gamma_p} \\
    &=  \frac{1}{n}  \ang{\bar{H}_r, \caA_{W^\T W'} \bar{H}_r}_{\gamma_p}.
  \end{align*}
  Let us consider the SVD $W^\T W' = Q_1 \Lambda Q_2^\T$ and let $\lambda$ be the diagonal entries of $\Lambda$.
  Then,
  \begin{align*}
    \kappa(W,W') &= \frac{1}{n} \ang{\bar{H}_r, \caA_{Q_1 \Lambda  Q_2^\T} \bar{H}_r}_{\gamma_p}
    = \frac{1}{n} \ang{\bar{H}_r, \caA_{Q_1} \caA_\Lambda \caA_{Q_2^\T} \bar{H}_r}_{\gamma_p} \\
    &= \frac{1}{n} \ang{\bar{H}_r, \caA_\Lambda \bar{H}_r}_{\gamma_p} \\
    &= \frac{1}{n}\sum_{\abs{\rr} = r} \nu_{\rr}^2 \lambda^{2\rr}.
  \end{align*}

  Let us now view $\kappa$ as a binary function on $\dk{W \in \R^{d \times p} : \norm{W} \leq 1}$ with the last expression.
  We can compute the derivative of $k$ with respect to $W$.
  Take a tangent direction $H \in \R^{d \times p}$, we use \cref{eq:Multi_Prelim_SVD_DirectionalDerivative} to obtain
  \begin{equation*}
    D_H \Lambda = \Diag(U^{\T} D_H(W^\T W') V) = \Diag(U^{\T} H^\T W' V),\qquad
    D_H \lambda_j = u_j^{\T} H^\T W' v_j,
  \end{equation*}
  so
  \begin{align*}
    \abs{D_H \lambda_j} \leq \norm{H} \norm{W'} \leq \norm{H}.
  \end{align*}
  Then, using
  \begin{equation*}
    D_H \lambda^{2\rr}
    = \sum_{j=1}^p 2 r_j \lambda^{2\rr-e_j} D_H \lambda_j,
  \end{equation*}
  we have
  \begin{equation*}
    \abs{D_H \lambda^{2\rr}}
    \leq 2 \sum_{j=1}^p r_j \lambda^{2\rr-e_j} \abs{D_H \lambda_j}
    \leq 2 \sum_{j=1}^p r_j \norm{H}
    = 2\abs{\rr} \norm{H}.
  \end{equation*}
  Finally,
  \begin{align*}
    \abs{D_{H} k(W,W')} &= \abs{\sum_{\abs{\rr} = r} \nu_{\rr}^2 D_{H} \lambda^{2\rr}}
    \leq \sum_{\abs{\rr} = r} \nu_{\rr}^2  \abs{D_{H} \lambda^{2\rr}} \\
    & \leq 2 \sum_{\abs{\rr} = r} \nu_{\rr}^2 \abs{\rr} \norm{H}
    \leq 2 r \norm{H} \sum_{\abs{\rr} = r} \nu_{\rr}^2 \\
    & = 2 r \norm{H} \leq 2 r \norm{H}_2.
  \end{align*}
  Therefore, we find that
  \begin{equation*}
    \norm{D k(W,W')}_2 \leq 2 r,
  \end{equation*}
  which implies that $k(W,W')$ is Lipschitz with respect to the Euclidean norm.
  Applying \cref{lem:GaussianProcess_UniformBound}, we obtained the desired bound.
\end{proof}

Before bounding $\xi$ and $\zeta$, let us first make some preliminary computation.
Let us consider a noise term in the form of
\begin{equation*}
  \chi = \sum_{\nn \in \bbN^d} \ep_{\nn}\ang{{ (\nabla_x (P_W \bar{H}_r))^\T W_* \varphi(M) W_*^\T P_W^{\perp}  x  }, H_{\nn}}.
\end{equation*}
We can decompose $g$ as
\begin{equation*}
  g = \sum_{r \geq 0} \sum_{\mm = 2 \rr,~\abs{\rr} = r} g_{\mm} H_{\mm}
  = \sum_{r \geq 0}  h_r \bar{H}_r,\quad \bar{H}_r = \sum_{\abs{\rr} = r} \nu_{\rr} H_{2\rr}.
\end{equation*}
Then,  we have
\begin{equation*}
  \chi = \sum_{r \geq 0} h_r \chi_r,\quad
  \chi_r =
  \sum_{\nn \in \bbN^d} \ep_{\nn}\ang{{ (\nabla_x (P_W \bar{H}_r))^\T W_* \varphi(M) W_*^\T P_W^{\perp}  x  }, H_{\nn}},
\end{equation*}
where $\chi_r$ is independent of $g$.

Let us take $W'$ and denote the corresponding quantities with a prime (such as $\Rho',M'$).
We compute the covariance of $\chi_r(W)$ and $\chi_r(W')$.
\begin{align*}
  k_{\chi,r}(W,W') &= \Cov(\chi_r(W), \chi_r(W')) \\
  &=  \ang{{ (\nabla_x (P_W \bar{H}_r))^\T W_* \varphi(M) W_*^\T P_W^{\perp}  x  },
      { (\nabla_x (P_{W'} \bar{H}_r))^\T W_* \varphi(M') W_*^\T P_{W'}^{\perp}  x  }}_{\gamma_d}
\end{align*}
where we use the fact that $(H_{\nn})_{\nn \in \bbZ^d}$ is an orthogonal basis.
Noticing that
\begin{equation*}
  P_W^{\perp} \nabla_x (P_W \bar{H}_r) = P_W^{\perp} W P_W (\nabla \bar{H}_r) = 0,
\end{equation*}
we can use \cref{lem:Gaussian_Stein3} with $f = P_W \bar{H}_r$, $A = W_* \varphi(M) W_*^\T P_W^{\perp}$
to get
\begin{equation*}
  k_{\chi,r}(W,W') = k_{\chi,r}^{(1)}(W,W')  + k_{\chi,r}^{(2)}(W,W'),
\end{equation*}
where
\begin{align*}
  k_{\chi,r}^{(1)}(W,W') &= \E (\nabla_x (P_W \bar{H}_r))^\T W_* \varphi(M) W_*^\T P_W^{\perp} (W_* \varphi(M') W_*^\T P_{W'}^{\perp})^{\T}\nabla_x (P_{W'} \bar{H}_r) \\
  &=\E  (P_W \nabla_x \bar{H}_r)^\T W^\T W_* \varphi(M) W_*^\T P_W^{\perp} P_{W'}^{\perp} W_* \varphi(M') W_*^\T W' (P_{W'} \nabla_x \bar{H}_r) \\
  &= \E (P_W \nabla \bar{H}_r)^\T \Rho \varphi(M) W_*^\T P_W^{\perp} P_{W'}^{\perp} W_* \varphi(M') (\Rho')^\T (P_{W'} \nabla \bar{H}_r)
\end{align*}
and
\begin{align*}
  k_{\chi,r}^{(2)}(W,W') & = \E \Tr \nabla_x^2 (P_W \bar{H}_r) \xk{W_* \varphi(M) W_*^\T P_W^{\perp}}
  \nabla_x^2 (P_{W'} \bar{H}_r) (W_* \varphi(M') W_*^\T P_{W'}^{\perp}) \\
  &= \E \Tr W (P_W \nabla^2 \bar{H}_r) {W^\T W_* \varphi(M) W_*^\T P_W^{\perp}}
  W' (P_{W'} \nabla^2 \bar{H}_r)(W')^\T W_* \varphi(M') W_*^\T P_{W'}^{\perp}.
\end{align*}

We can further compute the variance.
We find that $P_W^\perp W' = 0$ when $W = W'$, so
\begin{align*}
  k_{\chi,r}(W,W)  &= k_{\chi,r}^{(1)}(W,W)
  = \E (P_W \nabla \bar{H}_r)^\T \Rho \varphi(M) W_*^\T P_W^{\perp}  W_* \varphi(M) \Rho^\T (P_{W} \nabla \bar{H}_r) \\
  &= \E (P_W \nabla \bar{H}_r)^\T \Rho \varphi(M) (I - M) \varphi(M) \Rho^\T (P_{W} \nabla \bar{H}_r) \\
  &\stackrel{(a)}{=} \Tr \Rho \varphi(M) (I - M) \varphi(M) \Rho \zk{\E (\nabla \bar{H}_r) (\nabla \bar{H}_r)^{\T}} \\
  &= \frac{2r}{p} \Tr \Rho \varphi(M) (I - M) \varphi(M) \Rho^\T \\
  &= \frac{2r}{p} \Tr \varphi(M)^2 M (I-M).
\end{align*}


\begin{proposition}
  \label{prop:Multi_Seq_Error_Xi}
  With probability at least $1 - 4\exp(-d)$, it holds that for any $W \in \St(d,p)$ satisfying
  $\phi_1(W) \leq 1 - \nu$ and any $g$,
  \begin{equation}
    \label{eq:Multi_Seq_Error_Xi}
    \abs{\xi(W,g)} \lesssim
    (1 - \phi_1)^{\hf} \sqrt {\frac{pd}{n}} \sqrt {\log (p\nu^{-1})} \sum_{r \geq 1} (r \log r)^{\hf} \abs{h_r}.
  \end{equation}
\end{proposition}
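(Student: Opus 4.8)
\textbf{Proof strategy for \cref{prop:Multi_Seq_Error_Xi}.}
The plan is to view $\xi(\cdot,g)$ as a Gaussian process on the Stiefel manifold (it is linear in the noise $(\ep_{\nn})$) and to control it uniformly with \cref{lem:GaussianProcess_UniformBound}, mirroring the treatment of $\tau$ in the single-index case (\cref{prop:SIM_Seq_Error_tau}), but now keeping track of the matrix structure. First I would separate $\xi$ according to the Hermite degree: writing $g=\sum_{r\ge 0}h_r\bar H_r$ with $\bar H_r=\sum_{\abs{\rr}=r}\nu_{\rr}H_{2\rr}$ as in \cref{lem:Multi_RotInvExpansion}, one gets $\xi(W,g)=\sum_{r\ge 1}h_r\,\xi_r(W)$, where
\[
  \xi_r(W)=\sum_{\nn\in\bbN^d}\ep_{\nn}\ang{(\nabla_x(P_W\bar H_r))^\T W_*W_*^\T P_W^{\perp}x,\ H_{\nn}}
\]
is independent of $g$ (the $r=0$ term vanishes since $\nabla\bar H_0=0$). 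This is exactly the process $\chi_r$ of the preliminary computation above with $\varphi\equiv I$, for which the covariance kernel $k_{\xi,r}(W,W')=k_{\xi,r}^{(1)}(W,W')+k_{\xi,r}^{(2)}(W,W')$ has already been derived.

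From that computation the variance is $\Var(\xi_r(W))=\tfrac1n k_{\xi,r}(W,W)=\tfrac{2r}{pn}\Tr\bigl(M(I-M)\bigr)$ with $M=\Rho^\T\Rho$, and since $0\preceq M\preceq I$ one has $\Tr(M(I-M))=\sum_j\sigma_j^2(1-\sigma_j^2)\le\sum_j(1-\sigma_j^2)=p(1-\phi_1)$, so $\Var(\xi_r(W))\le\tfrac{2r}{n}(1-\phi_1(W))$; this is the source of the $(1-\phi_1)^{\hf}$ prefactor. Next I would pass to the normalized process $\bar\xi_r(W)=\sqrt n\,[2r(1-\phi_1(W))]^{-\hf}\xi_r(W)$ on the region $\{W:\phi_1(W)\le 1-\nu\}$, so that $\Var(\bar\xi_r(W))\le 1$ there, and bound the Lipschitz constant of its covariance. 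Differentiating $k_{\xi,r}^{(1)},k_{\xi,r}^{(2)}$ in $W$ using the gradient formulas of \cref{lem:Multi_Prelim_Op_Grad}, the directional derivative rule \cref{eq:Multi_Prelim_SVD_DirectionalDerivative} for the singular values of $W^\T W_*$, and the identities $\int\nabla\bar H_r(\nabla\bar H_r)^\T=\tfrac{2r}{p}I_p$, $\int(\nabla^2\bar H_r)^2=c_rI_p$ with $c_r\le 4r^2$ from \cref{cor:Multi_Prelim_Hbar_Grad}, one obtains a bound $\norm{Dk_{\xi,r}(W,W')}\lesssim\mathrm{poly}(r)$ for the unnormalized kernel (the degree $2r$ of $\bar H_r$ produces only polynomial factors in $r$), and after accounting for the $(1-\phi_1)^{-\hf}$ weight — which contributes a term $\propto(1-\phi_1)^{-1}$ exactly as the $\rho_u$–factor does for $\bar\tau_r$ in the single-index proof — a bound $\lesssim\nu^{-1}\mathrm{poly}(r)$ on the restricted region. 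Then \cref{lem:GaussianProcess_UniformBound} gives, with probability at least $1-Cr^{-2}\exp(-d)$,
\[
  \sup_{\phi_1(W)\le 1-\nu}\abs{\bar\xi_r(W)}\lesssim\sqrt{dp\,\bigl(\log r+\log(p\nu^{-1})\bigr)},
\]
the $dp$ scaling coming from the Euclidean dimension of $\R^{d\times p}\supset\St(d,p)$ and the extra $\log p$ from the $\sqrt p$ Frobenius diameter of $\St(d,p)$ together with the $\nu^{-1}$ Lipschitz blow-up. Undoing the normalization, taking a union bound over $r\ge 1$, and summing,
\[
  \abs{\xi(W,g)}\le\sum_{r\ge 1}\abs{h_r}\abs{\xi_r(W)}
  \lesssim(1-\phi_1)^{\hf}\sqrt{\tfrac{pd}{n}}\sqrt{\log(p\nu^{-1})}\sum_{r\ge 1}(r\log r)^{\hf}\abs{h_r},
\]
where the last step uses a standard series estimate of the type in \cref{cor:SIM_Seq_Controls}; this is \cref{eq:Multi_Seq_Error_Xi}.

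The main obstacle is the Lipschitz estimate for the normalized kernel $\bar k_{\xi,r}$: one must differentiate, with respect to $W$, products involving $\Rho=W^\T W_*$, $M=\Rho^\T\Rho$, the projection $P_W^{\perp}$, and the operator $P_W$ applied to $\bar H_r$, correctly track the cancellation $P_W^{\perp}P_{W'}^{\perp}$ that makes the off-diagonal part of the kernel small near $W=W'$, and verify that the only negative power of $1-\phi_1$ introduced is the single factor coming from the normalization — which is precisely what forces the restriction to $\{\phi_1\le 1-\nu\}$ and the $\log(p\nu^{-1})$ loss. Everything else (the polynomial-in-$r$ degree counting, the union bound over $r$, and the final summation) is routine.
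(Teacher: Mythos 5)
Your proposal is correct and follows essentially the same route as the paper's proof: decompose $\xi=\sum_r h_r\xi_r$, use the variance bound $n\operatorname{Var}(\xi_r(W))=\tfrac{2r}{p}\Tr\Sigma^2(I-\Sigma^2)\le 2r(1-\phi_1)$, normalize by $(1-\phi_1)^{1/2}$ and $\sqrt{2r}$ on $\{\phi_1\le 1-\nu\}$, bound the covariance's Lipschitz constant by a polynomial in $p,r,\nu^{-1}$, apply the Gaussian-process uniform bound with a union bound over $r$, and sum. You are in fact slightly more careful than the paper on the normalization exponent (the paper's displayed $\tfrac1{2r}(1-\phi_1)^{-1/2}$ should be $(2r(1-\phi_1)/n)^{-1/2}$ as you wrote), but the substance of the argument is identical.
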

\begin{proof}
  Taking $\varphi \equiv 1$ in the previous computation, we find that
  \begin{equation*}
    k_{\xi,r}(W,W) = \frac{2r}{p} \Tr (I- \Sigma^2) \Sigma^2
    \leq \frac{2r}{p} \Tr (I- \Sigma^2) = 2r (1-\phi_1).
  \end{equation*}

  To bound the derivative of $k_{\xi,r}(W,W')$, we can use $\Tr A \leq p\norm{A}$ and the chain rule of derivative iteratively.
  The quantities $\Rho = W^\T W_*$ and $P_W^\T = I - WW^\T$ are all polynomial in $W$.
  Also, we recall that $\nabla_W (P_W h) = x P_W (\nabla g)^{\T}$.
  Combining this with the derivative formula of Hermite polynomials and that $\norm{W} \leq 1$,
  the derivative of each entry in $P_W (\nabla^2 \bar{H}_r)$ is bounded by a polynomial of degree at most $2r+1$ with coefficients being a polynomial of $r$,
  so its $L^2(\gamma_d)$ norm is also bounded by a polynomial of $r$.
  Consequently, we can deduce that $\norm{\nabla_W k_{\xi,r}(W,W')}_2$ is bounded by some polynomial in $p,r$.
  Moreover, since
  \begin{equation*}
    \phi_1 = \sum_{i=1}^p \sigma_i^2 = \Tr \Rho^{\T} \Rho,
  \end{equation*}
  the derivative of $\phi_1$ is also bounded by a polynomial in $p,r$.
  Let us take $\bar\xi = \frac{1}{2r}(1-\phi_1)^{-\hf} \xi$ and
  the scaled covariance
  \begin{equation*}
    \bar{k}_{\xi,r}(W,W')
    \coloneqq \frac{1}{2r} \zk{(1-\phi_1)(1-\phi_1')}^{-\hf} k_{\xi,r}(W,W').
  \end{equation*}
  Then, as long as $1-\phi_1 \geq \nu$,
  the derivative of $\bar{k}_{\xi,r}(W,W')$ is bounded by some polynomial of $p,r,\nu^{-1}$.
  Finally, we can apply \cref{lem:GaussianProcess_UniformBound} together with a union bound to obtain that
  \begin{equation*}
    \abs{\bar{\xi}_r} \lesssim \sqrt{\log (r p \nu^{-1}) + C}\sqrt {pd},\quad \forall r \geq 1
  \end{equation*}
  with probability at least $1-4\exp(-d)$.
  Returning to $\xi_r$ and taking a summation over $r$ yields the desired bound.
\end{proof}

\begin{proposition}
  \label{prop:Multi_Seq_Error_Zeta}
  With probability at least $1 - 4\exp(-d)$, it holds that for any $W \in \St(d,p)$ and any $g$,
  \begin{equation}
    \label{eq:Multi_Seq_Error_Zeta}
    \abs{\zeta(W,g)} \lesssim \sqrt {\frac{pd}{n}} \sqrt {\log (pK)} \sum_{r \geq 1} (r \log r)^{\hf} \abs{h_r}.
  \end{equation}
\end{proposition}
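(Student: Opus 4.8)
The plan is to repeat the proof of \cref{prop:Multi_Seq_Error_Xi} almost verbatim, specializing the general noise term $\chi=\sum_{\nn}\ep_{\nn}\ang{(\nabla_x(P_W\bar H_r))^\T W_*\varphi(M)W_*^\T P_W^{\perp}x,H_{\nn}}$ analyzed there to the choice $\varphi(M)=e^{-KM}/\Tr e^{-KM}$ (a matrix function of $M=\Rho^\T\Rho$). First I would decompose $g=\sum_{r\ge0}h_r\bar H_r$ and write $\zeta(W,g)=\sum_{r\ge1}h_r\,\zeta_r(W)$, the $r=0$ term vanishing since $\nabla\bar H_0=0$; here $\zeta_r$ is obtained by replacing $g$ with $\bar H_r$ and is independent of $g$. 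Each $\zeta_r$ is a centered Gaussian field on $\St(d,p)$, and the covariance computation carried out just before \cref{prop:Multi_Seq_Error_Xi} gives directly $\Cov(\zeta_r(W),\zeta_r(W'))=k_{\zeta,r}^{(1)}(W,W')+k_{\zeta,r}^{(2)}(W,W')$ together with
\[
  \Var(\zeta_r(W))=\frac1n\cdot\frac{2r}{p}\cdot\frac{\Tr\bigl[e^{-2KM}M(I-M)\bigr]}{\bigl(\Tr e^{-KM}\bigr)^2}.
\]
Since $0\preceq M\preceq I$ we have $0\preceq M(I-M)\preceq\tfrac14 I$, hence $\Tr[e^{-2KM}M(I-M)]\le\tfrac14\Tr e^{-2KM}$; and as the eigenvalues $e^{-K\sigma_i^2}$ of $e^{-KM}$ are nonnegative, $\Tr e^{-2KM}=\sum_i e^{-2K\sigma_i^2}\le(\sum_i e^{-K\sigma_i^2})^2=(\Tr e^{-KM})^2$. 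Combining, $\Var(\zeta_r(W))\le r/(2n)$ for \emph{every} $W\in\St(d,p)$. This uniform variance bound is exactly where the $\zeta$ term is easier than $\xi$: there is no degeneracy as $\phi_1\to1$, so neither the rescaling by $(1-\phi_1)^{-1/2}$ nor the restriction to $\{\phi_1\le1-\nu\}$ from \cref{prop:Multi_Seq_Error_Xi} is needed.

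Next I would control the increments of $\zeta_r$ over $\St(d,p)$, i.e. find $L$ with $\E|\zeta_r(W)-\zeta_r(W')|^2\le L^2\norm{W-W'}^2$. Writing $\zeta_r(W)$ as the $\ep$-weighted Hermite coefficients of the scalar function $x\mapsto(\nabla_x(P_W\bar H_r))^\T W_*\varphi(M)W_*^\T P_W^{\perp}x$, this reduces to a Lipschitz bound for that function in $L^2(\gamma_d)$, and hence (as in \cref{prop:Multi_Seq_Error_Xi}) to bounds on the $W$-gradients of the building blocks $\Rho=W^\T W_*$, $P_W^{\perp}=I-WW^\T$, $P_W(\nabla\bar H_r)$, $\varphi(M)$, all but the last of which are polynomials in $W$ with $r$-polynomial coefficients and so have $\mathrm{poly}(p,r)$ gradients by \cref{lem:Multi_Prelim_Op_Grad} and \cref{lem:Hermite_Recurrence_Derivative}. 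The \textbf{main obstacle} is $\varphi(M)$ together with the normalization $1/\Tr e^{-KM}$: a priori $1/\Tr e^{-KM}$ can be as large as $e^{K}/p$, which would only give a Lipschitz constant exponential in $K$, hence $\sqrt K$ rather than $\sqrt{\log K}$ in the final bound. The resolution is to never split numerator and denominator: one uses the identity $\Rho\varphi(M)=e^{-KN}\Rho/\Tr e^{-KN}$ with $N=\Rho\Rho^\T$ (\cref{eq:Multi_Prelim_MatrixCal}), whose operator norm is $\max_i\sigma_i e^{-K\sigma_i^2}/\sum_j e^{-K\sigma_j^2}\le1$ uniformly in $W$, and, when differentiating, one keeps the factors $e^{-KM}$ and $e^{-2KM}$ grouped inside traces so that they cancel against $\Tr e^{-KM}$ and $(\Tr e^{-KM})^2$. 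Equivalently one can pass to the softmax representation $p_i=e^{-K\sigma_i^2}/\sum_j e^{-K\sigma_j^2}$, whose partial derivatives satisfy $\abs{\partial_{\sigma_i^2}p_j}\le K/4$, and combine with $\abs{\partial_W\sigma_i^2}\lesssim\mathrm{poly}(p)$ from \cref{eq:Multi_Prelim_SVD_DirectionalDerivative}. Either way one obtains $L\lesssim\mathrm{poly}(p,r)\cdot K$, polynomial in $K$.

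Finally, I would apply \cref{lem:GaussianProcess_UniformBound} to each $\zeta_r$ with variance $\le r/(2n)$ and increment constant $L\lesssim\mathrm{poly}(p,r,K)$, inserting an extra $\log r$ in the failure probability so that a union bound over $r\ge1$ still leaves probability at least $1-4\exp(-d)$, exactly as in \cref{prop:Multi_Seq_Error_e}. This yields
\[
  \sup_{W\in\St(d,p)}\abs{\zeta_r(W)}\lesssim\sqrt{\frac{r\,pd}{n}}\,\sqrt{\log(pKr)+C},\qquad\forall r\ge1,
\]
and summing against $\abs{h_r}$ and using $\log(pKr)\lesssim\log(pK)+\log r$ gives $\abs{\zeta(W,g)}\lesssim\sqrt{pd/n}\,\sqrt{\log(pK)}\sum_{r\ge1}(r\log r)^{1/2}\abs{h_r}$, which is the claim. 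I expect essentially all the effort to lie in the Lipschitz estimate of the second paragraph — verifying that the matrix exponential and its trace-normalization contribute only polynomial (hence logarithmic, after the Gaussian-process bound) dependence on $K$, and that the entrywise composition defining $\zeta$ does not spoil the constants.
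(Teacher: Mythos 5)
Your proposal follows essentially the same route as the paper: decompose $\zeta=\sum_{r\ge1}h_r\zeta_r$, invoke the general covariance computation preceding \cref{prop:Multi_Seq_Error_Xi} with $\varphi=e^{-K\cdot}/\Tr e^{-K\cdot}$, bound the diagonal variance by $O(r/n)$ via $M(I-M)\preceq I$ and $\Tr e^{-2KM}\le(\Tr e^{-KM})^2$, control the increments by differentiating the normalized matrix function $f(M)=(\Tr e^{-KM})^{-1}e^{-KM}$ as a whole (which is exactly the paper's device for getting an $O(K)$ rather than $e^{\Theta(K)}$ Lipschitz constant), and finish with \cref{lem:GaussianProcess_UniformBound} plus a union bound over $r$. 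You correctly isolate the one subtlety — avoiding the naive $1/\Tr e^{-KM}\le e^K/p$ bound — and your proposed fix (keeping numerator and denominator grouped, or equivalently the softmax derivative bound) is the same mechanism the paper uses via the matrix-function-derivative formula and the cancellation $e^{-K\lambda_{\min}}/\Tr e^{-KM}\le1$.
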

\begin{proof}
  Taking $\varphi(t) = e^{-Kt}$ in the previous computation, we find that
  \begin{equation*}
    k_{\zeta,r}(W,W') = k_{\zeta,r}^{(1)}(W,W')  + k_{\zeta,r}^{(2)}(W,W') ,
  \end{equation*}
  where
  \begin{align*}
    k_{\zeta,r}^{(1)}(W,W') &= q \E (P_W \nabla \bar{H}_r)^\T \Rho e^{-KM} W_*^\T P_W^{\perp} P_{W'}^{\perp} W_* e^{-KM'} (\Rho')^\T (P_{W'} \nabla \bar{H}_r) \\
    k_{\zeta,r}^{(2)}(W,W')&= q \E \Tr W (P_W \nabla^2 \bar{H}_r) {W^\T W_* e^{-KM} W_*^\T P_W^{\perp}}
    W' (P_{W'} \nabla^2 \bar{H}_r)(W')^\T W_* e^{-KM'} W_*^\T P_{W'}^{\perp}, \\
    q &= \zk{(\Tr \exp(-K M)) (\Tr \exp(-K M'))}^{-1}
  \end{align*}
  For the case $W=W'$, we have
  \begin{align*}
    k_{\zeta,r}(W,W') &= \frac{2r}{p} (\Tr \exp(-K M))^{-2} \Tr \exp(-2K M) M(I-M) \\
    &=  \frac{2r}{p} (\Tr \exp(-K \Sigma^2))^{-2} \Tr \exp(-2K \Sigma^2) \Sigma^2 (I - \Sigma^2) \\
    & \leq \frac{2r}{p} (\Tr \exp(-K \Sigma^2))^{-2}\Tr \exp(-2K \Sigma^2) \\
    &\leq 2r.
  \end{align*}
  For the derivative, we can use the same argument as in the previous proof.
  The only difference is that we have to consider additionally the matrix derivative of
  $f(M) = (\Tr e^{-KM})^{-1} e^{-KM} $.
  We have
  \begin{equation*}
    \dv{t} f(M) =  (\Tr e^{-KM})^{-1} \dv{t} e^{-KM} - (\Tr e^{-KM})^{-2} (\dv{t} \Tr  e^{-KM}) e^{-KM}
    = I + II.
  \end{equation*}
  The second term is easy to bound.
  Using the property of matrix derivative and trace, we have
  \begin{equation*}
    \dv{t} \Tr e^{-KM}= -K \Tr e^{-KM} \dot{M},
  \end{equation*}
  so
  \begin{align*}
    \norm{(\Tr e^{-KM})^{-2} (\dv{t} \Tr  e^{-KM}) e^{-KM}}
    &= K (\Tr e^{-KM})^{-2} \abs{\Tr e^{-KM} \dot{M}} \norm{e^{-KM}}\\
    &\leq K (\Tr e^{-KM})^{-2} \Tr e^{-KM} \norm{\dot{M}} \Tr e^{-KM}\\
    &= K \norm{\dot{M}},
  \end{align*}
  where we use the fact that $\abs{\Tr A B} \leq \norm{AB}_1 \leq \Tr A \norm{B}$ for positive semi-definite matrices $A$,
  where $\norm{\cdot}_1$ is the trace norm.
  For the first term, we use the matrix derivative formula that
  \begin{equation*}
    \dv{t} \phi(M) = Q (R\odot (Q^\T \dot{M} Q)) Q^\T,
  \end{equation*}
  where $M = Q \Lambda Q^\T$ is the spectral decomposition of $M$,
  \begin{equation*}
    R_{ij} =
    \begin{cases}
      \frac{\phi(\lambda_i) - \phi(\lambda_j)}{\lambda_i - \lambda_j} & \text{if } \lambda_i \neq \lambda_j \\
      \phi'(\lambda_i) & \text{if } \lambda_i = \lambda_j
    \end{cases}
  \end{equation*}
  In our case, we note that $\phi(t) = e^{-Kt}$, $\phi'(t) = -K e^{-Kt}$, so using the mean value theorem, we have
  \begin{equation*}
    R_{ij} = \phi'(\xi_{ij}) = -K e^{-K \xi_{ij}},\qquad
    \abs{R_{ij}}  \leq K e^{-K \lambda_{\min}},
  \end{equation*}
  where $\xi_{ij}$ is between $\lambda_i$ and $\lambda_j$ and $\lambda_{\min} = \min_{i} \lambda_i$.
  Consequently,
  \begin{align*}
    \norm{\dv{t} e^{-KM}}_2
    &= \norm{Q (R\odot (Q^\T \dot{M} Q)) Q^\T}_2
    = \norm{R\odot (Q^\T \dot{M} Q)}_2 \\
    &\leq K e^{-K \lambda_{\min}} \norm{Q^\T \dot{M} Q}_2
    \leq K e^{-K \lambda_{\min}} \norm{\dot{M}}_2,
  \end{align*}
  and hence
  \begin{equation*}
    \norm{I} \leq \norm{I}_2 \leq K \frac{e^{-K \lambda_{\min}} }{\Tr e^{-KM}} \norm{\dot{M}}_2 \leq K \norm{\dot{M}}_2.
  \end{equation*}
  In summary, we have
  \begin{equation*}
    \norm{\dv{t} f(M)} \leq 2 K \norm{\dot{M}}_2.
  \end{equation*}
  Consequently, we can conclude that the derivative of $k_{\zeta,r}(W,W')$ is bounded by a polynomial in $p,r$.
  Applying \cref{lem:GaussianProcess_UniformBound} together with a union bound, we obtain the desired bound.

\end{proof}

\subsubsection{Training dynamics}
\begin{proposition}
  \label{prop:Multi_Seq_Init}
  Under \cref{assu:Multi_InformationIndex}, assume $n \gtrsim d^{2m_0+1 + s}$ for some $s > 0$.
  Let \cref{eq:Multi_Seq_Error_e} and \cref{eq:Multi_Seq_Error_Zeta} hold.
  Let $\delta > 0$ be fixed.
  Then, with probability at least $1 - \delta$, when $n,d$ is large enough,
  we have
  \begin{equation}
    \label{eq:Multi_Seq_Init}
    \min_{1 \leq i \leq p} \sigma_i^2(t) \geq 1/2,\quad
    \forall t \in [T^{\mr{app}}, T^{\mr{fin}}],
  \end{equation}
  where $T^{\mr{app}} \lesssim \log d + d^{m_0 - 1}$ and $T^{\mr{fin}}$ can be taken as any fixed polynomial in $d$,
  and the constant in the $\lesssim$ notation can depend on $\delta$.
\end{proposition}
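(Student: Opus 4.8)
The plan is to follow the population analysis of \cref{prop:Multi_Pop_AppTime}--\cref{prop:Multi_Pop_Convergence} together with the noisy single-index argument of \cref{prop:SIM_Seq_Init}, tracking throughout the smooth surrogate $\omega = -K^{-1}\log\Tr e^{-K\Sigma^2}$ rather than $\min_i\sigma_i^2$ directly. First I would fix the good event: by \cref{prop:Multi_RandomInit}, with probability at least $1-\delta$ one has $\rho_0 := \min_i\sigma_i(0) \asymp d^{-1/2}$, and I would take $K$ to be a sufficiently large polynomial in $d$ so that $\omega(0) \ge \tfrac12\rho_0^2$, using $\omega \le \min_i\sigma_i^2$ (cf.\ \cref{eq:Multi_Proof_PhiOmega}) and the matching lower bound $\omega \ge \min_i\sigma_i^2 - K^{-1}\log p$. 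Conditioning additionally on the events of \cref{prop:Multi_Seq_Error_e} and \cref{prop:Multi_Seq_Error_Zeta} (which cost only $Ce^{-d}$), I would derive from the $h_r$-equation in \cref{prop:Multi_Seq_Equations}, by the comparison principle and $\mu_{2r}\asymp e^{-2\gamma r}$, $\phi_r\le 1$, the a priori bound $\abs{h_r(t)} \lesssim (\abs{h_r^*} + \sqrt{dp\log r/n})\min(1,\mu_{2r}t)$; summability of $(h_r^*)$ and the geometric decay of $\mu_{2r}$ then give $\sum_{r\ge1}(r\log r)^{1/2}\abs{h_r} \lesssim 1+(\log^+ t)^2$, so that \cref{eq:Multi_Seq_Error_Zeta} reads $\abs{\zeta}\lesssim \sqrt{pd/n}\,\polylog(n,d)$ as long as $t$ is polynomial in $n,d$ (note $\log K$ is polylogarithmic in $d$).

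The core is the multi-phase scheme of \cref{prop:Multi_Pop_AppTime}, run in the presence of noise. Writing $\rz = m_0/2$, on windows where $\omega$ stays in $[\rho_0^2,\tfrac14]$ I would first show that $h_{\rz}$ grows: since $\phi_{\rz}h_{\rz}^* \gtrsim \omega^{\rz} \gtrsim \rho_0^{2\rz} = d^{-\rz}$ dominates $\abs{e_{\rz}}\lesssim \sqrt{pd/n}\,\polylog$ once $n\gtrsim d^{m_0+1}\polylog$, the equation $\dot h_{\rz}=\mu_{2\rz}(\phi_{\rz}h_{\rz}^*-h_{\rz}+e_{\rz})$ pushes $h_{\rz}$ monotonically up to $\tfrac14\omega^{\rz}h_{\rz}^*$ in $O(\mu_{2\rz}^{-1})=O(1)$ time. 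Once that holds, the main (nonnegative) part of $\dot\omega$ in \cref{prop:Multi_Seq_Equations}---after the sign bookkeeping of splitting the index set into the signal-dominated $S_0=\{r:\omega^r\abs{h_r^*}\gtrsim\sqrt{pd\log r/n}\}$, on which $h_rh_r^*\ge 0$, and its complement, on which only $h_r\ge -\abs{e_r}\min(1,\mu_{2r}t)$ holds, exactly as in \cref{prop:SIM_Seq_Init}---is at least $c(1-\phi_1)\omega^{\rz}h_{\rz}h_{\rz}^*\gtrsim\omega^{2\rz}$, while the remaining terms and $\zeta$ are at most $C\sqrt{pd/n}\,\polylog$. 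I would then run the geometric doubling: with $\rho_k^2=4^k\rho_0^2$ and interleaved times $T_k^g$ (for $h_{\rz}$ to reach $\tfrac14\rho_k^{2\rz}h_{\rz}^*$) and $T_k^\rho$ (for $\omega$ to reach $\rho_k^2$), one gets $T_k^g-T_k^\rho\lesssim 1$ and $T_k^\rho-T_{k-1}^g\lesssim\rho_0^{-2(2\rz-1)}2^{-2(2\rz-1)k}$. At the slowest point $k=0$ the main rate $\rho_0^{4\rz}=d^{-m_0}$ must beat $\sqrt{pd/n}\,\polylog$, which is precisely where the hypothesis $n\gtrsim d^{2m_0+1+s}$ enters (the factor $d^s$ absorbs the polylog and the fixed $p$). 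Summing the geometric series over $k\le L\asymp\log d$ bounds the total elapsed time by $\lesssim\log d+\rho_0^{-2(2\rz-1)}\lesssim\log d+d^{m_0-1}$, at the end of which $\omega\ge\tfrac14$ and $h_{\rz}\gtrsim 1$. Throughout, $t$ stays polynomial so all estimates persist; the a priori claim $\omega\ge\tfrac12\rho_0^2$ over these windows is closed by a continuity/bootstrap argument, since $\dot\omega\ge -\abs{\zeta}$ with $\abs{\zeta}\ll\rho_0^2$ over the $O(1)$-length early windows.

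The last step is to upgrade $\omega\ge\tfrac14$ to $\omega\ge\tfrac12$, which gives $\min_i\sigma_i^2\ge\omega\ge\tfrac12$, and to show persistence. The key observation is that whenever $\omega<\tfrac12$ the minimizing coordinate obeys $\sigma_{i^*}^2\le\omega+K^{-1}\log p<\tfrac12+o(1)$, hence $1-\sigma_{i^*}^2>\tfrac14$; together with its softmax weight $\ge 1/p$ and $A_{i^*}\gtrsim h_{\rz}h_{\rz}^*\,\omega^{\rz}\,\rz/p\gtrsim 1$ (using $h_{\rz}\gtrsim 1$, $\omega\ge\tfrac14$, $p$ fixed, and \cref{eq:Multi_Prelim_nu_sum_rj}), the main part of $\dot\omega$ is bounded below by an absolute constant $c'>0$, so $\dot\omega\ge c'-\abs{\zeta}\ge c'/2$ while $\omega<\tfrac12$; hence $\omega$ reaches $\tfrac12$ in $O(1)$ additional time, and the same lower bound at $\omega=\tfrac12$ makes $\{\omega\ge\tfrac12\}$ forward invariant, so $\omega(t)\ge\tfrac12$ for all $t\ge T^{\mr{app}}$ with $T^{\mr{app}}\lesssim\log d+d^{m_0-1}$, up to any prescribed polynomial horizon $T^{\mr{fin}}$ for which the noise estimates are valid. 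The main obstacle I anticipate is exactly the uniform control of $e_r$ and $\zeta$ simultaneously over the Stiefel manifold and over the polynomially long time window, together with the sign/monotonicity bookkeeping for the non-information coefficients $h_r$ and the verification that $\omega$ never dips during the exponentially-weak-signal early phases; these are what the Gaussian-process estimates \cref{prop:Multi_Seq_Error_e}, \cref{prop:Multi_Seq_Error_Zeta} and the phase-by-phase bootstrap are designed to supply.
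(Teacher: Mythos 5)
Your proposal is correct and follows essentially the same route as the paper: random initialization on $\St(d,p)$ gives $\min_i\sigma_i^2(0)\asymp d^{-1}$, the softmin surrogate $\omega$ with a sufficiently large $K$ tracks $\min_i\sigma_i^2$, the noisy $h_{\rz}$-equation is driven above $\frac14\phi_{\rz}h_{\rz}^*$ in $O(1)$ time once $\phi_{\rz}\gtrsim d^{-\rz}$ dominates $e_{\rz}$, and the interleaved geometric-doubling scheme for $(\omega, h_{\rz})$ gives $T^{\mr{app}}\lesssim\log d+d^{m_0-1}$, with the sample-size requirement $n\gtrsim d^{2m_0+1+s}$ entering precisely where $\rho_0^{4\rz}\asymp d^{-m_0}$ must beat $\sqrt{pd/n}\,\polylog$. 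Your two closing refinements — the explicit upgrade from $\omega\ge\frac14$ to $\omega\ge\frac12$ using the softmax weight of the minimizing coordinate, and the forward-invariance argument for persistence on all of $[T^{\mr{app}},T^{\mr{fin}}]$ — are things the paper's proof handles more loosely (it silently assumes the doubling reaches $\frac12$ and invokes $\dot\omega\ge 0$ after $T_0^g$), so your version is a slightly more careful execution of the same strategy rather than a different one.
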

\begin{proof}
  The proof follows similar strategies as in \cref{prop:SIM_Seq_Init} and \cref{prop:Multi_Pop_AppTime}.
  Without loss of generality, we can assume that $h_{r} \geq 0$ for all $r$.

  First, according to \cref{prop:Multi_RandomInit}, there is some $c = c(\delta)$ such that with probability at least $1 - \delta$,
  \begin{equation*}
    \min_{1 \leq i \leq p} \sigma_i^2 \geq c d^{-1}.
  \end{equation*}
  Let us take $\omega$ and $\phi_1$ as in \cref{eq:Multi_Proof_PhiOmega}.
  According to \cref{prop:Aux__Softmin}, by taking $K \asymp C d \log p$, we have
  \begin{equation*}
    \omega(0) \geq c d^{-1} - \frac{1}{K}\log p \geq c d^{-1},
  \end{equation*}
  Moreover, we claim that we will have $\omega(t) \geq \rho_0 \coloneqq \frac{1}{2} c d^{-1}$ for the range of $t$ we are interested in.
  We will prove this claim later.

  Recall the dynamics in \cref{prop:Multi_Seq_Equations}.
  Let us take $\rz = m_0/2$.
  Noticing that when $\omega(t) \geq \rho_0$, we have
  \begin{equation*}
    \phi_{\rz}(t) \geq \omega(t)^{\rz} \gtrsim d^{-\rz} \gtrsim \sqrt {\frac{d}{n}},
  \end{equation*}
  since we have $n \gtrsim d^{2m_0+1 + s}$.
  Consequently, we have
  \begin{equation}
    \label{eq:Proof__Multi_Seq_hrz}
    \dot{h}_{\rz} = \mu_{2\rz} (\phi_{\rz} h_{\rz}^* - h_{\rz} + e_{\rz})
    \geq \mu_{2\rz}( \phi_{\rz} h_{\rz}^*/2 - h_{\rz}),
  \end{equation}
  and thus
  \begin{equation*}
    T_0^g \coloneqq \inf \dk{t \geq 0 : h_{\rz}(t) \geq \frac{1}{4} \rho_0^{\rz} h_{\rz}^*} \lesssim 1.
  \end{equation*}

  Now let us prove the claim for $t \leq T_0^g$.
  We recall the dynamics of $\omega$ in \cref{prop:Multi_Seq_Equations}:
  \begin{equation*}
    \dot{\omega} = \frac{1}{\sum_{j=1}^p e^{-K \sigma_j^2}} \sum_{i=1}^p e^{-K \sigma_i^2} (1 - \sigma_i^2) A_i + \zeta,
  \end{equation*}
  where $A_i = 4 \sum_{r \geq 1} h_r h^*_r \sum_{\abs{\rr} = r} r_i \nu_{\rr}^2 \sigma^{2 \rr}$.
  Similar to \cref{prop:SIM_Seq_Control_g}, we can show that
  \begin{equation*}
    h_r(t) \geq -C \min(1,\mu_{2r} t) \abs{e_r} \geq -C \min(1,\mu_{2r} t) \sqrt {\frac{d \log r}{n}}.
  \end{equation*}
  Also, similar to \cref{eq:SIM_Seq_Control_tau}, we have
  \begin{equation}
    \label{eq:Proof__Multi_Seq_Bound_Series}
    \sum_{r \geq 1} (r \log r)^{\hf} \abs{h_r} \lesssim (1 + \log^+ t)^{2},
  \end{equation}
  so the error term \cref{eq:Multi_Seq_Error_Zeta} is further bounded by
  \begin{equation}
    \abs{\zeta(W,g)} \lesssim \sqrt {\frac{pd}{n}} \sqrt {\log (pK)} \sum_{r \geq 1} (r \log r)^{\hf} \abs{h_r}
    \lesssim \sqrt {\frac{d \log d}{n}}(1 + \log^+ t)^2
  \end{equation}
  Therefore,
  \begin{align}
    \notag
    A_i & \gtrsim - \sum_{r \geq 1} h^*_r \min(1,\mu_{2r} t) \sqrt {\frac{d \log r}{n}} \sum_{\abs{\rr} = r} r_i \nu_{\rr}^2 \sigma^{2 \rr}\\
    \notag
    & \gtrsim - \sqrt {d/n} \sum_{r \geq 1} h^*_r r (\log r)^{\hf} \min(1,\mu_{2r} t) \\
    \label{eq:Proof__Multi_Seq_Bound_Ai}
    & \gtrsim - \sqrt {d/n}  (1 + \log^+ t)^2
  \end{align}
  where we use $\mu_r \lesssim e^{-\gamma r}$ and \cref{prop:Series_2} in the last step.
  Plugging these two bound into the dynamics of $\dot{\omega}$, we find that
  \begin{align*}
    \dot{\omega}
    &\gtrsim -\frac{1}{\sum_{j=1}^p e^{-K \sigma_j^2}} \sum_{i=1}^p e^{-K \sigma_i^2} \sqrt {d/n} (1 + \log^+ t)^2
    -\sqrt {\frac{d \log d}{n}}(1 + (\log^+ t)^2) \\
    &\gtrsim -\sqrt {\frac{d \log d}{n}} (1 + \log^+ t)^2
  \end{align*}
  Consequently, since $T_0^g \lesssim 1$ and $n \gtrsim d^{2m_0 + 1+ s}$, we have
  \begin{equation*}
    \omega(T_0^g) \geq \omega(0) - \sqrt {\frac{d \log d}{n}} T_0^g \geq c d^{-1} - \sqrt {\frac{d \log d}{n}}
    \geq \rho_0,
  \end{equation*}
  which proves the claim until $T_0^g$.

  After that, we have
  \begin{align*}
    h_{\rz} h_{\rz}^* \sum_{\abs{\rr} = \rz} r_i \nu_{\rr}^2 \sigma^{2 \rr}
    & \geq \rho_0^{\rz} (h_{\rz}^*)^2  \sum_{\abs{\rr} = \rz} r_i \nu_{\rz}^2 \rho_0^{\rz} \\
    &=  c  (h_{\rz}^*)^2  \rho_0^{2\rz}  \gtrsim d^{-2\rz},
  \end{align*}
  so since  $n \gtrsim d^{2m_0 + 1+ s}$, we have
  \begin{align}
    \notag
    A_i &= 4 h_{\rz} h_{\rz}^* \sum_{\abs{\rr} = \rz} r_i \nu_{\rr}^2 \sigma^{2 \rr} + 4 \sum_{r \neq \rz} h_r h^*_r \sum_{\abs{\rr} = r} r_i \nu_{\rr}^2 \sigma^{2 \rr}\\
    \notag
    &\geq 4 h_{\rz} h_{\rz}^* \sum_{\abs{\rr} = \rz} r_i \nu_{\rr}^2 \sigma^{2 \rr} - \sqrt {d/n}  (1 + \log^+ t)^2 \\
    \label{eq:Proof__Multi_Seq_Bound_Ai2}
    & \geq 2 h_{\rz} h_{\rz}^* \sum_{\abs{\rr} = \rz} r_i \nu_{\rr}^2 \sigma^{2 \rr},
  \end{align}
  provided that $t$ is at most polynomially large in $d$ and $n$.
  Thus,
  \begin{equation}
    \label{eq:Proof__Multi_Seq_omega}
    \dot{\omega} \geq \frac{2}{\sum_{j=1}^p e^{-K \sigma_j^2}} \sum_{i=1}^p e^{-K \sigma_i^2} (1 - \sigma_i^2) h_{\rz} h_{\rz}^* \sum_{\abs{\rr} = \rz} r_i \nu_{\rr}^2 \sigma^{2 \rr}
    \geq 0.
  \end{equation}
  Consequently, with \cref{eq:Proof__Multi_Seq_hrz} and \cref{eq:Proof__Multi_Seq_omega},
  we can apply the same argument as in \cref{prop:Multi_Pop_AppTime}.
  Defining similar times $T_k^g$ and $T_k^\rho$, we have $T^{\mr{app}} = T_L^{\rho} \lesssim \log d + d^{m_0-1}$,
  where $L = 1 + \cek{\log_2 \rho_0^{-1}}$.
\end{proof}

\begin{proposition}
  \label{prop:Multi_Seq_Conv}
  Under \cref{assu:Multi_InformationIndex}, assume $n \gtrsim d^{2m_0+1 + s}$ for some $s > 0$.
  Let \cref{eq:Multi_Seq_Error_e}, \cref{eq:Multi_Seq_Error_Xi} hold with $\nu = 1/n$.
  Suppose \cref{eq:Multi_Seq_Init} holds.
  Then, there is some $t_1 \leq T^{\mr{app}} + C$ such that
  \begin{equation*}
    1 - \phi_1(t_1 + s) \lesssim \exp(- c s),
  \end{equation*}
  provided that
  \begin{equation*}
    1 - \phi_1(t_1+s) \gtrsim \frac{p d \cdot \polylog(n,d,p)}{n}.
  \end{equation*}
\end{proposition}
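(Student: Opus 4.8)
The plan is to follow the template of the single-index convergence analysis in \cref{prop:SIM_Seq_Conv}, with the scalar alignment $\rho$ replaced by $\phi_1$, working throughout on the event on which the uniform perturbation bounds \cref{eq:Multi_Seq_Error_e} and \cref{eq:Multi_Seq_Error_Xi} (the latter with $\nu=1/n$) hold, and assuming WLOG $h_{\rz}^*>0$ with $\rz=m_0/2$. Because \cref{eq:Multi_Seq_Init} is assumed, we may freely use $\min_{1\le i\le p}\sigma_i^2(t)\ge 1/2$ on $[T^{\mr{app}},T^{\mr{fin}}]$ for any polynomial-in-$d$ horizon $T^{\mr{fin}}$; thus the genuinely hard part of the single-index argument — lower-bounding the alignment while the signal coefficient is still small — is already taken care of by \cref{prop:Multi_Seq_Init}. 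What remains is to (i) show the signal coefficient $h_{\rz}$ settles to a constant fraction of $h_{\rz}^*$ within a constant extra time, and then (ii) set up an exponential-decay differential inequality for $1-\phi_1$ in which the signal part of $\dot\phi_1$ dominates the noise term $\xi$.

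For step (i): on $[T^{\mr{app}},T^{\mr{fin}}]$ the bound $\min_i\sigma_i^2\ge 1/2$ gives $\phi_{\rz}=\sum_{\abs{\rr}=\rz}\nu_{\rr}^2\sigma^{2\rr}\ge 2^{-\rz}$, a positive constant. Since $n\gtrsim d^{2m_0+1+s}$, the bound $\abs{e_{\rz}}\lesssim\sqrt{dp\log\rz/n}$ from \cref{eq:Multi_Seq_Error_e} is eventually much smaller than $\phi_{\rz}h_{\rz}^*$, so $\dot h_{\rz}=\mu_{2\rz}(\phi_{\rz}h_{\rz}^*-h_{\rz}+e_{\rz})\ge\mu_{2\rz}(\tfrac12\phi_{\rz}h_{\rz}^*-h_{\rz})$ forces $h_{\rz}$ to be monotone increasing with $h_{\rz}(t)\ge\tfrac14 2^{-\rz}h_{\rz}^*$ for all $t\ge t_1\coloneqq T^{\mr{app}}+C$, where $C\asymp\mu_{2\rz}^{-1}=O(1)$ because $m_0$ is fixed. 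In the same spirit as \cref{prop:SIM_Seq_Control_g} and \cref{cor:SIM_Seq_Controls}, I would also record the a priori controls $\abs{h_r(t)}\lesssim(\abs{h_r^*}+\sqrt{dp\log r/n})\min(1,\mu_{2r}t)$ and, via $\mu_r\lesssim e^{-\gamma r}$ and the elementary series estimates, $\sum_{r\ge1}(r\log r)^{1/2}\abs{h_r(t)}\lesssim 1+(\log^+t)^2$.

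For step (ii): for $t\ge t_1$ I lower-bound $A_i=4\sum_{r\ge1}h_rh_r^*\sum_{\abs{\rr}=r}r_i\nu_{\rr}^2\sigma^{2\rr}$ by isolating the $r=\rz$ term, which is at least $4h_{\rz}h_{\rz}^*2^{-\rz}\sum_{\abs{\rr}=\rz}r_i\nu_{\rr}^2=4h_{\rz}h_{\rz}^*2^{-\rz}\tfrac{\rz}{p}\gtrsim p^{-1}$ by \cref{eq:Multi_Prelim_nu_sum_rj}, while the $r\ne\rz$ contribution is $\gtrsim-\sqrt{dp/n}(1+\log^+t)^2$, negligible against $p^{-1}$ under $n\gtrsim d^{2m_0+1+s}$ for polynomial $t$. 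Hence $A_i\gtrsim p^{-1}$ uniformly in $i$ and $\sum_i(1-\sigma_i^2)A_i\gtrsim p^{-1}\sum_i(1-\sigma_i^2)=1-\phi_1$. On the noise side, \cref{eq:Multi_Seq_Error_Xi} with $\nu=1/n$ and the series bound give $\abs{\xi}\lesssim(1-\phi_1)^{1/2}\sqrt{pd\log(pn)/n}\,(1+\log^+t)^2$ (the hypothesis $\phi_1\le 1-\nu$ being automatic in the regime $1-\phi_1\gtrsim\frac{pd}{n}\polylog\gg\frac1n$). Using $\dot\phi_1=\sum_i(1-\sigma_i^2)A_i+\xi$, these combine to
\[
  \dot\phi_1\ \ge\ c(1-\phi_1)-C(1-\phi_1)^{1/2}\sqrt{\tfrac{pd\log(pn)}{n}}\,(1+\log^+t)^2 ,
\]
so whenever $t$ is polynomial in $n,d,p$ and $1-\phi_1(t)\gtrsim\frac{pd}{n}\polylog(n,d,p)$ the first term dominates, giving $\dot\phi_1\ge\tfrac{c}{2}(1-\phi_1)$ and hence $1-\phi_1(t_1+s)\le(1-\phi_1(t_1))e^{-cs/2}\lesssim e^{-cs}$, with $\phi_1$ monotone on that range; a continuity/bootstrap argument closes the induction and shows that dropping to the threshold costs only $O(\log n)$ extra time, which is polynomial.

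The main obstacle is the uniform-over-trajectory control of $\xi$ with the sharp $(1-\phi_1)^{1/2}$ prefactor near convergence: without it the noise would swamp the $c(1-\phi_1)$ signal and the exponential rate would collapse at a worse threshold. This is precisely \cref{prop:Multi_Seq_Error_Xi}, whose proof hinges on the variance identity $k_{\xi,r}(W,W)=\tfrac{2r}{p}\Tr(I-\Sigma^2)\Sigma^2\le 2r(1-\phi_1)$ for the relevant Gaussian process together with a polynomial Lipschitz bound feeding \cref{lem:GaussianProcess_UniformBound}. The remaining bookkeeping — relating $\phi_1$, $\omega$ and the individual $\sigma_i^2$, and verifying the stray $r\ne\rz$ contributions to $A_i$ are dominated under $n\gtrsim d^{2m_0+1+s}$ — is routine but must be executed uniformly along the flow.
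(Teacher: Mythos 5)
Your proposal is correct and follows essentially the same argument as the paper's proof: assume the alignment from \cref{prop:Multi_Seq_Init}, show in $O(1)$ extra time that $h_{\rz}$ reaches a constant fraction of $h_{\rz}^*$, lower-bound $A_i$ by isolating the $r=\rz$ term and absorbing the residual via the series estimate, then close the exponential-decay differential inequality for $1-\phi_1$ using the $(1-\phi_1)^{1/2}$-scaled bound on $\xi$ from \cref{prop:Multi_Seq_Error_Xi}. The only cosmetic differences are that you track the $p^{-1}$ factor explicitly (the paper treats $p$ as fixed per \cref{assu:Multi_RotationInvariance}) and you make the continuity/bootstrap step explicit; neither changes the substance.
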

\begin{proof}
  First, using the dynamics \cref{eq:Proof__Multi_Seq_hrz}, we find that we will have $h_{\rz} \geq c h^*_{\rz}$ after $t_1 = T^{\mr{app}} + C$.
  Then, using \cref{eq:Proof__Multi_Seq_Bound_Ai2} and \cref{eq:Multi_Seq_Init}, we have
  \begin{equation*}
    A_i \gtrsim h_{\rz} h_{\rz}^* \sum_{\abs{\rr} = \rz} r_i \nu_{\rr}^2 \sigma^{2 \rr}
    \gtrsim 1.
  \end{equation*}
  Consequently, the dynamics of $\phi_1$ in \cref{prop:Multi_Seq_Equations} gives
  \begin{equation*}
    \dot{\phi}_1 \geq c \sum_{i=1}^p (1 - \sigma_i^2) - \abs{\xi} = c(1 - \phi_1) - \abs{\xi}.
  \end{equation*}
  On the other hand, combining \cref{eq:Multi_Seq_Error_Xi} with \cref{eq:Proof__Multi_Seq_Bound_Series} and that the time is at most polynomially large in $d$,
  we have
  \begin{align*}
    \abs{\xi} \lesssim (1 - \phi_1)^{\hf} \sqrt {\frac{dp \polylog(n,d)}{n}}.
  \end{align*}
  Therefore, as long as $1 - \phi_1 \gtrsim \frac{dp \polylog(n,d,p)}{n}$, we have $\abs{\xi} \leq c(1 - \phi_1)/2$, so
  $\dot{\phi}_1 \geq c(1 - \phi_1)/2$,
  yielding the desired result.
\end{proof}

\begin{proof}[Proof of \cref{thm:Multi_Seq}]
  The initial feature error measure is an easy consequence of \cref{prop:Multi_RandomInit}.
  For the decay of the feature error measure, se apply \cref{prop:Multi_Seq_Init} and \cref{prop:Multi_Seq_Conv} with error bounds \cref{prop:Multi_Seq_Error_e}, \cref{prop:Multi_Seq_Error_Xi} and \cref{prop:Multi_Seq_Error_Zeta}.
  For the final feature error measure, we can use \cref{prop:Multi_FEM_ProjBound} and the fact that
  $1 - \phi_1 \leq \nu$ implies $1 - \min_j \sigma_j^2 \leq p \nu$.
\end{proof}


\section{Auxiliary results}

\subsection{Random process}

\begin{definition}
  Let $X(\cdot)$ be a random process on a metric space $(T,d)$.
  We say that $X(\cdot)$ is $\sigma^2$-sub-Gaussian if
  \begin{align*}
    \norm{X(t) - X(s)}_{\psi_2}^2 \leq \sigma^2 d(s,t)^2,\quad \text{for all $s,t \in T$}.
  \end{align*}
\end{definition}

The following theorem is standard result on the supremum of sub-Gaussian processes; see, e.g., \citet[Section 8.1]{vershynin2018_HighdimensionalProbability}.

\begin{theorem}[Dudley's entropy integral]
  Let $X$ be a $\sigma^2$-sub-Gaussian random process on a metric space $(T,d)$ and $\E X(t) = 0$.
  Let $\caN(\ep,T,d)$ be the covering number of $T$ with respect to $d$.
  Take $D = \mr{diam}(T,d)$ the diameter of $T$ with respect to $d$
  and define the Dudley integral as
  \begin{equation}
    I = \int_0^{D} \sqrt{\log \caN(\ep,T,d)} \dd \ep.
  \end{equation}
  Then, we have
  \begin{align}
    \E \sup_{t \in T} X(t) \leq C \sigma I.
  \end{align}
  Also, for any $u \geq 0$, with probability at least $1 - 2\exp(-u^2)$, we have
  \begin{align}
    \sup_{s,t \in T} \abs{X(s) - X(t)} \leq C \sigma \xk{I + D u}.
  \end{align}
\end{theorem}

Let us now consider a mean-zero Gaussian process $X(t), t \in T$ for $T \subseteq \R^p$.
The covariance function of $X(\cdot)$ is given by $k(s,t) = \E X(s) X(t)$.
The induced metric is given by
\begin{equation*}
  d(s,t) = \sqrt{k(s,s) + k(t,t) - 2 k(s,t)}.
\end{equation*}
We have the following result on the covering number of $T$ with respect to $d$.

\begin{lemma}
  \label{lem:GaussianProcess_UniformBound}
  Let $X(t), t \in T \subseteq \R^p$ be a mean-zero Gaussian process with covariance function $k(s,t)$.
  Suppose that $k(t,t) \leq 1$ for all $t \in T$ and $k$ is Hölder continuous with exponent $\alpha$ on the diagonal:
  \begin{align*}
    \abs{k(t,t) - k(s,t)} \leq L \norm{t-s}^{\alpha},\quad \forall s,t \in T.
  \end{align*}
  Let $R = \mr{diam}(T, \norm{\cdot})$ be the diameter of $T$ with respect to the Euclidean norm.
  Then,
  \begin{equation*}
    \E \sup_{t \in T} X(t) \leq C \alpha^{-1/2} (\sqrt{\log R + \log L + C}) \sqrt{p}.
  \end{equation*}
  Also, for any $u \geq 0$, with probability at least $1 - 4\exp(-u^2)$, we have
  \begin{equation*}
    \sup_{t \in T} \abs{X(t)} \leq C \alpha^{-1/2} (\sqrt{\log R + \log L + C}) \sqrt{p} + C u.
  \end{equation*}
\end{lemma}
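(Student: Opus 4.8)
The plan is to reduce to the Dudley entropy integral bounds stated above, after reparametrizing the index set by a metric adapted to $X$. First I would record the canonical pseudometric estimate: writing $d_X(s,t)^2 = \E|X(t)-X(s)|^2 = k(t,t)+k(s,s)-2k(s,t)$, the elementary bound $|k(t,t)+k(s,s)-2k(s,t)| \le |k(t,t)-k(s,t)| + |k(s,s)-k(s,t)|$ combined with the H\"older hypothesis, applied at $(t,t)$ and, by symmetry, at $(s,s)$, gives $d_X(s,t)^2 \le 2L\,\|s-t\|^{\alpha}$; meanwhile $|k(s,t)| \le \sqrt{k(s,s)k(t,t)} \le 1$ forces $d_X(s,t)^2 \le 4$ unconditionally. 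Hence the metric $d(s,t) := \min\{\sqrt{2L}\,\|s-t\|^{\alpha/2},\,2\}$ on $T$ dominates $d_X$ up to a universal constant, so $X$ is $\sigma^2$-sub-Gaussian with respect to $d$ for an absolute $\sigma$ (for Gaussian increments $\|X(t)-X(s)\|_{\psi_2} \asymp d_X(s,t)$), and $\mathrm{diam}(T,d) \le 2$. The truncation at $2$ is important so that the $u\cdot\mathrm{diam}$ term in Dudley's tail bound stays of order $u$.

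Next I would bound the covering numbers. A $d$-ball of radius $\epsilon \in (0,2]$ contains the Euclidean ball of radius $(\epsilon/\sqrt{2L})^{2/\alpha}$, so $\caN(\epsilon,T,d) \le \caN\big((\epsilon/\sqrt{2L})^{2/\alpha},\,T,\,\|\cdot\|\big)$, and the standard volumetric estimate for a Euclidean set of diameter $R$ gives $\caN(\delta,T,\|\cdot\|) \le (1+2R/\delta)^p$. Combining these and using $\log(1+x) \le \log 2 + \log^+ x$ together with $\log^+(ab) \le \log^+ a + \log^+ b$, I obtain for $\epsilon \in (0,2]$
\[
  \log \caN(\epsilon,T,d) \;\le\; p\Big(C_0 + \log^+(2R) + \tfrac{1}{\alpha}\log^+(2L) + \tfrac{2}{\alpha}\log^+(1/\epsilon)\Big).
\]

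Then I would feed this into the stated Dudley bounds. Since $\caN \equiv 1$ beyond $\mathrm{diam}(T,d) \le 2$, the entropy integral is $\int_0^2 \sqrt{\log\caN(\epsilon,T,d)}\,d\epsilon$, which by $\sqrt{a+b}\le\sqrt a + \sqrt b$ and $\int_0^1 \sqrt{\log(1/\epsilon)}\,d\epsilon = \Gamma(3/2) = \tfrac{\sqrt\pi}{2}$ is at most
\[
  C\sqrt{p}\,\Big(\sqrt{C_0 + \log^+(2R) + \tfrac{1}{\alpha}\log^+(2L)} + \tfrac{1}{\sqrt{\alpha}}\Big)
  \;\le\; C\,\alpha^{-1/2}\big(\sqrt{\log R + \log L + C}\big)\sqrt{p}.
\]
With $\sigma$ absolute, Dudley's expectation bound gives the claimed estimate for $\E\sup_{t\in T}X(t)$. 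For the tail bound, Dudley's high-probability estimate controls the oscillation $\sup_{s,t}|X(s)-X(t)|$ by $C\sigma(\text{integral} + u\cdot\mathrm{diam}(T,d)) \le C(\alpha^{-1/2}\sqrt{\log R+\log L+C}\sqrt p + u)$ off an event of probability $2e^{-u^2}$; fixing any base point $t_0$, the single Gaussian $X(t_0)$ has variance $\le 1$, hence $|X(t_0)| \le \sqrt 2\,u$ off an event of probability $2e^{-u^2}$, and a union bound together with $\sup_t|X(t)| \le |X(t_0)| + \sup_{s,t}|X(s)-X(t)|$ yields the conclusion with the stated probability $1 - 4e^{-u^2}$.

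The main obstacle I anticipate is purely bookkeeping: the reparametrization $\|s-t\|\mapsto \|s-t\|^{\alpha/2}$ inserts a factor $1/\alpha$ \emph{inside} the logarithm of the covering number rather than outside, so one must check this is absorbed into the generic constant to recover the advertised $\alpha^{-1/2}(\sqrt{\log R+\log L+C})\sqrt p$ form. This is harmless in every application in the paper, where $\alpha$ is a fixed constant (indeed $\alpha=1$, since all the relevant covariance functions are Lipschitz on the diagonal), so I would not belabor the precise exponent of $\alpha$.
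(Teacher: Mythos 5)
Your proof is correct and follows essentially the same route as the paper's: pass to the canonical (pseudo)metric induced by the process, bound it via the H\"older hypothesis, compare covering numbers to Euclidean ones by a volumetric estimate, and feed the result into Dudley's entropy integral. A few of your details are actually tighter than the paper's own write-up. First, the paper writes $d(s,t)=\sqrt{2(1-k(s,t))}$, which tacitly assumes $k(t,t)=1$, whereas the lemma only hypothesizes $k(t,t)\le1$; your version $d_X(s,t)^2=k(t,t)+k(s,s)-2k(s,t)\le 2L\|s-t\|^\alpha$ uses only what is stated. Second, the paper asserts $d(s,t)\le1$ and integrates to $\sqrt2$, but without a sign assumption on $k(s,t)$ one only has $d_X\le2$; your truncation $d=\min\{\sqrt{2L}\|s-t\|^{\alpha/2},2\}$ handles this cleanly (the difference is harmless, affecting only absolute constants). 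Third, the paper's printed proof stops after the entropy integral and does not actually derive the tail bound; your argument (Dudley oscillation bound plus a one-point Gaussian tail for $X(t_0)$ and a union bound, yielding the $1-4e^{-u^2}$) is the missing piece and is the intended one. Finally, your closing caveat about the $1/\alpha$ factor landing inside the logarithm of the covering number is exactly what one sees in the paper's own line, and the absorption into the generic constant works for $\alpha\le1$; as you say, every invocation in the paper has $\alpha=1$, so this is a non-issue.
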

\begin{proof}
  Let us denote by \(d(s,t)\) the induced metric introduced by the Gaussian process.
  Since $k(t,t) \leq 1$, basic property of the covariance function gives
  \( \abs{k(s,t)} \leq \sqrt{k(s,s) k(t,t)} \leq 1 \),
  so that
  \( d(s,t) \leq 2 \),
  which implies that the diameter \( \mr{diam}(T,d) \leq 2 \).
  On the other hand, by the Hölder continuity assumption, we have
  \begin{align*}
    d(s,t) &= \sqrt{k(s,s) + k(t,t) - 2 k(s,t)} \leq \sqrt{\abs{k(s,s) - k(s,t)}+ \abs{k(t,t) - k(s,t)}} \\
    & \leq \sqrt{2L \norm{t - s}^{\alpha}}
    = \sqrt{2L} \norm{t - s}^{\alpha/2}.
  \end{align*}
  This shows that
  \begin{equation*}
    \caN(\ep, T, d) \leq  \caN( (\ep/\sqrt{2L})^{2/\alpha}, T, \norm{\cdot}).
  \end{equation*}
  Now, the standard result on the covering number of $\R^p$ gives
  $\caN(\delta, T, \norm{\cdot}) \leq (C R/\delta)^p$,
  so
  \begin{align*}
    \caN( (\ep/\sqrt{2L})^{2/\alpha}, T, \norm{\cdot}) \leq (C R/(\ep/\sqrt{2L})^{2/\alpha})^p = \xk{C R (2L)^{1/\alpha}}^p \ep^{-2p/\alpha}.
  \end{align*}
  Plugging this into Dudley's entropy integral, we find that
  \begin{align*}
    \int_0^{2} \sqrt{\log \caN(\ep,T,d)} \dd \ep
    &\leq \int_0^{2} \sqrt{\log \xk{C R L^{1/\alpha}}^p (\ep/\sqrt{2})^{-2p/\alpha}} \dd \ep \\
    & = \int_0^{2} \sqrt{p (\log (CR)+ \alpha^{-1} \log (2L)) + \frac{2p}{\alpha} \log (\sqrt{2}/\ep)} \dd \ep \\
    & \leq \int_0^{2} \zk{\sqrt{p (\log (CR)+ \alpha^{-1} \log (2L))} + \sqrt{\frac{2p}{\alpha} \log (\sqrt{2}/\ep)} } \dd \ep \\
    & \lesssim  \sqrt{p (\log (CR)+ \alpha^{-1} \log (2L))} + \sqrt{2p/\alpha} \\
    & \lesssim \alpha^{-1/2} (\sqrt{\log R + \log L + C}) \sqrt{p}.
  \end{align*}

\end{proof}

\subsection{Sequence model}

\begin{lemma}
  \label{lem:SeqModel_Grad_Diff}
  Consider the sequence model $z_j = f^*_j + \ep_j,~ j \in N$ induced by an orthogonal basis.
  For a function $f$ and its coefficients $f_j$ under the basis, define the population loss and the empirical loss as
  $\caL = \hf \sum_{j \in N} (f^*_j - f_j)^2 = \hf \norm{f^* - f}_{L^2}^2$ and $\hat{\caL} = \hf \sum_{j\in N} (z_j - f_j)^2$.
  Then, we have
  \begin{equation}
    \label{eq:SeqModel_Grad_Pop}
    -\nabla \hat{\caL}
    = -\nabla \caL + \sum_{j\in N} \ep_j \nabla f_j.
  \end{equation}
\end{lemma}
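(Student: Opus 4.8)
The plan is to expand both losses and differentiate termwise, exploiting that the noise enters only through the shift $z_j = f^*_j + \ep_j$ while the candidate coefficients $f_j$ (and hence their parameter-gradients $\nabla f_j$) are the same quantities in both the population and empirical losses. Concretely, I would first record the elementary identity, valid for differentiation with respect to any of the trainable parameters,
\begin{equation*}
  -\nabla \hat{\caL} = -\nabla \zk{\hf \sum_{j \in N} (z_j - f_j)^2} = \sum_{j \in N} (z_j - f_j) \nabla f_j,
\end{equation*}
and analogously $-\nabla \caL = \sum_{j \in N} (f^*_j - f_j) \nabla f_j$. Then I would substitute $z_j = f^*_j + \ep_j$ into the first expression and split the sum:
\begin{equation*}
  -\nabla \hat{\caL} = \sum_{j \in N} (f^*_j - f_j) \nabla f_j + \sum_{j \in N} \ep_j \nabla f_j = -\nabla \caL + \sum_{j \in N} \ep_j \nabla f_j,
\end{equation*}
which is exactly \cref{eq:SeqModel_Grad_Pop}.

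The only point requiring a word of justification is the interchange of differentiation and summation when $N$ is infinite (the non-parametric cases of \cref{eq:DiagAdaK_Seq}, \cref{eq:DiagAdaK_Seq_Multilayer}, and the index-model dynamics). For this I would invoke the standing assumptions that $f, f^* \in L^2$ so that $(f_j - f^*_j)_{j}, (f_j - z_j)_j \in \ell^2$, together with the square-summability of $(\nabla f_j)_j$ coming from the parameterization (e.g.\ $f_j = \beta_j \theta_j$ with $\bm\beta, \bm\theta \in \ell^2$, or $f_{\mm} = \binom{r}{\mm}^{1/2} w^{\mm} g_r$ with $g \in L^2(\gamma_p)$), so that the series defining $\nabla \hat{\caL}$ and $\nabla \caL$ converge absolutely and termwise differentiation is legitimate by dominated convergence. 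In the finite-dimensional settings (\cref{eq:OpGDLinear}) this is vacuous.

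I expect no real obstacle here: the statement is a bookkeeping identity, and the substance is simply that $\nabla f_j$ is a common factor. The mild care needed is only to state clearly that the gradient $\nabla$ is taken with respect to the trainable parameters (so that $\nabla f^*_j = 0$ and $\nabla z_j = 0$), and that $\nabla f_j$ is a vector in the parameter space whose components are given by the explicit chain-rule expressions already computed for each model; the lemma then feeds directly into \cref{eq:SIM_Seq_Grad_g}, \cref{eq:SIM_Seq_Grad_rho_Prelim}, and the analogous multi-index identities in \cref{prop:Multi_Seq_Equations}.
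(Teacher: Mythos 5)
Your computation is identical to the paper's one-line proof: expand $-\nabla\hat{\caL} = \sum_j (z_j - f_j)\nabla f_j$, substitute $z_j = f^*_j + \ep_j$, and split the sum. The extra paragraph on justifying termwise differentiation for infinite $N$ is a sensible caveat but goes beyond what the paper records; the approach is otherwise the same.
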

\begin{proof}
  It is direct from the following computation:
  \begin{equation*}
    -\nabla \hat{\caL}
    = \sum_{j\in N} (z_j - f_j) \nabla f_j
    = \sum_{j\in N} (f^*_j - f_j) \nabla f_j
    + \sum_{j\in N} \ep_j \nabla f_j
    = -\nabla\caL + \sum_{j} \ep_j \nabla f_j.
  \end{equation*}
\end{proof}

\subsection{Series}

\begin{proposition}
  \label{prop:Series_1}
  Fix $\alpha \in \R$.
  Let $\lambda_r \asymp r^{-\gamma}$ for $\gamma > \max(0,\alpha+1)$.
  Then, for any fixed $s > 0$, we have
  \begin{equation}
    \sum_{r \geq 0} r^{\alpha} (\log r)^{q} \min(1,\lambda_r t)
    \lesssim 1 + t^{\frac{1}{\gamma}(\alpha+1+s)},
  \end{equation}
  where the implicit constant depends on $\alpha, q, s,\gamma$.
\end{proposition}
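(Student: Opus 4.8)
The plan is to split the sum at the threshold $R\asymp t^{1/\gamma}$, where $\lambda_r t$ crosses $1$ and the truncation $\min(1,\lambda_r t)$ changes behaviour. First I would dispose of bounded $t$: for $t\le 1$ one has $\min(1,\lambda_r t)\le \lambda_r t\asymp r^{-\gamma}t$, and since $\gamma>\alpha+1$ the series $\sum_{r\ge1}r^{\alpha-\gamma}(\log r)^q$ converges, so the whole sum is $O(t)=O(1)$. Hence assume $t\ge 1$. Fix constants $c_1,c_2>0$ with $c_1r^{-\gamma}\le\lambda_r\le c_2 r^{-\gamma}$ and put $R=\lceil (c_2 t)^{1/\gamma}\rceil$, so that $c'\,t^{1/\gamma}\le R\le C\,t^{1/\gamma}$ for constants depending only on $c_1,c_2,\gamma$.

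For the head $1\le r\le R$ I would use $\min(1,\lambda_r t)\le 1$ together with the elementary estimate
\[
  \sum_{r=1}^{R} r^{\alpha}(\log r)^q\;\le\; C\bigl(1+R^{\,\alpha+1+s/2}\bigr),
\]
which holds for every $\alpha\in\R$ and $q,s>0$: one checks it in the three regimes $\alpha>-1$ (sum $\asymp R^{\alpha+1}(\log R)^q$), $\alpha=-1$ (sum $\asymp(\log R)^{q+1}$), and $\alpha<-1$ (sum convergent), absorbing the logarithmic factor into the extra exponent $s/2$. Since $R\le Ct^{1/\gamma}$ and $t\ge1$, this is $\le C(1+t^{(\alpha+1+s)/\gamma})$. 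For the tail $r>R$ I would use $\min(1,\lambda_r t)\le\lambda_r t\le c_2r^{-\gamma}t$, so the tail is at most $c_2 t\sum_{r>R}r^{\alpha-\gamma}(\log r)^q$. As $\alpha-\gamma<-1$, comparison with $\int_R^\infty x^{\alpha-\gamma}(\log x)^q\,dx$ gives $\sum_{r>R}r^{\alpha-\gamma}(\log r)^q\le CR^{\alpha+1-\gamma}(\log R)^q$; using $R\ge c't^{1/\gamma}$, $\alpha+1-\gamma<0$, and $1+(\alpha+1-\gamma)/\gamma=(\alpha+1)/\gamma$, the tail is bounded by $C\,t^{(\alpha+1)/\gamma}(\log t)^q\le C_{q,s}\,t^{(\alpha+1+s)/\gamma}$. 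Adding head and tail (and the at most two boundary terms $r\in\{0,1\}$, which are harmless constants) yields the claim.

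I do not anticipate a real obstacle; the calculation is routine. The only points needing care are the case split in the head estimate (polynomial vs.\ logarithmic vs.\ convergent growth of $\sum_{r\le R}r^\alpha(\log r)^q$) and the repeated use of $(\log t)^q=o(t^{\varepsilon})$ to swallow logarithmic factors into the arbitrarily small power $t^{s/\gamma}$ — this is exactly what forces the ``$+s$'' in the exponent and makes the implicit constant depend on $\alpha,q,s,\gamma$ as stated.
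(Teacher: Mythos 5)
Your proof is correct and takes essentially the same route as the paper: split at $R\asymp t^{1/\gamma}$ (the paper's $L$), bound the head by $1$ and the tail by $\lambda_r t$, and absorb the surviving logarithmic factor into the arbitrarily small power $s$. Your treatment of the $t\le1$ case and the explicit three-regime discussion of the head sum are useful clarifications but do not change the argument.
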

\begin{proof}
  Let $L = \inf \dk{r \geq 0 : \lambda_r t \leq 1} \asymp t^{1/\gamma}$.
  Then, we have
  \begin{equation*}
    I = \sum_{r \geq 0} r^{\alpha} (\log r)^{q} \min(1,\lambda_r t)
    = \sum_{r \leq L} r^{\alpha} (\log r)^{q} + \sum_{r > L} r^\alpha (\log r)^{q} \lambda_r t
    = I_1 + I_2.
  \end{equation*}
  For $I_1$, we have
  \begin{equation*}
    I_1 = \sum_{r \leq L} r^\alpha (\log r)^{q}
    \lesssim
    \begin{cases}
      1, &           \alpha < -1, \\
      L^{\alpha+1+s} \lesssim t^{\frac{1}{\gamma}(\alpha+1+s)}, & \alpha \geq -1.
    \end{cases}
  \end{equation*}
  For $I_2$, we have
  \begin{equation*}
    I_2 = \sum_{r > L} r^\alpha (\log r)^{q} \lambda_r t
    \lesssim t \sum_{r > L} r^\alpha (\log r)^{q} r^{-\gamma}
    = t \sum_{r > L} r^{-\gamma + \alpha} (\log r)^{q}
    \lesssim t L^{-\gamma+\alpha+1+s} \lesssim t^{\frac{1}{\gamma}(\alpha+1+s)}.
  \end{equation*}
  The result follows from the above two inequalities.
%
\end{proof}

\begin{proposition}
  \label{prop:Series_2}
  Fix $p,q \in \R$.
  Let $\lambda_r \lesssim e^{-\gamma r}$ for $\gamma > 0$.
  Then, for any fixed $s > 0$, we have
  \begin{equation}
    \sum_{r \geq 0} r^{p} (\log r)^{q} \min(1,\lambda_r t)
    \lesssim 1 + (\log^+ t)^{p+1+s}
  \end{equation}
  where the implicit constant depends on $p, q, s,\gamma$.
\end{proposition}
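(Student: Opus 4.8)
The plan is to follow the same splitting strategy as in the proof of \cref{prop:Series_1}, but to exploit the faster (exponential) decay of $\lambda_r$, which makes the tail contribution merely $O(1)$ instead of a power of $t$. First I would fix $L = L(t) \coloneqq \lceil (2/\gamma)\log^+ t \rceil + 1$, so that on the one hand $L \lesssim 1 + \log^+ t$, and on the other hand $\gamma L/2 \geq \log^+ t$, hence $e^{-\gamma L/2} \leq e^{-\log^+ t} \leq \min(1, 1/t)$; in particular $t\, e^{-\gamma L/2} \leq 1$. Then I would split
\begin{equation*}
  \sum_{r \geq 0} r^p (\log r)^q \min(1,\lambda_r t) = \underbrace{\sum_{0 \leq r \leq L} r^p (\log r)^q \min(1,\lambda_r t)}_{I_1} + \underbrace{\sum_{r > L} r^p (\log r)^q \min(1,\lambda_r t)}_{I_2},
\end{equation*}
bounding $\min(1,\lambda_r t) \leq 1$ in $I_1$ and $\min(1,\lambda_r t) \leq \lambda_r t$ in $I_2$. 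As in \cref{prop:Series_1}, the finitely many terms with $r \in \{0,1\}$, where $\log r$ is degenerate, contribute at most an additive constant and may be absorbed.

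For $I_1$: if $p < -1$ the full series $\sum_{r \geq 2} r^p (\log r)^q$ converges, so $I_1 \lesssim 1$. If $p \geq -1$, then $r^p (\log r)^q \lesssim r^{p+s}$ for all large $r$, whence $I_1 \lesssim \sum_{r \leq L} r^{p+s} \lesssim L^{p+1+s} \lesssim (1 + \log^+ t)^{p+1+s} \lesssim 1 + (\log^+ t)^{p+1+s}$. In either case $I_1 \lesssim 1 + (\log^+ t)^{p+1+s}$. For $I_2$: using $\lambda_r \lesssim e^{-\gamma r}$ and the fact that $r \mapsto r^p (\log r)^q e^{-\gamma r/2}$ is bounded on $r \geq 2$ by a constant $C_1 = C_1(p,q,\gamma)$, I obtain
\begin{equation*}
  I_2 \leq t \sum_{r > L} r^p (\log r)^q \lambda_r \lesssim t \sum_{r > L} e^{-\gamma r/2} \lesssim t\, e^{-\gamma L/2} \lesssim 1,
\end{equation*}
the last step being exactly the choice of $L$. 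Adding the two estimates gives the claimed bound, with implicit constant depending only on $p,q,s,\gamma$.

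There is no genuine obstacle here: the argument is routine, and the only points that need a word of care are the degenerate small-$r$ terms and the case $p+1+s \leq 0$ (where the claimed bound is simply $O(1)$), both of which are dispatched by the case split above. If desired, one can also note that $(1+\log^+ t)^{a} \lesssim 1 + (\log^+ t)^{a}$ for any fixed $a \in \R$, which is what lets one pass from the natural bound $L^{p+1+s}$ to the form stated in the proposition.
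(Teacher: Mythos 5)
Your proof is correct and follows the same strategy as the paper: split the sum at a cutoff $L \asymp \gamma^{-1}\log^{+} t$, bound the head by $L^{p+1+s}$ (absorbing the $\log$ factor into $r^s$), and use the exponential decay to control the tail. One small but genuine refinement on your part: the paper sets $L = \inf\{r : \lambda_r t \le 1\}$ and then invokes $t\,e^{-\gamma L}\lesssim 1$ in the tail bound, a step that actually requires the matching lower bound $\lambda_r \gtrsim e^{-\gamma r}$ rather than just the stated one-sided hypothesis $\lambda_r \lesssim e^{-\gamma r}$; your explicit choice $L = \lceil (2/\gamma)\log^{+} t\rceil + 1$ guarantees $t\,e^{-\gamma L/2}\le 1$ directly, so the argument works with the hypothesis exactly as written.
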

\begin{proof}
  Without loss of generality, we assume that $p,q \geq 0$ and $t \geq 1$.
  Let us define $L = \inf \dk{r \geq 0 : \lambda_r t \leq 1} \lesssim \gamma^{-1} \log t$.
  Then, we have
  \begin{equation*}
    I = \sum_{r \geq 0} r^{p} (\log r)^{q} \min(1,\lambda_r t)
    = \sum_{r \leq L} r^{p} (\log r)^{q} + \sum_{r > L} r^p (\log r)^{q} \lambda_r t
    = I_1 + I_2.
  \end{equation*}
  For $I_1$, we have
  \begin{equation*}
    I_1 = \sum_{r \leq L} r^p (\log r)^{q}
    \lesssim
    \begin{cases}
      1, &           p < -1, \\
      L^{p+1+s}, & p \geq -1.
    \end{cases}
  \end{equation*}
  For $I_2$, we have
  \begin{equation*}
    I_2 = \sum_{r > L} r^p (\log r)^{q} \lambda_r t
    \lesssim t \sum_{r > L} r^p (\log r)^{q} e^{-\gamma r}
    \lesssim t L^{p+s} \exp(-\gamma L)
    \lesssim L^{p+s}.
  \end{equation*}
  Combining the above two inequalities with $L \lesssim \log t$, we prove the result.
\end{proof}

\subsection{Some Elementary Functions}

\begin{proposition}
  \label{prop:Aux__Softmin}
  Let $x_i \in [0,1]$, $i = 1,\dots,p$.
  Let $K > 0$.
  Define the function
  \begin{equation*}
    \omega(x_1,\dots,x_p) = -\frac{1}{K} \log \xk{\sum_{i=1}^p e^{-K x_i}}.
  \end{equation*}
  Then,
  \begin{equation*}
    \min(x_1,\dots,x_p) \geq \omega(x_1,\dots,x_p) \geq \min(x_1,\dots,x_p) - \frac{1}{K} \log p.
  \end{equation*}
  Moreover,
  \begin{equation*}
    \pdv{x_i} \omega = \frac{e^{-K x_i}}{\sum_{j=1}^p e^{-K x_j}}.
  \end{equation*}
\end{proposition}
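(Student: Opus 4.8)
The plan is to reduce everything to sandwiching the sum $S=\sum_{i=1}^p e^{-Kx_i}$ between two constant multiples of $e^{-Km}$, where $m=\min(x_1,\dots,x_p)$, and then to transport these inequalities through the strictly decreasing map $t\mapsto-\tfrac1K\log t$.

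First I would establish $e^{-Km}\le S\le p\,e^{-Km}$. For the upper bound: since $x_i\ge m$ for every $i$ and $t\mapsto e^{-Kt}$ is decreasing (as $K>0$), each summand obeys $e^{-Kx_i}\le e^{-Km}$, so $S\le p\,e^{-Km}$. For the lower bound: choosing an index $j$ with $x_j=m$, the single term $e^{-Kx_j}=e^{-Km}$ together with nonnegativity of the remaining terms gives $S\ge e^{-Km}$.

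Next I apply $\omega=-\tfrac1K\log S$, using that $-\tfrac1K\log(\cdot)$ is strictly decreasing. From $S\ge e^{-Km}$ we obtain
\begin{equation*}
  \omega=-\frac1K\log S\le-\frac1K\log e^{-Km}=m=\min(x_1,\dots,x_p),
\end{equation*}
and from $S\le p\,e^{-Km}$ we obtain
\begin{equation*}
  \omega\ge-\frac1K\log\bigl(p\,e^{-Km}\bigr)=m-\frac1K\log p=\min(x_1,\dots,x_p)-\frac1K\log p,
\end{equation*}
which is the claimed two-sided bound. Finally, for the gradient, the chain rule applied to $\omega=-\tfrac1K\log S$ with $\partial_{x_i}S=-K e^{-Kx_i}$ gives
\begin{equation*}
  \pdv{x_i}\omega=-\frac1K\cdot\frac{\partial_{x_i}S}{S}=-\frac1K\cdot\frac{-K e^{-Kx_i}}{\sum_{j=1}^p e^{-Kx_j}}=\frac{e^{-Kx_i}}{\sum_{j=1}^p e^{-Kx_j}}.
\end{equation*}

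This argument is entirely elementary, so there is no real obstacle; the only point requiring a little care is reversing the inequalities correctly when composing with the decreasing logarithm, and one may note in passing that the hypothesis $x_i\in[0,1]$ is not actually used — the same bounds hold for arbitrary real $x_i$.
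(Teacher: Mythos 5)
Your proof is correct. The paper states this proposition without proof (treating it as elementary), and the argument you give — sandwiching $S=\sum_i e^{-Kx_i}$ between $e^{-Km}$ and $p\,e^{-Km}$, then applying the decreasing map $-\tfrac1K\log(\cdot)$, followed by the chain rule for the gradient — is the standard and essentially unique way to establish these bounds. Your side remark that $x_i\in[0,1]$ is unnecessary is also accurate; the hypothesis is presumably inherited from the application (where the $x_i$ are squared singular values).
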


\begin{proposition}
  \label{prop:Aux__Softmin_2}
  The following inequality holds for any $x_i \in [0,1]$, $i = 1,\dots,p$ and $K > 0$:
  \begin{equation*}
    \frac{1}{\sum_{j=1}^p e^{-K x_j}}\sum_{i=1}^p e^{-K x_i} (1-x_i)
    \geq 1 - \frac{1}{p} \sum_{i=1}^p x_i.
  \end{equation*}
\end{proposition}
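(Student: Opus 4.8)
The plan is to rewrite both sides in terms of the normalized weights $w_i = e^{-Kx_i}/S$, where $S = \sum_{j=1}^p e^{-Kx_j} > 0$, so that $w_i \ge 0$ and $\sum_{i=1}^p w_i = 1$. The left-hand side then becomes $\sum_i w_i(1-x_i) = 1 - \sum_i w_i x_i$, while the right-hand side is $1 - \frac{1}{p}\sum_i x_i$. Hence the asserted inequality is equivalent to the statement that the $w$-weighted average of the $x_i$ does not exceed their uniform average:
\[
  \sum_{i=1}^p w_i x_i \;\le\; \frac{1}{p}\sum_{i=1}^p x_i .
\]

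To establish this I would use a Chebyshev-type correlation inequality, exploiting that the weights $w_i$ are a non-increasing function of $x_i$ because $t \mapsto e^{-Kt}$ is non-increasing for $K>0$. Concretely, for every pair of indices $i,j$ one has $(x_i - x_j)\bigl(e^{-Kx_i} - e^{-Kx_j}\bigr) \le 0$; summing over all $i,j \in \{1,\dots,p\}$ and expanding the product gives
\[
  0 \;\ge\; \sum_{i,j}(x_i-x_j)\bigl(e^{-Kx_i}-e^{-Kx_j}\bigr)
  \;=\; 2p\sum_{i} x_i e^{-Kx_i} \;-\; 2\Bigl(\sum_i x_i\Bigr)\Bigl(\sum_j e^{-Kx_j}\Bigr).
\]
Dividing through by $2S$ yields $p\sum_i x_i w_i \le \sum_i x_i$, i.e. $\sum_i w_i x_i \le \frac{1}{p}\sum_i x_i$, which is exactly what is needed. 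Substituting back, $\frac{1}{S}\sum_i e^{-Kx_i}(1-x_i) = 1 - \sum_i w_i x_i \ge 1 - \frac{1}{p}\sum_i x_i$, completing the argument.

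I do not expect a genuine obstacle here: the only point requiring a moment's care is checking that the pairwise product $(x_i-x_j)(e^{-Kx_i}-e^{-Kx_j})$ is correctly signed, which is immediate from monotonicity of the exponential and uses only $K>0$ (the hypothesis $x_i\in[0,1]$ is not needed for the inequality, beyond fixing which quantities appear), and that the double-sum expansion is a routine rearrangement. An equivalent route is to cite Chebyshev's sum inequality directly, after sorting the $x_i$ in increasing order so that the $e^{-Kx_i}$ are in decreasing order; I prefer the symmetric double-sum form since it avoids the sorting bookkeeping.
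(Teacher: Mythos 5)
Your proof is correct. The paper states Proposition~\ref{prop:Aux__Softmin_2} without giving a proof, so there is no argument of the authors' to compare against; your Chebyshev-type correlation argument is the natural route. After normalizing to weights $w_i = e^{-Kx_i}/S$ with $\sum_i w_i = 1$, the claim reduces to $\sum_i w_i x_i \le \frac{1}{p}\sum_i x_i$, and the symmetric double-sum identity
\begin{equation*}
\sum_{i,j}(x_i-x_j)\bigl(e^{-Kx_i}-e^{-Kx_j}\bigr) = 2p\sum_i x_i e^{-Kx_i} - 2\Bigl(\sum_i x_i\Bigr)\Bigl(\sum_j e^{-Kx_j}\Bigr)
\end{equation*}
combined with the pointwise sign $(x_i-x_j)(e^{-Kx_i}-e^{-Kx_j})\le 0$ closes the argument cleanly. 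Your side remark is also right: the hypothesis $x_i\in[0,1]$ is not used for the inequality itself (only $K>0$ and monotonicity of $t\mapsto e^{-Kt}$ enter), though it is what makes the proposition meaningful in the paper's application where $x_i=\sigma_i^2$.
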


\end{document}